\pgfplotsset{compat=1.18}
\gdef\@copyrightpermission{
  \begin{minipage}{0.2\columnwidth}
   \href{https://creativecommons.org/licenses/by/4.0/}{\includegraphics[width=0.90\textwidth]{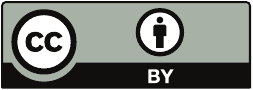}}
  \end{minipage}\hfill
  \begin{minipage}{0.8\columnwidth}
   \href{https://creativecommons.org/licenses/by/4.0/}{This work is licensed under a Creative Commons Attribution International 4.0 License.}
  \end{minipage}
  \vspace{5pt}
}
\title[Robust Counterfactual Inference in Markov Decision Processes]{Robust Counterfactual Inference in Markov Decision Processes}
\author{Jessica Lally}
\affiliation{
  \institution{King's College London}
  \city{London}
  \country{United Kingdom}}
\email{jessica.lally@kcl.ac.uk}
\author{Milad Kazemi}
\affiliation{
  \institution{King's College London}
  \city{London}
  \country{United Kingdom}}
\email{milad.kazemi@kcl.ac.uk}
\author{Nicola Paoletti}
\affiliation{
  \institution{King's College London}
  \city{London}
  \country{United Kingdom}}
\email{nicola.paoletti@kcl.ac.uk}
\begin{abstract}
This paper addresses a key limitation in existing counterfactual inference methods for Markov Decision Processes (MDPs). To make counterfactual distributions identifiable, existing approaches assume a specific causal model of the system; however, there are typically many causal models consistent with the observed and interventional distributions of an MDP, each yielding different counterfactual probabilities. Thus, relying on a single model can limit the validity and usefulness of counterfactual inference. We propose a novel \textit{non-parametric} approach that computes tight bounds on counterfactual transition probabilities across all compatible causal models. Unlike previous methods that require solving prohibitively large optimisation problems, our approach provides closed-form expressions for these bounds, making computation highly efficient even for large-scale MDPs. Using these bounds, we construct an \textit{interval} counterfactual MDP, and identify
robust counterfactual policies that optimise the worst-case reward over the uncertain MDP probabilities. We evaluate our method on various case studies, demonstrating improved robustness over existing methods.
\end{abstract}
\keywords{Counterfactual Inference; Markov Decision Processes}
\newcommand{\BibTeX}{\rm B\kern-.05em{\sc i\kern-.025em b}\kern-.08em\TeX}
\newcommand{\camera}[1]{{\color{black}#1}}
\newcommand{\interv}[5]{{#1}_{(#2=#3,#4=#5)}}
\DeclareMathOperator*{\argmax}{arg\,max}
\pgfplotsset{
  every axis plot/.append style={line width=1.2pt},
  every axis plot post/.append style={
    every mark/.append style={line width=1.0pt}
  }
}
\newcommand{\jl}[1]{#1}
\newtheorem{definition}{Definition}[section]
\newtheorem{theorem}{Theorem}[section]
\newtheorem{lemma}{Lemma}[section]
\renewcommand{\tilde}[1]{\widetilde{#1}}
\begin{document}

\pagestyle{fancy}
\fancyhead{}

\maketitle 

\section{Introduction}
\jl{Markov Decision Processes (MDPs) are a fundamental mathematical framework for modelling sequential decision-making processes under uncertainty, including reinforcement learning (RL) problems. However, evaluating RL-learnt policies can be challenging, especially in safety-critical domains like healthcare, where testing these policies directly on patients would be both risky and unethical.}

\jl{Counterfactual inference of MDPs enables offline policy evaluation, i.e., without ``deploying'' the alternative policy into the environment. 
Given an observed sequence of actions and outcomes,
counterfactual inference estimates what the outcome would have been if different actions had been taken.} Counterfactual outcomes yielding higher rewards than the observation can serve as explanations for how the observed policy could be improved.

\jl{Counterfactual inference is increasingly being applied to MDPs for various RL applications, e.g., for generating counterfactual explanations \citep{oberst2019counterfactual, tsirtsis2024finding, tsirtsis2021counterfactual}, facilitating policy transfer \citep{killian2022counterfactually}, and augmenting datasets with counterfactual paths to address data scarcity \citep{lu2020sample, sun2024acamda}, \camera{as well as to other sequential models, such as LLMs}
\citep{chatzi2024counterfactual,pona2025abstract}}. \jl{These works compute counterfactual probabilities by assuming a specific causal model of the system (e.g., the Gumbel-max structural causal model (SCM) \citep{oberst2019counterfactual}). However, given an observation and MDP, the system's causal model is generally \textit{non-identifiable}: there can be many causal models compatible with this data, each yielding different counterfactual probabilities \citep{pmlr-v162-zhang22ab}. As a result, any counterfactual analysis based on a single assumed causal model may be inaccurate, which is particularly concerning in safety-critical domains.}

\jl{\textit{Partial counterfactual inference} methods address this problem by computing bounds (as opposed to sharp values) on counterfactual probabilities, over \textit{all} causal models compatible with the given data.}
\jl{One important work in this area is the canonical SCM approach developed by \citet{pmlr-v162-zhang22ab}, which formulates partial counterfactual inference as an optimisation problem.} \jl{However, while this approach can compute counterfactual probability bounds in a wide range of settings (including those with unobserved confounders), their optimisation procedure is highly inefficient, as the number of constraints grows exponentially with the size of the MDP \citep{duarte2023, zaffalon2024,pmlr-v162-zhang22ab}.}
\paragraph{Contributions} In this paper, we demonstrate how the partial counterfactual inference approach of \citet{pmlr-v162-zhang22ab} can be applied to MDPs, and prove this optimisation problem reduces to exact analytical bounds in the MDP setting (i.e., a Markovian setting with no unobserved confounders), thus successfully addressing the complexity of this optimisation problem. Next, using these bounds, we construct interval counterfactual MDPs, which we solve using pessimistic value iteration \citep{mathiesen2024intervalmdp} to derive counterfactual policies that optimise the worst-case counterfactual rewards across all possible causal models compatible with the given data, \camera{ensuring robustness to uncertainty in the true underlying causal model}\footnote{\camera{This notion of robustness aligns with existing work on robust MDPs, which design policies resilient to uncertainty in the transition probabilities \citep{nilim2003robustness}, but differs from other notions, e.g., robustness to out-of-distribution environments \citep{pinto2017robust}.}}. Finally, we evaluate the average performance and robustness of our approach on a range of MDP benchmarks, demonstrating that our policies are more robust to causal model uncertainty than those derived from the Gumbel-max SCM. Thanks to the analytical bounds, our approach results in a speedup of 4-251x over the Gumbel-max SCM. \camera{For the full paper, including appendices and proofs, see~\cite{lally2025robust}.}

\begin{figure}[h]
\centering
\begin{subfigure}[t]{0.45\linewidth} %
\centering
\resizebox{\linewidth}{!}{%
\begin{circuitikz}
\tikzstyle{every node}=[font=\fontsize{32}{36}\selectfont]

\definecolor{a0color}{RGB}{0, 0, 255} %
\definecolor{a1color}{RGB}{255,0,0} %

\draw  (0,0) circle (1.25cm) node {$s_0$} ;
\draw  (6,0) circle (1.25cm) node {$s_1$} ;
\draw  (12,0) circle (1.25cm) node {$s_2$} ;

\draw  (0,-5) circle (1.25cm) node {$s_0$} ;
\draw  (6,-5) circle (1.25cm) node {$s_1$} ;
\draw  (12,-5) circle (1.25cm) node {$s_2$} ;

\draw [->, >=Stealth, color=a0color, thick] (-0.2,-1.25) -- (-0.2,-3.75) node[pos=0.3, left, fill=white]{0.1};
\draw [->, >=Stealth, color=a0color, thick] (-0.2,-1.25) -- (5.8,-3.75) node[pos=0.4, below, fill=white]{0.5};
\draw [->, >=Stealth, color=a0color, thick] (-0.2,-1.25) -- (11.8,-3.75) node[pos=0.8, below, fill=white]{0.4};

\draw [->, >=Stealth, color=a1color, thick, dashed] (0.2,-1.25) -- (0.2,-3.75) node[pos=0.7, left, fill=white]{0.08};
\draw [->, >=Stealth, color=a1color, thick, dashed] (0.2,-1.25) -- (6.2,-3.75) node[pos=0.7, above, fill=white]{0.1};
\draw [->, >=Stealth, color=a1color, thick, dashed] (0.2,-1.25) -- (12.2,-3.75) node[pos=0.3, above, fill=white]{0.82};

\draw [->, >=Stealth, color=a0color, thick] (6,-1.25) -- (6,-3.75) node[pos=0.3, fill=white]{1.0};
\draw [->, >=Stealth, color=a1color, thick, dashed] (6,-1.25) -- (12,-3.75) node[pos=0.3, fill=white]{1.0};

\draw [->, >=Stealth, color=a0color, thick] (12,-1.25) -- (6,-3.75) node[pos=0.3, fill=white]{1.0};
\draw [->, >=Stealth, color=a1color, thick, dashed] (12,-1.25) -- (12,-3.75) node[pos=0.3, fill=white]{1.0};

\node[draw, rectangle, fill=a0color!20] at (3,-8) {Action $a_0$};
\node[draw, rectangle, fill=a1color!20] at (9,-8) {Action $a_1$};

\node at (0,-6.75) {$r(s_0)=0$};
\node at (6,-6.75) {$r(s_1)=2$};
\node at (12,-6.75) {$r(s_2)=1$};

\end{circuitikz}
}
\caption{Interventional probabilities for two actions $a_0$ (blue, solid) and $a_1$ (red, dashed). Optimal policy: $\forall s \in \mathcal{S}, \pi^*(s) = a_0$.}
\end{subfigure}
\hfill
\begin{subfigure}[t]{0.53\linewidth} %
\centering
\resizebox{0.85\linewidth}{!}{%
\begin{circuitikz}
\tikzstyle{every node}=[font=\fontsize{32}{36}\selectfont]

\definecolor{a0color}{RGB}{0, 0, 255} %
\definecolor{a1color}{RGB}{255,0,0} %

\draw  (0,0) circle (1.25cm) node {$s_0$} ;
\draw  (6,0) circle (1.25cm) node {$s_1$} ;
\draw  (12,0) circle (1.25cm) node {$s_2$} ;

\draw  (0,-5) circle (1.25cm) node {$s_0$} ;
\draw  (6,-5) circle (1.25cm) node {$s_1$} ;
\draw  (12,-5) circle (1.25cm) node {$s_2$} ;

\draw [->, >=Stealth, color=a0color, thick] (-0.2,-1.25) -- (-0.2,-3.75) node[pos=0.3, left, fill=white]{1.0};
\draw [->, >=Stealth, color=a1color, thick, dashed] (0.2,-1.25) -- (0.2,-3.75) node[pos=0.7, right, fill=white]{[0.18, 0.8]};
\draw [->, >=Stealth, color=a1color, thick, dashed] (0.2,-1.25) -- (12.2,-3.75) node[pos=0.25, above, yshift=1mm, fill=white]{[0.2, 0.82]};

\draw [->, >=Stealth, color=a0color, thick] (6,-1.25) -- (6,-3.75) node[pos=0.3, fill=white]{1.0};
\draw [->, >=Stealth, color=a1color, thick, dashed] (6,-1.25) -- (12,-3.75) node[pos=0.3, fill=white]{1.0};

\draw [->, >=Stealth, color=a0color, thick] (12,-1.25) -- (6,-3.75) node[pos=0.3, fill=white]{1.0};
\draw [->, >=Stealth, color=a1color, thick, dashed] (12,-1.25) -- (12,-3.75) node[pos=0.3, fill=white]{1.0};

\node[draw, rectangle, fill=a0color!20] at (3,-8) {Action $a_0$};
\node[draw, rectangle, fill=a1color!20] at (9,-8) {Action $a_1$};

\node at (0,-6.75) {$r(s_0)=0$};
\node at (6,-6.75) {$r(s_1)=2$};
\node at (12,-6.75) {$r(s_2)=1$};

\end{circuitikz}
}
\caption{Counterfactual probabilities given observed transition, $s_0, a_0 \rightarrow s_0$. Optimal counterfactual policy: $\pi_{\it CF}^*(s_0) = a_1, \pi_{\it CF}^*(s_1) = a_0, \pi_{\it CF}^*(s_2) = a_0$.}
\end{subfigure}
\caption{\camera{Counterfactual inference in a toy example MDP.}}
\end{figure}

\section{Background}
\label{sec: background}
In this section, we provide background on counterfactual inference and an overview of related work on counterfactual inference in MDPs and partial counterfactual inference.
\paragraph{Markov Decision Processes}
MDPs are a class of stochastic models for representing sequential decision-making processes. In an MDP $\mathcal{M}$, at each step $t$, an agent in state $s_t$ performs some action $a_t$ determined by a policy $\pi$. \jl{The agent then transitions to a new state $s_{t+1} \sim P(\cdot \mid s_t, a_t)$, and receives a reward $R(s_t,a_t)$.}
Formally, an MDP is a tuple $\mathcal{M}=(\mathcal{S},\mathcal{A},P,P_I,R)$ where $\mathcal{S}$ is the discrete \emph{state space}, $\mathcal{A}$ is the set of \emph{actions}, $P: (\mathcal{S} \times \mathcal{A} \times \mathcal{S}) \rightarrow [0,1]$ is the \emph{transition probability} function, $P_I:\mathcal{S} \rightarrow [0,1]$ is the \textit{initial state distribution}, and $R: (\mathcal{S} \times \mathcal{A}) \rightarrow \mathbb{R}$ is the \emph{reward function}. 
A (deterministic) \emph{policy} $\pi$ for $\mathcal{M}$ is a function $\pi: \mathcal{S} \rightarrow \mathcal{A}$. A path $\tau$ of $\mathcal{M}$ under policy $\pi$
is a sequence $\tau=(s_0, a_0), \ldots,(s_{T-1},a_{T-1})$ where $T=|\tau|$ is the path length, $P_I(s_0)>0$, $a_t=\pi(s_t)$ for all $t=0,\ldots,T-1$, and $P(s_{t+1}\mid s_t, a_t)>0$ for all $t=0,\ldots,T-2$.

\paragraph{Counterfactual Inference} Structural causal models (SCMs) \citep{halpern2005causes,pearl_2009} provide a mathematical framework for causal inference. Formally, a SCM is a tuple $\mathcal{C}=(\mathbf{U},\mathbf{V},\mathcal{F},P(\mathbf{U}))$, where $\mathbf{V}$ is the set of endogenous (observed) variables, $\mathbf{U}$ is a set of exogenous (unobserved) variables, $P(\mathbf{U})$ is a joint distribution over the possible values of each $U \in \mathbf{U}$, and $\mathcal{F}$ is a set of structural equations where each $f_V \in \mathcal{F}$ determines the value of endogenous variable $V$, given a fixed realisation of $U_V\in \mathbf{U}$ and set of direct causes (parents) $\mathbf{PA}_V \subseteq \mathbf{V}$. \jl{A causal graph can be defined from an SCM by drawing, for every $V\in\mathbf{V}$, directed edges from the nodes in $\mathbf{PA}_V$ and $U_V$ into $V$.} In the following, we denote random variables with capital letters and specific values of the variables in lowercase.

\jl{SCMs enable the evaluation of causal effects, that is, how the distribution of some (outcome) variable $Y$ changes after applying a so-called \textit{intervention} $X \leftarrow x$ on a (treatment) variable $X$. Such an intervention corresponds to replacing the structural assignment of $X$ with the constant value $x$. 
We can also perform \textit{counterfactual inference} to estimate, given an observation $\mathbf{V}=\mathbf{v}$, the hypothetical values of $\mathbf{V}$ had we applied some intervention. } 
\jl{Counterfactual inference involves inferring the value of the exogenous variables present in the observation $\mathbf{v}$, i.e., $P(\mathbf{U} \mid \mathbf{v})$, then evaluating the structural equations by replacing $P(\mathbf{U})$ with the inferred $P(\mathbf{U} \mid \mathbf{v})$, and applying the intervention.}

\paragraph{Counterfactual Inference in MDPs}
\begin{figure}
    \centering
    \begin{circuitikz}[scale=0.35, transform shape]

    \node[circle, draw, minimum size=2.3cm] (S_tp1) at (0, 0) {\fontsize{24}{40}\selectfont $S_{t+1}$};
    \node[circle, draw, minimum size=2.3cm] (S_tp2) at (3, 0) {\fontsize{24}{40}\selectfont $S_{t+2}$};
    \node[circle, draw, minimum size=2.3cm] (S_t) at (-3, 0) {\fontsize{24}{40} $S_t$};
    \node[circle, draw, minimum size=2.3cm] (A_t) at (-3, -3) {\fontsize{24}{40} $A_t$};
    \node[circle, draw, minimum size=2.3cm] (A_tp1) at (0, -3) {\fontsize{24}{40} $A_{t+1}$};
    \node[circle, draw, minimum size=2.3cm] (A_tp2) at (3, -3) {\fontsize{24}{40} $A_{t+2}$};
    \node[circle, draw, fill=gray!20, minimum size=2.3cm] (U_tm1) at (-3, 3) {\fontsize{24}{40} $U_{t-1}$};
    \node[circle, draw, fill=gray!20, minimum size=2.3cm] (U_t) at (0, 3) {\fontsize{24}{40} $U_{t}$};
    \node[circle, draw, fill=gray!20, minimum size=2.3cm] (U_tp1) at (3, 3) {\fontsize{24}{40} $U_{t+1}$};
    \node at ([xshift=-1.7cm]S_t.west) {\fontsize{24}{40} $\dots$};
    \node at ([xshift=1.7cm]S_tp2.east) {\fontsize{24}{40} $\dots$};

    \draw[->, >=Stealth]([xshift=-1cm]S_t.west) -- (S_t);
    \draw[->, >=Stealth] (S_t) -- (S_tp1);
    \draw[->, >=Stealth] (S_tp1) -- (S_tp2);
    \draw[->, >=Stealth] (S_t) -- (A_t);
    \draw[->, >=Stealth] (S_tp1) -- (A_tp1);
    \draw[->, >=Stealth] (S_tp2) -- (A_tp2);
    \draw[->, >=Stealth] (A_t) -- (S_tp1);
    \draw[->, >=Stealth] (A_tp1) -- (S_tp2);
    \draw[->, >=Stealth] (U_t) -- (S_tp1);
    \draw[->, >=Stealth] (U_tm1) -- (S_t);
    \draw[->, >=Stealth] (U_tp1) -- (S_tp2);
    \draw[->, >=Stealth] (S_tp2) -- ([xshift=1cm]S_tp2.east);
    \draw[->, dashed, >=Stealth] (A_tp2) -- ++(2.24,2.24);

\end{circuitikz}
    \caption{MDP causal graph. White nodes represent endogenous/observable variables; grey nodes represent exogenous/unobserved variables.}
    \label{fig:MDP_DAG}
\end{figure}
\jl{MDPs can be represented as SCMs with the causal graph in Figure \ref{fig:MDP_DAG} and the following structural equations:}
{%
\begin{equation}\label{eq:mdp_scm}
        S_{t+1} = f(S_{t},A_{t}, U_{t}); \ \ A_t = \pi(S_t); 
        \ \ S_0 = f_I(U_I)
\end{equation}
}

\jl{In the MDP context, the interventional distribution is the MDP's transition probabilities\footnote{\jl{in that if we apply an intervention $S_t \gets s$ and $A_t \gets a$, then the interventional distribution of $S_{t+1}$ corresponds to the MDP transition probabilities $P(\cdot \mid s,a)$.}}, and an observation (or realisation) is a path of the MDP, i.e., a sequence of transitions $(s_t, a_t, s_{t+1})$. Counterfactual inference is applied at the level of the individual timesteps: given the observed transition $s_t, a_t \rightarrow s_{t+1}$ at time $t$, we can evaluate the counterfactual probability of reaching outcome $S_{t+1} = \tilde{s}’$, had we been in state $S_t = \tilde{s}$ and performed action $A_t = \tilde{a}$. Formally, we define this counterfactual probability $\tilde{P}_t(\tilde{s}' \mid \tilde{s}, \tilde{a})$ as:
\[
\tilde{P}_t(\tilde{s}' \mid \tilde{s}, \tilde{a}) = P(\interv{S_{t+1}}{S_t}{\tilde{s}}{A_t}{\tilde{a}} = \tilde{s}' \mid S_t = s_t, A_t = a_t, S_{t+1} = s_{t+1}),
\]
where $\interv{S_{t+1}}{S_t}{\tilde{s}}{A_t}{\tilde{a}}$ denotes variable $S_{t+1}$ in the SCM after the intervention $S_t \gets \tilde{s}$ and $A_t \gets \tilde{a}$.

These counterfactual probabilities can be combined across multiple timesteps to construct a non-stationary \textit{counterfactual} MDP \citep{tsirtsis2021counterfactual}. This construction allows us to reason about counterfactual outcomes given an observed path.}
\jl{\begin{definition}[Counterfactual MDP (CFMDP)]
Given an observed path $\tau$ and MDP $\mathcal{M}$, the corresponding counterfactual MDP is a tuple ${\tilde{\mathcal{M}}_{\tau}}=(\mathcal{S}^{+}, \mathcal{A}, \tilde{P}, \tilde{P}_{I}, \mathcal{R}^{+})$ where $\mathcal{S}^{+}$ and $\mathcal{R}^{+}$ are the state space and reward function of $\mathcal{M}$, augmented with time; and:
\begin{enumerate}
    \item $\tilde{P}_I(s) = 1$ if $s=s_0$, $0$ otherwise;
    \item $\tilde{P}_{t}(\tilde{s}' \mid \tilde{s}, \tilde{a}) = P({S_{t+1}}_{(S_t=\tilde{s}, A_t=\tilde{a})} = \tilde{s}' \mid S_t = s_t, A_t = a_t, S_{t+1} = s_{t+1}), \forall t \in \{0,\ldots, |\tau|-1\}$.
\end{enumerate}
\end{definition}}
\jl{Counterfactual inference in MDPs is challenging because there are typically many SCMs consistent with the same observation and interventional distribution, yet they produce different counterfactual probabilities \citep{pmlr-v162-zhang22ab}. Most existing works avoid this problem by assuming a particular causal model to make counterfactual probabilities identifiable.} \jl{One such model is the Gumbel-max SCM \cite{oberst2019counterfactual}, which has been widely used for counterfactual analysis in MDPs }\citep{benz2022counterfactual,
kazemi2024counterfactual,
killian2022counterfactually, noorbakhsh2022counterfactual,
tsirtsis2021counterfactual, zhu2020counterfactual} due to its desirable property of counterfactual stability (see Definition \ref{def: counterfactual stability}). \jl{Notably, \citet{tsirtsis2021counterfactual} applied the Gumbel-max SCM to derive alternative sequences of actions, i.e., counterfactual paths, that would have produced a higher reward than the observed path\footnote{See Appendix \ref{app: gumbel} for more details on the construction of counterfactual MDPs using the Gumbel-max SCM. There also exist other works which apply bijective causal models \citep{nasr2023counterfactual, tsirtsis2024finding}, or genetic algorithms \citep{gajcin2024acter} to generate counterfactual paths.}. In this sense, counterfactual paths can be seen as \textit{counterfactual explanations} for how the observed policy could have been improved.
However, since the Gumbel-max SCM is just one of many possible SCMs compatible with a given MDP, the derived counterfactual MDP may be inaccurate, leading to unreliable counterfactual explanations.
}

\paragraph{Partial Counterfactual Inference}
Partial counterfactual inference methods bound counterfactual outcomes by considering all SCMs compatible with observational and interventional data. Key works are summarised in Table \ref{table: partial counterfactual inference}. Notably, the \textit{canonical SCM} approach, developed by \citet{pmlr-v162-zhang22ab}, constructs a linear or polynomial optimisation problem (depending on the causal graph structure) that identifies exact counterfactual probability bounds. However, this optimisation is very inefficient, as the number of linear constraints grows exponentially with the number and cardinality of the endogenous variables. Thus, research has shifted from exact methods to developing approximation methods. One notable exception is recent work by \citet{li2024probabilities}, who identified counterfactual probability bounds for categorical systems, given observational and interventional distributions. \jl{While their bounds are exact (that is, capturing all causal models compatible with the observation and interventional distributions), their bounds are often uninformative, i.e., very wide or trivial ($[0,1]$), because they make no additional assumptions about the underlying causal model.}
In a multi-step setting such as MDPs, having many transitions with loose or trivial bounds can quickly compound over several time steps, severely limiting the usefulness of counterfactual inference. \jl{In this paper, we propose adding two common and reasonable assumptions to tighten the bounds and improve the usefulness of counterfactual inference. We discuss the similarities and differences between their work and ours in more detail in Section \ref{sec: bounds}.
}
\begin{table}
\centering
\resizebox{0.8\linewidth}{!}{
\begin{tabular}{|l|l|l|}
\hline
& \textbf{Binary} & \textbf{Categorical} \\ 
& \textbf{outcomes} & \textbf{outcomes} \\ \hline
\textbf{Exact bounds}       & \citep{balke1994counterfactual, kang2006inequality, cai2008bounds}       & \citep{pmlr-v162-zhang22ab, li2024probabilities}, \textbf{ours} \\ \hline
\textbf{Approximate bounds} & \citep{robins1989analysis, manski1990nonparametric} & \citep{zaffalon2020, zaffalon2021, zaffalon2022bounding, zaffalon2023approximating, zaffalon2024, duarte2023, pmlr-v162-zhang22ab}\tablefootnote{\citet{pmlr-v162-zhang22ab} provide both an exact and approximation method in their paper.}            \\ \hline
\end{tabular}
}
\caption{Related works on partial counterfactual inference.}
\label{table: partial counterfactual inference}
\end{table}

\section{Partial Counterfactual Inference via Canonical SCMs}
\label{sec: partial CF inference}
\citet{pmlr-v162-zhang22ab} introduced a family of canonical SCMs capable of capturing all possible counterfactual distributions for any causal graph. \jl{Their partial counterfactual inference approach optimises over these canonical SCMs to identify exact counterfactual probability bounds}. In this section, we demonstrate how this approach can be applied to MDPs, and how additional assumptions can be incorporated into the optimisation problem to ensure counterfactuals are plausible and useful, addressing issues in existing work.

\subsection{Optimisation Procedure}
\jl{To perform partial counterfactual inference in MDPs, we must convert the MDP SCM \eqref{eq:mdp_scm} to its equivalent canonical SCM. To this end, we need to identify the c-component of the exogenous variable $U_t$ in the MDP causal graph (Figure \ref{fig:MDP_DAG}), which is defined as follows:}

\begin{definition}[c-component \citep{tian2002general}]
    Given a causal graph $\mathcal{G}$, a subset of its endogenous variables $\mathbf{C} \subseteq \mathbf{V}$ is a \textbf{c-component} if any two variables $V_i, V_j \in \mathbf{C}$ are connected by a sequence of bi-directed edges $V_i \leftarrow U_k$ and $V_j \leftarrow U_k$, where each $U_k$ is an exogenous parent shared by $V_i$ and $V_j$.
\end{definition}
\begin{definition}[Canonical SCM \citep{pmlr-v162-zhang22ab}]
    A \textbf{canonical SCM} is a tuple $\mathcal{C} = \langle \boldsymbol{V}, \boldsymbol{U}, \mathcal{F}, P \rangle$, where:
    \begin{enumerate}
        \item For every endogenous variable $V \in \boldsymbol{V}$, its values $v$ are determined by a structural equation $v \leftarrow f_V(pa_V, u_V)$ where, for any $pa_V$ and $u_V$, $f_V(pa_V, u_V)$ is contained within a finite domain $\Omega_V$.
        \item For every exogenous $U \in \boldsymbol{U}$, its values $u$ are drawn from a finite domain $\Omega_U$, where the cardinality of \jl{$U$} is equal to the total number of functions that map all possible inputs $pa_V \in \Omega_{\mathit{PA}_V}$ to values $v \in V$ for every endogenous $V$ in the c-component covering $U$\footnotemark, i.e., $
        \jl{|U|} = \prod_{V \in \mathbf{C}(U)}|\Omega_{{PA}_V} \mapsto \Omega_V|
        $
    \end{enumerate}
\end{definition}
\footnotetext{$\Omega_{\mathit{PA}_V} \mapsto \Omega_V$ denotes the set of all possible mappings from values in $\Omega_{{\mathit{PA}_V}}$ to values in $\Omega_V$.}
The c-component covering the single exogenous variable $U_t$ in an MDP is $\mathbf{C}(U_t) = \{S_{t+1}\}$. Therefore, for an MDP, \jl{\( |U| \)} is equal to the total number of functions that map all possible values of \( S_t \) and \( A_t \) (i.e., all possible combinations of states \( s \in \mathcal{S} \) and actions \( a \in \mathcal{A} \)) to all possible values of \( S_{t+1} \). Thus, the cardinality of \( U_t \) in the canonical MDP SCM is $|{U_t}| = |\mathcal{S}|^{|\mathcal{S}| \times |\mathcal{A}|}
$. Each value $u_t \in U_t$ indexes a unique structural equation, which deterministically maps all possible state-action pairs $(s, a)$ to a next state $s'$.

Given the canonical SCM representation of an MDP and an observed transition $s_t, a_t \rightarrow s_{t+1}$, we can define an optimisation procedure to find the minimum and maximum counterfactual probabilities for every transition \citep{pmlr-v162-zhang22ab}. The first step is to define a mapping between exogenous values $u_t \in U_t$ and the structural equation they index. We define an indicator variable $\mu \in \{0,1\}^{|\mathcal{S}| \times |\mathcal{A}| \times |{U_t}| \times |\mathcal{S}|}$ such that, for any $s_t, a_t, u_t$ and $s_{t+1}$:
\[
\mu_{s_t, a_t, u_t, s_{t+1}} = 
\begin{cases}
    1 & \text{if $f(s_t, a_t, u_t) = s_{t+1}$}\\
    0 & \text{otherwise}\\
\end{cases}
\]
Next, we define a vector $\theta \in \mathbb{R}^{|{U_t}|}$, where each $\theta_{u_t}$ represents the probability $P(U_t = u_t)$. Each instantiation of $\theta$ defines a unique SCM. Given an observed transition $s_t, a_t \rightarrow s_{t+1}$, we can write the counterfactual probability $\tilde{P}_t(\tilde{s}' \mid \tilde{s}, \tilde{a})$ of any transition $\tilde{s}, \tilde{a} \rightarrow \tilde{s}'$ in terms of $\mu$ and $\theta$:
\begin{equation}\label{eq:cf_probs}
    \tilde{P}_t(\tilde{s}' \mid \tilde{s}, \tilde{a}) = \frac{\sum_{u_t = 1}^{|U_t|} \mu_{\tilde{s}, \tilde{a}, u_t, \tilde{s}'} \cdot \mu_{s_t, a_t, u_t, s_{t+1}} \cdot \theta_{u_t}}{P(s_{t+1} \mid s_t, a_t)}
\end{equation}

\jl{To identify the counterfactual probability bounds, we search over all values of $\theta$ consistent with the observed transition and interventional data (i.e., the MDP's transition probabilities). This optimisation problem is defined as:}
{
\begin{equation}
\label{eq: optimisation}
\begin{aligned}
{\min / \max}_{\theta} \sum_{u_t = 1}^{|U_t|} \mu_{\tilde{s}, \tilde{a}, u_t, \tilde{s}'} \cdot \mu_{s_t, a_t, u_t, s_{t+1}} \cdot \theta_{u_t}\\
\textrm{s.t.} \sum_{u_t = 1}^{|U_t|} \mu_{s, a, u_t, s'} \cdot \theta_{u_t} = P(s' \mid s, a), \forall s, a, s'\\
0 \leq \theta_{u_t} \leq 1, \forall u_t, \qquad
\sum_{u_t = 1}^{|U_t|} \theta_{u_t} = 1
\end{aligned}
\end{equation}
}

\subsection{Incorporating Additional Assumptions}
While this optimisation correctly considers all SCMs consistent with the interventional and observational data, it can result in wide or trivial $[0,1]$ bounds. As previously discussed, having many MDP transitions with loose or trivial bounds can severely limit the usefulness of counterfactual inference in MDP settings. \jl{However, incorporating reasonable assumptions about the causal model can help to tighten counterfactual probability bounds enough to derive useful (but still robust) counterfactual policies. In this work, we adopt two assumptions (similar to those in \citep{haugh2023bounding, lorberbom2021learning}) \jl{when deriving counterfactual MDPs}.} The first assumption is \textit{counterfactual stability} \citep{oberst2019counterfactual}, which, in the context of an MDP, is defined as:
\begin{definition}[Counterfactual stability]\label{def: counterfactual stability} An MDP SCM \eqref{eq:mdp_scm} satisfies \textit{counterfactual stability} \jl{if, given we observed the transition $s_t, a_t \rightarrow s_{t+1}$ at time $t$}, the counterfactual outcome under a different state-action pair $(\tilde{s}, \tilde{a})$ will not change to some $S_{t+1} = \tilde{s}'\neq s_{t+1}$ unless $\dfrac{P(s_{t+1} \mid \tilde{s}, \tilde{a})}{P(s_{t+1} \mid s_t, a_t)} < \dfrac{P(\tilde{s}' \mid \tilde{s}, \tilde{a})}{P(\tilde{s}' \mid s_t, a_t)}$.
\end{definition}
\newcolumntype{P}[1]{>{\centering\arraybackslash}p{#1}}
\renewcommand{\arraystretch}{0.9} %
\begin{figure}
    \centering
    \resizebox{0.55\linewidth}{!}{%
            \begin{circuitikz}
            \tikzstyle{every node}=[font=\fontsize{32}{36}\selectfont]
            \draw  (10.75,18.5) circle (1.25cm) node {\fontsize{32}{36}\selectfont $s_1$} ;
            \draw [ fill={rgb,255:red,177; green,170; blue,170} ] (4.25,18.5) circle (1.25cm) node {\fontsize{32}{36}\selectfont $s_0$} ;
            \draw  (16.75,18.5) circle (1.25cm) node {\fontsize{32}{36}\selectfont $s_2$} ;
            \draw [ fill={rgb,255:red,177; green,170; blue,170} ] (10.75,14.75) circle (1.25cm) node {\fontsize{32}{36}\selectfont $s_1$} ;
            \draw  (4.25,14.75) circle (1.25cm) node {\fontsize{32}{36}\selectfont $s_0$} ;
            \draw  (16.75,14.75) circle (1.25cm) node {\fontsize{32}{36}\selectfont $s_2$} ;
            \draw [->, >=Stealth] (4.25,17.25) -- (4.25,16)node[pos=0.5, fill=white]{0.3};
            \draw [->, >=Stealth] (16.75,17.25) -- (16.75,16)node[pos=0.5, fill=white]{1.0};
            \draw [->, >=Stealth] (10,17.5) -- (5.5,15);
            \draw [->, >=Stealth] (11.5,17.5) -- (15.5,15);
            \draw [->, >=Stealth] (5.25,17.75) -- (15.5,14.75);
            \draw [->, >=Stealth] (5,17.5) -- (9.5,14.75);
            \node [] at (5.75,16.5) {0.4};
            \node [] at (9,17.5) {0.4};
            \node [] at (6.75,17.75) {0.3};
            \node [] at (12.5,17.5) {0.6};
            \end{circuitikz}
        }
    \caption{Example MDP where Gumbel-max produces unintuitive CF probabilities. The observed path is $s_0 \rightarrow s_1$.}
    \label{fig:gumbel-max-scm-unintuitive-probs}
\end{figure}
However, even causal models that assume counterfactual stability can yield unreasonable counterfactual probabilities. Consider the toy MDP in Figure \ref{fig:gumbel-max-scm-unintuitive-probs}. Table \ref{tab:gumbel-max-scm} compares the counterfactual transition probabilities obtained from the optimisation approach in Eq.~\eqref{eq: optimisation} (with no assumptions) and the Gumbel-max SCM (which satisfies counterfactual stability). \camera{Under the Gumbel-max SCM,} the counterfactual probability of transition $s=1, a=0 \rightarrow s'=2$ (highlighted in Table \ref{tab:gumbel-max-scm}) is greater than its nominal probability, even though state $s'=2$ was reachable from the observed state $s_t=0$, but not observed (see Appendix \ref{app: unintuitive probs explanation} for further explanation). Arguably, a state that was reachable from the observed state-action pair, but was not observed, should not become more likely in the counterfactual world (i.e., under the counterfactual state-action pair). To formalise this intuition, we introduce \textit{counterfactual monotonicity}\footnote{This is different from other definitions of monotonicity, which assume an ordering on interventions and outcomes, e.g., \citep{vlontzos2023estimating}.}:
\begin{definition}[Counterfactual monotonicity]
    \jl{An MDP SCM \eqref{eq:mdp_scm} satisfies \textit{counterfactual monotonicity} if, upon observing the transition $s_t, a_t \rightarrow s_{t+1}$, then, $\forall \tilde{s} \in \mathcal{S}, \tilde{a} \in \mathcal{A}$: 
    \begin{enumerate}[label=(Mon\arabic*)]
        \item $\tilde{P}_t(s_{t+1} \mid \tilde{s}, \tilde{a}) \geq P(s_{t+1} \mid \tilde{s}, \tilde{a})$ (i.e., observing an outcome cannot make it less likely in the counterfactual world), and
        \item $\forall \tilde{s}'\neq s_{t+1}$ with $P(\tilde{s}' \mid s_t, a_t)>0$, $\tilde{P}_t(\tilde{s}' \mid \tilde{s}, \tilde{a}) \leq P(\tilde{s}' \mid \tilde{s}, \tilde{a})$ (i.e., not observing a possible outcome cannot make it more likely in the counterfactual world).
    \end{enumerate}
  }  
\end{definition}
\begin{table}[H]
    \centering
    \resizebox{0.95\linewidth}{!}{
        \begin{tabular}{|c|c|c|c|P{0.7cm}|P{0.7cm}|c|P{0.7cm}|P{0.7cm}|}
        \hline
        \multirow{2}{*}{\text{$s$}} & \multirow{2}{*}{\text{$a$}} & \multirow{2}{*}{\text{$s'$}} & \multirow{2}{*}{\text{$P(s' \mid s, a)$}} & \multicolumn{2}{c|}{\makecell{\text{Optimisation} \\ \text{(\ref{eq: optimisation})}}} & \multirow{2}{*}{\makecell{\text{Gumbel-} \\ \text{Max \citep{oberst2019counterfactual}}}} & \multicolumn{2}{c|}{\makecell{\text{Optimisation} \\ \text{\eqref{eq: optimisation}-\eqref{eq:mon2}}}} \\ \cline{5-6} \cline{8-9} 
                                    &                                 &                 &                     & \text{LB}                             & \text{UB}                            &                                      & \text{LB}                              & \text{UB}                             \\ \Xhline{1pt}
        0                               & 0                               & 0      & 0.3                             & 0.0                                     & 0.0                                    & 0.0                                  & 0.0                                      & 0.0                                     \\ \hline
        0                               & 0                               & 1      & 0.4                             & 1.0                                     & 1.0                                    & 1.0                                  & 1.0                                      & 1.0                                     \\ \hline
        0                               & 0                               & 2      & 0.3                             & 0.0                                     & 0.0                                    & 0.0                                  & 0.0                                      & 0.0                                     \\ \hline
        1                               & 0                               & 0      & 0.4                             & 0.0                                     & 1.0                                    & 0.35                                 & 0.4                                      & 0.4                                     \\ \hline
        1                               & 0                               & 1      & 0.0                             & 0.0                                     & 0.0                                    & 0.0                                  & 0.0                                      & 0.0                                     \\ \hline
        \textbf{1}                      & \textbf{0}                      & \textbf{2} & \textbf{0.6}             & \textbf{0.0}                            & \textbf{1.0}                           & \textbf{0.65}                        & \textbf{0.6}                             & \textbf{0.6}                            \\ \hline
        2                               & 0                               & 0      & 0.0                             & 0.0                                     & 0.0                                    & 0.0                                  & 0.0                                      & 0.0                                     \\ \hline
        2                               & 0                               & 1      & 0.0                             & 0.0                                     & 0.0                                    & 0.0                                  & 0.0                                      & 0.0                                     \\ \hline
        2                               & 0                               & 2      & 1.0                             & 1.0                                     & 1.0                                    & 1.0                                  & 1.0                                      & 1.0                                     \\ \hline
        \end{tabular}
        }
    \caption{Counterfactual transition probabilities produced by various methods.}
    \label{tab:gumbel-max-scm}
\end{table}
The counterfactual stability \eqref{eq:cs} and monotonicity assumptions (\ref{eq:mon1},\ref{eq:mon2}) can be added as constraints to the optimisation problem \eqref{eq: optimisation} as:
\begin{alignat}{2}
&\tilde{P}_t(\tilde{s}' \mid \tilde{s}, \tilde{a}) = 0 &&\quad \text{if } \frac{P(s_{t+1} \mid \tilde{s}, \tilde{a})}{P(s_{t+1} \mid s_t, a_t)} \geq \frac{P(\tilde{s}' \mid \tilde{s}, \tilde{a})}{P(\tilde{s}' \mid s_t, a_t)} \notag \\
& &&\quad \text{and } P(\tilde{s}' \mid s_t, a_t) > 0, \notag \\ & &&\quad \forall (\tilde{s}, \tilde{a})\neq(s_{t}, a_t), \forall \tilde{s}' \neq s_{t+1} \label{eq:cs} \\[6pt]
&\tilde{P}_t(s_{t+1} \mid \tilde{s}, \tilde{a}) \geq P(s_{t+1} \mid \tilde{s}, \tilde{a}) &&\quad \text{if } P(s_{t+1} \mid \tilde{s}, \tilde{a}) > 0 \label{eq:mon1} \\[6pt]
&\tilde{P}_t(\tilde{s}' \mid \tilde{s}, \tilde{a}) \leq P(\tilde{s}' \mid \tilde{s}, \tilde{a}) &&\quad \text{if } P(\tilde{s}' \mid s_t, a_t) > 0, \forall \tilde{s}' \neq s_{t+1} \label{eq:mon2}
\end{alignat}

\section{Deriving Analytical Bounds}
\label{sec: bounds}
\jl{While the optimisation in Eq. \eqref{eq: optimisation}-\eqref{eq:mon2} yields exact bounds, it can be very inefficient as the number of constraints grows exponentially with the sizes of the state and action spaces. However, in the MDP setting (Markovian, no unobserved confounders), we have proven that this linear optimisation problem always reduces to exact closed-form solutions, illustrated below}.

\subsection{Analytical Bounds}
\jl{The support of a state-action pair $(\tilde{s}, \tilde{a})$ is defined as the set of next states with nonzero probability when action $\tilde{a}$ is taken in state $\tilde{s}$.} Given the observed transition $s_t, a_t \rightarrow s_{t+1}$, the counterfactual probability of any transition $\tilde{s}, \tilde{a} \rightarrow \tilde{s}'$ depends only on whether the state-action pair $(\tilde{s}, \tilde{a})$ is the observed state-action pair $(s_t, a_t)$, or whether its support is disjoint from or overlaps with the support of the observed state-action pair. The lower and upper bounds, denoted by $\tilde{P}_{t}^{\it LB}$ and $\tilde{P}_{t}^{\it UB}$, are given in Theorems \ref{theorem: observed state}-\ref{theorem:ub overlapping}. \camera{For clarity, we provide a proof sketch below and defer to the full proof in Appendix \ref{sec: probability bounds proof}.}

\begin{theorem}
\label{theorem: observed state}
For the observed state-action pair $(s_t, a_t)$, the linear program will produce the following bounds:
{
\[\tilde{P}_{t}^{\it LB}(s_{t+1} \mid s_t, a_t) = \tilde{P}_{t}^{\it UB}(s_{t+1} \mid s_t, a_t) = 1\] 
\[\forall \tilde{s}' \in \mathcal{S} \setminus \{s_{t+1}\}, \tilde{P}_{t}^{\it LB}(\tilde{s}' \mid s_t, a_t) = \tilde{P}_{t}^{\it UB}(\tilde{s}' \mid s_t, a_t) = 0\]
}
\end{theorem}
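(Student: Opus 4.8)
The plan is to show that, when $(\tilde{s},\tilde{a})=(s_t,a_t)$, the objective of the optimisation problem \eqref{eq: optimisation} is in fact \emph{constant} over its entire feasible region, so that the minimum and maximum coincide and no genuine optimisation is needed. First I would note that, by the definition of a path, the observed transition satisfies $P(s_{t+1}\mid s_t,a_t)>0$, so the denominator in \eqref{eq:cf_probs} is strictly positive and the ratio is well defined; moreover the feasible set is nonempty, since the $\theta$ induced by the true MDP SCM satisfies every constraint. It therefore suffices to evaluate the numerator of \eqref{eq:cf_probs} for an arbitrary feasible $\theta$.

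For the case $\tilde{s}'=s_{t+1}$, the key step is the idempotence of the indicator $\mu$: since $\mu_{s_t,a_t,u_t,s_{t+1}}\in\{0,1\}$, we have $\mu_{s_t,a_t,u_t,s_{t+1}}\cdot\mu_{s_t,a_t,u_t,s_{t+1}}=\mu_{s_t,a_t,u_t,s_{t+1}}$, so the numerator collapses to $\sum_{u_t}\mu_{s_t,a_t,u_t,s_{t+1}}\,\theta_{u_t}$. But this is precisely the left-hand side of the feasibility constraint in \eqref{eq: optimisation} instantiated at $(s,a,s')=(s_t,a_t,s_{t+1})$, which equals $P(s_{t+1}\mid s_t,a_t)$. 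Dividing by the denominator yields $\tilde{P}_t(s_{t+1}\mid s_t,a_t)=1$ for every feasible $\theta$, whence both the lower and upper bounds equal $1$.

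For the case $\tilde{s}'\neq s_{t+1}$, the crucial observation is that each $u_t$ indexes a single deterministic structural equation, so $f(s_t,a_t,u_t)$ takes exactly one value. Consequently $\mu_{s_t,a_t,u_t,\tilde{s}'}=1$ and $\mu_{s_t,a_t,u_t,s_{t+1}}=1$ cannot hold simultaneously when $\tilde{s}'\neq s_{t+1}$, so their product vanishes for every $u_t$. The numerator of \eqref{eq:cf_probs} is therefore identically zero, giving $\tilde{P}_t(\tilde{s}'\mid s_t,a_t)=0$ for every feasible $\theta$, and again the minimum and maximum both equal $0$.

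The argument is short, and I expect the only real subtlety to be bookkeeping rather than difficulty. One must confirm that both conclusions hold \emph{uniformly over the feasible polytope}, so that the LP objective is a constant function and not merely a value attained at some vertex; since the two reductions above are independent of $\theta$, this uniformity is immediate. It then follows that adding the stability \eqref{eq:cs} and monotonicity \eqref{eq:mon1}--\eqref{eq:mon2} constraints only shrinks an already singleton-valued feasible set and so leaves the bounds unchanged. The main points to state carefully are the nonemptiness of the feasible region and the positivity of $P(s_{t+1}\mid s_t,a_t)$, both of which follow from the transition having been observed along a valid path.
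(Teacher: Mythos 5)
Your core computation is correct, and it is essentially the argument the paper itself uses (its lemma on the counterfactual probabilities of the observed state-action pair): idempotence of the indicator, $\mu_{s_t,a_t,u_t,s_{t+1}}\cdot\mu_{s_t,a_t,u_t,s_{t+1}}=\mu_{s_t,a_t,u_t,s_{t+1}}$, together with the interventional constraint forces the numerator of \eqref{eq:cf_probs} to equal $P(s_{t+1}\mid s_t,a_t)$ when $\tilde{s}'=s_{t+1}$, and determinism of $f$ makes $\mu_{s_t,a_t,u_t,\tilde{s}'}\cdot\mu_{s_t,a_t,u_t,s_{t+1}}=0$ for every $u_t$ when $\tilde{s}'\neq s_{t+1}$. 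Both identities hold uniformly over the feasible polytope, so the objective is constant and min and max coincide.

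The genuine gap is in how you handle feasibility and the added constraints. First, you justify nonemptiness by appealing to ``the $\theta$ induced by the true MDP SCM,'' but an MDP does not induce a unique SCM (non-identifiability is the premise of the paper), and no off-the-shelf choice automatically satisfies the full constraint set \eqref{eq: optimisation}--\eqref{eq:mon2}: the Gumbel-max SCM satisfies counterfactual stability but violates counterfactual monotonicity (this is exactly the paper's example in Figure \ref{fig:gumbel-max-scm-unintuitive-probs} and Table \ref{tab:gumbel-max-scm}), while the independent coupling $\theta_u=\prod_{(s,a)}P(f_u(s,a)\mid s,a)$ satisfies monotonicity with equality but violates counterfactual stability whenever the CS conditions bind. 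Nonemptiness of the constrained polytope therefore requires an explicit construction, which is what the paper supplies (a per-pair assignment combined across state-action pairs by its inductive argument). Second, your closing claim that adding \eqref{eq:cs}--\eqref{eq:mon2} ``only shrinks an already singleton-valued feasible set and so leaves the bounds unchanged'' is backwards as stated: if the forced values $1$ and $0$ were \emph{incompatible} with those constraints, the LP would become infeasible rather than retain its bounds. What is needed, and what the paper checks, is that the forced values themselves satisfy the added constraints at $(s_t,a_t)$: $\tilde{P}_t(s_{t+1}\mid s_t,a_t)=1\geq P(s_{t+1}\mid s_t,a_t)$ gives Mon1, $\tilde{P}_t(\tilde{s}'\mid s_t,a_t)=0\leq P(\tilde{s}'\mid s_t,a_t)$ gives Mon2, and CS is vacuous for the observed pair since $\frac{P(s_{t+1}\mid s_t,a_t)}{P(s_{t+1}\mid s_t,a_t)}=\frac{P(\tilde{s}'\mid s_t,a_t)}{P(\tilde{s}'\mid s_t,a_t)}=1$, so its strict-inequality trigger never fires. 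With that verification added, and feasibility supplied by the paper's constructive argument rather than by an appeal to a ``true'' SCM, your proof is complete.
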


\begin{theorem}
\label{theorem: ub disjoint}
For state-action pairs $(\tilde{s}, \tilde{a})\neq (s_t, a_t)$ which have disjoint support from the observed $(s_t, a_t)$, the linear program will produce, $\forall \tilde{s}' \in \mathcal{S}$, the following bounds: 
\[
\tilde{P}_{t}^{\it UB}(\tilde{s}' \mid \tilde{s}, \tilde{a}) =  
\begin{cases}
\dfrac{P(\tilde{s}' \mid \tilde{s}, \tilde{a})}{P(s_{t+1} \mid s_t, a_t)} & \textnormal{ if }P(\tilde{s}' \mid \tilde{s}, \tilde{a}) < P(s_{t+1} \mid s_t, a_t) \\
1 &\textnormal{ otherwise}
\end{cases}
\]

\[
\tilde{P}_{t}^{\it LB}(\tilde{s}' \mid \tilde{s}, \tilde{a}) =  
\begin{cases}
\dfrac{%
  P(\tilde{s}' \mid \tilde{s}, \tilde{a}) - \bigl(1 - P(s_{t+1} \mid s_t, a_t)\bigr)%
}{%
  P(s_{t+1} \mid s_t, a_t)%
} 
& 
\begin{aligned}
&\textnormal{if } P(\tilde{s}' \mid \tilde{s}, \tilde{a}) > \\&1 - P(s_{t+1} \mid s_t, a_t)
\end{aligned} \\[1.5mm]
0 & \textnormal{otherwise.}
\end{cases}
\]
\end{theorem}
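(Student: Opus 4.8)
The plan is to recognise the optimisation \eqref{eq: optimisation}--\eqref{eq:mon2} as a classical Fr\'echet--Hoeffding coupling problem for two events. First I would rewrite the numerator of \eqref{eq:cf_probs}: since each $u_t$ indexes a complete deterministic map $\mathcal{S}\times\mathcal{A}\to\mathcal{S}$ and $\theta$ is a probability distribution over these maps, the quantity $\sum_{u_t}\mu_{\tilde{s},\tilde{a},u_t,\tilde{s}'}\,\mu_{s_t,a_t,u_t,s_{t+1}}\,\theta_{u_t}$ is exactly $P_\theta(A\cap B)$, where $A$ is the event ``$(s_t,a_t)$ is mapped to $s_{t+1}$'' and $B$ is the event ``$(\tilde{s},\tilde{a})$ is mapped to $\tilde{s}'$''. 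The marginal constraints $\sum_{u_t}\mu_{s,a,u_t,s'}\,\theta_{u_t}=P(s'\mid s,a)$ then pin down $P_\theta(A)=P(s_{t+1}\mid s_t,a_t)=:p$ (with $p>0$ since $s_{t+1}$ is observed) and $P_\theta(B)=P(\tilde{s}'\mid\tilde{s},\tilde{a})=:q$, while leaving the coupling between $A$ and $B$ unconstrained.

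Next I would verify that, under disjoint support, the additional constraints \eqref{eq:cs}--\eqref{eq:mon2} are all vacuous. Disjointness gives $s_{t+1}\notin\mathrm{supp}(\tilde{s},\tilde{a})$ and $\tilde{s}'\notin\mathrm{supp}(s_t,a_t)$, hence $P(s_{t+1}\mid\tilde{s},\tilde{a})=0$ and $P(\tilde{s}'\mid s_t,a_t)=0$; each of the three constraints carries a guard requiring one of these quantities to be strictly positive, so none of them is active. The feasible region for the objective therefore reduces to the set of all couplings of the two binary marginals $(p,1-p)$ and $(q,1-q)$.

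The bounds then follow from the standard two-event inequalities. For the upper bound, $P_\theta(A\cap B)\le\min\{P_\theta(A),P_\theta(B)\}=\min(p,q)$, so dividing by $p$ gives $q/p$ when $q<p$ and $1$ otherwise. For the lower bound, inclusion--exclusion gives $P_\theta(A\cap B)=p+q-P_\theta(A\cup B)\ge p+q-1$, which together with $P_\theta(A\cap B)\ge 0$ yields $P_\theta(A\cap B)\ge\max(0,p+q-1)$; dividing by $p$ gives $(q-(1-p))/p$ when $p+q>1$ and $0$ otherwise. These match the claimed expressions exactly.

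The crux, and the step I expect to require the most care, is tightness: exhibiting a feasible $\theta$ that attains each bound. I would first construct the joint law of the two images from the extremal $2\times 2$ table over $(\mathbbm{1}_A,\mathbbm{1}_B)$ --- nesting $B\subseteq A$ when $q\le p$ for the maximum, and either forcing $P_\theta(A\cup B)=1$ (when $p+q>1$) or making $A$ and $B$ disjoint (when $p+q\le 1$) for the minimum --- and then extend it to a distribution over complete maps by assigning the images of all remaining state-action pairs independently with their nominal transition probabilities. Disjointness is again essential here: because $s_{t+1}\neq\tilde{s}'$ and each state lies outside the other pair's support, the prescribed coupling is consistent with both full marginals $P(\cdot\mid s_t,a_t)$ and $P(\cdot\mid\tilde{s},\tilde{a})$, so the construction is a valid $\theta$ and, by the vacuousness established above, automatically respects \eqref{eq:cs}--\eqref{eq:mon2}. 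Checking the marginal and normalisation constraints of this explicit $\theta$ is routine, and I would defer it to the appendix.
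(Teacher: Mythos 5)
Your coupling formulation is sound, and the bound values you derive are exactly the paper's: your inequality $P_\theta(A\cap B)\le\min(p,q)$ is their Lemma \ref{lemma:absolute max cf prob}, your inclusion--exclusion lower bound $P_\theta(A\cap B)\ge\max(0,p+q-1)$ is the argument in their Theorem \ref{proof theorem: lb disjoint}, and your vacuousness observation for the disjoint pair is their Lemma \ref{lemma: constraints do not apply if disjoint}. Up to that point the two proofs coincide, modulo vocabulary.

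The gap is in your tightness step. In the optimisation problem, the monotonicity and counterfactual-stability constraints are universally quantified over \emph{all} state-action pairs (constraints \eqref{proofeq:monotonicity1}--\eqref{proofeq:counterfactual stability} carry $\forall \tilde{s},\tilde{a},\tilde{s}'$), not only over the disjoint pair whose bound is being computed; your vacuousness argument covers only that one pair. Your witness extends the extremal $2\times 2$ coupling by assigning the images of every remaining pair \emph{independently} with their nominal probabilities, and under that product law any third pair $(s'',a'')$ gets counterfactual distribution equal to its nominal one, $\tilde{P}_t(\cdot\mid s'',a'')=P(\cdot\mid s'',a'')$. This violates counterfactual stability whenever it is active for such a pair, since CS demands an exact zero. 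Concretely: states $\{1,2,3\}$, $P(\cdot\mid s_t,a_t)=(0.5,0.5,0)$ with $s_{t+1}=1$, the disjoint pair supported on $\{3\}$, and a third pair with $P(\cdot\mid s'',a'')=(0.9,0.1,0)$; then $\frac{P(1\mid s'',a'')}{P(1\mid s_t,a_t)}=1.8>0.2=\frac{P(2\mid s'',a'')}{P(2\mid s_t,a_t)}$ and $P(2\mid s_t,a_t)>0$, so any feasible $\theta$ must give $\tilde{P}_t(2\mid s'',a'')=0$, whereas your construction gives $0.1$. So the constructed $\theta$ need not be feasible, and the claim that it ``automatically respects'' the constraints is unwarranted. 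The paper closes exactly this hole with its inductive splitting argument (Appendix \ref{sec: probability bounds induction}): state-action pairs are added one at a time, and each added pair receives a generally \emph{non-product} conditional assignment --- drawn from the case analysis, including the overlapping-support cases of Theorem \ref{proof theorem:ub overlapping} --- that satisfies its own CS and monotonicity constraints without altering the counterfactual probabilities of the pairs already added. Your proof would be repaired by replacing the independent extension with such constraint-compliant assignments for the remaining pairs, which is in essence the paper's induction; as written, the extension step fails.
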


\begin{theorem}
\label{theorem:ub overlapping}
For state-action pairs $(\tilde{s}, \tilde{a})\neq (s_t, a_t)$ which have overlapping support with the observed $(s_t, a_t)$, the linear program will produce, $\forall \tilde{s}' \in \mathcal{S}$, the following upper bounds for $\tilde{P}_{t}^{\it UB}(\tilde{s}' \mid \tilde{s}, \tilde{a})$:
\[
\begin{cases}
\dfrac{\min\left(\begin{array}{@{}c@{}} P(s_{t+1} \mid s_t, a_t), P(s_{t+1} \mid \tilde{s}, \tilde{a}) \end{array}\right)}{P(s_{t+1} \mid s_t, a_t)} & \text{ if } \tilde{s}' = s_{t+1} \\
0 & \textnormal{ if CS conditions} \\
\min\left(\begin{array}{@{}c@{}} P(\tilde{s}' \mid \tilde{s}, \tilde{a}), 1 - P(s_{t+1} \mid \tilde{s}, \tilde{a}) \end{array}\right) & \textnormal{ if } P({\tilde{s}' \mid s_t, a_t}) > 0 \\
\min\left(\begin{array}{@{}c@{}} 1 - P(s_{t+1} \mid \tilde{s}, \tilde{a}), \dfrac{P(\tilde{s}' \mid \tilde{s}, \tilde{a})}{P(s_{t+1} \mid s_t, a_t)} \end{array}\right) & \textnormal{ otherwise}
\end{cases}
\]

{
and the following lower bounds for $\tilde{P}_{t}^{\it LB}(\tilde{s}' \mid \tilde{s}, \tilde{a})$:
\[
\begin{cases}
\max\left(P(\tilde{s}' \mid \tilde{s}, \tilde{a}), 1 - \sum_{s' \in \mathcal{S}\setminus\{\tilde{s}'\}}{\tilde{P}_{t}^{\it UB}(s' \mid \tilde{s}, \tilde{a})}\right) & \textnormal{ if $\tilde{s}'= s_{t+1}$}\\
0 & \textnormal{ if CS conditions}\\
\max\left(0, 1 - \sum_{s' \in \mathcal{S}\setminus\{\tilde{s}'\}}{\tilde{P}_{t}^{\it UB}(s' \mid \tilde{s}, \tilde{a})}\right) & \textnormal{ otherwise}
\end{cases}
\]
}
\noindent where the counterfactual stability (CS) conditions are $P({\tilde{s}' \mid s_t, a_t}) > 0$  and  $\dfrac{P(s_{t+1} \mid \tilde{s}, \tilde{a})}{P(s_{t+1} \mid s_t, a_t)} \geq \dfrac{P(\tilde{s}' \mid \tilde{s}, \tilde{a})}{P({\tilde{s}' \mid s_t, a_t})}$.
\end{theorem}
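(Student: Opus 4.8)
The plan is to reinterpret the linear program \eqref{eq: optimisation}--\eqref{eq:mon2} as a \emph{coupling} problem between two potential outcomes and then apply Fréchet--Hoeffding-type bounds, sharpened by the counterfactual stability and monotonicity constraints. Write $X := \interv{S_{t+1}}{S_t}{s_t}{A_t}{a_t}$ and $Y := \interv{S_{t+1}}{S_t}{\tilde{s}}{A_t}{\tilde{a}}$ for the potential next states under the observed and counterfactual state--action pairs; under any $\theta$ these have fixed marginals $P(\cdot \mid s_t, a_t)$ and $P(\cdot \mid \tilde{s}, \tilde{a})$. The numerator of \eqref{eq:cf_probs} is exactly the joint mass $\beta_{s_{t+1}, \tilde{s}'} := P(X = s_{t+1}, Y = \tilde{s}')$, so $\tilde{P}_t(\tilde{s}' \mid \tilde{s}, \tilde{a}) = \beta_{s_{t+1}, \tilde{s}'}/P(s_{t+1} \mid s_t, a_t)$. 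First I would show that optimising over $\theta$ is equivalent to optimising over couplings $\beta$ of these two marginals: any nonnegative $\beta$ with the correct row and column sums extends to a feasible $\theta$, since the remaining potential outcomes $S^{(s,a)}$ can be drawn conditionally independently given $X$ while respecting their own marginals and stability/monotonicity constraints. This collapses the exponentially large LP to a two-variable transport problem in which \eqref{eq:cs}--\eqref{eq:mon2} become linear restrictions on $\beta$.

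For the upper bounds I would treat each case by combining three valid inequalities on $\beta_{s_{t+1},\tilde{s}'}$: (i) the Fréchet bound $\beta_{s_{t+1},\tilde{s}'} \le \min\bigl(P(s_{t+1} \mid s_t, a_t),\, P(\tilde{s}' \mid \tilde{s}, \tilde{a})\bigr)$; (ii) monotonicity \eqref{eq:mon2}, giving $\tilde{P}_t(\tilde{s}' \mid \tilde{s}, \tilde{a}) \le P(\tilde{s}' \mid \tilde{s}, \tilde{a})$ whenever $P(\tilde{s}' \mid s_t, a_t) > 0$; and (iii) normalisation $\sum_{s'} \tilde{P}_t(s' \mid \tilde{s}, \tilde{a}) = 1$ together with \eqref{eq:mon1}, which forces $\tilde{P}_t(\tilde{s}' \mid \tilde{s}, \tilde{a}) \le 1 - \tilde{P}_t(s_{t+1} \mid \tilde{s}, \tilde{a}) \le 1 - P(s_{t+1} \mid \tilde{s}, \tilde{a})$ for $\tilde{s}' \neq s_{t+1}$. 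The case $\tilde{s}' = s_{t+1}$ follows from (i) alone; the CS case is immediate from \eqref{eq:cs}; the case $P(\tilde{s}' \mid s_t, a_t) > 0$ is the minimum of (ii) and (iii); and the remaining case is the minimum of (iii) and the normalised Fréchet term from (i). For each case I would then exhibit an explicit coupling attaining the value, essentially piling as much joint mass as the active inequality allows onto the pair $(s_{t+1}, \tilde{s}')$ and distributing the rest subject to the marginals, and verify it still satisfies \eqref{eq:cs}--\eqref{eq:mon2}.

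For the lower bounds the validity direction is again short: nonnegativity gives $\tilde{P}_t \ge 0$, \eqref{eq:mon1} gives $\tilde{P}_t(s_{t+1} \mid \tilde{s}, \tilde{a}) \ge P(s_{t+1} \mid \tilde{s}, \tilde{a})$, and normalisation forces $\tilde{P}_t(\tilde{s}' \mid \tilde{s}, \tilde{a}) = 1 - \sum_{s' \neq \tilde{s}'} \tilde{P}_t(s' \mid \tilde{s}, \tilde{a}) \ge 1 - \sum_{s' \neq \tilde{s}'} \tilde{P}_t^{\it UB}(s' \mid \tilde{s}, \tilde{a})$, which yields both displayed formulas after taking the maximum with the relevant floor. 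I expect the \emph{tightness} of these lower bounds to be the main obstacle. Unlike the individual upper bounds, whose maximising couplings can be built one coordinate at a time, the lower bound asks that every off-target coordinate $s' \neq \tilde{s}'$ reach its own upper bound simultaneously, and a priori the per-coordinate optimal couplings may be mutually incompatible. The key will be to show the upper-bound formulas are \emph{consistent}: whenever $\sum_{s' \neq \tilde{s}'} \tilde{P}_t^{\it UB}(s' \mid \tilde{s}, \tilde{a}) \le 1$ a single valid coupling realises them jointly (so the residual $1 - \sum$ is attained), and otherwise the floor term ($0$, or $P(s_{t+1} \mid \tilde{s}, \tilde{a})$ via \eqref{eq:mon1}) is the true optimum, so the outer $\max$ is never loose. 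I would establish this via a water-filling construction on $\beta$ that allocates mass to the overlapping support in decreasing priority, checking at each step that stability and monotonicity remain satisfied, and confirming feasibility of the residual assignment through a transport-feasibility (Hall-type) argument on the remaining marginal slack.
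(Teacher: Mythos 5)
Your proposal is correct and takes essentially the same route as the paper: collapsing the exponential LP to a two-marginal coupling problem — with the remaining pairs' outcomes drawn conditionally independently given the observed outcome, each from its own constraint-satisfying coupling (note plain independence would not do, as it can violate the CS constraint for other pairs) — is exactly the paper's inductive $\theta$-splitting argument in Appendix \ref{sec: probability bounds induction}, and your Fréchet bound plus explicit attaining couplings mirror Lemma \ref{lemma:absolute max cf prob} and the case analysis of Theorems \ref{proof theorem:ub overlapping} and \ref{proof theorem:lb overlapping}. The consistency fact you flag as the main obstacle — that the off-target upper bounds are simultaneously attainable, so the $1-\sum\tilde{P}_{t}^{\it UB}$ floor is tight — is precisely what the paper resolves via Lemma \ref{lemma: existence of theta for overlapping case.} together with Cases 1(a)/(b) and 3(a)/(b) of Theorem \ref{proof theorem:lb overlapping}, and your water-filling/marginal-feasibility plan is a sound alternative way to discharge that step.
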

\camera{
\paragraph{Proof Sketch}
Consider an observed transition $s_t, a_t \rightarrow s_{t+1}$ and counterfactual transition $\tilde{s}, \tilde{a} \rightarrow \tilde{s}'$. For a fixed $\theta$, we can compute the counterfactual probability $\tilde{P}_t(\tilde{s}' \mid \tilde{s}, \tilde{a})$ as in Eq. \eqref{eq:cf_probs}. Since $P(s_{t+1} \mid s_t, a_t)$ is fixed, $\tilde{P}_t(\tilde{s}' \mid \tilde{s}, \tilde{a})$ depends only on:
\begin{equation}
    \sum_{u_t = 1}^{|U_t|} \mu_{\tilde{s}, \tilde{a}, u_t, \tilde{s}'} \cdot \mu_{s_t, a_t, u_t, s_{t+1}} \cdot \theta_{u_t} = \sum_{\substack{u_t \in U_t \\f(s_t, a_t, u_t) = s_{t+1} \\ f(\tilde{s}, \tilde{a}, u_t) = \tilde{s}'}} \theta_{u_t}
\label{eq: optimisation sum}
\end{equation}
Therefore, to minimise/maximise the counterfactual probability of a particular transition, we choose $\theta$ to minimise/maximise the value of this sum. The main challenge is that $\theta$ must also satisfy the optimisation constraints for \emph{all} state-action pairs in the MDP. To address this, the proof proceeds in two steps:
\begin{enumerate}
    \item \textbf{Induction over state-action pairs (\ref{sec: probability bounds induction})}. We show that, for any MDP $\mathcal{M}$, if we can identify assignments of $\theta$ that satisfy the constraints individually for all $|\mathcal{S}| \times |\mathcal{A}|$ state-action pairs in $\mathcal{M}$, then these assignments can be combined to form a single $\theta$ that satisfies the constraints of all $|\mathcal{S}| \times |\mathcal{A}|$ state-action pairs simultaneously.

    \item \textbf{Closed-form Probability Bounds (\ref{sec: probability bounds cases})} 
    We then analyse how to minimise/maximise a particular counterfactual transition probability, $\tilde{P}_t(\tilde{s}’ \mid \tilde{s}, \tilde{a})$.
    When the supports of the observed and counterfactual state-action pairs overlap, the counterfactual stability and counterfactual monotonicity assumptions influence the counterfactual probability bounds, leading to different closed-form expressions. In D.1.3 we systematically cover all possible cases in an MDP and prove, subject to the constraints of the optimisation problem, the minimum and maximum values of Eq. \eqref{eq: optimisation sum} to identify the counterfactual probability bounds.
\end{enumerate}
}

\paragraph{Equivalence with Existing Work} \jl{As discussed in Section \ref{sec: background}, \citet{li2024probabilities} have also derived counterfactual probability bounds for categorical models, given observational and interventional distributions. In this work, we have independently derived equivalent counterfactual probability bounds in the MDP setting, using the canonical SCM framework \citep{pmlr-v162-zhang22ab}. However, unlike \citet{li2024probabilities}, we additionally incorporate reasonable assumptions about the causal model to tighten the counterfactual probability bounds and enhance the practical utility of counterfactual inference for MDPs. In Appendix \ref{app: equivalence proofs}, we prove our closed-form bounds are equivalent to the bounds of \citet{li2024probabilities} when the counterfactual stability and monotonicity assumptions are removed (or equivalently, for the case where the state-action pair has \textit{disjoint support} with the observed state-action pair\footnote{When the state-action pair has disjoint support, counterfactual stability and monotonicity hold vacuously.}).} This equivalence reaffirms the correctness of our bounds and suggests we could reformulate them for an observational distribution \jl{(as opposed to a single observed path)}, which would be an interesting future direction.
\paragraph{Flexibility of Assumptions}
\jl{While the bounds given in Theorems \ref{theorem: observed state}-\ref{theorem:ub overlapping} incorporate the assumptions of counterfactual stability and counterfactual monotonicity, our approach is flexible: these assumptions can be removed if they do not hold in a particular environment (for example, if a domain expert determines they are not applicable). This modularity ensures that our approach can adapt to a variety of settings without requiring major changes to the underlying procedure. Appendix \ref{app: assumption plausibility} provides further discussion on the plausibility of our assumptions, as well as closed-form solutions for the case where only the standard counterfactual stability assumption is adopted and for the case where both assumptions are removed.}

\section{Robust counterfactual policies}
With our analytical bounds, we can compute a non-stationary interval counterfactual MDP for any MDP $\mathcal{M}$ and observed path $\tau$. Formally, an interval Markov decision process (IMDP) \citep{GIVAN200071} is a tuple $(\mathcal{S}, \mathcal{A}, P_{\updownarrow}, \mathcal{R})$, extending a standard MDP with an uncertain transition probability function $P_{\updownarrow}$ that maps each transition to a probability within its bounds $P_{\updownarrow}(s' \mid s, a) = [P^{\it LB}(s' \mid s, a), P^{\it UB}(s' \mid s, a)]$. To construct an interval counterfactual MDP, we compute the counterfactual probability bounds for every transition in the MDP, given each observed transition in $\tau$:
\begin{definition}[Interval Counterfactual MDP (ICFMDP)]
Given an observed path $\tau$ and MDP $\mathcal{M}$, the counterfactual probability for each transition in the interval counterfactual MDP (ICFMDP) ${\tilde{\mathcal{M}}^{\updownarrow}_{\tau}}$ is defined, for $t=0,\ldots,T-1$, as $\tilde{P}^{\updownarrow}_{t}(\tilde{s}' \mid \tilde{s}, \tilde{a}) = [\tilde{P}_{t}^{\it LB}(\tilde{s'} \mid \tilde{s}, \tilde{a}), \tilde{P}_{t}^{\it UB}(\tilde{s'} \mid \tilde{s}, \tilde{a})]$.
\end{definition}
To derive optimal counterfactual policies for this ICFMDP, we apply robust value iteration  \citep{mathiesen2024intervalmdp}, which pessimistically optimises the expected reward over the worst-case CFMDP within the ICFMDP:
\begin{equation}\label{eq:pessimistic_v}
V^{*}_t(s) = \max_{\pi} \min_{\tilde{P}_t \in \tilde{P}^{\updownarrow}_{t}}\mathbb{E}_{s' \sim \tilde{P}_t(\cdot \mid s, \pi(s))}\left[R(s, \pi(s)) + V^{*}_{t+1}(s')\right]
\end{equation}
\camera{This yields a \textit{robust} counterfactual policy: its performance on the true (unknown) causal model is guaranteed to be at least as good as the worst-case performance over the ICFMDP. Moreover, as we prove in Appendix \ref{sec: sampled interval cfmdp proof}, any CFMDP entailed by the ICFMDP is a valid (i.e., the ICFMDP formulation does not introduce spurious CFMDPs), meaning the value function bounds are tight}\footnote{For some applications, one might be interested in an \textit{optimistic} variant of the problem. This is supported by our method and boils down to maximising (instead of minimising) the expected reward under the best-case CFMDP within the ICFMDP.}.

\definecolor{darkgreen}{rgb}{0.0, 0.5, 0.0}
\definecolor{violet}{rgb}{0.56, 0.0, 1.0}
\section{Evaluation}
We apply our methodology to derive counterfactual policies for various MDPs, addressing four main research questions: 
(1) how does our CF inference approach perform in off-policy evaluation problems; 
(2) how does our policy's performance and robustness compare to the Gumbel-max SCM approach; (3) how do the counterfactual stability and monotonicity assumptions impact the probability bounds; and (4) how fast is our approach compared with the Gumbel-max SCM method? \camera{We conduct experiments in four environments, spanning simple navigation (GridWorld and Frozen Lake), clinical decision-making (Sepsis), and safety-critical control (Aircraft), with varying levels of stochasticity and complexity (see Appendix \ref{app: environments} for details). Experiments were run on a 128-core Intel Xeon CPU (see Appendix \ref{app: training details} for more information), and the code is available at:
\url{https://github.com/ddv-lab/robust-cf-inference-in-MDPs}.}

\subsection{Robust CF Inference and Off-Policy Evaluation (OPE)}
We assess the validity of our robust CF inference method in an OPE setting~\cite{oberst2019counterfactual,buesing2018woulda}, where we want to predict the expected return (cumulative reward) of a target policy $\pi^*$ given only paths observed under a behavioural \camera{policy $\pi$}. In this experiment, $\pi$ is a suboptimal policy and $\pi^*$ an improved one. 
Procedurally, we sample a number of paths (up to $100$) under $\pi$; for each sampled path, we use our method to derive upper and lower bounds on the counterfactual return of that path under $\pi^*$. If our method is unbiased (and enough paths are sampled), then the average of these bounds should contain the true expected return under $\pi^*$. Using the same set of paths, we perform a similar evaluation for the Gumbel-max approach, except that this method yields a crisp return value rather than bounds. 

\jl{
Figure \ref{fig:cf_correctness_gridworld_0.4} illustrates the results of this experiment for the GridWorld ($p=0.4$) MDP. As expected, the averaged pessimistic and optimistic returns obtained from our approach correctly bound the true return under the target policy. The Gumbel-max approach also closely approximates the target return, indicating that it is also a correct and unbiased approach. However, as we demonstrate in the following sections, the Gumbel-max approach is less robust to causal model uncertainty compared to our method. 
}

\begin{figure}[h]
  \centering
  \resizebox{0.7\linewidth}{!}{%

  \begin{tikzpicture}
    \begin{axis}[
        xlabel={Number of sampled behavioural paths},
        ylabel={Expected return},
        xtick={0,20, 40, 60, 80, 100},
        xmin=0, xmax=105,
        ymin=-350, ymax=250,
        grid=both,
        width=\columnwidth,
        height=0.6\columnwidth,
        label style={font=\Large},          %
        tick label style={font=\Large},     %
        legend style={
            at={(0.5,-0.3)},
            anchor=north,
            font=\Large,
        },
        every axis plot/.append style={thick, mark size=3pt, line width=1.2pt},
    ]

    \addplot+[
        mark=o,
        color=blue,
    ] coordinates {
        (1, -15.212) +- (0, 0)
        (10, -73.48405927272725) +- (87.67556429452748, 87.67556429452748)
        (20, -129.686975651135) +- (128.49656293499737, 128.49656293499737)
        (40, -116.59003952070941) +- (110.76950637683635, 110.76950637683635)
        (60, -119.58905890984825) +- (104.72948204744685, 104.72948204744685)
        (80, -119.22869654166988) +- (111.52668072637644, 111.52668072637644)
        (100, -120.7506028565535) +- (119.01032720226385, 119.01032720226385)
    };
    \addlegendentry{Average CF return bounds (ours)}

    \addplot+[
        mark=x,
        color=red,
    ] coordinates {
        (1, 5.372312802224) +- (0, 0)
        (10, -0.7498667513463873) +- (35.29153241540447, 35.29153241540447)
        (20, -28.66833271087721) +- (119.30356816623626, 119.30356816623626)
        (40, -30.43986491176804) +- (98.27251718278436, 98.27251718278436)
        (60, -23.850403760009538) +- (94.20382165756313, 94.20382165756313)
        (80, -21.73502100716342) +- (98.97485142137681, 98.97485142137681)
        (100, -26.765922803280006) +- (107.82654949437878, 107.82654949437878)
    };
    \addlegendentry{Average CF return (Gumbel-max)}

    \addplot[dotted, green!50!black, thick] coordinates {(0,-40.56411852799998) (100,-40.56411852799998)};
    \addlegendentry{True target return} 
    \addplot+[
        mark=o,
        color=blue,
    ] coordinates {
        (1, 9.2) +- (0, 0)
        (10, 65.75115264) +- (65.98085879353107, 65.98085879353107)
        (20, 76.09143818551972) +- (156.70812134225434, 156.70812134225434)
        (40, 70.21961568150236) +- (124.24118049564295, 124.24118049564295)
        (60, 82.35106626083876) +- (115.55075865894253, 115.55075865894253)
        (80, 86.35395872035576) +- (115.72846189642432, 115.72846189642432)
        (100, 80.29546332706126) +- (125.52914107287741, 125.52914107287741)
    };
    \end{axis}
  \end{tikzpicture}
  }
\caption{CF inference approaches for off-policy evaluation (GridWorld ($p=0.4$))}
\label{fig:cf_correctness_gridworld_0.4}
\end{figure}

\subsection{Performance of Counterfactual Policies}
To compare policy performance, we measure average rewards of counterfactual paths induced by our policy and the Gumbel-max policy by uniformly sampling $200$ counterfactual MDPs from the ICFMDP and generating $10,000$ counterfactual paths over each sampled CFMDP. Since the interval CFMDP depends on the observed path, we select three paths of varying optimality: \camera{a slightly suboptimal path needing minimal changes to reach the goal, a catastrophic path ending in a terminal low-reward state, and an almost catastrophic path that narrowly avoided a catastrophic state. }

\camera{Figures \ref{fig: reward p=0.9}-\ref{fig: reward sepsis} show the average counterfactual rewards induced by our policy vs. the Gumbel-max policy in the GridWorld and Sepsis MDPs (similar results are observed in Frozen Lake and Aircraft, see Appendix \ref{app: additional experiments}). Table \ref{tab: lowest cumulative rewards} reports the worst-case cumulative reward across all induced counterfactual paths, demonstrating that the worst-case cumulative reward obtained by our robust policy is never below that obtained by the Gumbel-max policy.}

\begin{figure*}[h]
    \centering
    \begin{subfigure}{\linewidth}
        \centering
        \begin{tikzpicture}[scale=1.0, every node/.style={scale=1.0}]
            \draw[thick, black] (-3, -0.25) rectangle (10, 0.25);
            \draw[black, line width=1pt] (-2.5, 0.0) -- (-2,0.0);
            \fill[black] (-2.25,0.0) circle (2pt); %
            \node[right] at (-2,0.0) {\small Observed Path};

            \draw[blue, line width=1pt, dashed] (1.0,0.0) -- (1.5,0.0);
            \node[draw=blue, circle, minimum size=4pt, inner sep=0pt] at (1.25,0.0) {}; %
            \node[right] at (1.5,0.0) {\small Interval CFMDP Policy (ours)};

            \draw[red, line width=1pt, dashed] (5.5,0) -- (6,0);
            \node[red] at (5.75,0) {$\boldsymbol{\times}$}; %
            \node[right] at (6,0) {\small Gumbel-max SCM Policy};
        \end{tikzpicture}
    \end{subfigure}
    \\
    \vspace{0.2cm}
    \begin{subfigure}{0.33\linewidth}  
        \begin{tikzpicture}
            \begin{axis}[
                width=\linewidth,
                height=4.2cm,
                xlabel={$t$},
                xtick={0,2,4,6,8},
                ylabel={Mean reward at time step $t$},
                grid=both,
                ymin=-10, ymax=105,
                ]
                \addplot[color=black, mark=*, line width=1pt, mark size=1.5pt]
                coordinates { (0, 0.0) (1, 1.0) (2, 2.0) (3, 2.0) (4, 3.0) (5, 3.0) (6, 4.0) (7, 3.0) (8, 3.0) (9, 4.0) };
                \addplot[color=blue, mark=o,    mark options={solid},line width=1pt, mark size=1.5pt, dashed,
                    error bars/.cd, y dir=both, y explicit, error bar style={line width=1pt,dashed}, error mark options={line width=1pt,mark size=4pt,rotate=90}
                ]
                coordinates { (0, 0.0)  +- (0, 0.0) (1, 1.0)  +- (0, 0.0) (2, 2.0)  +- (0, 0.0) (3, 2.8661345)  +- (0, 0.42603466) (4, 3.798614)  +- (0, 0.52865932) (5, 4.6613305)  +- (0, 1.02983565) (6, 74.7305995) += (0, 25.2694005) -= (0, 42.20242079) (7, 84.802457)  += (0, 15.197543)  -= (0, 35.08575705) (8, 96.455942)  += (0, 3.544058)   -= (0, 18.4617318) (9, 97.9790975) += (0, 2.0209025)  -= (0, 14.29029466) };
                \addplot[color=red, mark=x, line width=1pt, mark size=3pt, dashed,
                    error bars/.cd, y dir=both, y explicit, error bar style={line width=1pt,dashed}, error mark options={line width=1pt,mark size=4pt,rotate=90}
                ]
                coordinates { (0, 0.0)  +- (0, 0.0) (1, 1.0)  +- (0, 0.0) (2, 2.0)  +- (0, 0.0) (3, 2.866351)  +- (0, 0.42523987) (4, 3.799144)  +- (0, 0.52810308) (5, 4.662749)  +- (0, 1.00452614) (6, 74.773081) += (0, 25.226919)   -= (0, 42.17816553) (7, 84.833464) += (0, 15.166536)   -= (0, 35.06110031) (8, 96.4349945) += (0, 3.5650055)  -= (0, 18.51953372) (9, 97.9669475) += (0, 2.0330525)  -= (0, 14.33075378) };
            \end{axis}
        \end{tikzpicture}
        \subcaption{Slightly Suboptimal Path}
    \end{subfigure}
    \hfill
    \begin{subfigure}{0.33\linewidth}
        \begin{tikzpicture}
            \begin{axis}[
                width=\linewidth,
                height=4.2cm,
                ylabel={Mean reward at time step $t$},
                xlabel={$t$},
                xtick={0,2,4,6,8},
                grid=both,
                ymin=-30, ymax=105,
                ]
                \addplot[color=black, mark=*, line width=1pt, mark size=1.5pt]
                coordinates { (0, 0.0) (1, 1.0) (2, 2.0) (3, 1.0) (4, 2.0) (5, 1.0) (6, 2.0) (7, 1.0) (8, 0.0) (9, 1.0) };
                \addplot[color=blue, mark=o,    mark options={solid},line width=1pt, mark size=1.5pt, dashed,
                    error bars/.cd, y dir=both, y explicit, error bar style={line width=1pt,dashed}, error mark options={line width=1pt,mark size=4pt,rotate=90}
                ]
                coordinates { (0, 0.0)  +- (0, 0.0) (1, 1.0)  +- (0, 0.0) (2, 2.0)  +- (0, 0.0) (3, -0.445837)  +- (0, 18.2323085) (4, -0.4585655)  +- (0, 18.40684148) (5, 0.3053175)  +- (0, 18.62866744) (6, 0.0956245)  +- (0, 24.55051607) (7, 71.54727)  += (0, 28.45273)   -= (0, 54.83199135) (8, 76.670842) += (0, 23.329158)  -= (0, 52.9386188) (9, 83.902088) += (0, 16.097912)  -= (0, 49.00201108) };
                \addplot[color=red, mark=x, line width=1pt, mark size=3pt, dashed,
                    error bars/.cd, y dir=both, y explicit, error bar style={line width=1pt,dashed}, error mark options={line width=1pt,mark size=4pt,rotate=90}
                ]
                coordinates { (0, 0.0)  +- (0, 0.0) (1, 1.0)  +- (0, 0.0) (2, 1.96988)  +- (0, 0.21136647) (3, 0.602197)  +- (0, 13.77136924) (4, 1.4765265)  +- (0, 14.02177032) (5, 1.0409715)  +- (0, 17.50019462) (6, 44.898029)  +- (0, 54.28467191) (7, 68.319994)  += (0, 31.680006) -= (0, 52.5192166) (8, 84.6988915) += (0, 15.3011085) -= (0, 44.36762745) (9, 87.540126)  += (0, 12.459874)  -= (0, 42.13634702) };
            \end{axis}
        \end{tikzpicture}
        \subcaption{Almost Catastrophic Path}
    \end{subfigure}
    \hfill
    \begin{subfigure}{0.33\linewidth}
        \begin{tikzpicture}
            \begin{axis}[
                width=\linewidth,
                height=4.2cm,
                ylabel={Mean reward at time step $t$},
                xlabel={$t$},
                xtick={0,2,4,6,8},
                grid=both,
                ymin=-125, ymax=105,
                ]
                \addplot[color=black, mark=*, line width=1pt, mark size=1.5pt]
                coordinates { (0, 1.0) (1, 2.0) (2, -100.0) (3, -100.0) (4, -100.0) (5, -100.0) (6, -100.0) (7, -100.0) (8, -100.0) (9, -100.0) };
                \addplot[color=blue, mark=o,    mark options={solid},line width=1pt, mark size=1.5pt, dashed,
                    error bars/.cd, y dir=both, y explicit, error bar style={line width=1pt,dashed}, error mark options={line width=1pt,mark size=4pt,rotate=90}
                ]
                coordinates { (0, 0.0)  +- (0, 0.0) (1, 0.934776)  +- (0, 0.2469207) (2, 1.836178)  +- (0, 0.41747018) (3, 2.589682)  +- (0, 4.25205246) (4, 3.498908)  +- (0, 4.43656509) (5, 4.294412)  +- (0, 4.79783311) (6, 64.6238925) += (0, 35.3761075) -= (0, 46.69660801) (7, 78.4716035) += (0, 21.5283965) -= (0, 40.66803792) (8, 91.745166)  += (0, 8.254834)  -= (0, 28.04970295) (9, 95.393843)  += (0, 4.606157)  -= (0, 22.08917601) };
                \addplot[color=red, mark=x, line width=1pt, mark size=3pt, dashed,
                    error bars/.cd, y dir=both, y explicit, error bar style={line width=1pt,dashed}, error mark options={line width=1pt,mark size=4pt,rotate=90}
                ]
                coordinates { (0, 0.0)  +- (0, 0.0) (1, 0.934791)  +- (0, 0.24689428) (2, 1.8360415)  +- (0, 0.41772372) (3, 2.596742)  +- (0, 4.16891772) (4, 3.503747)  +- (0, 4.38250693) (5, 4.300445)  +- (0, 4.74124465) (6, 64.6758075) += (0, 35.3241925) -= (0, 46.67192883) (7, 78.483407)  += (0, 21.516593)  -= (0, 40.64493202) (8, 91.75123)   += (0, 8.24877)   -= (0, 28.02106014) (9, 95.407927)  += (0, 4.592073)   -= (0, 22.05066531) };
            \end{axis}
        \end{tikzpicture}
        \subcaption{Catastrophic Path}
    \end{subfigure}
    \caption{Average instant reward of CF paths induced by policies on GridWorld $p=0.9$. Error bars denote the standard deviation in reward at each time step.}
    \label{fig: reward p=0.9}
\end{figure*}

\paragraph{GridWorld ($p=0.9$)}
When $p=0.9$, the counterfactual probability bounds are typically narrow (see Table \ref{tab: bounds} for average measurements). \jl{Consequently, as shown in Figure \ref{fig: reward p=0.9}, both policies are nearly identical and perform similarly well across the slightly suboptimal and catastrophic paths.}
However, for the almost catastrophic path, our interval CFMDP path is more conservative and follows the observed path more closely (as this is where the probability bounds are narrowest), which typically requires one additional step to reach the goal state than the Gumbel-max SCM policy.

\paragraph{GridWorld ($p=0.4$)}
\jl{When $p=0.4$, the GridWorld environment is more uncertain, increasing the risk of entering the dangerous state even if correct actions are chosen. Thus, as shown in Figure \ref{fig: reward p=0.4}, the interval CFMDP policy adopts a more conservative approach, and does not deviate from the observed path if it cannot guarantee higher counterfactual rewards (see the slightly suboptimal and almost catastrophic paths). \jl{The Gumbel-max policy is more inconsistent: it can yield higher rewards, but also much lower rewards, as reflected by the wide error bars.} For the catastrophic path, both policies must \jl{significantly} deviate from the observed path to achieve a higher reward and perform similarly.}
\begin{figure*}
    \centering
    \begin{subfigure}{0.33\linewidth}
        \begin{tikzpicture}
            \begin{axis}[
                width=\linewidth,
                height=4.2cm,
                xlabel={$t$},
                ylabel={Mean reward at time step $t$},
                grid=both,
                xtick={0,2,4,6,8},
            ]
            \addplot[color=black, mark=*, line width=1pt, mark size=1.5pt]
            coordinates {
                (0, 0.0) (1, 1.0) (2, 1.0) (3, 1.0) (4, 2.0) (5, 3.0) (6, 3.0) (7, 2.0) (8, 2.0) (9, 4.0)
            };
            \addplot[color=blue, mark=o,    mark options={solid},dashed, line width=1pt, mark size=1.5pt, error bars/.cd, y dir=both, y explicit, error bar style={line width=1pt,dashed}, error mark options={line width=1pt,mark size=4pt,rotate=90}]
            coordinates {
                (0, 0.0) +- (0, 0.0) (1, 1.0) +- (0, 0.0) (2, 1.0) +- (0, 0.0) (3, 1.0) +- (0, 0.0) (4, 2.0) +- (0, 0.0) (5, 3.0) +- (0, 0.0) (6, 3.0) +- (0, 0.0) (7, 2.0) +- (0, 0.0) (8, 2.0) +- (0, 0.0) (9, 4.0) +- (0, 0.0)
            };
            \addplot[color=red, mark=x, dashed, line width=1pt, mark size=3pt, error bars/.cd, y dir=both, y explicit, error bar style={line width=1pt,dashed}, error mark options={line width=1pt,mark size=4pt,rotate=90}]
            coordinates {
                (0, 0.0) +- (0, 0.0) (1, 1.0) +- (0, 0.0) (2, 1.0) +- (0, 0.0) (3, 1.0) +- (0, 0.0) (4, 2.0) += (0, 0.0) (5, 3.0) += (0, 0.0) (6, 3.17847) += (0, 0.62606746) -= (0, 0.62606746) (7, 2.5832885) += (0, 1.04598233) -= (0, 1.04598233) (8, 5.978909) += (0, 17.60137623) -= (0, 17.60137623) (9, 5.297059) += (0, 27.09227512) -= (0, 27.09227512)
            };
            \end{axis}
        \end{tikzpicture}
        \subcaption{Slightly Suboptimal Path}
    \end{subfigure}
    \hfill
    \begin{subfigure}{0.33\linewidth}
        \begin{tikzpicture}
            \begin{axis}[
                width=\linewidth,
                height=4.2cm,
                xtick={0,2,4,6,8},
                xlabel={$t$},
                ylabel={Mean reward at time step $t$},
                grid=both,
            ]
            \addplot[color=black, mark=*, line width=1pt, mark size=1.5pt]
            coordinates {
                (0, 0.0) (1, 1.0) (2, 2.0) (3, 1.0) (4, 0.0) (5, 1.0) (6, 2.0) (7, 2.0) (8, 3.0) (9, 2.0)
            };
            \addplot[color=blue, mark=o,    mark options={solid},dashed, line width=1pt, mark size=1.5pt, error bars/.cd, y dir=both, y explicit, error bar style={line width=1pt,dashed}, error mark options={line width=1pt,mark size=4pt,rotate=90}]
            coordinates {
                (0, 0.0) +- (0, 0.0) (1, 1.0) +- (0, 0.0) (2, 2.0) +- (0, 0.0) (3, 1.0) +- (0, 0.0) (4, 0.0) +- (0, 0.0) (5, 1.0) +- (0, 0.0) (6, 2.0) +- (0, 0.0) (7, 2.0) +- (0, 0.0) (8, 3.0) +- (0, 0.0) (9, 2.0) +- (0, 0.0)
            };
            \addplot[color=red, mark=x, dashed, line width=1pt, mark size=3pt, error bars/.cd, y dir=both, y explicit, error bar style={line width=1pt,dashed}, error mark options={line width=1pt,mark size=4pt,rotate=90}]
            coordinates {
                (0, 0.0) +- (0, 0.0) (1, 0.7065655) +- (0, 0.4553358) (2, 1.341673) +- (0, 0.67091621) (3, 1.122926) +- (0, 0.61281824) (4, -1.1821935) +- (0, 13.82444042) (5, -0.952399) +- (0, 15.35195457) (6, -0.72672) +- (0, 20.33508414) (7, -0.268983) +- (0, 22.77861454) (8, -0.1310835) +- (0, 26.31013314) (9, 0.65806) +- (0, 28.50670214)
            };
            \end{axis}
        \end{tikzpicture}
        \subcaption{Almost Catastrophic Path}
    \end{subfigure}
    \hfill
    \begin{subfigure}{0.33\linewidth}
        \begin{tikzpicture}
            \begin{axis}[
                width=\linewidth,
                height=4.2cm,
                xtick={0,2,4,6,8},
                xlabel={$t$},
                ylabel={Mean reward at time step $t$},
                grid=both,
            ]
            \addplot[color=black, mark=*, line width=1pt, mark size=1.5pt]
            coordinates {
                (0, 1.0) (1, 2.0) (2, -100.0) (3, -100.0) (4, -100.0) (5, -100.0) (6, -100.0) (7, -100.0) (8, -100.0) (9, -100.0)
            };
            \addplot[color=blue, mark=o,    mark options={solid},dashed, line width=1pt, mark size=1.5pt, error bars/.cd, y dir=both, y explicit, error bar style={line width=1pt,dashed}, error mark options={line width=1pt,mark size=4pt,rotate=90}]
            coordinates {
                (0, 0.0) +- (0, 0.0) (1, 0.504814) +- (0, 0.49997682) (2, 0.8439835) +- (0, 0.76831917) (3, -8.2709165) +- (0, 28.93656754) (4, -9.981082) +- (0, 31.66825363) (5, -12.1776325) +- (0, 34.53463233) (6, -13.556076) +- (0, 38.62845372) (7, -14.574418) +- (0, 42.49603359) (8, -15.1757075) +- (0, 46.41913968) (9, -15.3900395) +- (0, 50.33563368)
            };
            \addplot[color=red, mark=x, dashed, line width=1pt, mark size=3pt, error bars/.cd, y dir=both, y explicit, error bar style={line width=1pt,dashed}, error mark options={line width=1pt,mark size=4pt,rotate=90}]
            coordinates {
                (0, 0.0) +- (0, 0.0) (1, 0.701873) +- (0, 0.45743556) (2, 1.1227805) +- (0, 0.73433129) (3, -8.7503255) +- (0, 30.30257976) (4, -10.722092) +- (0, 33.17618589) (5, -13.10721) +- (0, 36.0648089) (6, -13.7631645) +- (0, 40.56553451) (7, -13.909043) +- (0, 45.23829402) (8, -13.472517) +- (0, 49.96270296) (9, -12.8278835) +- (0, 54.38618735)
            };
            \end{axis}
        \end{tikzpicture}
        \subcaption{Catastrophic Path}
    \end{subfigure}
    \caption{Average instant reward of CF paths induced by policies on GridWorld $p=0.4$.}
    \label{fig: reward p=0.4}
\end{figure*}

\paragraph{Sepsis}
\begin{figure*}
    \centering
    \begin{subfigure}{0.33\linewidth}
        \centering
        \begin{tikzpicture}
            \begin{axis}[
                width=\linewidth,
                height=4.2cm,
                xlabel={$t$},
                xtick={0,2,4,6,8},
                ylabel={Mean reward at time step $t$},
                grid=both,
                ]
               \addplot[
                    color=black, %
                    mark=*, %
                    line width=1pt,
                    mark size=1.5pt,
                ]
                coordinates {
                    (0, -50.0) (1, 50.0) (2, -50.0) (3, -50.0) (4, -1000.0) (5, -1000.0) (6, -1000.0) (7, -1000.0) (8, -1000.0) (9, -1000.0)
                };
                \addplot[
                    color=blue, %
                    dashed,
                    mark=o, %
                    mark options={solid},
                    line width=1pt,
                    mark size=1.5pt,
                    error bars/.cd,
                    y dir=both, %
                    y explicit, %
                    error bar style={line width=1pt,dashed},
                    error mark options={line width=1pt,mark size=4pt,rotate=90}
                ]
                coordinates {
                    (0, -50.0)  +- (0, 0.0) (1, 50.0)  +- (0, 0.0) (2, -50.0)  +- (0, 0.0) (3, 20.0631)  +- (0, 49.97539413) (4, 71.206585)  +- (0, 226.02033693) (5, 151.60797) +- (0, 359.23292559) (6, 200.40593) +- (0, 408.86185176) (7, 257.77948) +- (0, 466.10372804) (8, 299.237465) +- (0, 501.82579506) (9, 338.9129) +- (0, 532.06124996)
                };
                \addplot[
                    color=red, %
                    dashed,
                    mark=x, %
                    line width=1pt,
                    mark size=3pt,
                    error bars/.cd,
                    y dir=both, %
                    y explicit, %
                    error bar style={line width=1pt,dashed},
                    error mark options={line width=1pt,mark size=4pt,rotate=90}
                ]
                coordinates {
                    (0, -50.0)  +- (0, 0.0) (1, 20.00736)  +- (0, 49.99786741) (2, -12.282865)  +- (0, 267.598755) (3, -47.125995)  +- (0, 378.41755832) (4, -15.381965)  +- (0, 461.77616558) (5, 41.15459) +- (0, 521.53189262) (6, 87.01595) +- (0, 564.22243126) (7, 132.62376) +- (0, 607.31338037) (8, 170.168145) +- (0, 641.48013693) (9, 201.813135) +- (0, 667.29441777)
                };
            \end{axis}
        \end{tikzpicture}
        \subcaption{Slightly Suboptimal Path}
    \end{subfigure}
    \hfill
    \begin{subfigure}{0.33\linewidth}
        \centering
        \begin{tikzpicture}
            \begin{axis}[
                width=\linewidth,
                height=4.2cm,
                xlabel={$t$},
                xtick={0,2,4,6,8},
                ylabel={Mean reward at time step $t$}, %
                grid=both,
                ]
               \addplot[
                    color=black, %
                    mark=*, %
                    line width=1pt,
                    mark size=1.5pt,
                ]
                coordinates {
                    (0, -50.0) (1, 50.0) (2, 50.0) (3, 50.0) (4, -50.0) (5, 50.0) (6, -50.0) (7, 50.0) (8, -50.0) (9, 50.0)
                };
                \addplot[
                    color=blue, %
                    dashed,
                    mark=o, %
                    mark options={solid},
                    line width=1pt,
                    mark size=1.5pt,
                    error bars/.cd,
                    y dir=both, %
                    y explicit, %
                    error bar style={line width=1pt,dashed},
                    error mark options={line width=1pt,mark size=4pt,rotate=90}
                ]
                coordinates {
                    (0, -50.0) +- (0, 0.0) (1, 50.0) +- (0, 0.0) (2, 50.0) +- (0, 0.0) (3, 50.0) +- (0, 0.0) (4, -50.0) +- (0, 0.0) (5, 50.0) +- (0, 0.0) (6, -50.0) +- (0, 0.0) (7, 50.0) +- (0, 0.0) (8, -50.0) +- (0, 0.0) (9, 50.0) +- (0, 0.0)
                };
                \addplot[
                    color=red, %
                    dashed,
                    mark=x, %
                    line width=1pt,
                    mark size=3pt,
                    error bars/.cd,
                    y dir=both, %
                    y explicit, %
                    error bar style={line width=1pt,dashed},
                    error mark options={line width=1pt,mark size=4pt,rotate=90}
                ]
                coordinates {
                    (0, -50.0) +- (0, 0.0) (1, 50.0) +- (0, 0.0) (2, 50.0) +- (0, 0.0) (3, 50.0) +- (0, 0.0) (4, -50.0) +- (0, 0.0) (5, 50.0) +- (0, 0.0) (6, -50.0) +- (0, 0.0) (7, 50.0) +- (0, 0.0) (8, -50.0) +- (0, 0.0) (9, 50.0) +- (0, 0.0)
                };
            \end{axis}
        \end{tikzpicture}
        \subcaption{Almost Catastrophic Path}
    \end{subfigure}
    \hfill
    \begin{subfigure}{0.33\linewidth}
        \centering
        \begin{tikzpicture}
            \begin{axis}[
                width=\linewidth,
                height=4.2cm,
                xtick={0,2,4,6,8},
                xlabel={$t$},
                ylabel={Mean reward at time step $t$}, %
                grid=both,
                ]
               \addplot[
                    color=black, %
                    mark=*, %
                    line width=1pt,
                    mark size=1.5pt,
                ]
                coordinates {
                    (0, -50.0) (1, -50.0) (2, -1000.0) (3, -1000.0) (4, -1000.0) (5, -1000.0) (6, -1000.0) (7, -1000.0) (8, -1000.0) (9, -1000.0)
                };
                \addplot[
                    color=blue, %
                    dashed,
                    mark=o, %
                    mark options={solid},
                    line width=1pt,
                    mark size=1.5pt,
                    error bars/.cd,
                    y dir=both, %
                    y explicit, %
                    error bar style={line width=1pt,dashed},
                    error mark options={line width=1pt,mark size=4pt,rotate=90}
                ]
                coordinates {
                    (0, -50.0)  +- (0, 0.0) (1, -50.0)  +- (0, 0.0) (2, -50.0)  +- (0, 0.0) (3, -841.440725)  += (0, 354.24605512) -= (0, 158.559275) (4, -884.98225)  += (0, 315.37519669) -= (0, 115.01775) (5, -894.330425) += (0, 304.88572805) -= (0, 105.669575) (6, -896.696175) += (0, 301.19954514) -= (0, 103.303825) (7, -897.4635) += (0, 299.61791279) -= (0, 102.5365) (8, -897.77595) += (0, 298.80392585) -= (0, 102.22405) (9, -897.942975) += (0, 298.32920557) -= (0, 102.057025)
                };
                \addplot[
                    color=red, %
                    dashed,
                    mark=x, %
                    line width=1pt,
                    mark size=3pt,
                    error bars/.cd,
                    y dir=both, %
                    y explicit, %
                    error bar style={line width=1pt,dashed},
                    error mark options={line width=1pt,mark size=4pt,rotate=90}
                ]
                coordinates {
                    (0, -50.0) +- (0, 0.0) (1, -360.675265) +- (0, 479.39812699) (2, -432.27629) +- (0, 510.38620897) (3, -467.029545) += (0, 526.36009628) -= (0, 526.36009628) (4, -439.17429)  += (0, 583.96638919) -= (0, 560.82571) (5, -418.82704) += (0, 618.43027478) -= (0, 581.17296) (6, -397.464895) += (0, 652.67322574) -= (0, 602.535105) (7, -378.49052) += (0, 682.85407033) -= (0, 621.50948) (8, -362.654195) += (0, 707.01412023) -= (0, 637.345805) (9, -347.737935) += (0, 729.29076479) -= (0, 652.262065)
                };
            \end{axis}
        \end{tikzpicture}
        \subcaption{Catastrophic Path}
    \end{subfigure}
    \caption{Average instant reward of CF paths induced by policies on Sepsis.}
    \label{fig: reward sepsis}
\end{figure*}
\jl{Like in the above experiment, the Sepsis MDP is highly stochastic, with many states equally likely to lead to optimal and poor outcomes. As shown in Figure \ref{fig: reward sepsis}, both policies follow the observed almost-catastrophic path to ensure rewards are no worse than the observation.} However, to improve the catastrophic path, both policies must deviate from the observation. Here, on average, the Gumbel-max SCM policy outperforms the interval CFMDP policy, but since both have lower \jl{error bars} clipped at $-1000$, neither reliably improves upon the observation. In contrast, for the slightly suboptimal path, the interval CFMDP policy performs significantly better, as shown by \jl{the higher lower end of its error bars}. Moreover, in these two cases, the worst-case counterfactual path generated by the interval CFMDP policy is better than that of the Gumbel-max SCM policy (see Table \ref{tab: lowest cumulative rewards}), indicating its greater robustness.
\begin{table*}
\centering
\resizebox{0.85\textwidth}{!}{%
\begin{tabular}{|c|cc|cc|cc|}
\hline
\multirow{2}{*}{\textbf{Environment}} &
  \multicolumn{2}{c|}{\textbf{Slightly Suboptimal Path}} &
  \multicolumn{2}{c|}{\textbf{Almost Catastrophic}} &
  \multicolumn{2}{c|}{\textbf{Catastrophic Path}} \\ \cline{2-7} 
 &
  \multicolumn{1}{c|}{\textbf{ICFMDP (ours)}} &
  \textbf{Gumbel-max} &
  \multicolumn{1}{c|}{\textbf{ICFMDP (ours)}} &
  \textbf{Gumbel-max} &
  \multicolumn{1}{c|}{\textbf{ICFMDP (ours)}} &
  \textbf{Gumbel-max} \\ \hline
GridWorld ($p=0.9$) & \multicolumn{1}{c|}{-495} & -495 & \multicolumn{1}{c|}{-697} & -698 & \multicolumn{1}{c|}{-698} & -698 \\ \hline
GridWorld ($p=0.4$)  & \multicolumn{1}{c|}{19} & -88 & \multicolumn{1}{c|}{14} & -598 & \multicolumn{1}{c|}{-698} & -698 \\ \hline
Sepsis              & \multicolumn{1}{c|}{-5980} & 8000 & \multicolumn{1}{c|}{100} & 100 & \multicolumn{1}{c|}{-7150} & -9050 \\ \hline
Frozen Lake         & \multicolumn{1}{c|}{41} & -97  & \multicolumn{1}{c|}{-68} & -87 & \multicolumn{1}{c|}{-87} & -87 \\ \hline
Aircraft             & \multicolumn{1}{c|}{231} & 231 & \multicolumn{1}{c|}{-776} & -776 & \multicolumn{1}{c|}{-776} & -776 \\ \hline
\end{tabular}
}
\caption{Lowest cumulative rewards achieved by Interval CFMDP and Gumbel-max SCM policies across sampled counterfactual paths and CFMDPs, for observed paths of varying optimality.}
\label{tab: lowest cumulative rewards}
\end{table*}
\subsection{Robustness of Counterfactual Policies}
\label{sec: robustness experiment}
\jl{
The primary advantage of our approach is that it produces policies that are robust with respect to the unknown (true) causal model. To experimentally assess robustness, we examine how our policy and the Gumbel-max policy perform when deployed in the worst-case environment, i.e., when selecting the counterfactual MDP (among the admissible ones) with the lowest reward. 
Procedurally, given an MDP, we sample from it $100$ observed paths following a random policy. For each path, we derive the corresponding Gumbel-max policy and use our approach to construct the corresponding ICFMDP. Recall that the ICFMDP entails all and only the counterfactual MDPs that are compatible with the model and data. Thus, we use the ICFMDP to compute the worst-case performance of the two policies across all CFMDPs within the ICFMDP. Note that, for our policy, the worst-case performance is readily available as the solution of the pessimistic value iteration problem~\eqref{eq:pessimistic_v}. 
Results are reported in Table~\ref{tab: pessimistic value random paths}. \camera{Across all environments, the worst-case reward obtained by our approach is consistently much higher than the Gumbel-max approach, demonstrating that our counterfactual inference method is substantially more robust to causal model uncertainty.}}
\begin{table}
\centering
\resizebox{0.8\linewidth}{!}{%
\begin{tabular}{|c|cc|}
\hline
\multirow{2}{*}{\textbf{Environment}} & \multicolumn{2}{c|}{\textbf{Pessimistic $V(s_0)$}} \\ \cline{2-3} 
                             & \multicolumn{1}{c|}{\textcolor{blue}{\textbf{ICFMDP (ours)}}}  & \textcolor{red}{\textbf{Gumbel-max}} \\ \hline
1. GridWorld ($p=0.9$)          & \multicolumn{1}{c|}{$346 \pm 104$}        &   $304 \pm 211$         \\ \hline %
2. GridWorld ($p=0.4$)          & \multicolumn{1}{c|}{$-81.4 \pm 197$}        &   $-230 \pm 213$         \\ \hline %
3. Sepsis                       & \multicolumn{1}{c|}{$1660\pm 1010$}        &    $-85.4\pm 2860$        \\ \hline %
4. Frozen Lake                       & \multicolumn{1}{c|}{$37.3 \pm 17.9$}        &    $2.56 \pm 51.3$      \\ \hline  %
5. Aircraft                     & \multicolumn{1}{c|}{$-99.0 \pm 380$}        &       $-221 \pm 421$   \\ \hline %

\end{tabular}
}
\caption{Average worst-case counterfactual $V(s_0)$ for the ICFMDP and Gumbel-max policies over $100$ randomly sampled observed trajectories. Our policy significantly outperforms the Gumbel-max one (Welch T-test resulted in $p<10^{-4}$ for rows 2,3,4; $p=0.0381$ for row 1; $p=0.0163$ for row 5).}
\label{tab: pessimistic value random paths}
\end{table}
\subsection{Interval CFMDP Bounds and Runtimes}
Table \ref{tab: bounds} reports average counterfactual probability bound widths (excluding transitions where the upper bound is $0$) for each MDP, averaged over $20$ observed paths. We compare the bounds under counterfactual stability (CS) and monotonicity (M) assumptions, CS alone, and no assumptions. \jl{These results indicate that the assumptions are not overly restrictive (i.e., they do not exclude too many causal models), as they do not significantly narrow the bounds, yet still exclude implausible counterfactuals such as those in the example in Figure \ref{fig:gumbel-max-scm-unintuitive-probs}.} \camera{In Appendix \ref{app: additional experiments} we further examine the impact of these assumptions on policy robustness, showing that relaxing the assumptions leads to a slight reduction in performance in most environments, but the policies still outperform the worst-case performance of the Gumbel-max approach.} Table \ref{tab: times} compares the average time needed to generate the interval CFMDP vs.\ the Gumbel-max SCM CFMDP for 20 observations. \jl{The GridWorld and Frozen Lake experiments were run single-threaded, while Sepsis and Aircraft were run in parallel.} Generating the interval CFMDP is significantly faster as it uses exact analytical bounds, whereas the Gumbel-max CFMDP requires sampling from the Gumbel distribution to estimate counterfactual transition probabilities. \jl{In particular, across the five case studies, our approach is $4$ to $251$ times faster than the Gumbel-max method.} 
Since constructing counterfactual MDPs is the main bottleneck in both approaches, ours is more efficient overall and suitable for larger MDPs.
\begin{table}
\centering
\resizebox{0.7\linewidth}{!}{%
\begin{tabular}{|c|c|c|c|}
\hline
\multirow{2}{*}{\textbf{Environment}} 
  & \multicolumn{3}{c|}{\textbf{Mean Bound Widths}} \\ \cline{2-4}
  & \textbf{CS + M} & \textbf{CS} & \textbf{None} \\ \hline
{GridWorld ($p=0.9$)} & 0.0817 & 0.0977 & 0.100 \\ \hline
{GridWorld ($p=0.4$)} & 0.552  & 0.638  & 0.646 \\ \hline
{Sepsis}              & 0.138  & 0.140  & 0.140 \\ \hline
{Frozen Lake} & 0.307  & 0.353  & 0.359 \\ \hline
{Aircraft} & 0.180  & 0.190  & 0.190 \\ \hline
\end{tabular}
}
\caption{Mean width of counterfactual probability bounds in generated CFMDPs.}
\label{tab: bounds}
\end{table}
\begin{table}
\centering
\resizebox{0.8\linewidth}{!}{%
\begin{tabular}{|c|c|c|}
\hline
\textbf{Environment}
  & \textbf{ICFMDP (ours)} & \textbf{Gumbel-max} \\ \hline
{GridWorld ($p=0.9$)} & 0.261 & 56.1 \\ \hline %
{GridWorld ($p=0.4$)} & 0.336 & 54.5 \\ \hline %
{Sepsis}              & 688   & 2940 \\ \hline %
{Frozen Lake} & 0.398 & 100 \\ \hline %
{Aircraft} & 6.99 & 74.3 \\ \hline %
\end{tabular}
}
\caption{Mean execution time \camera{(s)} for generating CFMDPs.}
\label{tab: times}
\end{table}

\section{Conclusion} We introduced a non-parametric partial counterfactual inference approach for MDPs, leveraging tight analytical bounds on counterfactual probabilities to accelerate the construction of counterfactual models, enabling scaling to larger MDPs. Our interval CFMDP policies are more robust to uncertainty about the true (unknown) causal model, particularly in highly stochastic environments. This robustness yields more reliable counterfactual explanations for improving the agent's policy, which is crucial in safety-critical domains.
\jl{
\paragraph{Future Work}
Like existing work on counterfactual inference in MDPs \cite{buesing2018woulda,oberst2019counterfactual, tsirtsis2021counterfactual}, our approach assumes \camera{access to} the MDP's transition probabilities. With estimated probabilities, the counterfactual policy may be sensitive to misspecification. A natural extension is to generalise our method to work to uncertain MDPs learned from data, where the probabilities are bounded by confidence intervals learned from observed trajectories. This extension will build directly on the theoretical framework established in this paper. Future work will also explore extending our approach to partially observable and continuous-state settings.}

\begin{acks}
We thank Frederik Mathiesen and 
Luca Laurenti for their support with the IntervalMDP.jl package. This work was supported by UK Research and Innovation [grant EP/S023356/1] in the UKRI Centre for Doctoral Training in Safe and Trusted Artificial Intelligence (www.safeandtrustedai.org) and by the EPSRC grant EP/W014785/2.
\end{acks}

\bibliographystyle{ACM-Reference-Format}
\balance
\bibliography{bibliography}

\newpage
\onecolumn
\appendix
\section{Gumbel-max SCMs}
\label{app: gumbel}
The Gumbel-max SCM for an MDP is expressed as:
\begin{equation}\label{eq:gumbel-max-scm}
    S_{t+1} = f(S_{t},A_{t}, U_{t}=(G_{s,t})_{s\in \mathcal{S}}) 
    = \argmax_{s\in \mathcal{S}}\left\{\log\left(P_{\mathcal{M}}(s \mid S_t, A_t)\right) +G_{s,t}\right\}
\end{equation}
where $P_{\mathcal{M}}$ is the MDP's transition probabilities, $S_t$, $A_t$ and $S_{t+1}$ are endogenous variables representing the state, action and next state, and the values of the exogenous variable $U_{t}=(G_{s,t})_{s\in \mathcal{S}}$ are sampled from the standard Gumbel distribution\footnote{This formulation relies on the Gumbel-max trick \citep{maddison2014sampling} by which sampling from a categorical distribution with $n$ categories is equivalent to sampling $n$ values $g_0,\dots,g_n$ from the standard Gumbel distribution and evaluating $\argmax_{j}\left\{\log\left(P(Y=j)\right)+ g_j\right\}$ over all the categories, where $Y$ is the output.}. In this model, the Gumbel values $G_{s,t}$ represent the exogenous noise. We can perform (approximate) posterior inference of $P((G_{s,t})_{s\in \mathcal{S}} \mid s_t, a_t, s_{t+1})$ through rejection sampling \citep{oberst2019counterfactual} or top-down Gumbel sampling \citep{maddison2014sampling}.

We can define a so-called \textit{counterfactual MDP} $\mathcal{M}_{\tau}$ by solving the SCM \eqref{eq:gumbel-max-scm} for each transition along an observed path $\tau$ in an MDP $\mathcal{M}$. The counterfactual probability for each transition is defined, for $t=0, ..., T-1$, as:
\begin{equation}
\label{eq:cf_mdp_probs}
\begin{aligned}
P_{\mathcal{M},t,\tau}(s' \mid s, a) &= P(s' = \argmax_{q \in \mathcal{S}} \left\{\log\left(P_{\mathcal{M}}(q \mid s, a)\right) + 
     \jl{G'_{\tau, q,t}}\right\}) \\
     &\approx \dfrac{1}{N} \sum_{j=0}^{N} \mathbbm{1}\left(s' = \argmax_{q \in \mathcal{S}} \left\{\log\left(P_{\mathcal{M}}(q \mid s, a)\right) + 
     \jl{G_{\tau, q,t}^{\prime(j)}}\right\}\right) \\
\end{aligned}
\end{equation}
where we sample $N$ values of $G_{q,t}^{\prime(j)}$ from the true posterior distribution $G'_{q,t}$ through rejection sampling or top-down Gumbel sampling. The indicator function $\mathbbm{1}(\mathbbm{X})$ takes the value $1$ if the condition $\mathbbm{X}$ is satisfied and $0$ otherwise.

\section{Explanation of Unintuitive Counterfactual Probabilities Produced by Gumbel-max SCMs}
\label{app: unintuitive probs explanation}

Take the example MDP from Figure \ref{fig:appendix-gumbel-max-scm-unintuitive-probs}, which contains three states and a single action. In this example, we observe the transition $s_t=0, a_t=0 \rightarrow s_{t+1}=1$. Neither state 0 nor state 2 was observed, although both were possible. This means that the observed transition gives us no information as to whether state 0 or state 2 would have been more likely where, counterfactually, $s_t = 1$, so intuitively these counterfactual probabilities should be equal to the nominal probabilities. However, the Gumbel-max SCM scales the probabilities, making state 2 even more likely than in the nominal MDP, and vice versa for state 0. This happens because the Gumbel distributions that the Gumbel values are drawn from for each transition are truncated in proportion to the log probability of the transition's nominal probability (instead of in proportion to their nominal probabilities).

\begin{figure*}[h]
    \centering
    \begin{minipage}{0.45\textwidth}
        \centering
        \resizebox{0.9\linewidth}{!}{%
            \begin{circuitikz}
            \tikzstyle{every node}=[font=\Huge]
            \draw  (10.75,18.5) circle (1.25cm) node {\Huge $s_1$} ;
            \draw [ fill={rgb,255:red,177; green,170; blue,170} ] (4.25,18.5) circle (1.25cm) node {\Huge $s_0$} ;
            \draw  (16.75,18.5) circle (1.25cm) node {\Huge $s_2$} ;
            \draw [ fill={rgb,255:red,177; green,170; blue,170} ] (10.75,14.75) circle (1.25cm) node {\Huge $s_1$} ;
            \draw  (4.25,14.75) circle (1.25cm) node {\Huge $s_0$} ;
            \draw  (16.75,14.75) circle (1.25cm) node {\Huge $s_2$} ;
            \draw [->, >=Stealth] (4.25,17.25) -- (4.25,16)node[pos=0.5, fill=white]{0.3};
            \draw [->, >=Stealth] (16.75,17.25) -- (16.75,16)node[pos=0.5, fill=white]{1.0};
            \draw [->, >=Stealth] (10,17.5) -- (5.5,15);
            \draw [->, >=Stealth] (11.5,17.5) -- (15.5,15);
            \draw [->, >=Stealth] (5.25,17.75) -- (15.5,14.75);
            \draw [->, >=Stealth] (5,17.5) -- (9.5,14.75);
            \node [font=\Huge] at (5.75,16.5) {0.4};
            \node [font=\Huge] at (9,17.5) {0.4};
            \node [font=\Huge] at (6.75,17.75) {0.3};
            \node [font=\Huge] at (12.5,17.5) {0.6};
            \end{circuitikz}
        }
        \captionof{figure}{Example MDP where Gumbel-Max produces unintuitive CF probabilities. The observed path is $s_0 \rightarrow s_1$.}
        \label{fig:appendix-gumbel-max-scm-unintuitive-probs}
    \end{minipage}%
    \hfill
    \begin{minipage}{0.48\textwidth}
        \centering
        \resizebox{\linewidth}{!}{
        \begin{tabular}{|c|c|c|c|P{0.7cm}|P{0.7cm}|c|P{0.7cm}|P{0.7cm}|}
        \hline
        \multirow{2}{*}{\text{$s$}} & \multirow{2}{*}{\text{$a$}} & \multirow{2}{*}{\text{$s'$}} & \multirow{2}{*}{\text{$P(s' \mid s, a)$}} & \multicolumn{2}{c|}{\makecell{\text{Optimisation} \\ \text{(\ref{eq: optimisation})}}} & \multirow{2}{*}{\makecell{\text{Gumbel-} \\ \text{Max (\ref{eq:cf_mdp_probs})}}} & \multicolumn{2}{c|}{\makecell{\text{Optimisation} \\ \text{(\ref{eq: optimisation}-\ref{eq:mon2})}}} \\ \cline{5-6} \cline{8-9} 
                                    &                                 &                 &                     & \text{LB}                             & \text{UB}                            &                                      & \text{LB}                              & \text{UB}                             \\ \Xhline{1pt}
        0                               & 0                               & 0      & 0.3                             & 0.0                                     & 0.0                                    & 0.0                                  & 0.0                                      & 0.0                                     \\ \hline
        0                               & 0                               & 1      & 0.4                             & 1.0                                     & 1.0                                    & 1.0                                  & 1.0                                      & 1.0                                     \\ \hline
        0                               & 0                               & 2      & 0.3                             & 0.0                                     & 0.0                                    & 0.0                                  & 0.0                                      & 0.0                                     \\ \hline
        1                               & 0                               & 0      & 0.4                             & 0.0                                     & 1.0                                    & 0.35                                 & 0.4                                      & 0.4                                     \\ \hline
        1                               & 0                               & 1      & 0.0                             & 0.0                                     & 0.0                                    & 0.0                                  & 0.0                                      & 0.0                                     \\ \hline
        \textbf{1}                      & \textbf{0}                      & \textbf{2} & \textbf{0.6}             & \textbf{0.0}                            & \textbf{1.0}                           & \textbf{0.65}                        & \textbf{0.6}                             & \textbf{0.6}                            \\ \hline
        2                               & 0                               & 0      & 0.0                             & 0.0                                     & 0.0                                    & 0.0                                  & 0.0                                      & 0.0                                     \\ \hline
        2                               & 0                               & 1      & 0.0                             & 0.0                                     & 0.0                                    & 0.0                                  & 0.0                                      & 0.0                                     \\ \hline
        2                               & 0                               & 2      & 1.0                             & 1.0                                     & 1.0                                    & 1.0                                  & 1.0                                      & 1.0                                     \\ \hline
        \end{tabular}
        }
        \captionof{table}{Counterfactual transition probabilities produced by various methods.}
        \label{tab:appendix-gumbel-max-scm}
    \end{minipage}
\end{figure*}

\section{Further Details on Counterfactual Stability and Monotonicity Assumptions}
\label{app: assumption plausibility}

In Section \ref{sec: partial CF inference}, we formulate partial counterfactual inference as an optimisation problem, using the canonical SCM framework. Without any assumptions, the counterfactual probability bounds produced by this optimisation procedure (Eq. \eqref{eq: optimisation}) may be too wide to be useful. This issue is illustrated in our toy example in Figure \ref{fig:appendix-gumbel-max-scm-unintuitive-probs} and Table \ref{tab:appendix-gumbel-max-scm}: Eq. \eqref{eq: optimisation} produces trivial [0,1] bounds for the transitions $s=1,a=0 \rightarrow s'=0$ and $s=1,a=0 \rightarrow s'=2$. In a multi-timestep setting, like MDPs, having many transitions with loose or trivial bounds can quickly compound over several timesteps, severely limiting the usefulness of counterfactual inference for generating counterfactual explanations.

This motivates the use of reasonable assumptions about the underlying causal model to obtain informative bounds from partial counterfactual inference. One widely used assumption in the literature is counterfactual stability \citep{benz2022counterfactual, kazemi2024counterfactual, killian2022counterfactually, lorberbom2021learning, noorbakhsh2022counterfactual, zhu2020counterfactual}. This says that, for the outcome to change from the observed outcome under a counterfactual distribution (in our MDP setting, the counterfactual state and action), the relative likelihood of an alternative outcome must have increased relative to that of the observed outcome \citep{oberst2019counterfactual}. \citet{oberst2019counterfactual} constructed the counterfactual stability assumption as a generalisation of the monotonicity assumption used in binary settings (which we note is not the same as our counterfactual monotonicity assumption) to categorical distributions. This encodes a natural intuition we have about counterfactuals: if the probability of seeing the observed outcome increases more relative to another possible outcome under some counterfactual, then we would still expect to see the observed outcome. There already exist partial counterfactual inference approaches that incorporate counterfactual stability as an assumption into their optimisation problem, e.g., \citep{haugh2023bounding}.

However, in our toy example in Figure \ref{fig:appendix-gumbel-max-scm-unintuitive-probs} and Table \ref{tab:appendix-gumbel-max-scm}, the bounds for the transitions $s=1,a=0 \rightarrow s'=0$ and $s=1,a=0 \rightarrow s'=2$ are still trivial $[0,1]$ even with the counterfactual stability assumption. We can infer this from Table \ref{tab:appendix-gumbel-max-scm}: the counterfactual stability assumption sets the counterfactual probability of a transition to $0$ if the inequality holds, but this condition is not satisfied (otherwise the counterfactual probabilities of the Gumbel-max SCM, which satisfies counterfactual stability, would have been 0).

To prevent this, we introduce our assumption of counterfactual monotonicity: if a state could have been reached from the observed state-action pair, but was not observed/realised, then the counterfactual probabilities of all transitions to that state cannot increase from their nominal probabilities; similarly, if a state was realised/observed, then the counterfactual probabilities of all transitions to that state cannot decrease from their nominal probabilities.

This assumption aligns with the core (Bayesian) principle that observations carry useful information about the underlying dynamics and should be used to update the agent's belief. In our toy example, we did not observe either state $0$ or $2$ (although both states were possible and had equal probability from the observed state-action pair $(s=0, a=0)$), so the observation does not bear any useful information in the counterfactual context. If an outcome did not occur despite being possible, we treat its likelihood as relatively lower under alternative counterfactual scenarios. This prevents counterfactuals from contradicting evidence embedded in the factual trajectory.

Importantly, our framework is modular: these assumptions can be removed from the optimisation problem and the counterfactual probability bounds if desired, without requiring changes to the core inference procedure. The closed-form solutions can be proven similarly to the proofs in Appendix \ref{app: proofs}:

\begin{theorem}
\label{theorem: no assumptions}
For the observed state-action pair $(s_{t}, a_{t})$, the linear program without either assumption will produce the following bounds:
{
\[\tilde{P}_{t}^{\it LB}(s_{t+1} \mid s_t, a_t) = \tilde{P}_{t}^{\it UB}(s_{t+1} \mid s_t, a_t) = 1\] 
\[\forall \tilde{s}' \in \mathcal{S} \setminus \{s_{t+1}\}, \tilde{P}_{t}^{\it LB}(\tilde{s}' \mid s_t, a_t) = \tilde{P}_{t}^{\it UB}(\tilde{s}' \mid s_t, a_t) = 0\]
}
For all state-action pairs $(\tilde{s}, \tilde{a}) \neq (s_{t}, a_{t})$, the linear program without either assumption will produce, $\forall \tilde{s}' \in \mathcal{S}$, the following bounds:
{
\[\tilde{P}_{t}^{\it UB}(\tilde{s}' \mid \tilde{s}, \tilde{a}) = \min(1, \dfrac{P(\tilde{s}' \mid \tilde{s}, \tilde{a})}{P(s_{t+1} \mid s_t, a_t)}) \] 

\[
\tilde{P}_{t}^{\it LB}(\tilde{s}' \mid \tilde{s}, \tilde{a}) = \max(0, \dfrac{P(\tilde{s}' \mid \tilde{s}, \tilde{a}) - (1 - P(s_{t+1} \mid s_t, a_t))}{P(s_{t+1} \mid s_t, a_t)})
\]
}
\end{theorem}

\begin{theorem}
\label{theorem: just cs assumption}
For the observed state-action pair $(s_{t}, a_{t})$, the linear program with only the counterfactual stability assumption will produce the following bounds:
{
\[\tilde{P}_{t}^{\it LB}(s_{t+1} \mid s_t, a_t) = \tilde{P}_{t}^{\it UB}(s_{t+1} \mid s_t, a_t) = 1\] 
\[\forall \tilde{s}' \in \mathcal{S} \setminus \{s_{t+1}\}, \tilde{P}_{t}^{\it LB}(\tilde{s}' \mid s_t, a_t) = \tilde{P}_{t}^{\it UB}(\tilde{s}' \mid s_t, a_t) = 0\]
}
For all state-action pairs $(\tilde{s}, \tilde{a}) \neq (s_{t}, a_{t})$, the linear program with only the counterfactual stability assumption will produce, $\forall \tilde{s}' \in \mathcal{S}$, the following bounds:
{
\[\tilde{P}_{t}^{\it UB}(\tilde{s}' \mid \tilde{s}, \tilde{a}) = 
\begin{cases}
0 & \textnormal{ if CS conditions (*)} \\
\min\left(1, \dfrac{P(\tilde{s}' \mid \tilde{s}, \tilde{a})}{P(s_{t+1} \mid s_t, a_t)} \right) & \textnormal{ otherwise}
\end{cases}
\] 
}
{
\[
\tilde{P}_{t}^{\it LB}(\tilde{s}' \mid \tilde{s}, \tilde{a}) = 
\begin{cases}
\max \left(0, \dfrac{P(\tilde{s}' \mid \tilde{s}, \tilde{a}) - (1 - P(s_{t+1} \mid s_t, a_t))}{P(s_{t+1} \mid s_t, a_t)}\right) & \textnormal{if the support of $(\tilde{s}, \tilde{a})$ is disjoint from the support of $(s_t, a_t)$} \\
0 & \textnormal{ if CS conditions (*)}\\
\max\left(0, 1 - \sum_{s' \in \mathcal{S}\setminus\{\tilde{s}'\}}{\tilde{P}_{t}^{\it UB}(s' \mid \tilde{s}, \tilde{a})} \right) & \textnormal{ otherwise}
\end{cases}
\]
}
\end{theorem}

(*) counterfactual stability (CS) conditions: $\tilde{s}' \neq s_{t+1} \text{ and } P({\tilde{s}' \mid s_t, a_t}) > 0 \text{ and } \dfrac{P(s_{t+1} \mid \tilde{s}, \tilde{a})}{P(s_{t+1} \mid s_t, a_t)} \geq \dfrac{P(\tilde{s}' \mid \tilde{s}, \tilde{a})}{P({\tilde{s}' \mid s_t, a_t})}$

\section{Proofs}
\label{app: proofs}

\paragraph{Optimisation Problem}
For a given counterfactual transition $s, a \rightarrow s'$ and given observed transition $s_t, a_t \rightarrow s_{t+1}$, the optimisation problem is defined as follows:

\begin{align}
&{\min / \max}_{\theta} \sum_{u_t = 1}^{|U_t|} \mu_{s, a, u_t, s'} \cdot \mu_{s_t, a_t, u_t, s_{t+1}} \cdot \theta_{u_t} \label{proofeq:objective}\\
&\text{s.t.} \sum_{u_t = 1}^{|U_t|} \mu_{\tilde{s}, \tilde{a}, u_t, \tilde{s}'} \cdot \theta_{u_t} = P(\tilde{s}' \mid \tilde{s}, \tilde{a}), \forall \tilde{s}, \tilde{a}, \tilde{s}' \label{proofeq:interventional constraint} \\
&\tilde{P}_t(s_{t+1} \mid \tilde{s}, \tilde{a}) \geq {P}(s_{t+1} \mid \tilde{s}, \tilde{a}) \text{ if } P(s_{t+1} \mid \tilde{s}, \tilde{a})>0, \forall\tilde{s},\tilde{a} \text{ (Mon1)} \label{proofeq:monotonicity1} \\
&\tilde{P}_t(\tilde{s}' \mid \tilde{s}, \tilde{a}) \leq {P}(\tilde{s}' \mid \tilde{s}, \tilde{a}) \text{ if } P(\tilde{s}' \mid \tilde{s}, \tilde{a})>0 \text{ and } P(\tilde{s}' \mid s_t, a_t)>0, \forall \tilde{s}, \tilde{a}, \tilde{s}'\neq s_{t+1} \text{ (Mon2)} \label{proofeq:monotonicity2} \\
&\tilde{P}_t(\tilde{s}' \mid \tilde{s}, \tilde{a}) = 0 \text{ if } \dfrac{P(s_{t+1} \mid \tilde{s}, \tilde{a})}{P(s_{t+1} \mid s_t, a_t)}\geq\dfrac{P(\tilde{s}' \mid \tilde{s}, \tilde{a})}{P(\tilde{s}' \mid s_t, a_t)} \text{ and } P(\tilde{s}' \mid s_t, a_t) > 0, \forall (\tilde{s}, \tilde{a})\neq(s_{t}, a_t), \forall \tilde{s}' \neq s_{t+1} \text{ (CS)} \label{proofeq:counterfactual stability} \\
&0 \leq \theta_{u_t} \leq 1, \forall u_t \label{proofeq:valid prob1} \\
&\sum_{u_t = 1}^{|U_t|} \theta_{u_t} = 1 \label{proofeq:valid prob2}
\end{align}

where $\theta$ and $\mu$ are defined as follows:

\[\theta \in \mathbb{R}^{|\Omega_{U_t}|}\]
\[\mu \in \{0,1\}^{\mathcal{S} \times \mathcal{A} \times |\Omega_{U_t}| \times \mathcal{S}}\]
\[\forall s \in \mathcal{S}, a \in \mathcal{A}, s' \in \mathcal{S}, u_t \in U_t, \mu_{s, a, u_t, s'} = 
\begin{cases}
    1 & \text{if $f(s, a, u_t) = s'$}\\
    0 & \text{otherwise}\\
\end{cases}
\]

\paragraph{Counterfactual Probabilities}
For any transition $s, a \rightarrow s'$ and observed transition $s_t, a_t \rightarrow s_{t+1}$, the counterfactual transition probability $\tilde{P}_t(s' \mid s, a)$ can be calculated as follows:

\begin{equation}
\label{eq: counterfactual probability}
   \tilde{P}_t(s' \mid s, a)= \dfrac{\sum_{u_t = 1}^{|U_t|} \mu_{s, a, u_t, s'} \cdot \mu_{s_t, a_t, u_t, s_{t+1}} \cdot \theta_{u_t}}{P(s_{t+1} \mid s_t, a_t)} 
\end{equation}
\jl{
\paragraph{Shorthand Notation}
For compactness in the proofs, we adjust the notation of the optimisation problem in \eqref{proofeq:objective}-\eqref{proofeq:valid prob2} as follows. $\forall s,s' \in \mathcal{S}, \forall a \in \mathcal{A}, \forall u_t \in U_t$

\begin{equation}
    \begin{aligned}
    \sum_{u_t = 1}^{|U_t|} \mu_{\tilde{s},\tilde{a}, u_t, \tilde{s}'} \cdot \mu_{s_t, a_t, u_t, s_{t+1}} \cdot \theta_{u_t}
    &= \sum_{\substack{u_t \in U_t\\f(s_t, a_t, u_t) = s_{t+1}}} \mu_{\tilde{s},\tilde{a}, u_t, \tilde{s}'} \cdot \theta_{u_t} \text{ (by def, $\mu_{s_t, a_t, u_t, s_{t+1}} = 1 \iff f(s_t, a_t, u_t) = s_{t+1}$)}\\
    &= \sum_{\substack{u_t \in U_t\\ f(s_t, a_t, u_t) = s_{t+1} \\ f(\tilde{s}, \tilde{a}, u_t) = \tilde{s}'}} \theta_{u_t} \text{ (by def, $\mu_{\tilde{s}, \tilde{a}, u_t, \tilde{s}'} = 1 \iff f(\tilde{s}, \tilde{a}, u_t) = \tilde{s}'$)}\\
\end{aligned}
\end{equation}
}

\subsection{Counterfactual Transition Probability Bounds}
\label{sec: probability bounds proof}

In this section, we prove that the above optimisation problem reduces to the exact analytical bounds for counterfactual transition probabilities given in Section \ref{sec: bounds}, for any MDP $\mathcal{M}$.

\begin{theorem}
    For any MDP $\mathcal{M}$ and path $\tau$, the optimisation problem defined in \eqref{proofeq:objective}-\eqref{proofeq:valid prob2} reduces to the exact analytical bounds given in Theorems \ref{theorem: observed state}-\ref{theorem:ub overlapping} for the counterfactual transition probabilities of every transition in $\mathcal{M}$.
\end{theorem}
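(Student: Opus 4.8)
The plan is to recast the linear program \eqref{proofeq:objective}--\eqref{proofeq:valid prob2} as a \emph{coupling} (transportation) problem between two fixed marginals, solve that coupling problem in closed form, and match the result against each case of Theorems~\ref{theorem: observed state}--\ref{theorem:ub overlapping}. The key observation is that, because the c-component of $U_t$ is the singleton $\{S_{t+1}\}$, every realisation $u_t$ indexes an \emph{arbitrary} function $\mathcal{S}\times\mathcal{A}\to\mathcal{S}$, so a distribution $\theta$ over $U_t$ is nothing more than a joint distribution over the outcome vector $(f(s,a,U_t))_{(s,a)}$. Writing $p_{s'}:=P(s'\mid s_t,a_t)$, $r_{s'}:=P(s'\mid\tilde{s},\tilde{a})$, and $q_{\tilde{s}',s_{t+1}}:=\sum_{u_t:\,f(\tilde{s},\tilde{a},u_t)=\tilde{s}',\,f(s_t,a_t,u_t)=s_{t+1}}\theta_{u_t}$, the objective \eqref{proofeq:objective} is exactly the single cell $q_{\tilde{s}',s_{t+1}}$ and the counterfactual probability \eqref{eq: counterfactual probability} is $q_{\tilde{s}',s_{t+1}}/p_{s_{t+1}}$. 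The interventional constraints \eqref{proofeq:interventional constraint} for the two pairs $(s_t,a_t)$ and $(\tilde{s},\tilde{a})$ force $q$ to be a coupling of the marginals $p$ and $r$; conversely I would show that \emph{any} such coupling is realisable by a feasible $\theta$, by taking the outcome variables $(f(s,a,U_t))_{(s,a)}$ to be conditionally independent given the observed outcome $Y_0:=f(s_t,a_t,U_t)$, with each conditional law read off from the desired pairwise coupling with $Y_0$. This construction simultaneously reproduces every marginal in \eqref{proofeq:interventional constraint} and lets the coupling of each pair with $(s_t,a_t)$ be chosen independently, so the program \emph{decouples} over target pairs: it suffices to optimise $q_{\tilde{s}',s_{t+1}}$ over couplings of $p$ and $r$ subject only to the monotonicity/CS constraints \eqref{proofeq:monotonicity1}--\eqref{proofeq:counterfactual stability} that involve $(\tilde{s},\tilde{a})$ itself.

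With this reduction I would split into the three cases. For the observed pair $(\tilde{s},\tilde{a})=(s_t,a_t)$ (Theorem~\ref{theorem: observed state}), the outcome is the \emph{same} deterministic variable $Y_0$ coupled with itself, forcing a diagonal coupling $q_{\tilde{s}',s_{t+1}}=p_{s_{t+1}}\,\mathbbm{1}[\tilde{s}'=s_{t+1}]$, whence the counterfactual probability is $\mathbbm{1}[\tilde{s}'=s_{t+1}]$. For disjoint support (Theorem~\ref{theorem: ub disjoint}), monotonicity and CS hold vacuously, so only the marginal constraints remain and the extremal values of a single cell are the Fréchet--Hoeffding bounds $\max(0,\,r_{\tilde{s}'}+p_{s_{t+1}}-1)\le q_{\tilde{s}',s_{t+1}}\le\min(r_{\tilde{s}'},p_{s_{t+1}})$; dividing by $p_{s_{t+1}}$ yields exactly the stated case splits (and, verbatim, the assumption-free bounds of Theorem~\ref{theorem: no assumptions}). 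Both Fréchet bounds are attained by standard north-west-corner couplings, giving tightness.

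The substantive case is overlapping support (Theorem~\ref{theorem:ub overlapping}), which I would solve by identifying, in each branch, which linear inequality binds. For $\tilde{s}'\neq s_{t+1}$ the cell $q_{\tilde{s}',s_{t+1}}$ is capped from above by competing quantities: the row budget $r_{\tilde{s}'}$ (giving $\tilde{P}_t\le r_{\tilde{s}'}/p_{s_{t+1}}$); the column budget net of the mass $r_{s_{t+1}}p_{s_{t+1}}$ that Mon1 reserves for the $(s_{t+1},s_{t+1})$ cell (giving $\tilde{P}_t\le 1-P(s_{t+1}\mid\tilde{s},\tilde{a})$); the Mon2 cap $\tilde{P}_t\le P(\tilde{s}'\mid\tilde{s},\tilde{a})$, active exactly when $P(\tilde{s}'\mid s_t,a_t)>0$; and the CS constraint forcing $\tilde{P}_t=0$ when its condition holds. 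For $\tilde{s}'=s_{t+1}$ the relevant cap is the Fréchet value $\min(P(s_{t+1}\mid s_t,a_t),P(s_{t+1}\mid\tilde{s},\tilde{a}))$. Matching the active constraint in each regime reproduces the four branches of $\tilde{P}_t^{\it UB}$. For the lower bound I would combine the direct cell minimum (the Fréchet value for $\tilde{s}'=s_{t+1}$, and $0$ otherwise or under CS) with the normalisation bound from $\sum_{s'}\tilde{P}_t(s'\mid\tilde{s},\tilde{a})=1$: since each non-target cell obeys its own upper bound, $\tilde{P}_t(\tilde{s}'\mid\tilde{s},\tilde{a})\ge 1-\sum_{s'\neq\tilde{s}'}\tilde{P}_t^{\it UB}(s'\mid\tilde{s},\tilde{a})$, and taking the maximum reproduces $\tilde{P}_t^{\it LB}$.

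The main obstacle is twofold, and both parts live in the reduction and in the overlapping case. First, the realisability/decoupling claim: I must verify that the conditional-independence-given-$Y_0$ construction yields a genuinely feasible $\theta$, i.e.\ that the monotonicity and CS constraints for \emph{all other} pairs can be met simultaneously with the extremal coupling at the target pair; feasibility for each pair individually (witnessed, e.g., by the independent or Gumbel-max coupling) together with the conditional-independence construction should suffice, but this needs a careful argument. Second, establishing tightness of the overlapping-case bounds requires exhibiting, for each branch and for both the $\min$ and $\max$ problems, an explicit coupling that attains the value while respecting CS, Mon1 and Mon2 -- in particular showing that the non-target upper bounds appearing in the lower-bound formula are simultaneously achievable at the minimising coupling. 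This bookkeeping of which linear constraint is active in each regime is where the bulk of the work lies.
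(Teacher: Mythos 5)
Your coupling reformulation is, in substance, the paper's own proof in cleaner clothing: the conditional-independence-given-$Y_0$ construction achieves exactly what the paper's inductive argument (Section \ref{sec: probability bounds induction}) achieves by splitting each $\theta_{u_i}$ across the outcomes of a newly added state-action pair, and your binding-constraint analysis mirrors the case analysis of Section \ref{sec: probability bounds cases}. The problem is that the two steps you defer as ``needing a careful argument'' are where essentially all the content lies, and your proposed shortcuts for them fail. Writing $p_{s'}=P(s'\mid s_t,a_t)$ and $r_{s'}=P(s'\mid\tilde{s},\tilde{a})$: the independent coupling yields $\tilde{P}_t(s'\mid\tilde{s},\tilde{a})=r_{s'}$ for every $s'$, which violates counterfactual stability \eqref{proofeq:counterfactual stability} whenever its antecedent holds with $r_{s'}>0$ (e.g.\ $p=(0.5,0.5)$, $r=(0.8,0.2)$); and the Gumbel-max coupling, while counterfactually stable, violates Mon2 \eqref{proofeq:monotonicity2} --- that is precisely what Figure \ref{fig:gumbel-max-scm-unintuitive-probs} and Table \ref{tab:gumbel-max-scm} exhibit ($0.65>0.6$). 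So neither witness shows that the per-pair constraint set is nonempty, and without per-pair feasibility your decoupling reduction says nothing. Establishing feasibility, and simultaneously the attainability of the extremal cell value under Mon1, Mon2 and CS, is what the paper's Lemma \ref{lemma: existence of theta for overlapping case.} and the explicit $\theta$-assignments in the overlapping case provide; that is the bulk of the proof, not bookkeeping.

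Second, your lower bound for the diagonal cell is wrong as stated. In the overlapping case with $\tilde{s}'=s_{t+1}$ you take the maximum of the Fr\'echet value $\max\bigl(0,(r_{s_{t+1}}+p_{s_{t+1}}-1)/p_{s_{t+1}}\bigr)$ and the normalisation bound, whereas Theorem \ref{theorem:ub overlapping} has $\max\bigl(r_{s_{t+1}},\,1-\sum_{s'\neq s_{t+1}}\tilde{P}_{t}^{\it UB}(s'\mid\tilde{s},\tilde{a})\bigr)$: the first term must come from Mon1 \eqref{proofeq:monotonicity1}, not from Fr\'echet. These genuinely differ. Take four states with $p=(0.5,0.25,0.25,0)$ and $r=(0.2,0.2,0.2,0.4)$, the first coordinate being $s_{t+1}$: no CS condition fires, the three non-diagonal upper bounds from Theorem \ref{theorem:ub overlapping} are $0.2$, $0.2$ and $0.8$, so your normalisation term is $1-1.2=-0.2$ and your Fr\'echet term is $0$, giving a lower bound of $0$; but Mon1 forces $\tilde{P}_t(s_{t+1}\mid\tilde{s},\tilde{a})\ge r_{s_{t+1}}=0.2$, which is the theorem's (attained) bound. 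The fix is immediate --- your own upper-bound analysis already notes that Mon1 reserves mass $r_{s_{t+1}}p_{s_{t+1}}$ in the $(s_{t+1},s_{t+1})$ cell, and that same reservation is the correct direct lower bound on the diagonal --- but as written your claim that taking the maximum ``reproduces $\tilde{P}_t^{\it LB}$'' is false.
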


\jl{
\subsubsection{Proof Sketch}
\label{sec: proof sketch}
Consider an arbitrary observed transition $s_t, a_t \rightarrow s_{t+1}$ and counterfactual transition $\tilde{s}, \tilde{a} \rightarrow \tilde{s}'$. As shown in Eq. \eqref{eq: counterfactual probability}, the counterfactual probability of the counterfactual transition can be computed given a fixed assignment of $\theta$. Since the denominator in Eq. \eqref{eq: counterfactual probability} ($P(s_{t+1} \mid s_t, a_t)$) is fixed, the counterfactual probability depends only on:

\[\sum_{u_t = 1}^{|U_t|} \mu_{s, a, u_t, s'} \cdot \mu_{s_t, a_t, u_t, s_{t+1}} \cdot \theta_{u_t} = \sum_{\substack{u_t \in U_t \\f(s_t, a_t, u_t) = s_{t+1} \\ f(\tilde{s}, \tilde{a}, u_t) = \tilde{s}'}} \theta_{u_t}\]

Therefore, to minimise/maximise the counterfactual probability of a particular transition, we seek assignments of $\theta$ that optimise the value of \[\sum_{\substack{u_t \in U_t \\f(s_t, a_t, u_t) = s_{t+1} \\ f(\tilde{s}, \tilde{a}, u_t) = \tilde{s}'}} \theta_{u_t}\] subject to the other constraints of the optimisation problem. In particular, this means that, for any state-action pair $(\tilde{s}, \tilde{a})$ in $\mathcal{M}$, the counterfactual probability of the transition $\tilde{s}, \tilde{a} \rightarrow \tilde{s}'$ depends only on: (i) the other transitions from $(\tilde{s}, \tilde{a})$, and (ii) the counterfactual probabilities of the transitions from the observed state-action pair $(s_t, a_t)$.

However, the main difficulty is that $\theta$ must satisfy the optimisation constraints for \emph{all} state-action pairs in the MDP simultaneously. To address this, the proof proceeds in two steps:

\begin{enumerate}
    \item \textbf{Induction over state-action pairs (\ref{sec: probability bounds induction})}. We show that, for any MDP $\mathcal{M}$, if we can identify assignments of $\theta$ that satisfy the constraints of the optimisation problem individually for all $|\mathcal{S}| \times |\mathcal{A}|$ state-action pairs in $\mathcal{M}$, then these assignments can be combined to form a single $\theta$ that satisfies the constraints of all $|\mathcal{S}| \times |\mathcal{A}|$ state-action pairs simultaneously. This step is crucial for the second part of the proof, because it allows us to focus on optimising the counterfactual probability of a particular transition without violating constraints elsewhere.

    \item \textbf{Closed-form Probability Bounds (\ref{sec: probability bounds cases})} 
    We then analyse how to minimise/maximise a particular counterfactual transition probability, $\tilde{P}_t(s’ \mid s, a)$, dividing the proof into three disjoint and exhaustive cases, based on the relationships between the observed and counterfactual state-action pairs:
    
    \begin{itemize}
        \item When the counterfactual state-action pair is the observed state-action pair, $(s, a) = (s_t, a_t)$
        \item When the counterfactual state-action pair has disjoint support with the observed state-action pair under the interventional probability distribution
        \item When the counterfactual state-action pair has overlapping support with the observed state-action pair under the interventional probability distribution.
    \end{itemize}
    
    When the supports of the observed and counterfactual state-action pairs overlap, the counterfactual stability and counterfactual monotonicity assumptions influence the counterfactual probability bounds, leading to different closed-form expressions. In \ref{sec: probability bounds cases} we cover all possible cases in an MDP systematically and prove, subject to the constraints of the optimisation problem on the observed and counterfactual state-action pairs, the minimum and maximum values of the sum \[\sum_{\substack{u_t \in U_t \\f(s_t, a_t, u_t) = s_{t+1} \\ f(\tilde{s}, \tilde{a}, u_t) = \tilde{s}'}} \theta_{u_t}\] to identify the lower and upper counterfactual probability bounds.
\end{enumerate}
}

\subsubsection{Inductive Proof}
\label{sec: probability bounds induction}
By induction, we will prove, given any MDP $\mathcal{M}$, that if we can assign $\theta$ such that the constraints of the optimisation problem \eqref{proofeq:interventional constraint}-\eqref{proofeq:valid prob2} are satisfied for all $|\mathcal{S}| \times |\mathcal{A}|$ state-action pairs in $\mathcal{M}$ separately, we know these can be combined to form a single valid $\theta$ across all $|\mathcal{S}| \times |\mathcal{A}|$ state-action pairs in $\mathcal{M}$.

Take an arbitrary MDP $\mathcal{M}$ with state space $|\mathcal{S}|$ and action space $|\mathcal{A}|$. In total, there are $N = |\mathcal{S}| \times |\mathcal{A}|$ state-action pairs in the MDP, meaning there are $|U_t| = |\mathcal{S}|^{|\mathcal{S}| \times |\mathcal{A}|}$ possible unique structural equation mechanisms. At each stage of the inductive proof, we will add a new state-action pair value from the MDP $\mathcal{M}$ to the causal model, and adjust $\theta$ to satisfy the constraints for this new state-action pair without affecting whether the constraints hold for all state-action pair values already considered in the causal model. When we have considered $k$ state-action pairs, this results in $\theta \in \mathbb{R}^{|\mathcal{S}|^k}$. Therefore, once we have added all $|\mathcal{S}| \times |\mathcal{A}|$ state-action pairs, we will have $\theta \in \mathbb{R}^{|U_t|}$, as required by the optimisation problem.

\paragraph{Base Case 1} Assume $(s_t, a_t)$ (the observed state-action pair) is the only state-action pair value we want to consider in the causal model (this always exists, as we always have an observed transition). We require a structural equation mechanism for each possible next state (of which there are $|\mathcal{S}|$). This leads to an assignment of $\theta^1 \in \mathbb{R}^{|\mathcal{S}|}$\footnote{The superscript of each $\theta^k$ indicates the number of state-action pair values we have considered in the causal model. When we have all $|\mathcal{S}| \times |\mathcal{A}|$ state-action pairs, we will have our final $\theta = \theta^{|\mathcal{S}| \times |\mathcal{A}|}$} as follows, which is the only possible assignment that satisfies all the constraints \eqref{proofeq:interventional constraint}-\eqref{proofeq:valid prob2}:

\begin{table}[h]
\centering
\begin{tabular}{c|c|c|c}
$\theta^1_{u_1}$                                  & $\theta^1_{u_2}$                                 & ... & $\theta^1_{u_{|\mathcal{S}|}}$\\
\hline
$P(s_1 \mid s_t, a_t)$ & $P(s_2 \mid s_t, a_t)$ & ... & $P(s_{|\mathcal{S}|} \mid s_t, a_t)$
\end{tabular}
\end{table}

$\theta^1 = [P(s_1 \mid s_t, a_t), P(s_2 \mid s_t, a_t), ..., P(s_{|\mathcal{S}|} \mid s_t, a_t)]$. Given $\theta^1$, the counterfactual transition probability for each transition $s_t, a_t \rightarrow s'$ can be calculated with:

\[
\tilde{P}_t(s' \mid s_t, a_t) = \dfrac{\sum_{u_t = 1}^{|U_t|} \mu_{s_t, a_t, u_t, s'} \cdot \mu_{s_{t}, a_{t}, u_t, s_{t+1}} \cdot \theta_{u_t}}{P(s_{t+1} \mid s_t, a_t)}
\]

Therefore, given $\theta^1$, $\tilde{P}_t(s_{t+1} \mid s_t, a_t) = 1$, and $\forall s' \in \mathcal{S}\setminus\{s_{t+1}\}$, $\tilde{P}_t(s' \mid s_t, a_t) = 0$. These are the only possible counterfactual probabilities, so these probabilities are produced by the closed-form bounds in Section \ref{sec: probability bounds cases}. These counterfactual probabilities satisfy the monotonicity and counterfactual stability constraints of the optimisation problem, as follows:

\begin{itemize}
    \item Because $(\tilde{s}, \tilde{a})=(s_t, a_t)$, CS \eqref{proofeq:counterfactual stability} doesn't apply.

    \item Because $\tilde{P}_{t}(s_{t+1} \mid s_t, a_t) = 1 \geq P(s_{t+1} \mid s_t, a_t)$, this satisfies Mon1 \eqref{proofeq:monotonicity1}.
    
    \item Because $\forall s' \in \mathcal{S}\setminus \{s_{t+1}\}, \tilde{P}_{t}(s' \mid s_t, a_t) = 0 \leq P(s' \mid s_t, a_t)$, this satisfies Mon2 \eqref{proofeq:monotonicity2}.
\end{itemize}

\paragraph{Base Case 2} Let us assume that we only have the observed state-action pair in the causal model, and assume we have a valid $\theta^1$ with $|\mathcal{S}|$ structural equation mechanisms. 

Now, take the state-action pair for which we wish to calculate the probability bounds, $(s, a)$, to add to the causal model. We now require $|\mathcal{S}|^2$ structural equation mechanisms, one for each possible combination of transitions from the two state-action pairs. We can view this as separating each of the existing structural equation mechanisms from Base Case 1 into $|\mathcal{S}|$ new structural equation mechanisms, one for every possible transition from the new state-action pair, $(s, a)$:

\begin{figure}[!ht]
\centering
\resizebox{0.5\textwidth}{!}{%
\begin{circuitikz}
\tikzstyle{every node}=[font=\LARGE]
\node [font=\LARGE] at (3,16.25) {$u_1$};
\node [font=\LARGE] at (6.0,16.25) {$...$};
\node [font=\LARGE] at (8.75,16.25) {$u_{|S|}$};
\draw [short] (3,15.75) -- (1.25,13.5);
\draw [short] (3,15.75) -- (5,13.5);
\draw [short] (3,15.75) -- (3,13.5);
\node [font=\LARGE] at (1,13) {$u_{1,1}$};
\node [font=\LARGE] at (5,13) {$u_{1,|S|}$};
\node [font=\LARGE] at (3,13) {$u_{1,2}$};
\node [font=\LARGE] at (4.0,13) {$...$};
\node [font=\LARGE] at (9.7,13) {$...$};

\draw [short] (8.75,15.75) -- (7,13.5);
\draw [short] (8.75,15.75) -- (10.75,13.5);
\draw [short] (8.75,15.75) -- (8.75,13.5);
\node [font=\LARGE] at (6.75,13) {$u_{|S|,1}$};
\node [font=\LARGE] at (10.75,13) {$u_{|S|,|S|}$};
\node [font=\LARGE] at (8.75,13) {$u_{|S|,2}$};
\end{circuitikz}
}%
\end{figure}

where, for $1 \leq i \leq |\mathcal{S}|, 1 \leq n \leq |\mathcal{S}|$, $u_{i, n}$ leads to the same next state for $(s_t, a_t)$ as $u_i$, and produces the transition $s, a \rightarrow s_n$. \\

In the same way, we can split each $\theta^1_{u_1}, ..., \theta^1_{u_{|\mathcal{S}|}}$ across these new structural equation mechanisms to obtain $\theta^2 \in \mathbb{R}^{|\mathcal{S}|^2}$:

\begin{figure}[!ht]
\centering
\resizebox{0.5\textwidth}{!}{%
\begin{circuitikz}
\tikzstyle{every node}=[font=\LARGE]
\node [font=\LARGE] at (3,16.25) {$\theta^1_{u_1}$};
\node [font=\LARGE] at (6.0,16.25) {$...$};
\node [font=\LARGE] at (8.75,16.25) {$\theta^1_{u_{|S|}}$};
\draw [short] (3,15.75) -- (1.25,13.5);
\draw [short] (3,15.75) -- (5,13.5);
\draw [short] (3,15.75) -- (3,13.5);
\node [font=\LARGE] at (1,13) {$\theta^2_{u_{1,1}}$};
\node [font=\LARGE] at (5,13) {$\theta^2_{u_{1,|S|}}$};
\node [font=\LARGE] at (3,13) {$\theta^2_{u_{1,2}}$};
\node [font=\LARGE] at (3.9,13) {$...$};
\node [font=\LARGE] at (9.6,13) {$...$};
\draw [short] (8.75,15.75) -- (7,13.5);
\draw [short] (8.75,15.75) -- (10.75,13.5);
\draw [short] (8.75,15.75) -- (8.75,13.5);
\node [font=\LARGE] at (6.75,13) {$\theta^2_{u_{|S|,1}}$};
\node [font=\LARGE] at (10.75,13) {$\theta^2_{u_{|S|,|S|}}$};
\node [font=\LARGE] at (8.75,13) {$\theta^2_{u_{|S|,2}}$};
\end{circuitikz}
}%
\end{figure}

By splitting $\theta^1$ in this way, we guarantee that the total probability across each set of mechanisms (where each set produces a different transition from $(s_t, a_t)$) remains the same, i.e., \[\forall i \in \{1, .., |\mathcal{S}|\}, \sum_{n=1}^{|\mathcal{S}|}\theta^2_{u_{i,n}} = \theta_{u_i}^1\]

This means that the counterfactual probabilities of all transitions from $(s_t, a_t)$ will be exactly the same when using $\theta^2$ as with $\theta^1$. Because we know all the constraints were satisfied when using $\theta^1$, this guarantees that all the constraints of the optimisation problem will continue to hold for $(s_t, a_t)$ with $\theta^2$.\\

Now, we only need to ensure we assign $\theta^2$ such that all the constraints hold for the transitions from $(s, a)$. Firstly, we need $\sum_{i=1}^{|\mathcal{S}|}{\theta^2_{u_{i, n}}} = P(s_n \mid s, a)$ for every possible next state $s_n \in \mathcal{S}$, to satisfy the interventional constraint \eqref{proofeq:interventional constraint}. For each transition $s, a \rightarrow s_n$, there is a $\theta^2_{u_{i, n}}$ for every $u_i$, and for every $\theta^2_{u_{i, n}}$, $0 \leq \theta^2_{u_{i, n}} \leq \theta^1_{u_n}$. Therefore, $0 \leq \sum_{i=1}^{|\mathcal{S}|}{\theta^2_{u_{i, n}}} \leq 1, \forall s_n \in \mathcal{S}$. We also know that:

\begin{align*}
\sum_{n=1}^{|\mathcal{S}|}\sum_{i=1}^{|\mathcal{S}|}\theta^2_{u_i,n} &= \sum_{i=1}^{|\mathcal{S}|}\theta^1_{u_i} \text{ (because we split the probabilities of each $\theta^1_{u_i}$)}\\
&= \sum_{i=1}^{|\mathcal{S}|}P(s_i|s_t, a_t) \\&= 1
\end{align*}

and 

\[
\sum_{n=1}^{|\mathcal{S}|}\sum_{i=1}^{|\mathcal{S}|}{\theta^2_{u_{i, n}}} = 
\sum_{n=1}^{|\mathcal{S}|}P(s_n|s,a) = 1
\]

Therefore, no matter the assignment of $\theta^1$, we can satisfy the interventional probability constraints \eqref{proofeq:interventional constraint} for all transitions from $(s, a)$. In Section \ref{sec: probability bounds cases}, we prove valid assignments of $\theta$ that would minimise and maximise the counterfactual probability for some transition from $(s, a)$, and satisfy all the constraints \eqref{proofeq:interventional constraint}-\eqref{proofeq:valid prob2}. Therefore, we can pick the assignment of $\theta^2$ from these cases to minimise/maximise the counterfactual probability of the transition in question.

\paragraph{Inductive Case} Let us assume that we now have $k$ state-action pair values from the MDP $\mathcal{M}$ in our causal model, and we have found a valid assignment $\theta^k \in \mathbb{R}^{|\mathcal{S}|^k}$ that satisfies all the constraints for these $k$ state-action pairs, including $(s_t, a_t)$. Because there are $k$ state-action pairs, there are $|\mathcal{S}|^{k}$ possible unique structural equation mechanisms for these state-action pairs\footnote{Note we have changed the indexing of each $\theta^k_{i, n}$ to $\theta^k_{j}$, where $j = (i-1)\cdot|\mathcal{S}| + n$.}, with probabilities as follows:

\begin{table}[h]
\centering
\begin{tabular}{lllllll}
$\theta^k_{u_1}$ & ... & $\theta^k_{u_{|\mathcal{S}|}}$ & ... & $\theta^k_{u_{|\mathcal{S}|^2}}$ & ... & $\theta^k_{u_{|\mathcal{S}|^{k}}}$ \\
\end{tabular}
\end{table}

Now, we wish to add the $k+1^{th}$ state-action pair from the MDP to the causal model, resulting in $|\mathcal{S}|^{k+1}$ possible structural equation mechanisms. Let the $k+1^{th}$ state-action pair be $(s, a)$ arbitrarily. We can view this as separating the existing structural equation mechanisms each into $|\mathcal{S}|$ new structural equation mechanisms, one for every possible transition from the $k+1^{th}$ state-action pair:

\begin{figure}[!ht]
\centering
\resizebox{1\textwidth}{!}{%
\begin{circuitikz}
\tikzstyle{every node}=[font=\LARGE]
\node [font=\LARGE] at (3.0,16.25) {$u_1$};
\node [font=\LARGE] at (6.0,16.25) {$...$};
\node [font=\LARGE] at (8.75,16.25) {$u_{|S|}$};
\node [font=\LARGE] at (12,16.25) {$...$};
\node [font=\LARGE] at (14.75,16.25) {$u_{|S|^2}$};
\node [font=\LARGE] at (20.75,16.25) {$u_{|S|^k}$};
\node [font=\LARGE] at (18,16.25) {$...$};
\draw [short] (3,15.75) -- (1.25,13.5);
\draw [short] (3,15.75) -- (5,13.5);
\draw [short] (3,15.75) -- (3,13.5);
\node [font=\LARGE] at (1,13) {$u_{1,1}$};
\node [font=\LARGE] at (5,13) {$u_{1,|S|}$};
\node [font=\LARGE] at (3,13) {$u_{1,2}$};
\node [font=\LARGE] at (3.9,13) {$...$};
\draw [short] (8.75,15.75) -- (7,13.5);
\draw [short] (8.75,15.75) -- (10.75,13.5);
\draw [short] (8.75,15.75) -- (8.75,13.5);
\node [font=\LARGE] at (6.75,13) {$u_{|S|,1}$};
\node [font=\LARGE] at (10.75,13) {$u_{|S|,|S|}$};
\node [font=\LARGE] at (8.75,13) {$u_{|S|,2}$};
\node [font=\LARGE] at (9.7,13) {$...$};
\draw [short] (14.75,15.75) -- (13,13.5);
\draw [short] (14.75,15.75) -- (16.75,13.5);
\draw [short] (14.75,15.75) -- (14.75,13.5);
\node [font=\LARGE] at (12.75,13) {$u_{|S|^2,1}$};
\node [font=\LARGE] at (17,13) {$u_{|S|^2,|S|}$};
\node [font=\LARGE] at (14.75,13) {$u_{|S|^2,2}$};
\node [font=\LARGE] at (15.8,13) {$...$};
\draw [short] (20.75,15.75) -- (19,13.5);
\draw [short] (20.75,15.75) -- (22.75,13.5);
\draw [short] (20.75,15.75) -- (20.75,13.5);
\node [font=\LARGE] at (18.75,13) {$u_{|S|^{k},1}$};
\node [font=\LARGE] at (23,13) {$u_{|S|^{k},|S|}$};
\node [font=\LARGE] at (20.75,13) {$u_{|S|^{k}, 2}$};
\node [font=\LARGE] at (21.8,13) {$...$};
\end{circuitikz}
}%
\end{figure}
where each $u_{i, n}$ produces the same next states for all of the first $k$ state-action pairs as $u_i$, and produces the transition $s, a \rightarrow s_n$.

In the same way, we can split each $\theta^k_{u_1}, ..., \theta^k_{u_{|\mathcal{S}|}}, ...$ across these new structural equation mechanisms to find $\theta^{k+1} \in \mathbb{R}^{|\mathcal{S}|^{k+1}}$.

\begin{figure}[!ht]
\centering
\resizebox{1\textwidth}{!}{%
\begin{circuitikz}
\tikzstyle{every node}=[font=\LARGE]
\node [font=\LARGE] at (3,16.25) {$\theta^k_{u_1}$};
\node [font=\LARGE] at (6.0,16.25) {$...$};
\node [font=\LARGE] at (8.75,16.25) {$\theta^k_{u_{|S|}}$};
\node [font=\LARGE] at (11.8,16.25) {$...$};
\node [font=\LARGE] at (14.75,16.25) {$\theta^k_{u_{|S|^2}}$};
\node [font=\LARGE] at (20.75,16.25) {$\theta^k_{u_{|S|^k}}$};
\node [font=\LARGE] at (17.8,16.25) {$...$};
\draw [short] (3,15.75) -- (1.25,13.5);
\draw [short] (3,15.75) -- (5,13.5);
\draw [short] (3,15.75) -- (3,13.5);
\node [font=\LARGE] at (1,13) {$\theta^{k+1}_{u_{1,1}}$};
\node [font=\LARGE] at (5,13) {$\theta^{k+1}_{u_{1,|S|}}$};
\node [font=\LARGE] at (3,13) {$\theta^{k+1}_{u_{1,2}}$};
\node [font=\LARGE] at (3.9,13) {$...$};
\draw [short] (8.75,15.75) -- (7,13.5);
\draw [short] (8.75,15.75) -- (10.75,13.5);
\draw [short] (8.75,15.75) -- (8.75,13.5);
\node [font=\LARGE] at (6.75,13) {$\theta^{k+1}_{u_{|S|,1}}$};
\node [font=\LARGE] at (10.9,13) {$\theta^{k+1}_{u_{|S|,|S|}}$};
\node [font=\LARGE] at (8.75,13) {$\theta^{k+1}_{u_{|S|,2}}$};
\node [font=\LARGE] at (9.7,13) {$...$};
\draw [short] (14.75,15.75) -- (13,13.5);
\draw [short] (14.75,15.75) -- (16.75,13.5);
\draw [short] (14.75,15.75) -- (14.75,13.5);
\node [font=\LARGE] at (12.75,13) {$\theta^{k+1}_{u_{|S|^2,1}}$};
\node [font=\LARGE] at (17,13) {$\theta^{k+1}_{u_{|S|^2,|S|}}$};
\node [font=\LARGE] at (14.75,13) {$\theta^{k+1}_{u_{|S|^2,2}}$};
\node [font=\LARGE] at (15.75,13) {$...$};
\draw [short] (20.75,15.75) -- (19,13.5);
\draw [short] (20.75,15.75) -- (22.75,13.5);
\draw [short] (20.75,15.75) -- (20.75,13.5);
\node [font=\LARGE] at (18.75,13) {$\theta^{k+1}_{u_{|S|^{k},1}}$};
\node [font=\LARGE] at (23,13) {$\theta^{k+1}_{u_{|S|^{k},|S|}}$};
\node [font=\LARGE] at (20.75,13) {$\theta^{k+1}_{u_{|S|^{k}, 2}}$};
\node [font=\LARGE] at (21.75,13) {$...$};
\end{circuitikz}
}%
\end{figure}

By splitting $\theta^k$ in this way, we have $\forall i \in \{1, .., |\mathcal{S}|\}, \sum_{n=1}^{|\mathcal{S}|}\theta^{k+1}_{u_{i,n}} = \theta_{u_i}^k$. Each $\theta^{k+1}_{u_{i, n}}$ is only different to $\theta^k_{u_i}$ in the transition from $(s, a)$. Therefore, this guarantees that for every state-action pair $(s', a') \neq (s, a)$ in the first $k$ pairs added to the causal model, the total probability across each set of mechanisms (where each set produces a different transition from $(s', a')$) remains the same in $\theta^{k+1}$ as in $\theta^k$. As a result, the counterfactual probabilities of all transitions from each $(s', a')$ will be exactly the same when using $\theta^{k+1}$ as with $\theta^k$. Because we know all the constraints were satisfied when using $\theta^k$, this guarantees that all the constraints of the optimisation problem will continue to hold for $(s', a')$ with $\theta^{k+1}$.\\

Now, we only need to make sure that we can assign $\theta^{k+1}$ such that the constraints also hold for all the transitions from $(s, a)$. We need $\sum_{i=1}^{|\mathcal{S}|^k}{\theta^{k+1}_{u_{i, n}}} = P(s_n \mid s, a)$ for every possible next state $s_n \in \mathcal{S}$, to satisfy the interventional probability constraints \eqref{proofeq:interventional constraint}. For each transition $s, a \rightarrow s_n$, there is a $\theta^{k+1}_{u_{i, n}}$ for every $u_i$, and for every $\theta^{k+1}_{u_{i, n}}$, $0 \leq \theta^{k+1}_{u_{i, n}} \leq \theta^k_{u_n}$, therefore $0 \leq \sum_{i=1}^{|\mathcal{S}|^k}{\theta^{k+1}_{u_{i, n}}} \leq 1, \forall s_n \in \mathcal{S}$. We also know that:

\begin{align*}
\sum_{n=1}^{|\mathcal{S}|}\sum_{i=1}^{|\mathcal{S}|^k}\theta^{k+1}_{u_i,n} &= \sum_{i=1}^{|\mathcal{S}|^k}\theta^k_{u_i} \text{ (because we split the probabilities of each $\theta^k_{u_i}$)}\\&= 1
\end{align*}

and 

\[
\sum_{n=1}^{|\mathcal{S}|}\sum_{i=1}^{|\mathcal{S}|^k}{\theta^2_{u_{i, n}}} = 
\sum_{n=1}^{|\mathcal{S}|}P(s_n|s,a) = 1
\]

Therefore, no matter the assignment of $\theta^k$, we can satisfy the interventional probability constraints \eqref{proofeq:interventional constraint} for all transitions from $(s, a)$. In Section \ref{sec: probability bounds cases}, we show that we can assign $\theta^{k+1}$ such that it would minimise and maximise the counterfactual probability for some transition from $(s, a)$, and satisfy all the constraints \eqref{proofeq:interventional constraint}-\eqref{proofeq:valid prob2}. Therefore, we know we can assign $\theta^{k+1}$ such that it satisfies the constraints for all $k+1$ state-action pairs in the causal model.

\paragraph{Conclusion} By induction, we have proven that as long as we can satisfy the constraints for each state-action pair separately, we can combine these into a single valid $\theta$ that satisfies the constraints for all state-action pairs, and minimises/maximises the counterfactual probability for a particular transition in question. Because the state-action pairs are added in an arbitrary order in this inductive proof (other than the observed state-action pair and the state-action pair for the transition in question, which both must always exist), this works for any MDP.

\pagebreak
\subsubsection{Closed-form Probability Bounds}
\label{sec: probability bounds cases}

Take an arbitrary transition $\tilde{s}, \tilde{a} \rightarrow \tilde{s}'$ from the MDP. We need to consider the minimum/maximum possible value of 
$\sum_{\substack{u_t \in U_t\\f(s_t, a_t, u_t) = s_{t+1} \\ f(\tilde{s}, \tilde{a}, u_t) = \tilde{s}'}} \theta_{u_t}$ 
to find the lower and upper bounds, subject to the following constraints over the state-action pair, $(\tilde{s}, \tilde{a})$:

\begin{itemize}
    \item $\forall \tilde{s}' \in \mathcal{S}, \sum_{\substack{u_t \in U_t \\ f(\tilde{s}, \tilde{a}, u_t) = \tilde{s}'}}{\theta_{u_t}} = P(\tilde{s}' \mid \tilde{s}, \tilde{a})$ to satisfy \eqref{proofeq:interventional constraint}
    \item $\tilde{P}_t(s_{t+1} \mid \tilde{s}, \tilde{a}) \geq P(s_{t+1} \mid \tilde{s}, \tilde{a})$ to satisfy Mon1 \eqref{proofeq:monotonicity1}
    \item $\forall s' \in \mathcal{S}\setminus \{s_{t+1} \}, \tilde{P}_t(s' \mid \tilde{s}, \tilde{a}) \leq P(s' \mid \tilde{s}, \tilde{a})$ to satisfy Mon2 \eqref{proofeq:monotonicity2}
    \item If $(\tilde{s}, \tilde{a}) \neq (s_t, a_t)$, then $\forall \tilde{s}' \in \mathcal{S}\setminus\{s_{t+1}\}, \tilde{P}_t(\tilde{s}' \mid \tilde{s}, \tilde{a})$ satisfies CS \eqref{proofeq:counterfactual stability}
    \item $0 \leq \theta_{u_t} \leq 1, \forall u_t$ \eqref{proofeq:valid prob1}
    \item $\sum_{u_t = 1}^{|U_t|} \theta_{u_t} = 1$ \eqref{proofeq:valid prob2}
\end{itemize}

From \eqref{proofeq:interventional constraint}, we also derive the following constraints over the state-action pair $(s_t, a_t)$,  which must hold when calculating the counterfactual probability bounds for every state-action pair.

\begin{itemize}
    \item $\sum_{\substack{u_t \in U_t\\f(s_t, a_t, u_t) = s_{t+1}}}{\theta_{u_t}} = P(s_{t+1} \mid s_t, a_t)$ to satisfy \eqref{proofeq:interventional constraint}
    \item $\sum_{\substack{u_t \in U_t\\f(s_t, a_t, u_t) \neq s_{t+1}}}{\theta_{u_t}} = \sum_{s' \in \mathcal{S}\setminus \{s_{t+1} \}} P(s' \mid s_t, a_t) = 1 - P(s_{t+1} \mid s_t, a_t)$ to satisfy \eqref{proofeq:interventional constraint}
\end{itemize}

Consider the following distinct cases:
\textbf{\begin{enumerate}
    \item $(\tilde{s}, \tilde{a}) = (s_t, a_t)$
    \item $(\tilde{s}, \tilde{a}) \neq (s_t, a_t)$ and $(\tilde{s}, \tilde{a})$ has disjoint support with $(s_t, a_t)$
    \item $(\tilde{s}, \tilde{a}) \neq (s_t, a_t)$, $(\tilde{s}, \tilde{a})$ has overlapping support with $(s_t, a_t)$
\end{enumerate}}

These cases are disjoint, and cover all possible state-action pairs.

\pagebreak
\subsubsection{CF Bounds for Observed $(s_t, a_t)$}
\begin{theorem}
\label{proof theorem: observed state}
For outgoing transitions from the observed state-action pair $(s_t, a_t)$, the linear program will produce bounds of:

\[\tilde{P}_{t}^{LB}(s_{t+1} \mid s_t, a_t) = \tilde{P}_{t}^{UB}(s_{t+1} \mid s_t, a_t) = 1\] 

for the observed next state $s_{t+1}$, and 

\[\tilde{P}_{t}^{LB}(\tilde{s}' \mid s_t, a_t) = \tilde{P}_{t}^{UB}(\tilde{s}' \mid s_t, a_t) = 0\]

for all other $\tilde{s}' \in \mathcal{S}\setminus \{s_{t+1}\}$.
\end{theorem}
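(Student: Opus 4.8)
The plan is to evaluate the linear program's objective directly and observe that, for the observed pair, the counterfactual probability is \emph{constant} across the entire feasible region, so the minimisation and maximisation must coincide. Setting $(\tilde{s},\tilde{a}) = (s_t,a_t)$ in the counterfactual probability formula \eqref{eq: counterfactual probability}, the numerator becomes $\sum_{u_t} \mu_{s_t,a_t,u_t,\tilde{s}'}\cdot\mu_{s_t,a_t,u_t,s_{t+1}}\cdot\theta_{u_t}$. The crucial observation is that each $f$ is a single-valued (deterministic) structural equation: for any fixed $u_t$, the value $f(s_t,a_t,u_t)$ is exactly one state. Hence the product $\mu_{s_t,a_t,u_t,\tilde{s}'}\cdot\mu_{s_t,a_t,u_t,s_{t+1}}$ can be $1$ only when $\tilde{s}' = s_{t+1}$ (and $f(s_t,a_t,u_t)=s_{t+1}$); for every $u_t$ it is identically $0$ whenever $\tilde{s}'\neq s_{t+1}$.

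I would then split into the two subcases. When $\tilde{s}' = s_{t+1}$, using the shorthand of the optimisation problem the numerator collapses to $\sum_{u_t:\,f(s_t,a_t,u_t)=s_{t+1}}\theta_{u_t}$, which the interventional constraint \eqref{proofeq:interventional constraint} forces to equal $P(s_{t+1}\mid s_t,a_t)$. Dividing by the identical denominator gives
\[
\tilde{P}_t(s_{t+1}\mid s_t,a_t)=\frac{P(s_{t+1}\mid s_t,a_t)}{P(s_{t+1}\mid s_t,a_t)}=1.
\]
When $\tilde{s}'\neq s_{t+1}$, the numerator is a sum over an empty index set and is therefore $0$, so $\tilde{P}_t(\tilde{s}'\mid s_t,a_t)=0$.

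Since neither of these two values depends on the choice of $\theta$, the objective is constant on the feasible polytope, and consequently the lower and upper bounds coincide: $\tilde{P}_t^{\it LB}(s_{t+1}\mid s_t,a_t)=\tilde{P}_t^{\it UB}(s_{t+1}\mid s_t,a_t)=1$ and $\tilde{P}_t^{\it LB}(\tilde{s}'\mid s_t,a_t)=\tilde{P}_t^{\it UB}(\tilde{s}'\mid s_t,a_t)=0$ for $\tilde{s}'\in\mathcal{S}\setminus\{s_{t+1}\}$. Feasibility (i.e.\ that at least one valid $\theta$ attains these values while also satisfying Mon1, Mon2 and CS) is already established by Base Case~1 of the inductive argument in Section~\ref{sec: probability bounds induction}, where $\theta^1$ is shown to meet every constraint; so no separate feasibility check is needed here. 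The only real subtlety, rather than an obstacle, is recognising that the determinism of $f$ annihilates the cross-term off the diagonal $\tilde{s}'=s_{t+1}$ — once this is in hand, the result follows immediately from the interventional constraint, and this degenerate case serves as the base of the case analysis for Theorems~\ref{theorem: ub disjoint}--\ref{theorem:ub overlapping}.
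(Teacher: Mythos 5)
Your proposal is correct and follows essentially the same route as the paper: the determinism of $f$ forcing $\mu_{s_t,a_t,u_t,\tilde{s}'}\cdot\mu_{s_t,a_t,u_t,s_{t+1}}=0$ off the diagonal, combined with the interventional constraint \eqref{proofeq:interventional constraint}, is exactly the content of the paper's Lemma~\ref{lemma: CF probs of observed state-action pair}, which its proof of this theorem invokes. The only organisational difference is that you phrase the conclusion as constancy of the objective over the feasible polytope and delegate feasibility to Base Case~1, whereas the paper writes out the (unique) forced assignment of $\theta$ and verifies each constraint (Mon1, Mon2, vacuous CS) explicitly — the underlying argument is the same.
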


\begin{proof}
If $(\tilde{s}, \tilde{a}) = (s_t, a_t)$, then it is impossible to have a $u_t$ such that $f(s_t, a_t, u_t) = s'$ and $f(\tilde{s}, \tilde{a}, u_t) \neq s'$, $\forall s' \in \mathcal{S}$. Therefore, the only possible assignment of $\theta$ is:

\[
    \sum_{\substack{u_t \in U_t\\f(s_t, a_t, u_t) = s_{t+1}}}{\theta_{u_t}} = P(s_{t+1} \mid \tilde{s}, \tilde{a})
\]

\[
    \forall s' \in \mathcal{S}\setminus \{s_{t+1}\}, \sum_{\substack{u_t \in U_t\\f(s_t, a_t, u_t) \neq s'}}{\theta_{u_t}} = P(s' \mid \tilde{s}, \tilde{a})\\
\]
    This assignment satisfies the interventional probability constraints:

    \begin{itemize}
        \item $\sum_{\substack{u_t \in U_t\\f(s_t, a_t, u_t) = s_{t+1}}}{\theta_{u_t}} = P(s_{t+1} \mid s_t, a_t)$ satisfying \eqref{proofeq:interventional constraint}.
        \item $\sum_{\substack{u_t \in U_t\\f(s_t, a_t, u_t) \neq s_{t+1}}}{\theta_{u_t}} = 1 - P(s_{t+1} \mid s_t, a_t)$ satisfying \eqref{proofeq:interventional constraint}.
        \item $\forall s' \in \mathcal{S}, \sum_{\substack{u_t \in U_t \\ f(\tilde{s}, \tilde{a}, u_t) = s'}}{\theta_{u_t}} = P(s' \mid \tilde{s}, \tilde{a})$ satisfying \eqref{proofeq:interventional constraint}.
        \item Because of Lemma \ref{lemma: CF probs of observed state-action pair}, $\tilde{P}_{t}(s_{t+1} \mid s_t, a_t) = 1 \geq P(s_{t+1} \mid s_t, a_t)$, satisfying Mon1 \eqref{proofeq:monotonicity1}, and $\forall s' \in \mathcal{S}\setminus \{s_{t+1}\}, \tilde{P}_{t}(s' \mid s_t, a_t) = 0 \leq P(s' \mid s_t, a_t)$, satisfying Mon2 \eqref{proofeq:monotonicity2}.
        \item Because $(\tilde{s}, \tilde{a}) = (s_t, a_t)$, CS \eqref{proofeq:counterfactual stability} doesn't apply.
        \item $0 \leq \theta_{u_t} \leq 1, \forall u_t$, satisfying \eqref{proofeq:valid prob1}.
        \item $\sum_{u_t = 1}^{|U_t|} \theta_{u_t} = \sum_{s' \in \mathcal{S}} P(s' \mid s_t, a_t) = 1$, satisfying \eqref{proofeq:valid prob2}.
    \end{itemize}
    
Therefore,

\[\tilde{P}_{t}^{LB}(s_{t+1} \mid s_t, a_t) = \tilde{P}_{t}^{UB}(s_{t+1} \mid s_t, a_t) = 1\] 

and $\forall \tilde{s}' \in \mathcal{S}\setminus \{s_{t+1}\}$:

\[\tilde{P}_{t}^{LB}(\tilde{s}' \mid s_t, a_t) = \tilde{P}_{t}^{UB}(\tilde{s}' \mid s_t, a_t) = 0\]

\end{proof}

\subsubsection{CF Bounds for Disjoint $(\tilde{s}, \tilde{a})$}
To enhance readability, we split the proof for Theorem \ref{theorem: ub disjoint} into two separate theorems: Theorem \ref{proof theorem: ub disjoint} proves the upper counterfactual probability bound, and Theorem \ref{proof theorem: lb disjoint} proves the lower counterfactual probability bound.

\begin{theorem}
\label{proof theorem: ub disjoint}
For outgoing transitions from state-action pairs $(\tilde{s}, \tilde{a})$ which have completely disjoint support from the observed $(s_t, a_t)$, the linear program will produce an upper bound of:

\[\tilde{P}_{t}^{UB}(\tilde{s}' \mid \tilde{s}, \tilde{a}) = 
\begin{cases}
\dfrac{P(\tilde{s}' \mid \tilde{s}, \tilde{a})}{P(s_{t+1} \mid s_t, a_t)} & \text{if $P(\tilde{s}' \mid \tilde{s}, \tilde{a}) < P(s_{t+1} \mid s_t, a_t)$} \\
1 & \text{otherwise} \\
\end{cases}
\]
for all next states $\tilde{s}' \in \mathcal{S}$.

\end{theorem}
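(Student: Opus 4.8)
The plan is to exploit the reduction, already established in the inductive argument of Section~\ref{sec: probability bounds induction}, that it suffices to reason about a single joint assignment of $\theta$ over the two state–action pairs $(s_t,a_t)$ and $(\tilde s,\tilde a)$ and then lift it to a valid global $\theta$. Recalling Eq.~\eqref{eq: counterfactual probability}, the denominator $P(s_{t+1}\mid s_t,a_t)$ is a fixed constant, so maximising $\tilde P_t(\tilde s'\mid\tilde s,\tilde a)$ is equivalent to maximising the numerator $N:=\sum_{u_t:\,f(s_t,a_t,u_t)=s_{t+1},\,f(\tilde s,\tilde a,u_t)=\tilde s'}\theta_{u_t}$. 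A key preliminary observation is that, because $(\tilde s,\tilde a)$ has disjoint support from $(s_t,a_t)$, we have $P(s_{t+1}\mid\tilde s,\tilde a)=0$, and $P(\tilde s'\mid s_t,a_t)=0$ whenever $P(\tilde s'\mid\tilde s,\tilde a)>0$; I would first check that this makes the counterfactual stability constraint~\eqref{proofeq:counterfactual stability} and both monotonicity constraints~\eqref{proofeq:monotonicity1}--\eqref{proofeq:monotonicity2} vacuous, so that only the interventional and validity constraints~\eqref{proofeq:interventional constraint} and~\eqref{proofeq:valid prob1}--\eqref{proofeq:valid prob2} remain active.

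For the upper bound itself (the $\le$ direction), I would note that the index set defining $N$ is contained both in $\{u_t:f(\tilde s,\tilde a,u_t)=\tilde s'\}$, whose total $\theta$-mass equals $P(\tilde s'\mid\tilde s,\tilde a)$ by~\eqref{proofeq:interventional constraint}, and in $\{u_t:f(s_t,a_t,u_t)=s_{t+1}\}$, whose total $\theta$-mass equals $P(s_{t+1}\mid s_t,a_t)$. Since $\theta_{u_t}\ge 0$, this yields $N\le\min\bigl(P(\tilde s'\mid\tilde s,\tilde a),\,P(s_{t+1}\mid s_t,a_t)\bigr)$, and dividing by the fixed denominator gives $\tilde P_t(\tilde s'\mid\tilde s,\tilde a)\le\min\bigl(P(\tilde s'\mid\tilde s,\tilde a),P(s_{t+1}\mid s_t,a_t)\bigr)/P(s_{t+1}\mid s_t,a_t)$. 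Splitting on whether $P(\tilde s'\mid\tilde s,\tilde a)<P(s_{t+1}\mid s_t,a_t)$ then recovers exactly the two branches of the claimed formula, the second branch collapsing to $1$.

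It remains to show this bound is attained. Here I would construct an explicit maximal coupling of the two marginals $P(\cdot\mid s_t,a_t)$ and $P(\cdot\mid\tilde s,\tilde a)$: place mass $m=\min\bigl(P(\tilde s'\mid\tilde s,\tilde a),P(s_{t+1}\mid s_t,a_t)\bigr)$ on the joint event that $(s_t,a_t)\mapsto s_{t+1}$ and $(\tilde s,\tilde a)\mapsto\tilde s'$ simultaneously, and then distribute the residual mass so that both marginal (interventional) constraints are met---this is always possible because after removing $m$ the leftover row- and column-marginals each sum to $1-m$, so any completion (e.g.\ a normalised product coupling on the residuals) suffices. By the inductive lifting of Section~\ref{sec: probability bounds induction}, this joint assignment extends to a global $\theta$ satisfying~\eqref{proofeq:interventional constraint}--\eqref{proofeq:valid prob2}, and by the vacuity established above it also satisfies the stability and monotonicity constraints, so the value $N=m$ is achievable.

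The main obstacle I anticipate is the achievability step rather than the inequality: one must exhibit a single $\theta$ that simultaneously saturates the target cell and respects \emph{both} interventional marginals, and one must be careful to confirm that the disjoint-support structure genuinely renders the counterfactual-stability and monotonicity constraints inactive---otherwise they could in principle cut the attainable value below $m$ and break tightness. Establishing both the existence of the coupling completion and the vacuity of these extra constraints is precisely what turns the upper bound into the exact closed-form expression.
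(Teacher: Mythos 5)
Your proposal is correct and takes essentially the same approach as the paper's proof: the same $\min$-of-marginals upper bound on the numerator (Lemma~\ref{lemma:absolute max cf prob}), the same observation that disjoint support renders the counterfactual stability and both monotonicity constraints vacuous (Lemma~\ref{lemma: constraints do not apply if disjoint}), and the same lifting of the two-pair assignment to a globally valid $\theta$ via the inductive argument of Section~\ref{sec: probability bounds induction}. The only difference is cosmetic: your maximal-coupling construction with a normalised product completion on the residuals establishes achievability in one unified step, whereas the paper splits into the cases $P(\tilde{s}' \mid \tilde{s}, \tilde{a}) < P(s_{t+1} \mid s_t, a_t)$ and $P(\tilde{s}' \mid \tilde{s}, \tilde{a}) \geq P(s_{t+1} \mid s_t, a_t)$ and writes out explicit block assignments of $\theta$ for each.
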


\begin{proof} Take an arbitrary transition $\tilde{s}, \tilde{a} \rightarrow \tilde{s}'$ where $(\tilde{s}, \tilde{a})$ has disjoint support with the observed state-action pair $(s_t, a_t)$. By definition:

\[\tilde{P}_{t}^{UB}(\tilde{s}' \mid \tilde{s}, \tilde{a}) = \max_\theta \left( \frac{\sum_{u_t = 1}^{|U_t|} \mu_{\tilde{s}, \tilde{a}, u_t, \tilde{s}'} \cdot \mu_{s_t, a_t, u_t, s_{t+1}} \cdot \theta_{u_t}}{P(s_{t+1} \mid s_t, a_t)} \right)\]

Because $P(s_{t+1} \mid s_t, a_t)$ (the interventional probability of the observed transition) is fixed, to find the upper bound for $\tilde{P}_t(\tilde{s}' \mid \tilde{s}, \tilde{a})$ we need to maximise $\sum_{u_t = 1}^{|U_t|} \mu_{\tilde{s}, \tilde{a}, u_t, \tilde{s}'} \cdot \mu_{s_t, a_t, u_t, s_{t+1}} \cdot \theta_{u_t}$. From Lemma \ref{lemma:absolute max cf prob}, we have:

\[
\sum_{u_t = 1}^{|U_t|} \mu_{\tilde{s}, \tilde{a}, u_t, \tilde{s}'} \cdot \mu_{s_t, a_t, u_t, s_{t+1}} \cdot \theta_{u_t} \leq \min\left(P(s_{t+1} \mid s_t, a_t), P(\tilde{s}' \mid \tilde{s}, \tilde{a})\right)
\]

Therefore, if we can show that there exists a $\theta$ such that
\[
\sum_{u_t = 1}^{|U_t|} \mu_{\tilde{s}, \tilde{a}, u_t, \tilde{s}'} \cdot \mu_{s_t, a_t, u_t, s_{t+1}} \cdot \theta_{u_t} = \min\left((P(s_{t+1} \mid s_t, a_t), P(\tilde{s}' \mid \tilde{s}, \tilde{a})\right)
\]

and $\theta$ satisfies the constraints of the optimisation problem, we know that this results in the maximum possible counterfactual transition probability $\tilde{P}_t(\tilde{s}' \mid \tilde{s}, \tilde{a})$. Since $\forall s,s' \in \mathcal{S}, \forall a \in \mathcal{A}, \forall u_t \in U_t, \mu_{s, a, u_t, s'} = 1 \iff f(s, a, u_t) = s'$ by definition of $\mu$, this is equivalent to showing that there exists a $\theta$ such that:

\[
\sum_{\substack{u_t \in U_t \\f(s_t, a_t, u_t) = s_{t+1} \\ f(\tilde{s}, \tilde{a}, u_t) = \tilde{s}'}} \theta_{u_t}= \min\left((P(s_{t+1} \mid s_t, a_t), P(\tilde{s}' \mid \tilde{s}, \tilde{a})\right)
\]

As these notations are equivalent, we will use this second notation for brevity and clarity. Consider the following disjoint cases:

\begin{itemize}
    \item $P(\tilde{s}' \mid \tilde{s}, \tilde{a}) < P(s_{t+1} \mid s_t, a_t)$
    \item $P(\tilde{s}' \mid \tilde{s}, \tilde{a}) \geq P(s_{t+1} \mid s_t, a_t)$
\end{itemize}

\paragraph{Case 1: $P(\tilde{s}' \mid \tilde{s}, \tilde{a}) < P(s_{t+1} \mid s_t, a_t)$}
\noindent
\begin{proof}
In this case, assign $\theta$ as follows:

\[
\begin{cases}
    \sum_{\substack{u_t \in U_t \\f(s_t, a_t, u_t) = s_{t+1} \\ f(\tilde{s}, \tilde{a}, u_t) = \tilde{s}'}} \theta_{u_t}= P(\tilde{s}' \mid \tilde{s}, \tilde{a})\\
    \sum_{\substack{u_t \in U_t \\f(s_t, a_t, u_t) = s_{t+1} \\ f(\tilde{s}, \tilde{a}, u_t) \neq \tilde{s}'}} \theta_{u_t} = P(s_{t+1} \mid s_t, a_t) - P(\tilde{s}' \mid \tilde{s}, \tilde{a})\\
    \sum_{\substack{u_t \in U_t \\f(s_t, a_t, u_t) \neq s_{t+1} \\ f(\tilde{s}, \tilde{a}, u_t) = \tilde{s}'}} \theta_{u_t} = 0\\
    \sum_{\substack{u_t \in U_t \\f(s_t, a_t, u_t) \neq s_{t+1} \\ f(\tilde{s}, \tilde{a}, u_t) \neq \tilde{s}'}} \theta_{u_t} = 1 - P(s_{t+1} \mid s_t, a_t)\\
\end{cases}
\]

    This assignment of $\theta$ satisfies the constraints of the linear optimisation problem, as follows:

    \begin{itemize}        
        \item $\sum_{\substack{u_t \in U_t \\f(s_t, a_t, u_t) = s_{t+1}}} \theta_{u_t} = \sum_{\substack{u_t \in U_t \\f(s_t, a_t, u_t) = s_{t+1} \\ f(\tilde{s}, \tilde{a}, u_t) = \tilde{s}'}} \theta_{u_t} + \sum_{\substack{u_t \in U_t \\f(s_t, a_t, u_t) = s_{t+1} \\ f(\tilde{s}, \tilde{a}, u_t) \neq \tilde{s}'}} \theta_{u_t} = P(s_{t+1} \mid s_t, a_t)$ \\satisfying \eqref{proofeq:interventional constraint}.

        \item $\sum_{\substack{u_t \in U_t\\f(s_t, a_t, u_t) \neq s_{t+1}}}{\theta_{u_t}} = \sum_{\substack{u_t \in U_t \\f(s_t, a_t, u_t) \neq s_{t+1} \\ f(\tilde{s}, \tilde{a}, u_t) = \tilde{s}'}} \theta_{u_t} + \sum_{\substack{u_t \in U_t \\f(s_t, a_t, u_t) \neq s_{t+1} \\ f(\tilde{s}, \tilde{a}, u_t) \neq \tilde{s}'}} \theta_{u_t} =1 -
        P(s_{t+1} \mid s_t, a_t)$ satisfying \eqref{proofeq:interventional constraint}.
        
        \item $\sum_{\substack{u_t \in U_t \\f(\tilde{s}, \tilde{a}, u_t) = \tilde{s}'}} \theta_{u_t} = \sum_{\substack{u_t \in U_t \\f(s_t, a_t, u_t) = s_{t+1} \\ f(\tilde{s}, \tilde{a}, u_t) = \tilde{s}'}} \theta_{u_t} + \sum_{\substack{u_t \in U_t \\f(s_t, a_t, u_t) \neq s_{t+1} \\ f(\tilde{s}, \tilde{a}, u_t) = \tilde{s}'}} \theta_{u_t} = P(\tilde{s}' \mid \tilde{s}, \tilde{a})$ satisfying \eqref{proofeq:interventional constraint}.
        
        \item $\sum_{s' \in \mathcal{S}\setminus\{\tilde{s}'\}}\sum_{\substack{u_t \in U_t \\f(\tilde{s}, \tilde{a}, u_t) = s'}} \theta_{u_t} = \sum_{\substack{u_t \in U_t \\f(s_t, a_t, u_t) = s_{t+1} \\ f(\tilde{s}, \tilde{a}, u_t) \neq \tilde{s}'}} \theta_{u_t} + \sum_{\substack{u_t \in U_t \\f(s_t, a_t, u_t) \neq s_{t+1} \\ f(\tilde{s}, \tilde{a}, u_t) \neq \tilde{s}'}} \theta_{u_t} \\ = 1 - P(\tilde{s}' \mid \tilde{s}, \tilde{a}) = \sum_{s' \in \mathcal{S}\setminus\{\tilde{s}'\}}P(s' \mid \tilde{s}, \tilde{a})$, therefore it is possible to assign $\theta$ such that $\forall s' \in \mathcal{S}\setminus\{\tilde{s}'\}, \sum_{\substack{u_t \in U_t \\ f(\tilde{s}, \tilde{a}, u_t) = s'}}{\theta_{u_t}} = P(s' \mid \tilde{s}, \tilde{a})$ satisfying \eqref{proofeq:interventional constraint}.
        
        \item Mon1 \eqref{proofeq:monotonicity1}, Mon2 \eqref{proofeq:monotonicity2} and CS \eqref{proofeq:counterfactual stability} are all vacuously true for all transitions from $(\tilde{s}, \tilde{a})$ due to Lemma \ref{lemma: constraints do not apply if disjoint}.

        \item $0 \leq \theta_{u_t} \leq 1, \forall u_t$, satisfying \eqref{proofeq:valid prob1}.
        
        \item $\sum_{u_t = 1}^{|U_t|} \theta_{u_t} = \sum_{\substack{u_t \in U_t \\f(s_t, a_t, u_t) = s_{t+1} \\ f(\tilde{s}, \tilde{a}, u_t) = \tilde{s}'}} \theta_{u_t} + \sum_{\substack{u_t \in U_t \\f(s_t, a_t, u_t) = s_{t+1} \\ f(\tilde{s}, \tilde{a}, u_t) \neq \tilde{s}'}} \theta_{u_t} + \sum_{\substack{u_t \in U_t \\f(s_t, a_t, u_t) \neq s_{t+1} \\ f(\tilde{s}, \tilde{a}, u_t) = \tilde{s}'}} \theta_{u_t} \\+ \sum_{\substack{u_t \in U_t \\f(s_t, a_t, u_t) \neq s_{t+1} \\ f(\tilde{s}, \tilde{a}, u_t) \neq \tilde{s}'}} \theta_{u_t} = 1$, satisfying \eqref{proofeq:valid prob2}.
    \end{itemize}

    Therefore,

    \[\tilde{P}_{t}^{UB}(\tilde{s}' \mid \tilde{s}, \tilde{a}) = \left( \max_\theta \frac{ \sum_{\substack{u_t \in U_t \\f(s_t, a_t, u_t) = s_{t+1} \\ f(\tilde{s}, \tilde{a}, u_t) = \tilde{s}'}} \theta_{u_t} }{P(s_{t+1} \mid s_t, a_t)} \right) = \frac{P(\tilde{s}' \mid \tilde{s}, \tilde{a})}{P(s_{t+1} \mid s_t, a_t)}\]
\end{proof}

\paragraph{Case 2: $P(\tilde{s}' \mid \tilde{s}, \tilde{a}) \geq P(s_{t+1} \mid s_t, a_t)$}
\noindent
\begin{proof}
In this case, assign $\theta$ as follows:

\[
\begin{cases}
    \sum_{\substack{u_t \in U_t \\f(s_t, a_t, u_t) = s_{t+1} \\ f(\tilde{s}, \tilde{a}, u_t) = \tilde{s}'}} \theta_{u_t}= P(s_{t+1} \mid s_t, a_t)\\
    \sum_{\substack{u_t \in U_t \\f(s_t, a_t, u_t) = s_{t+1} \\ f(\tilde{s}, \tilde{a}, u_t) \neq \tilde{s}'}} \theta_{u_t} = 0\\
    \sum_{\substack{u_t \in U_t \\f(s_t, a_t, u_t) \neq s_{t+1} \\ f(\tilde{s}, \tilde{a}, u_t) = \tilde{s}'}} \theta_{u_t} = P(\tilde{s}' \mid \tilde{s}, \tilde{a}) - P(s_{t+1} \mid s_t, a_t)\\
    \sum_{\substack{u_t \in U_t \\f(s_t, a_t, u_t) \neq s_{t+1} \\ f(\tilde{s}, \tilde{a}, u_t) \neq \tilde{s}'}} \theta_{u_t} = 1 - P(\tilde{s}' \mid \tilde{s}, \tilde{a})\\
\end{cases}
\]
    This assignment of $\theta$ satisfies the constraints of the linear optimisation problem, as follows:

    \begin{itemize}        
        \item $\sum_{\substack{u_t \in U_t \\f(s_t, a_t, u_t) = s_{t+1}}} \theta_{u_t} = \sum_{\substack{u_t \in U_t \\f(s_t, a_t, u_t) = s_{t+1} \\ f(\tilde{s}, \tilde{a}, u_t) = \tilde{s}'}} \theta_{u_t} + \sum_{\substack{u_t \in U_t \\f(s_t, a_t, u_t) = s_{t+1} \\ f(\tilde{s}, \tilde{a}, u_t) \neq \tilde{s}'}} \theta_{u_t} = P(s_{t+1} \mid s_t, a_t)$ \\satisfying \eqref{proofeq:interventional constraint}.

        \item $\sum_{\substack{u_t \in U_t\\f(s_t, a_t, u_t) \neq s_{t+1}}}{\theta_{u_t}} = \sum_{\substack{u_t \in U_t \\f(s_t, a_t, u_t) \neq s_{t+1} \\ f(\tilde{s}, \tilde{a}, u_t) = \tilde{s}'}} \theta_{u_t} + \sum_{\substack{u_t \in U_t \\f(s_t, a_t, u_t) \neq s_{t+1} \\ f(\tilde{s}, \tilde{a}, u_t) \neq \tilde{s}'}} \theta_{u_t} =1 -
        P(s_{t+1} \mid s_t, a_t)$ satisfying \eqref{proofeq:interventional constraint}.
        
        \item $\sum_{\substack{u_t \in U_t \\f(\tilde{s}, \tilde{a}, u_t) = \tilde{s}'}} \theta_{u_t} = \sum_{\substack{u_t \in U_t \\f(s_t, a_t, u_t) = s_{t+1} \\ f(\tilde{s}, \tilde{a}, u_t) = \tilde{s}'}} \theta_{u_t} + \sum_{\substack{u_t \in U_t \\f(s_t, a_t, u_t) \neq s_{t+1} \\ f(\tilde{s}, \tilde{a}, u_t) = \tilde{s}'}} \theta_{u_t} = P(\tilde{s}' \mid \tilde{s}, \tilde{a})$ satisfying \eqref{proofeq:interventional constraint}.
        
        \item $\sum_{s' \in \mathcal{S}\setminus\{\tilde{s}'\}}\sum_{\substack{u_t \in U_t \\f(\tilde{s}, \tilde{a}, u_t) = s'}} \theta_{u_t} = \sum_{\substack{u_t \in U_t \\f(s_t, a_t, u_t) = s_{t+1} \\ f(\tilde{s}, \tilde{a}, u_t) \neq \tilde{s}'}} \theta_{u_t} + \sum_{\substack{u_t \in U_t \\f(s_t, a_t, u_t) \neq s_{t+1} \\ f(\tilde{s}, \tilde{a}, u_t) \neq \tilde{s}'}} \theta_{u_t} \\ = 1 - P(\tilde{s}' \mid \tilde{s}, \tilde{a}) = \sum_{s' \in \mathcal{S}\setminus\{\tilde{s}'\}}P(s' \mid \tilde{s}, \tilde{a})$, therefore it is possible to assign $\theta$ such that $\forall s' \in \mathcal{S}\setminus\{\tilde{s}'\}, \sum_{\substack{u_t \in U_t \\ f(\tilde{s}, \tilde{a}, u_t) = s'}}{\theta_{u_t}} = P(s' \mid \tilde{s}, \tilde{a})$ satisfying \eqref{proofeq:interventional constraint}.
        
        \item Mon1 \eqref{proofeq:monotonicity1}, Mon2 \eqref{proofeq:monotonicity2} and CS \eqref{proofeq:counterfactual stability} are all vacuously true for all transitions from $(\tilde{s}, \tilde{a})$ due to Lemma \ref{lemma: constraints do not apply if disjoint}.

        \item $0 \leq \theta_{u_t} \leq 1, \forall u_t$, satisfying \eqref{proofeq:valid prob1}.
        
        \item $\sum_{u_t = 1}^{|U_t|} \theta_{u_t} = \sum_{\substack{u_t \in U_t \\f(s_t, a_t, u_t) = s_{t+1} \\ f(\tilde{s}, \tilde{a}, u_t) = \tilde{s}'}} \theta_{u_t} + \sum_{\substack{u_t \in U_t \\f(s_t, a_t, u_t) = s_{t+1} \\ f(\tilde{s}, \tilde{a}, u_t) \neq \tilde{s}'}} \theta_{u_t} + \sum_{\substack{u_t \in U_t \\f(s_t, a_t, u_t) \neq s_{t+1} \\ f(\tilde{s}, \tilde{a}, u_t) = \tilde{s}'}} \theta_{u_t} \\ + \sum_{\substack{u_t \in U_t \\f(s_t, a_t, u_t) \neq s_{t+1} \\ f(\tilde{s}, \tilde{a}, u_t) \neq \tilde{s}'}} \theta_{u_t} = 1$, satisfying \eqref{proofeq:valid prob2}.
    \end{itemize}

     All the constraints are satisfied, so this is a valid assignment of $\theta$ for this state-action pair. Therefore,

        \[\tilde{P}_{t}^{UB}(\tilde{s}' \mid \tilde{s}, \tilde{a}) = \max_\theta \left(\frac{\sum_{\substack{u_t \in U_t \\f(s_t, a_t, u_t) = s_{t+1} \\ f(\tilde{s}, \tilde{a}, u_t) = \tilde{s}'}} \theta_{u_t}}{P(s_{t+1} \mid s_t, a_t)}\right) = \frac{P(s_{t+1} \mid s_t, a_t)}{P(s_{t+1} \mid s_t, a_t)} = 1\]
    
\end{proof}

These two cases can be combined as follows:

\[\tilde{P}_{t}^{UB}(\tilde{s}' \mid \tilde{s}, \tilde{a}) = 
\begin{cases}
\frac{P(\tilde{s}' \mid \tilde{s}, \tilde{a})}{P(s_{t+1} \mid s_t, a_t)} & \text{if $P(\tilde{s}' \mid \tilde{s}, \tilde{a})) < P(s_{t+1} \mid s_t, a_t)$} \\
1 & \text{otherwise} \\
\end{cases}
\]
for all next states $\tilde{s}' \in \mathcal{S}$.
\end{proof}

\pagebreak
\begin{theorem}
\label{proof theorem: lb disjoint}
For outgoing transitions from state-action pairs $(\tilde{s}, \tilde{a})$ which have completely disjoint support from the observed $(s_t, a_t)$, the linear program will produce a lower bound of:
\[\tilde{P}_{t}^{LB}(\tilde{s}' \mid \tilde{s}, \tilde{a}) = 
\begin{cases}
\frac{P(\tilde{s}' \mid \tilde{s}, \tilde{a}) - (1 - P(s_{t+1} \mid s_t, a_t))}{P(s_{t+1} \mid s_t, a_t)} & \text{if $P(\tilde{s}' \mid \tilde{s}, \tilde{a}) > 1 - P(s_{t+1} \mid s_t, a_t)$}\\
0 & \text{otherwise}\\
\end{cases}
\]
for all possible next states $\tilde{s}'$.
\end{theorem}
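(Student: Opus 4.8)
The plan is to mirror the structure of the upper-bound proof (Theorem \ref{proof theorem: ub disjoint}): reduce the problem to minimising the ``overlap mass'' $\sum_{\substack{u_t \in U_t \\ f(s_t, a_t, u_t) = s_{t+1} \\ f(\tilde{s}, \tilde{a}, u_t) = \tilde{s}'}} \theta_{u_t}$ subject to the optimisation constraints, then derive a tight lower bound on this overlap and exhibit a feasible $\theta$ that attains it. Since the denominator $P(s_{t+1} \mid s_t, a_t)$ is fixed, minimising $\tilde{P}_t(\tilde{s}' \mid \tilde{s}, \tilde{a})$ is equivalent to minimising the overlap mass. As in the upper-bound case, the inductive argument of Section \ref{sec: probability bounds induction} lets me reason only about the two state-action pairs $(s_t, a_t)$ and $(\tilde{s}, \tilde{a})$, while Lemma \ref{lemma: constraints do not apply if disjoint} renders the monotonicity and counterfactual-stability constraints vacuous, leaving only the interventional and valid-probability constraints.

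First I would establish the lower bound via a Fr\'echet-type inequality read off directly from the interventional constraints. The total mass assigned to $\{f(\tilde{s}, \tilde{a}, u_t) = \tilde{s}'\}$ equals $P(\tilde{s}' \mid \tilde{s}, \tilde{a})$; splitting it by whether $f(s_t, a_t, u_t) = s_{t+1}$ or not, the non-overlapping part is bounded above by the total non-$s_{t+1}$ mass $\sum_{f(s_t,a_t,u_t)\neq s_{t+1}}\theta_{u_t} = 1 - P(s_{t+1}\mid s_t, a_t)$. Hence the overlap mass is at least $P(\tilde{s}' \mid \tilde{s}, \tilde{a}) - \big(1 - P(s_{t+1} \mid s_t, a_t)\big)$, and trivially at least $0$, giving the bound $\max\big(0,\, P(\tilde{s}' \mid \tilde{s}, \tilde{a}) - (1 - P(s_{t+1}\mid s_t, a_t))\big)$. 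Dividing by $P(s_{t+1}\mid s_t, a_t)$ yields exactly the two-case expression in the theorem, with the threshold $P(\tilde{s}' \mid \tilde{s}, \tilde{a}) > 1 - P(s_{t+1}\mid s_t, a_t)$ being precisely where the first argument of the maximum turns positive.

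Then I would show the bound is attained, using the same two cases. When $P(\tilde{s}' \mid \tilde{s}, \tilde{a}) \le 1 - P(s_{t+1} \mid s_t, a_t)$, I assign all of the $\tilde{s}'$-mass to mechanisms with $f(s_t, a_t, u_t) \neq s_{t+1}$ (feasible because enough non-$s_{t+1}$ mass exists), forcing the overlap to be exactly $0$. When $P(\tilde{s}' \mid \tilde{s}, \tilde{a}) > 1 - P(s_{t+1} \mid s_t, a_t)$, I place the maximal $1 - P(s_{t+1}\mid s_t, a_t)$ of the $\tilde{s}'$-mass on non-$s_{t+1}$ mechanisms and the forced remainder $P(\tilde{s}' \mid \tilde{s}, \tilde{a}) - (1 - P(s_{t+1}\mid s_t, a_t))$ on $s_{t+1}$-mechanisms. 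In both cases I would verify, exactly as in Theorem \ref{proof theorem: ub disjoint}, that the four aggregate masses are non-negative, sum to $1$, respect both interventional marginals, and, via the block-sum identity $\sum_{s' \neq \tilde{s}'} P(s' \mid \tilde{s}, \tilde{a}) = 1 - P(\tilde{s}' \mid \tilde{s}, \tilde{a})$, can be distributed over the remaining next states of $(\tilde{s}, \tilde{a})$ so that every interventional constraint holds.

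The bookkeeping verification that each constructed $\theta$ is feasible is routine and essentially identical to the upper-bound argument; the only genuinely new ingredient is the Fr\'echet-style lower bound on the overlap mass. I expect the main obstacle to be confirming the case threshold cleanly: namely, that the non-overlapping $\tilde{s}'$-mass can always be pushed up to its ceiling $1 - P(s_{t+1}\mid s_t, a_t)$ without violating the marginal over the other next states of $(\tilde{s}, \tilde{a})$, which again reduces to checking that the leftover $\tilde{s}'$-complement mass matches $\{P(s' \mid \tilde{s}, \tilde{a})\}_{s' \neq \tilde{s}'}$ through the same block-sum identity.
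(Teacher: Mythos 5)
Your proposal is correct and follows essentially the same route as the paper's own proof: reducing to the overlap mass $\sum_{\substack{u_t: f(s_t,a_t,u_t)=s_{t+1},\, f(\tilde{s},\tilde{a},u_t)=\tilde{s}'}}\theta_{u_t}$, deriving the Fr\'echet-type bound (the paper phrases it as an upper bound of $\min\left(P(\tilde{s}'\mid\tilde{s},\tilde{a}),\,1-P(s_{t+1}\mid s_t,a_t)\right)$ on the non-overlapping mass, which is the same inequality viewed from the complementary side), and then exhibiting exactly the same two case-by-case feasible assignments of $\theta$ verified against the interventional and probability constraints, with monotonicity and counterfactual stability dispatched vacuously via Lemma \ref{lemma: constraints do not apply if disjoint}. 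No gaps; the bookkeeping you defer is precisely the bookkeeping the paper carries out.
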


\begin{proof}
Take an arbitrary transition $\tilde{s}, \tilde{a} \rightarrow \tilde{s}'$ where $(\tilde{s}, \tilde{a})$ is disjoint from the observed state-action pair $(s_t, a_t)$. By definition:

\[\tilde{P}_{t}^{LB}(\tilde{s}' \mid \tilde{s}, \tilde{a}) = \min_\theta \left(\frac{\sum_{u_t = 1}^{|U_t|} \mu_{\tilde{s}, \tilde{a}, u_t, \tilde{s}'} \cdot \mu_{s_t, a_t, u_t, s_{t+1}} \cdot \theta_{u_t}}{P(s_{t+1} \mid s_t, a_t)}\right)\]

Because $P(s_{t+1} \mid s_t, a_t)$ (the interventional probability of the observed transition) is fixed, to find the lower bound for $\tilde{P}_t(\tilde{s}' \mid \tilde{s}, \tilde{a})$ we need to minimise $\sum_{u_t = 1}^{|U_t|} \mu_{\tilde{s}, \tilde{a}, u_t, \tilde{s}'} \cdot \mu_{s_t, a_t, u_t, s_{t+1}} \cdot \theta_{u_t}$. Because $\sum_{u_t = 1}^{|U_t|} \mu_{s_t, a_t, u_t, s_{t+1}} \cdot \theta_{u_t} + \sum_{s' \in \mathcal{S}\setminus{\{s_{t+1}\}}}\sum_{u_t = 1}^{|U_t|} \mu_{s_t, a_t, u_t, s'} \cdot \theta_{u_t} = 1$, this is equivalent to maximising the following equation:

\[
\max_\theta \left( \sum_{s' \in \mathcal{S}\setminus{\{s_{t+1}\}}}\sum_{u_t = 1}^{|U_t|} \mu_{\tilde{s}, \tilde{a}, u_t, \tilde{s}'} \cdot \mu_{s_t, a_t, u_t, s'} \cdot \theta_{u_t}\right)
\]

From the constraints of the optimisation problem, we have:

\begin{equation}
    \label{eq: theorem 4.8 constraint 1}
    \sum_{u_t = 1}^{|U_t|} \mu_{\tilde{s}, \tilde{a}, u_t, \tilde{s}'}\cdot \theta_{u_t} = P(\tilde{s}' \mid \tilde{s}, \tilde{a})
\end{equation}
and
\begin{equation}
\label{eq: theorem 4.8 constraint 2}
    \sum_{s' \in \mathcal{S}\setminus{\{s_{t+1}\}}}\sum_{u_t = 1}^{|U_t|} \mu_{s_t, a_t, u_t, s'} \cdot \theta_{u_t} = \sum_{s' \in \mathcal{S}\setminus{\{s_{t+1}\}}}{P(s' \mid s_t, a_t)} = 1 - P(s_{t+1} \mid s_t, a_t)
\end{equation}

$\forall s,s' \in \mathcal{S}, \forall a \in \mathcal{A}, \forall u_t \in U_t, \mu_{s, a, u_t, s'} \in \{0, 1\}$ (from its definition). Therefore,\\ $\sum_{s' \in \mathcal{S}\setminus{\{s_{t+1}\}}}\sum_{u_t = 1}^{|U_t|} \mu_{\tilde{s}, \tilde{a}, u_t, \tilde{s}'} \cdot \mu_{s_t, a_t, u_t, s'} \cdot \theta_{u_t}$ is maximal when the probability assigned to all $u_t$ where $\mu_{\tilde{s}, \tilde{a}, u_t, \tilde{s}'} = 1$ and $\mu_{s_t, a_t, u_t, s_{t+1}}=0$ is maximal:

\[
\sum_{u_t = 1}^{|U_t|}(\mu_{\tilde{s}, \tilde{a}, u_t, \tilde{s}'} \cdot \theta_{u_t}) \cdot \sum_{s' \in \mathcal{S}\setminus{\{s_{t+1}\}}}\mu_{s_t, a_t, u_t, s'} \leq \sum_{u_t = 1}^{|U_t|} \mu_{\tilde{s}, \tilde{a}, u_t, \tilde{s}'}\cdot \theta_{u_t} = P(\tilde{s}' \mid \tilde{s}, \tilde{a})
\]

\[
\sum_{u_t = 1}^{|U_t|} \mu_{\tilde{s}, \tilde{a}, u_t, \tilde{s}'} \cdot (\sum_{s' \in \mathcal{S}\setminus{\{s_{t+1}\}}}\mu_{s_t, a_t, u_t, s'} \cdot \theta_{u_t}) \leq  \sum_{s' \in \mathcal{S}\setminus{\{s_{t+1}\}}}\sum_{u_t = 1}^{|U_t|} \mu_{s_t, a_t, u_t, s'} \cdot \theta_{u_t} = 1 - P(s_{t+1} \mid s_t, a_t)
\]

Because of Eq. \eqref{eq: theorem 4.8 constraint 1} and Eq. \eqref{eq: theorem 4.8 constraint 2}, the probability assigned to all $u_t$ where $\mu_{\tilde{s}, \tilde{a}, u_t, \tilde{s}'} = 1$ and $\mu_{s_t, a_t, u_t, s_{t+1}}=0$ cannot be greater than $P(\tilde{s}' \mid \tilde{s}, \tilde{a})$ or $1 - P(s_{t+1} \mid s_t, a_t)$. So,

\[
    \sum_{s' \in \mathcal{S}\setminus{\{s_{t+1}\}}}\sum_{u_t = 1}^{|U_t|} \mu_{\tilde{s}, \tilde{a}, u_t, \tilde{s}'} \cdot \mu_{s_t, a_t, u_t, s'} \cdot \theta_{u_t} \leq \min\left(P(\tilde{s}' \mid \tilde{s}, \tilde{a}), (1 - P(s_{t+1} \mid s_t, a_t)\right)
\]

Therefore, if we can show that there exists a $\theta$ such that
\[
    \sum_{s' \in \mathcal{S}\setminus{\{s_{t+1}\}}}\sum_{u_t = 1}^{|U_t|} \mu_{\tilde{s}, \tilde{a}, u_t, \tilde{s}'} \cdot \mu_{s_t, a_t, u_t, s'} \cdot \theta_{u_t} = \min\left(P(\tilde{s}' \mid \tilde{s}, \tilde{a}), (1 - P(s_{t+1} \mid s_t, a_t)\right)
\]

and $\theta$ satisfies the constraints of the optimisation problem, we know that this results in the minimum possible counterfactual transition probability $\tilde{P}_t(\tilde{s}' \mid \tilde{s}, \tilde{a})$. Since $\forall s,s' \in \mathcal{S}, \forall a \in \mathcal{A}, \forall u_t \in U_t, \mu_{s, a, u_t, s'} = 1 \iff f(s, a, u_t) = s'$ by definition of $\mu$, this is equivalent to showing that there exists a $\theta$ such that:

\[
\sum_{s' \in \mathcal{S}\setminus{\{s_{t+1}\}}}\sum_{\substack{u_t \in U_t \\f(s_t, a_t, u_t) = s' \\ f(\tilde{s}, \tilde{a}, u_t) = \tilde{s}'}} \theta_{u_t}= \min\left(P(\tilde{s}' \mid \tilde{s}, \tilde{a}), (1 - P(s_{t+1} \mid s_t, a_t)\right)
\]

As these notations are equivalent, we will use this second notation for brevity and clarity. Consider the following disjoint cases:

\begin{itemize}
    \item $P(\tilde{s}' \mid \tilde{s}, \tilde{a}) > 1 - P(s_{t+1} \mid s_t, a_t)$
    \item $P(\tilde{s}' \mid \tilde{s}, \tilde{a}) \leq 1 - P(s_{t+1} \mid s_t, a_t)$
\end{itemize}

\paragraph{Case 1: $P(\tilde{s}' \mid \tilde{s}, \tilde{a}) > 1 - P(s_{t+1} \mid s_t, a_t)$}
\noindent
\begin{proof}
In this case,

\[
    \sum_{s' \in \mathcal{S}\setminus{\{s_{t+1}\}}}\sum_{\substack{u_t \in U_t \\f(s_t, a_t, u_t) = s' \\ f(\tilde{s}, \tilde{a}, u_t) = \tilde{s}'}} \theta_{u_t} \leq 1 - P(s_{t+1} \mid s_t, a_t)
\]

and so
\[
\sum_{\substack{u_t \in U_t \\f(s_t, a_t, u_t) = s_{t+1} \\ f(\tilde{s}, \tilde{a}, u_t) = \tilde{s}'}} \theta_{u_t} \geq \sum_{\substack{u_t \in U_t \\ f(\tilde{s}, \tilde{a}, u_t) = \tilde{s}'}} \theta_{u_t} - \sum_{s' \in \mathcal{S}\setminus{\{s_{t+1}\}}}\sum_{\substack{u_t \in U_t \\f(s_t, a_t, u_t) = s' \\ f(\tilde{s}, \tilde{a}, u_t) = \tilde{s}'}} \theta_{u_t}= P(\tilde{s}' \mid \tilde{s}, \tilde{a}) - (1 - P(s_{t+1} \mid s_t, a_t))
\]

If we can show that there exists a $\theta$ such that

\[
\sum_{\substack{u_t \in U_t \\f(s_t, a_t, u_t) = s_{t+1} \\ f(\tilde{s}, \tilde{a}, u_t) = \tilde{s}'}} \theta_{u_t} = P(\tilde{s}' \mid \tilde{s}, \tilde{a}) - (1 - P(s_{t+1} \mid s_t, a_t))
\]

and $\theta$ satisfies the constraints in the optimisation problem, we know that this results in the minimum possible counterfactual transition probability for $\tilde{P}_t(\tilde{s}' \mid \tilde{s}, \tilde{a})$. We can assign $\theta$ as follows:

\[
\begin{cases}
    \sum_{\substack{u_t \in U_t \\f(s_t, a_t, u_t) = s_{t+1} \\ f(\tilde{s}, \tilde{a}, u_t) = \tilde{s}'}} \theta_{u_t}= P(\tilde{s}' \mid \tilde{s}, \tilde{a}) - (1 - P(s_{t+1} \mid s_t, a_t))\\
    \sum_{\substack{u_t \in U_t \\f(s_t, a_t, u_t) = s_{t+1} \\ f(\tilde{s}, \tilde{a}, u_t) \neq \tilde{s}'}} \theta_{u_t} = 1 - P(\tilde{s}' \mid \tilde{s}, \tilde{a})\\
    \sum_{\substack{u_t \in U_t \\f(s_t, a_t, u_t) \neq s_{t+1} \\ f(\tilde{s}, \tilde{a}, u_t) = \tilde{s}'}} \theta_{u_t} = 1 - P(s_{t+1} \mid s_t, a_t)\\
    \sum_{\substack{u_t \in U_t \\f(s_t, a_t, u_t) \neq s_{t+1} \\ f(\tilde{s}, \tilde{a}, u_t) \neq \tilde{s}'}} \theta_{u_t} = 0\\
\end{cases}
\]

This assignment of $\theta$ satisfies the constraints of the linear optimisation problem, as follows:

\begin{itemize}        
    \item $\sum_{\substack{u_t \in U_t \\f(s_t, a_t, u_t) = s_{t+1}}} \theta_{u_t} = \sum_{\substack{u_t \in U_t \\f(s_t, a_t, u_t) = s_{t+1} \\ f(\tilde{s}, \tilde{a}, u_t) = \tilde{s}'}} \theta_{u_t} + \sum_{\substack{u_t \in U_t \\f(s_t, a_t, u_t) = s_{t+1} \\ f(\tilde{s}, \tilde{a}, u_t) \neq \tilde{s}'}} \theta_{u_t} = P(s_{t+1} \mid s_t, a_t)$ \\satisfying \eqref{proofeq:interventional constraint}.

    \item $\sum_{\substack{u_t \in U_t\\f(s_t, a_t, u_t) \neq s_{t+1}}}{\theta_{u_t}} = \sum_{\substack{u_t \in U_t \\f(s_t, a_t, u_t) \neq s_{t+1} \\ f(\tilde{s}, \tilde{a}, u_t) = \tilde{s}'}} \theta_{u_t} + \sum_{\substack{u_t \in U_t \\f(s_t, a_t, u_t) \neq s_{t+1} \\ f(\tilde{s}, \tilde{a}, u_t) \neq \tilde{s}'}} \theta_{u_t} =1 -
    P(s_{t+1} \mid s_t, a_t)$ satisfying \eqref{proofeq:interventional constraint}.
    
    \item $\sum_{\substack{u_t \in U_t \\f(\tilde{s}, \tilde{a}, u_t) = \tilde{s}'}} \theta_{u_t} = \sum_{\substack{u_t \in U_t \\f(s_t, a_t, u_t) = s_{t+1} \\ f(\tilde{s}, \tilde{a}, u_t) = \tilde{s}'}} \theta_{u_t} + \sum_{\substack{u_t \in U_t \\f(s_t, a_t, u_t) \neq s_{t+1} \\ f(\tilde{s}, \tilde{a}, u_t) = \tilde{s}'}} \theta_{u_t} = P(\tilde{s}' \mid \tilde{s}, \tilde{a})$ satisfying \eqref{proofeq:interventional constraint}.
    
    \item $\sum_{s' \in \mathcal{S}\setminus\{\tilde{s}'\}}\sum_{\substack{u_t \in U_t \\f(\tilde{s}, \tilde{a}, u_t) = s'}} \theta_{u_t} = \sum_{\substack{u_t \in U_t \\f(s_t, a_t, u_t) = s_{t+1} \\ f(\tilde{s}, \tilde{a}, u_t) \neq \tilde{s}'}} \theta_{u_t} + \sum_{\substack{u_t \in U_t \\f(s_t, a_t, u_t) \neq s_{t+1} \\ f(\tilde{s}, \tilde{a}, u_t) \neq \tilde{s}'}} \theta_{u_t} \\= 1 - P(\tilde{s}' \mid \tilde{s}, \tilde{a}) = \sum_{s' \in \mathcal{S}\setminus\{\tilde{s}'\}}P(s' \mid \tilde{s}, \tilde{a})$, therefore it is possible to assign $\theta$ such that $\forall s' \in \mathcal{S}\setminus\{\tilde{s}'\}, \sum_{\substack{u_t \in U_t \\ f(\tilde{s}, \tilde{a}, u_t) = s'}}{\theta_{u_t}} = P(s' \mid \tilde{s}, \tilde{a})$ satisfying \eqref{proofeq:interventional constraint}.
    
    \item Mon1 \eqref{proofeq:monotonicity1}, Mon2 \eqref{proofeq:monotonicity2} and CS \eqref{proofeq:counterfactual stability} are all vacuously true for all transitions from $(\tilde{s}, \tilde{a})$ due to Lemma \ref{lemma: constraints do not apply if disjoint}.

    \item $0 \leq \theta_{u_t} \leq 1, \forall u_t$, satisfying \eqref{proofeq:valid prob1}.
    
    \item $\sum_{u_t = 1}^{|U_t|} \theta_{u_t} = \sum_{\substack{u_t \in U_t \\f(s_t, a_t, u_t) = s_{t+1} \\ f(\tilde{s}, \tilde{a}, u_t) = \tilde{s}'}} \theta_{u_t} + \sum_{\substack{u_t \in U_t \\f(s_t, a_t, u_t) = s_{t+1} \\ f(\tilde{s}, \tilde{a}, u_t) \neq \tilde{s}'}} \theta_{u_t} + \sum_{\substack{u_t \in U_t \\f(s_t, a_t, u_t) \neq s_{t+1} \\ f(\tilde{s}, \tilde{a}, u_t) = \tilde{s}'}} \theta_{u_t} \\+ \sum_{\substack{u_t \in U_t \\f(s_t, a_t, u_t) \neq s_{t+1} \\ f(\tilde{s}, \tilde{a}, u_t) \neq \tilde{s}'}} \theta_{u_t} = 1$, satisfying \eqref{proofeq:valid prob2}.
\end{itemize}

All the constraints are satisfied, so this is a valid assignment of $\theta$ for this state-action pair. Therefore,
 \[\tilde{P}_{t}^{LB}(\tilde{s}' \mid \tilde{s}, \tilde{a}) = \dfrac{\sum_{\substack{u_t \in U_t \\f(s_t, a_t, u_t) = s_{t+1} \\ f(\tilde{s}, \tilde{a}, u_t) = \tilde{s}'}} \theta_{u_t}}{P(s_{t+1} \mid s_t, a_t)}= \frac{P(\tilde{s}' \mid \tilde{s}, \tilde{a}) - (1 - P(s_{t+1} \mid s_t, a_t))}{P(s_{t+1} \mid s_t, a_t)}\]
\end{proof}

\paragraph{Case 2: $P(\tilde{s}' \mid \tilde{s}, \tilde{a}) \leq 1- P(s_{t+1} \mid s_t, a_t)$}
\begin{proof}
In this case,

\[
    \sum_{s' \in \mathcal{S}\setminus{\{s_{t+1}\}}}\sum_{\substack{u_t \in U_t \\f(s_t, a_t, u_t) = s' \\ f(\tilde{s}, \tilde{a}, u_t) = \tilde{s}'}} \theta_{u_t} \leq P(\tilde{s}' \mid \tilde{s}, \tilde{a})
\]

and so
\[
\sum_{\substack{u_t \in U_t \\f(s_t, a_t, u_t) = s_{t+1} \\ f(\tilde{s}, \tilde{a}, u_t) = \tilde{s}'}} \theta_{u_t} \geq \sum_{\substack{u_t \in U_t \\ f(\tilde{s}, \tilde{a}, u_t) = \tilde{s}'}} \theta_{u_t} - \sum_{s' \in \mathcal{S}\setminus{\{s_{t+1}\}}}\sum_{\substack{u_t \in U_t \\f(s_t, a_t, u_t) = s' \\ f(\tilde{s}, \tilde{a}, u_t) = \tilde{s}'}} \theta_{u_t}= P(\tilde{s}' \mid \tilde{s}, \tilde{a}) - P(\tilde{s}' \mid \tilde{s}, \tilde{a}) = 0
\]

If we can show that there exists a $\theta$ such that

\[
\sum_{\substack{u_t \in U_t \\f(s_t, a_t, u_t) = s_{t+1} \\ f(\tilde{s}, \tilde{a}, u_t) = \tilde{s}'}} \theta_{u_t} = 0
\]

and $\theta$ satisfies the constraints in the optimisation problem, we know that this results in the minimum possible counterfactual transition probability for $\tilde{P}_t(\tilde{s}' \mid \tilde{s}, \tilde{a})$. We can assign $\theta$ as follows:

\[
\begin{cases}
    \sum_{\substack{u_t \in U_t \\f(s_t, a_t, u_t) = s_{t+1} \\ f(\tilde{s}, \tilde{a}, u_t) = \tilde{s}'}} \theta_{u_t}= 0\\
    \sum_{\substack{u_t \in U_t \\f(s_t, a_t, u_t) = s_{t+1} \\ f(\tilde{s}, \tilde{a}, u_t) \neq \tilde{s}'}} \theta_{u_t} = P(s_{t+1} \mid s_t, a_t)\\
    \sum_{\substack{u_t \in U_t \\f(s_t, a_t, u_t) \neq s_{t+1} \\ f(\tilde{s}, \tilde{a}, u_t) = \tilde{s}'}} \theta_{u_t} = P(\tilde{s}' \mid \tilde{s}, \tilde{a})\\
    \sum_{\substack{u_t \in U_t \\f(s_t, a_t, u_t) \neq s_{t+1} \\ f(\tilde{s}, \tilde{a}, u_t) \neq \tilde{s}'}} \theta_{u_t} = 1 - P(s_{t+1} \mid s_t, a_t) - P(\tilde{s}' \mid \tilde{s}, \tilde{a})\\
\end{cases}
\]

This assignment of $\theta$ satisfies the constraints of the linear optimisation problem, as follows:

\begin{itemize}        
    \item $\sum_{\substack{u_t \in U_t \\f(s_t, a_t, u_t) = s_{t+1}}} \theta_{u_t} = \sum_{\substack{u_t \in U_t \\f(s_t, a_t, u_t) = s_{t+1} \\ f(\tilde{s}, \tilde{a}, u_t) = \tilde{s}'}} \theta_{u_t} + \sum_{\substack{u_t \in U_t \\f(s_t, a_t, u_t) = s_{t+1} \\ f(\tilde{s}, \tilde{a}, u_t) \neq \tilde{s}'}} \theta_{u_t} = P(s_{t+1} \mid s_t, a_t)$ \\satisfying \eqref{proofeq:interventional constraint}.

    \item $\sum_{\substack{u_t \in U_t\\f(s_t, a_t, u_t) \neq s_{t+1}}}{\theta_{u_t}} = \sum_{\substack{u_t \in U_t \\f(s_t, a_t, u_t) \neq s_{t+1} \\ f(\tilde{s}, \tilde{a}, u_t) = \tilde{s}'}} \theta_{u_t} + \sum_{\substack{u_t \in U_t \\f(s_t, a_t, u_t) \neq s_{t+1} \\ f(\tilde{s}, \tilde{a}, u_t) \neq \tilde{s}'}} \theta_{u_t} =1 -
    P(s_{t+1} \mid s_t, a_t)$ satisfying \eqref{proofeq:interventional constraint}.
    
    \item $\sum_{\substack{u_t \in U_t \\f(\tilde{s}, \tilde{a}, u_t) = \tilde{s}'}} \theta_{u_t} = \sum_{\substack{u_t \in U_t \\f(s_t, a_t, u_t) = s_{t+1} \\ f(\tilde{s}, \tilde{a}, u_t) = \tilde{s}'}} \theta_{u_t} + \sum_{\substack{u_t \in U_t \\f(s_t, a_t, u_t) \neq s_{t+1} \\ f(\tilde{s}, \tilde{a}, u_t) = \tilde{s}'}} \theta_{u_t} = P(\tilde{s}' \mid \tilde{s}, \tilde{a})$ satisfying \eqref{proofeq:interventional constraint}.
    
    \item $\sum_{s' \in \mathcal{S}\setminus\{\tilde{s}'\}}\sum_{\substack{u_t \in U_t \\f(\tilde{s}, \tilde{a}, u_t) = s'}} \theta_{u_t} = \sum_{\substack{u_t \in U_t \\f(s_t, a_t, u_t) = s_{t+1} \\ f(\tilde{s}, \tilde{a}, u_t) \neq \tilde{s}'}} \theta_{u_t} + \sum_{\substack{u_t \in U_t \\f(s_t, a_t, u_t) \neq s_{t+1} \\ f(\tilde{s}, \tilde{a}, u_t) \neq \tilde{s}'}} \theta_{u_t} \\= 1 - P(\tilde{s}' \mid \tilde{s}, \tilde{a}) = \sum_{s' \in \mathcal{S}\setminus\{\tilde{s}'\}}P(s' \mid \tilde{s}, \tilde{a})$, therefore it is possible to assign $\theta$ such that $\forall s' \in \mathcal{S}\setminus\{\tilde{s}'\}, \sum_{\substack{u_t \in U_t \\ f(\tilde{s}, \tilde{a}, u_t) = s'}}{\theta_{u_t}} = P(s' \mid \tilde{s}, \tilde{a})$ satisfying \eqref{proofeq:interventional constraint}.
    
    \item Mon1 \eqref{proofeq:monotonicity1}, Mon2 \eqref{proofeq:monotonicity2} and CS \eqref{proofeq:counterfactual stability} are all vacuously true for all transitions from $(\tilde{s}, \tilde{a})$ due to Lemma \ref{lemma: constraints do not apply if disjoint}.

    \item $0 \leq \theta_{u_t} \leq 1, \forall u_t$, satisfying \eqref{proofeq:valid prob1}.
    
    \item $\sum_{u_t = 1}^{|U_t|} \theta_{u_t} = \sum_{\substack{u_t \in U_t \\f(s_t, a_t, u_t) = s_{t+1} \\ f(\tilde{s}, \tilde{a}, u_t) = \tilde{s}'}} \theta_{u_t} + \sum_{\substack{u_t \in U_t \\f(s_t, a_t, u_t) = s_{t+1} \\ f(\tilde{s}, \tilde{a}, u_t) \neq \tilde{s}'}} \theta_{u_t} + \sum_{\substack{u_t \in U_t \\f(s_t, a_t, u_t) \neq s_{t+1} \\ f(\tilde{s}, \tilde{a}, u_t) = \tilde{s}'}} \theta_{u_t} \\+ \sum_{\substack{u_t \in U_t \\f(s_t, a_t, u_t) \neq s_{t+1} \\ f(\tilde{s}, \tilde{a}, u_t) \neq \tilde{s}'}} \theta_{u_t} = 1$, satisfying \eqref{proofeq:valid prob2}.
\end{itemize}

All the constraints are satisfied, so this is a valid assignment of $\theta$ for this state-action pair. Therefore,

\[\tilde{P}_{t}^{LB}(\tilde{s}' \mid \tilde{s}, \tilde{a}) = \dfrac{\sum_{\substack{u_t \in U_t \\f(s_t, a_t, u_t) = s_{t+1} \\ f(\tilde{s}, \tilde{a}, u_t) = \tilde{s}'}} \theta_{u_t}}{P(s_{t+1} \mid s_t, a_t)}= \frac{0}{P(s_{t+1} \mid s_t, a_t)} =0\]

\end{proof}
These two cases can be combined as follows:

\[\tilde{P}_{t}^{LB}(\tilde{s}' \mid \tilde{s}, \tilde{a}) = 
\begin{cases}
\frac{P(\tilde{s}' \mid \tilde{s}, \tilde{a}) - (1 - P(s_{t+1} \mid s_t, a_t))}{P(s_{t+1} \mid s_t, a_t)} & \text{if $P(\tilde{s}' \mid \tilde{s}, \tilde{a}) > 1 - P(s_{t+1} \mid s_t, a_t)$}\\
0 & \text{otherwise}\\
\end{cases}
\]
for all next states $\tilde{s}' \in \mathcal{S}$.
\end{proof}

\pagebreak
\subsubsection{CF Bounds for $(\tilde{s}, \tilde{a})$ with Overlapping Support with $(s_t, a_t)$}
To enhance readability, we split the proof for Theorem \ref{theorem:ub overlapping} into two separate theorems: Theorem \ref{proof theorem:ub overlapping} proves the upper counterfactual probability bounds, and Theorem \ref{proof theorem:lb overlapping} proves the lower counterfactual probability bounds.

\begin{theorem}
\label{proof theorem:ub overlapping}
For outgoing transitions from state-action pairs $(\tilde{s}, \tilde{a})$ which have overlapping support with the observed $(s_t, a_t)$, but $(\tilde{s}, \tilde{a}) \neq (s_t, a_t)$, the linear program will produce an upper bound of:

\[\tilde{P}_{t}^{UB}(\tilde{s}' \mid \tilde{s}, \tilde{a}) =
\begin{cases}
\dfrac{\min(P(s_{t+1} \mid s_t, a_t), P(s_{t+1} \mid \tilde{s}, \tilde{a}))}{P(s_{t+1} \mid s_t, a_t)} & \text{if $\tilde{s}' = s_{t+1}$}\\
0 & \text{if $P(\tilde{s}' \mid s_t, a_t) > 0$} \\ & \text{and $\dfrac{P(s_{t+1} \mid \tilde{s}, \tilde{a})}{P(s_{t+1} \mid s_t, a_t)}\geq\dfrac{P(\tilde{s}' \mid \tilde{s}, \tilde{a})}{P(\tilde{s}' \mid s_t, a_t)}$}\\
\min(P(\tilde{s}' \mid \tilde{s}, \tilde{a}), 1 - P(s_{t+1} \mid \tilde{s}, \tilde{a})) & \text{if $P(\tilde{s}' \mid s_t, a_t) > 0$}\\
\min(1 - P(s_{t+1} \mid \tilde{s}, \tilde{a}), \dfrac{P(\tilde{s}' \mid \tilde{s}, \tilde{a})}{P(s_{t+1} \mid s_t, a_t)}) & \text{otherwise}\\
\end{cases}
\]
for all possible next states $\tilde{s}'$.
\end{theorem}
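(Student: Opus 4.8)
The plan is to recast the optimisation as a coupling (transportation) problem and then treat the four cases in the same style as Theorems~\ref{proof theorem: ub disjoint}--\ref{proof theorem: lb disjoint}. Writing $p(\cdot) = P(\cdot \mid s_t, a_t)$ and $q(\cdot) = P(\cdot \mid \tilde{s}, \tilde{a})$, and collecting $\theta$ into the matrix $M_{s', s''} = \sum_{u_t:\, f(s_t,a_t,u_t)=s',\, f(\tilde{s},\tilde{a},u_t)=s''} \theta_{u_t}$, the interventional constraint~\eqref{proofeq:interventional constraint} says exactly that $M$ is a nonnegative coupling with row marginals $p$ and column marginals $q$, while Eq.~\eqref{eq: counterfactual probability} reads $\tilde{P}_t(\tilde{s}' \mid \tilde{s}, \tilde{a}) = M_{s_{t+1}, \tilde{s}'}/P(s_{t+1} \mid s_t, a_t)$. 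The crucial observation is that the counterfactual distribution is precisely the normalised $s_{t+1}$-row of $M$, so Mon1~\eqref{proofeq:monotonicity1}, Mon2~\eqref{proofeq:monotonicity2} and CS~\eqref{proofeq:counterfactual stability} constrain only that one row; every other row is free subject to nonnegativity and the column marginals. Maximising $\tilde{P}_t(\tilde{s}' \mid \tilde{s}, \tilde{a})$ thus amounts to maximising the single entry $M_{s_{t+1}, \tilde{s}'}$ over admissible $s_{t+1}$-rows.

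First I would derive the four upper bounds on $M_{s_{t+1}, \tilde{s}'}$. When $\tilde{s}' = s_{t+1}$, both Mon2 and CS are vacuous (the CS inequality would compare a ratio with itself), so only the Fréchet bound $M_{s_{t+1}, s_{t+1}} \le \min(p(s_{t+1}), q(s_{t+1}))$ survives. When the CS conditions hold, constraint~\eqref{proofeq:counterfactual stability} forces $M_{s_{t+1}, \tilde{s}'} = 0$ directly. When $P(\tilde{s}' \mid s_t, a_t) > 0$ and CS fails, Mon2 gives $M_{s_{t+1}, \tilde{s}'} \le p(s_{t+1}) q(\tilde{s}')$ (or the entry is trivially zero if $q(\tilde{s}')=0$), while the row-sum identity $\sum_{s''} M_{s_{t+1}, s''} = p(s_{t+1})$ combined with the Mon1 floor $M_{s_{t+1}, s_{t+1}} \ge p(s_{t+1}) q(s_{t+1})$ yields $M_{s_{t+1}, \tilde{s}'} \le p(s_{t+1})(1 - q(s_{t+1}))$; dividing by $p(s_{t+1})$ gives the stated $\min(q(\tilde{s}'), 1 - q(s_{t+1}))$. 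In the remaining case $P(\tilde{s}' \mid s_t, a_t) = 0$, Mon2 no longer applies, so the column-marginal bound $M_{s_{t+1}, \tilde{s}'} \le q(\tilde{s}')$ replaces it, and together with the same Mon1/row-sum bound this gives $\min\!\big(q(\tilde{s}')/p(s_{t+1}),\, 1 - q(s_{t+1})\big)$.

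For the matching (achievability) direction I would, in each case, exhibit an explicit $\theta$ — equivalently an explicit $s_{t+1}$-row of $M$ — attaining the bound, following the template of the disjoint-case proofs: place the maximal admissible mass on the pair $(s_{t+1}, \tilde{s}')$, set the $(s_{t+1}, s_{t+1})$ entry to its Mon1 minimum $p(s_{t+1}) q(s_{t+1})$, zero out every CS-forced column, and distribute the residual row mass over the remaining columns without exceeding their Mon2 caps $p(s_{t+1}) q(s'')$ or their column totals $q(s'')$. I would then check the constraint families~\eqref{proofeq:interventional constraint}--\eqref{proofeq:valid prob2} for this row and invoke the inductive construction of Section~\ref{sec: probability bounds induction} (a coupling-completion argument: the residual column marginals $q(s'') - M_{s_{t+1}, s''}$ are nonnegative and sum to $1 - p(s_{t+1})$, matching the residual row marginals, so the remaining rows can always be filled) to extend the row to a full valid $\theta$ over all state-action pairs.

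The hard part will be the achievability direction rather than the bounds. The subtlety is that the target entry, the Mon1 floor on $M_{s_{t+1}, s_{t+1}}$, the CS-forced zeros, and the Mon2 caps all act on the same $s_{t+1}$-row at once, so one must verify that after committing the maximal mass to $(s_{t+1}, \tilde{s}')$ there remains exactly enough residual row mass to meet every other column's requirement without breaching any cap or forcing a negative residual column marginal. This balancing is precisely where overlapping support is genuinely harder than the disjoint case (in which Mon1, Mon2 and CS were all vacuous via Lemma~\ref{lemma: constraints do not apply if disjoint}), and the configurations where the two arguments of each $\min$ coincide are the tight cases whose feasible $\theta$ must be written out explicitly.
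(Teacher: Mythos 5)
Your transportation-problem reformulation is sound, and your validity direction is correct: the Fr\'echet bound for $\tilde{s}'=s_{t+1}$, the CS-forced zero, the Mon2 cap combined with the Mon1/row-sum bound giving $\min(q(\tilde{s}'),\,1-q(s_{t+1}))$, and the column-marginal variant when $P(\tilde{s}'\mid s_t,a_t)=0$ all match the paper's expressions. Your observation that constraints \eqref{proofeq:monotonicity1}, \eqref{proofeq:monotonicity2} and \eqref{proofeq:counterfactual stability} act only on the $s_{t+1}$-row of the coupling, together with the coupling-completion remark for filling in the other rows, is a cleaner packaging of what the paper does with its aggregated sums and the induction of Section~\ref{sec: probability bounds induction}. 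The case split is the same as the paper's.

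The genuine gap is in the achievability direction, exactly where you flag it but stop. You say one must "verify that after committing the maximal mass to $(s_{t+1},\tilde{s}')$ there remains exactly enough residual row mass," but you never give the argument, and it is not a routine check. Concretely, take $\tilde{s}'=s_{t+1}$ with $0<q(s_{t+1})<p(s_{t+1})$ (the paper's Case~1(c)): attaining $M_{s_{t+1},s_{t+1}}=q(s_{t+1})$ forces mass $p(s_{t+1})-q(s_{t+1})$ onto the other columns of that row, whose capacities are $0$ on the CS-forced set $S_{CS}$ and at most $p(s_{t+1})\,q(s'')$ on the remaining columns $S_{other}$. Feasibility therefore demands
\[
p(s_{t+1})\sum_{s''\in S_{other}} q(s'') \;\ge\; p(s_{t+1})-q(s_{t+1}),
\]
which would simply be false if $q$ could concentrate on $S_{CS}$; nothing in your outline rules this out. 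The paper's Lemma~\ref{lemma: existence of theta for overlapping case.} closes precisely this hole by using the counterfactual-stability inequality itself: every $s''\in S_{CS}$ satisfies $q(s'')<p(s'')\,q(s_{t+1})/p(s_{t+1})$, so the total $q$-mass on $S_{CS}$ is at most $\bigl(q(s_{t+1})/p(s_{t+1})\bigr)\sum_{s''\in S_{CS}}p(s'')$, and summing yields the displayed inequality. Without this lemma (or an equivalent argument) your construction only shows the stated expressions are upper bounds on the optimum, not that the linear program attains them, so the exactness claimed by the theorem is unproven; the same lemma is also invoked by the paper to legalise the residual-mass placement in your third and fourth cases.
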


\begin{proof} 
Take an arbitrary transition $\tilde{s}, \tilde{a} \rightarrow \tilde{s}'$ where $(\tilde{s}, \tilde{a})$ has overlapping support with the observed state-action pair $(s_t, a_t)$ (and $(\tilde{s}, \tilde{a}) \neq (s_t, a_t)$). Consider the following distinct cases:
\begin{itemize}
    \item $\tilde{s}' = s_{t+1}$
    \item $\tilde{s}' \neq s_{t+1}$, $\dfrac{P(s_{t+1} \mid \tilde{s}, \tilde{a})}{P(s_{t+1} \mid s_t, a_t)}\geq\dfrac{P(\tilde{s}' \mid \tilde{s}, \tilde{a})}{P(\tilde{s}' \mid s_t, a_t)}$ and $P(\tilde{s}' \mid s_t, a_t) > 0$
    \item $\tilde{s}' \neq s_{t+1}$, $\dfrac{P(s_{t+1} \mid \tilde{s}, \tilde{a})}{P(s_{t+1} \mid s_t, a_t)}<\dfrac{P(\tilde{s}' \mid \tilde{s}, \tilde{a})}{P(\tilde{s}' \mid s_t, a_t)}$ and $P(\tilde{s}' \mid s_t, a_t) >0$
    \item $\tilde{s}' \neq s_{t+1}$ and $P(\tilde{s}' \mid s_t, a_t) =0$
\end{itemize}
These cases are disjoint and cover all possible situations.

\paragraph{Case 1: $\tilde{s}' = s_{t+1}$}
\noindent
\begin{proof}
Consider the following disjoint cases:
\begin{itemize}
    \item $P(s_{t+1} \mid \tilde{s}, \tilde{a}) = 0$
    \item $P(s_{t+1} \mid \tilde{s}, \tilde{a}) > 0$ and $P(s_{t+1} \mid \tilde{s}, \tilde{a}) \geq P(s_{t+1} \mid s_t, a_t)$
    \item $P(s_{t+1} \mid \tilde{s}, \tilde{a}) > 0$ and $P(s_{t+1} \mid \tilde{s}, \tilde{a}) < P(s_{t+1} \mid s_t, a_t)$
\end{itemize}

\paragraph{Case 1(a): $P(s_{t+1} \mid \tilde{s}, \tilde{a}) = 0$}
\noindent
\begin{proof}
If $P(s_{t+1} \mid \tilde{s}, \tilde{a}) = 0$, then \\$\sum_{u_t = 1}^{|U_t|} \mu_{\tilde{s}, \tilde{a}, u_t, s_{t+1}} \cdot \theta_{u_t} = 0$ to satisfy \eqref{proofeq:interventional constraint}. Therefore, the maximum counterfactual probability $\tilde{P}_t(s_{t+1} \mid \tilde{s}, \tilde{a})$ is:

\[
\max_{\theta}\left(\sum_{u_t = 1}^{|U_t|} \mu_{\tilde{s}, \tilde{a}, u_t, s_{t+1}} \cdot \mu_{s_t, a_t, u_t, s_{t+1}} \cdot \theta_{u_t}\right) = 0
\]

If we can show that there exists a $\theta$ such that

\[
\sum_{\substack{u_t \in U_t \\f(s_t, a_t, u_t) = s_{t+1} \\ f(\tilde{s}, \tilde{a}, u_t) = s_{t+1}}} \theta_{u_t} = 0
\]

and $\theta$ satisfies the constraints in the optimisation problem, we know that this results in the maximum possible counterfactual transition probability for $\tilde{P}_t(s_{t+1} \mid \tilde{s}, \tilde{a})$. We can assign $\theta$ as follows:

\[
\forall s' \in \mathcal{S}, \sum_{\substack{u_t \in U_t\\f(s_t, a_t, u_t) = s_{t+1} \\ f(\tilde{s}, \tilde{a}, u_t) = s'}}{\theta_{u_t}} = P(s' \mid \tilde{s}, \tilde{a}) \cdot P(s_{t+1} \mid s_t, a_t)\\
\]

\[
\forall s' \in \mathcal{S}, \sum_{\substack{u_t \in U_t\\f(s_t, a_t, u_t) \neq s_{t+1} \\ f(\tilde{s}, \tilde{a}, u_t) = s'}}{\theta_{u_t}} = P(s' \mid \tilde{s}, \tilde{a}) \cdot (1 - P(s_{t+1} \mid s_t, a_t))\\
\]

which results in a counterfactual probability of $\tilde{P}_t(s_{t+1} \mid \tilde{s}, \tilde{a}) = \sum_{\substack{u_t \in U_t\\f(s_t, a_t, u_t) = s_{t+1}}}{\theta_{u_t}} \\= \sum_{\substack{u_t \in U_t\\f(s_t, a_t, u_t) = s_{t+1} \\ f(\tilde{s}, \tilde{a}, u_t) = s_{t+1}}}{\theta_{u_t}} = P(s_{t+1} \mid \tilde{s}, \tilde{a}) \cdot P(s_{t+1} \mid s_t, a_t) = 0$.\\

This assignment of $\theta$ satisfies the constraints of the linear optimisation problem, as follows:

\begin{itemize}
    \item $\sum_{\substack{u_t \in U_t\\f(s_t, a_t, u_t) = s_{t+1}}}{\theta_{u_t}} = \sum_{s' \in \mathcal{S}}\sum_{\substack{u_t \in U_t\\f(s_t, a_t, u_t) = s_{t+1} \\ f(\tilde{s}, \tilde{a}, u_t) = s'}}{\theta_{u_t}} = P(s_{t+1} \mid s_t, a_t)$, satisfying \eqref{proofeq:interventional constraint}.
    \item $\sum_{\substack{u_t \in U_t\\f(s_t, a_t, u_t) \neq s_{t+1}}}{\theta_{u_t}} = \sum_{s' \in \mathcal{S}}\sum_{\substack{u_t \in U_t\\f(s_t, a_t, u_t) \neq s_{t+1} \\ f(\tilde{s}, \tilde{a}, u_t) = s'}}{\theta_{u_t}} = 1 - P(s_{t+1} \mid s_t, a_t)$, satisfying \eqref{proofeq:interventional constraint}.
    \item $\forall s' \in \mathcal{S}, \sum_{\substack{u_t \in U_t \\ f(\tilde{s}, \tilde{a}, u_t) = s'}}{\theta_{u_t}} = \sum_{\substack{u_t \in U_t\\f(s_t, a_t, u_t) = s_{t+1} \\ f(\tilde{s}, \tilde{a}, u_t) = s'}}{\theta_{u_t}} + \sum_{\substack{u_t \in U_t\\f(s_t, a_t, u_t) \neq s_{t+1} \\ f(\tilde{s}, \tilde{a}, u_t) = s'}}{\theta_{u_t}} = P(s' \mid \tilde{s}, \tilde{a})$, satisfying \eqref{proofeq:interventional constraint}.
    \item $\tilde{P}_t(s_{t+1} \mid \tilde{s}, \tilde{a}) = \dfrac{\sum_{\substack{u_t \in U_t\\f(s_t, a_t, u_t) = s_{t+1} \\ f(\tilde{s}, \tilde{a}, u_t) = s_{t+1}}} \theta_{u_t}}{P(s_{t+1} \mid s_t, a_t)} = \dfrac{P(s_{t+1} \mid \tilde{s}, \tilde{a}) \cdot P(s_{t+1} \mid s_t, a_t)}{P(s_{t+1} \mid s_t, a_t)} \\= P(s_{t+1} \mid \tilde{s}, \tilde{a}) \geq P(s_{t+1} \mid \tilde{s}, \tilde{a})$ so Mon1 \eqref{proofeq:monotonicity1} holds.
    \item $\forall s' \in \mathcal{S}\setminus \{s_{t+1}\}$:
    \begin{align*}
        \tilde{P}_t(s' \mid \tilde{s}, \tilde{a}) 
        &= \dfrac{\sum_{\substack{u_t \in U_t\\f(s_t, a_t, u_t) = s_{t+1} \\ f(\tilde{s}, \tilde{a}, u_t) = s'}} \theta_{u_t}}{P(s_{t+1} | s_t, a_t)}
        \\ &= \dfrac{P(s' \mid \tilde{s}, \tilde{a}) \cdot P(s_{t+1} \mid s_t, a_t)}{P(s_{t+1} \mid s_t, a_t)} \\ &= P(s' \mid \tilde{s}, \tilde{a}) \\ &\leq P(s' \mid \tilde{s}, \tilde{a})
    \end{align*}
    so Mon2 \eqref{proofeq:monotonicity2} holds.
    \item CS \eqref{proofeq:counterfactual stability} doesn't apply for the transition $\tilde{s}, \tilde{a} \rightarrow s_{t+1}$. Then, $\forall s' \in \mathcal{S} \setminus \{s_{t+1}\}$, either $P(s' \mid s_t, a_t) = 0$ (so CS \eqref{proofeq:counterfactual stability} holds vacuously), or $ P(s' \mid s_t, a_t) > 0$ and either:
    \begin{itemize}
        \item $P(s' \mid \tilde{s}, \tilde{a}) > 0$, meaning $
    \dfrac{P(s_{t+1} \mid \tilde{s}, \tilde{a})}{P(s_{t+1} \mid s_t, a_t)} < \dfrac{P(s' \mid \tilde{s}, \tilde{a})}{P(s' \mid s_t, a_t)}
    $ and therefore CS \eqref{proofeq:counterfactual stability} holds vacuously.
    \item or $P(s' \mid \tilde{s}, \tilde{a}) = 0$, so the counterfactual probability for this transition must be $\tilde{P}_t(s' \mid \tilde{s}, \tilde{a})=0$, which satisfies CS \eqref{proofeq:counterfactual stability} since $\dfrac{P(s_{t+1} \mid \tilde{s}, \tilde{a})}{P(s_{t+1} \mid s_t, a_t)} = \dfrac{P(s' \mid \tilde{s}, \tilde{a})}{P(s' \mid s_t, a_t)} =0$.
    \end{itemize} 

    so CS \eqref{proofeq:counterfactual stability} holds $\forall s' \in \mathcal{S}\setminus\{s_{t+1}\}$.
    
    \item $0 \leq \theta_{u_t} \leq 1, \forall u_t$, satisfying \eqref{proofeq:valid prob1}.
    
    \item $\sum_{u_t = 1}^{|U_t|} \theta_{u_t} = \sum_{s' \in \mathcal{S}}\sum_{\substack{u_t \in U_t\\f(s_t, a_t, u_t) = s_{t+1} \\ f(\tilde{s}, \tilde{a}, u_t) = s'}}{\theta_{u_t}} + \sum_{s' \in \mathcal{S}}\sum_{\substack{u_t \in U_t\\f(s_t, a_t, u_t) \neq s_{t+1} \\ f(\tilde{s}, \tilde{a}, u_t) = s'}}{\theta_{u_t}} = 1$, satisfying \eqref{proofeq:valid prob2}.
\end{itemize}

Therefore,

\[\tilde{P}_{t}^{UB}(\tilde{s}' \mid \tilde{s}, \tilde{a}) =
\max_{\theta}\left(\dfrac{\sum_{u_t = 1}^{|U_t|} \mu_{\tilde{s}, \tilde{a}, u_t, s_{t+1}} \cdot \mu_{s_t, a_t, u_t, s_{t+1}} \cdot \theta_{u_t}}{P(s_{t+1} \mid s_t, a_t)}\right) = \dfrac{P(s_{t+1} \mid \tilde{s}, \tilde{a})}{P(s_{t+1} \mid s_t, a_t)}
\]
\end{proof}

\paragraph{Case 1(b): $P(s_{t+1} \mid \tilde{s}, \tilde{a}) > 0$ and $P(s_{t+1} \mid \tilde{s}, \tilde{a}) \geq P(s_{t+1} \mid s_t, a_t)$}
\noindent
\begin{proof}
From Lemma \ref{lemma:absolute max cf prob}, we have:

\[
\sum_{\substack{u_t \in U_t \\f(s_t, a_t, u_t) = s_{t+1} \\ f(\tilde{s}, \tilde{a}, u_t) = s_{t+1}}} \theta_{u_t} \leq \min(P(s_{t+1} \mid s_t, a_t), P(s_{t+1} \mid \tilde{s}, \tilde{a}))
\]

Therefore, 

\[
\sum_{\substack{u_t \in U_t \\f(s_t, a_t, u_t) = s_{t+1} \\ f(\tilde{s}, \tilde{a}, u_t) = s_{t+1}}} \theta_{u_t} \leq P(s_{t+1} \mid s_t, a_t)
\]

If we can show that there exists a $\theta$ such that

\[
\sum_{\substack{u_t \in U_t \\f(s_t, a_t, u_t) = s_{t+1} \\ f(\tilde{s}, \tilde{a}, u_t) = s_{t+1}}} \theta_{u_t} = P(s_{t+1} \mid s_t, a_t)
\]

and $\theta$ satisfies the constraints in the optimisation problem, we know that this results in the maximum possible counterfactual transition probability for $\tilde{P}_t(s_{t+1} \mid \tilde{s}, \tilde{a})$. We can assign $\theta$ as follows:

\begin{align*}
    \sum_{\substack{u_t \in U_t\\f(s_t, a_t, u_t) = s_{t+1} \\ f(\tilde{s}, \tilde{a}, u_t) = s_{t+1}}}{\theta_{u_t}} &= P(s_{t+1} \mid s_t, a_t)\\
    \sum_{\substack{u_t \in U_t\\f(s_t, a_t, u_t) \neq s_{t+1} \\ f(\tilde{s}, \tilde{a}, u_t) = s_{t+1}}}{\theta_{u_t}} &= P(s_{t+1} \mid \tilde{s}, \tilde{a}) - P(s_{t+1} \mid s_t, a_t)\\
    \forall s' \in \mathcal{S}\setminus \{s_{t+1}\} \sum_{\substack{u_t \in U_t\\f(s_t, a_t, u_t) = s_{t+1} \\ f(\tilde{s}, \tilde{a}, u_t) = s'}}{\theta_{u_t}} &= 0\\
   \forall s' \in \mathcal{S}\setminus \{s_{t+1}\} \sum_{\substack{u_t \in U_t\\f(s_t, a_t, u_t) \neq s_{t+1} \\ f(\tilde{s}, \tilde{a}, u_t) = s'}}{\theta_{u_t}} &= P(s' \mid \tilde{s}, \tilde{a})\\
\end{align*}

This assignment of $\theta$ satisfies the constraints of the linear optimisation problem, as follows:

\begin{itemize}
    \item $\sum_{\substack{u_t \in U_t\\f(s_t, a_t, u_t) = s_{t+1}}}{\theta_{u_t}} = \sum_{\substack{u_t \in U_t\\f(s_t, a_t, u_t) = s_{t+1} \\ f(\tilde{s}, \tilde{a}, u_t) = s_{t+1}}}{\theta_{u_t}} + \sum_{\substack{u_t \in U_t\\f(s_t, a_t, u_t) = s_{t+1} \\ f(\tilde{s}, \tilde{a}, u_t) \neq s_{t+1}}}{\theta_{u_t}} = P(s_{t+1} \mid s_t, a_t)$, satisfying \eqref{proofeq:interventional constraint}.
    \item $\sum_{\substack{u_t \in U_t\\f(s_t, a_t, u_t) \neq s_{t+1}}}{\theta_{u_t}} = \sum_{\substack{u_t \in U_t\\f(s_t, a_t, u_t) \neq s_{t+1} \\ f(\tilde{s}, \tilde{a}, u_t) = s_{t+1}}}{\theta_{u_t}} + \sum_{\substack{u_t \in U_t\\f(s_t, a_t, u_t) \neq s_{t+1} \\ f(\tilde{s}, \tilde{a}, u_t) \neq s_{t+1}}}{\theta_{u_t}} = 1 - P(s_{t+1} \mid s_t, a_t)$, satisfying \eqref{proofeq:interventional constraint}.
    \item $\forall s' \in \mathcal{S}, \sum_{\substack{u_t \in U_t \\ f(\tilde{s}, \tilde{a}, u_t) = s'}}{\theta_{u_t}} = \sum_{\substack{u_t \in U_t\\f(s_t, a_t, u_t) = s_{t+1} \\ f(\tilde{s}, \tilde{a}, u_t) = s'}}{\theta_{u_t}} + \sum_{\substack{u_t \in U_t\\f(s_t, a_t, u_t) \neq s_{t+1} \\ f(\tilde{s}, \tilde{a}, u_t) = s'}}{\theta_{u_t}} = P(s' \mid \tilde{s}, \tilde{a})$, satisfying \eqref{proofeq:interventional constraint}.
    \item $\tilde{P}_t(s_{t+1} \mid \tilde{s}, \tilde{a}) = \dfrac{\sum_{\substack{u_t \in U_t\\f(s_t, a_t, u_t) = s_{t+1} \\ f(\tilde{s}, \tilde{a}, u_t) = s_{t+1}}} \theta_{u_t}}{P(s_{t+1} \mid s_t, a_t)} = \dfrac{P(s_{t+1} \mid s_t, a_t)}{P(s_{t+1} \mid s_t, a_t)} = 1 \geq P(s_{t+1} \mid \tilde{s}, \tilde{a})$ so Mon1 \eqref{proofeq:monotonicity1} holds.
    \item $\forall s' \in \mathcal{S}\setminus \{s_{t+1}\}$:
    \begin{align*}
        \tilde{P}_t(s' \mid \tilde{s}, \tilde{a}) 
        &= \dfrac{\sum_{\substack{u_t \in U_t\\f(s_t, a_t, u_t) = s_{t+1} \\ f(\tilde{s}, \tilde{a}, u_t) = s'}} \theta_{u_t}}{P(s_{t+1} | s_t, a_t)}
        \\ &= \dfrac{0}{P(s_{t+1} \mid s_t, a_t)} \\ &= 0 \\ &\leq P(s' \mid \tilde{s}, \tilde{a})
    \end{align*}
    so Mon2 \eqref{proofeq:monotonicity2} holds.
    \item CS \eqref{proofeq:counterfactual stability} doesn't apply for the transition $\tilde{s}, \tilde{a} \rightarrow s_{t+1}$.  $\forall s' \in \mathcal{S}\setminus \{s_{t+1}\}, \tilde{P}_t(s' \mid \tilde{s}, \tilde{a}) = 0$
    which is guaranteed to satisfy CS \eqref{proofeq:counterfactual stability}.
    
    \item $0 \leq \theta_{u_t} \leq 1, \forall u_t$, satisfying \eqref{proofeq:valid prob1}.
    
    \item $\sum_{u_t = 1}^{|U_t|} \theta_{u_t} = \sum_{s' \in \mathcal{S}}\sum_{\substack{u_t \in U_t\\f(s_t, a_t, u_t) = s_{t+1} \\ f(\tilde{s}, \tilde{a}, u_t) = s'}}{\theta_{u_t}} + \sum_{s' \in \mathcal{S}}\sum_{\substack{u_t \in U_t\\f(s_t, a_t, u_t) \neq s_{t+1} \\ f(\tilde{s}, \tilde{a}, u_t) = s'}}{\theta_{u_t}} = 1$, satisfying \eqref{proofeq:valid prob2}.
\end{itemize}

Therefore,
\[
\max_{\theta}\left(\sum_{u_t = 1}^{|U_t|} \mu_{\tilde{s}, \tilde{a}, u_t, s_{t+1}} \cdot \mu_{s_t, a_t, u_t, s_{t+1}} \cdot \theta_{u_t}\right) = P(s_{t+1} \mid s_t, a_t)
\]
and
\[
\tilde{P}_{t}^{UB}(\tilde{s}' \mid \tilde{s}, \tilde{a}) =
\max_{\theta} \left( \dfrac{\sum_{u_t = 1}^{|U_t|} \mu_{\tilde{s}, \tilde{a}, u_t, s_{t+1}} \cdot \mu_{s_t, a_t, u_t, s_{t+1}} \cdot \theta_{u_t}}{P(s_{t+1} \mid s_t, a_t)} \right) = \dfrac{P(s_{t+1} \mid s_t, a_t)}{P(s_{t+1} \mid s_t, a_t)} = 1
\]

\end{proof}

\paragraph{Case 1(c): $P(s_{t+1} \mid \tilde{s}, \tilde{a}) > 0$ and $P(s_{t+1} \mid \tilde{s}, \tilde{a}) < P(s_{t+1} \mid s_t, a_t)$}
\noindent
\begin{proof}
From Lemma \ref{lemma:absolute max cf prob}, we have:

\[
\sum_{\substack{u_t \in U_t \\f(s_t, a_t, u_t) = s_{t+1} \\ f(\tilde{s}, \tilde{a}, u_t) = s_{t+1}}} \theta_{u_t} \leq \min(P(s_{t+1} \mid s_t, a_t), P(s_{t+1} \mid \tilde{s}, \tilde{a}))
\]

Therefore, 
\[
\sum_{\substack{u_t \in U_t \\f(s_t, a_t, u_t) = s_{t+1} \\ f(\tilde{s}, \tilde{a}, u_t) = s_{t+1}}} \theta_{u_t} \leq P(s_{t+1} \mid \tilde{s}, \tilde{a})
\]

If we can show that there exists a $\theta$ such that

\[
\sum_{\substack{u_t \in U_t \\f(s_t, a_t, u_t) = s_{t+1} \\ f(\tilde{s}, \tilde{a}, u_t) = s_{t+1}}} \theta_{u_t} = P(s_{t+1} \mid \tilde{s}, \tilde{a})
\]

and $\theta$ satisfies the constraints in the optimisation problem, we know that this results in the maximum possible counterfactual transition probability for $\tilde{P}_t(s_{t+1} \mid \tilde{s}, \tilde{a})$. We can assign $\theta$ such that:

\begin{align*}
\sum_{\substack{u_t \in U_t\\f(s_t, a_t, u_t) = s_{t+1} \\ f(\tilde{s}, \tilde{a}, u_t) = s_{t+1}}}{\theta_{u_t}} &= P(s_{t+1} \mid \tilde{s}, \tilde{a})\\
\sum_{\substack{u_t \in U_t\\f(s_t, a_t, u_t) = s_{t+1} \\ f(\tilde{s}, \tilde{a}, u_t) \neq s_{t+1}}}{\theta_{u_t}} &= P(s_{t+1} \mid s_t, a_t) - P(s_{t+1} \mid \tilde{s}, \tilde{a})\\
\sum_{\substack{u_t \in U_t\\f(s_t, a_t, u_t) \neq s_{t+1} \\ f(\tilde{s}, \tilde{a}, u_t) = s_{t+1}}}{\theta_{u_t}} &= 0\\
\sum_{\substack{u_t \in U_t\\f(s_t, a_t, u_t) \neq s_{t+1} \\ f(\tilde{s}, \tilde{a}, u_t) \neq s_{t+1}}}{\theta_{u_t}} &= \sum_{s' \in \mathcal{S}\setminus \{s_{t+1}\}}{P(s' \mid \tilde{s}, \tilde{a})} - \sum_{\substack{u_t \in U_t\\f(s_t, a_t, u_t) = s_{t+1} \\ f(\tilde{s}, \tilde{a}, u_t) \neq s_{t+1}}}{\theta_{u_t}}\\
\end{align*}

However, it is not immediately clear whether this assignment of $\theta$ is possible, because the Mon2 \eqref{proofeq:monotonicity2} and CS \eqref{proofeq:counterfactual stability} constraints limit $\sum_{\substack{u_t \in U_t\\f(s_t, a_t, u_t) = s_{t+1} \\ f(\tilde{s}, \tilde{a}, u_t) \neq s'}}{\theta_{u_t}}$ for all $s' \in \mathcal{S}\setminus \{s_{t+1}\}$. In particular, there exists a set $S_{CS} \subset \mathcal{S} \setminus \{s_{t+1}\}$ of states which must have a counterfactual transition probability $\tilde{P}_{t}(s' \mid \tilde{s}, \tilde{a}) = 0$ to satisfy CS \eqref{proofeq:counterfactual stability} (i.e., $\forall s' \in S_{CS}, \dfrac{P(s_{t+1} \mid \tilde{s}, \tilde{a})}{P(s_{t+1} \mid s_t, a_t)}\geq\dfrac{P(s' \mid \tilde{s}, \tilde{a})}{P(s' \mid s_t, a_t)}$ and $P(s' \mid s_t, a_t) > 0$). This means that $\forall s' \in S_{CS}$ we must assign $\theta$ such that:

\begin{align*}
    \sum_{\substack{u_t \in U_t\\f(s_t, a_t, u_t) = s_{t+1} \\ f(\tilde{s}, \tilde{a}, u_t) = s'}}{\theta_{u_t}} &= 0\\
    \sum_{\substack{u_t \in U_t\\f(s_t, a_t, u_t) \neq s_{t+1} \\ f(\tilde{s}, \tilde{a}, u_t) = s'}}{\theta_{u_t}} &= P(s' \mid \tilde{s}, \tilde{a})\\
\end{align*}

As this results in $\tilde{P}_t(s' \mid \tilde{s}, \tilde{a}) = 0$:

\begin{equation}
\label{eq: theta for CS states}
  \begin{aligned}
    \tilde{P}_t(s' \mid \tilde{s}, \tilde{a}) &= \dfrac{\sum_{\substack{u_t \in U_t\\f(s_t, a_t, u_t) = s_{t+1} \\ f(\tilde{s}, \tilde{a}, u_t) = s_{t+1}}} \theta_{u_t}}{P(s_{t+1} | s_t, a_t)} \\ &= \dfrac{0}{P(s_{t+1} \mid s_t, a_t)}\\ &= 0
\end{aligned}  
\end{equation}

Let $S_{other}$ be the set of all other states $s \in \mathcal{S}\setminus{\{s_{t+1}\}\cup S_{CS}}$. $\forall s' \in S_{other}$, if $P(s' \mid \tilde{s}, \tilde{a})>0$ and $P(s' \mid s_t, a_t)>0$, then $\tilde{P}_t(s' \mid \tilde{s}, \tilde{a}) \leq {P}(s' \mid \tilde{s}, \tilde{a})$ to satisfy Mon2 \eqref{proofeq:monotonicity2}. Because we do not know whether $P(s' \mid \tilde{s}, \tilde{a})>0$ and $P(s' \mid s_t, a_t)>0$ for all $s' \in S_{other}$, we have to assume that this condition holds for all $s' \in S_{other}$, and assign $\theta$ with this assumption. This means, $\forall s' \in S_{other}$:

\begin{equation}
\label{eq: theta for other states}
  \begin{aligned}
    \sum_{\substack{u_t \in U_t\\f(s_t, a_t, u_t) = s_{t+1} \\ f(\tilde{s}, \tilde{a}, u_t) = s'}}{\theta_{u_t}} &\leq P(s' \mid \tilde{s}, \tilde{a}) \cdot P(s_{t+1} \mid s_t, a_t)\\
    \sum_{\substack{u_t \in U_t\\f(s_t, a_t, u_t) \neq s_{t+1} \\ f(\tilde{s}, \tilde{a}, u_t) = s'}}{\theta_{u_t}} &\geq P(s' \mid \tilde{s}, \tilde{a}) - (P(s' \mid \tilde{s}, \tilde{a}) \cdot P(s_{t+1} \mid s_t, a_t))\\
\end{aligned}  
\end{equation}

as this ensures each counterfactual transition probability satisfies Mon2 \eqref{proofeq:monotonicity2}, as follows:

\begin{align*}
    \tilde{P}_t(s' \mid \tilde{s}, \tilde{a}) &= \dfrac{\sum_{\substack{u_t \in U_t\\f(s_t, a_t, u_t) = s_{t+1} \\ f(\tilde{s}, \tilde{a}, u_t) = s_{t+1}}} \theta_{u_t}}{P(s_{t+1} | s_t, a_t)} \\ &\leq \dfrac{P(s' \mid \tilde{s}, \tilde{a}) \cdot P(s_{t+1} \mid s_t, a_t)}{P(s_{t+1} \mid s_t, a_t)}\\ &= P(s' \mid \tilde{s}, \tilde{a})
\end{align*}

Therefore, to show that there exists a valid assignment of $\theta$, we must prove that under the Mon2 \eqref{proofeq:monotonicity2} and CS \eqref{proofeq:counterfactual stability} constraints:

\[\max_{\theta} \left(\sum_{s' \in \mathcal{S}\setminus \{s_{t+1}\}}\sum_{\substack{u_t \in U_t\\f(s_t, a_t, u_t) = s_{t+1} \\ f(\tilde{s}, \tilde{a}, u_t) = s'}}{\theta_{u_t}}\right) \geq P(s_{t+1} \mid s_t, a_t) - P(s_{t+1} \mid \tilde{s}, \tilde{a}) \] 

This is proven in Lemma \ref{lemma: existence of theta for overlapping case.}. Now, we can prove that this assignment of $\theta$ satisfies the constraints of the linear optimisation problem, as follows:

\begin{itemize}
    \item $\sum_{\substack{u_t \in U_t\\f(s_t, a_t, u_t) = s_{t+1}}}{\theta_{u_t}} = \sum_{\substack{u_t \in U_t\\f(s_t, a_t, u_t) = s_{t+1} \\ f(\tilde{s}, \tilde{a}, u_t) = s_{t+1}}}{\theta_{u_t}} + \sum_{\substack{u_t \in U_t\\f(s_t, a_t, u_t) = s_{t+1} \\ f(\tilde{s}, \tilde{a}, u_t) \neq s_{t+1}}}{\theta_{u_t}} = P(s_{t+1} \mid s_t, a_t)$, satisfying \eqref{proofeq:interventional constraint}.
    \item $\sum_{\substack{u_t \in U_t\\f(s_t, a_t, u_t) \neq s_{t+1}}}{\theta_{u_t}} = \sum_{\substack{u_t \in U_t\\f(s_t, a_t, u_t) \neq s_{t+1} \\ f(\tilde{s}, \tilde{a}, u_t) = s_{t+1}}}{\theta_{u_t}} + \sum_{\substack{u_t \in U_t\\f(s_t, a_t, u_t) \neq s_{t+1} \\ f(\tilde{s}, \tilde{a}, u_t) \neq s_{t+1}}}{\theta_{u_t}} = 1 - P(s_{t+1} \mid s_t, a_t)$, satisfying \eqref{proofeq:interventional constraint}.
    \item $\sum_{s' \in \mathcal{S}}\sum_{\substack{u_t \in U_t \\ f(\tilde{s}, \tilde{a}, u_t) = s'}}{\theta_{u_t}} = \sum_{s' \in \mathcal{S}}\sum_{\substack{u_t \in U_t\\f(s_t, a_t, u_t) = s_{t+1} \\ f(\tilde{s}, \tilde{a}, u_t) = s'}}{\theta_{u_t}} + \sum_{s' \in \mathcal{S}}\sum_{\substack{u_t \in U_t\\f(s_t, a_t, u_t) \neq s_{t+1} \\ f(\tilde{s}, \tilde{a}, u_t) = s'}}{\theta_{u_t}} \\= P(s' \mid \tilde{s}, \tilde{a})$, therefore it is possible to assign $\theta$ such that \\$\forall s' \in \mathcal{S}\setminus\{\tilde{s}'\}, \sum_{\substack{u_t \in U_t \\ f(\tilde{s}, \tilde{a}, u_t) = s'}}{\theta_{u_t}} = P(s' \mid \tilde{s}, \tilde{a})$, satisfying \eqref{proofeq:interventional constraint}.
    \item $\tilde{P}_t(s_{t+1} \mid \tilde{s}, \tilde{a}) = \dfrac{\sum_{\substack{u_t \in U_t\\f(s_t, a_t, u_t) = s_{t+1} \\ f(\tilde{s}, \tilde{a}, u_t) = s_{t+1}}} \theta_{u_t}}{P(s_{t+1} \mid s_t, a_t)} = \dfrac{P(s_{t+1} \mid \tilde{s}, \tilde{a})}{P(s_{t+1} \mid s_t, a_t)} \geq P(s_{t+1} \mid \tilde{s}, \tilde{a})$ so Mon1 \eqref{proofeq:monotonicity1} holds.
    
    \item Lemma \ref{lemma: existence of theta for overlapping case.} proves that we can assign $\theta$ such that Mon2 \eqref{proofeq:monotonicity2} holds.
    \item CS \eqref{proofeq:counterfactual stability} doesn't apply to the transition $\tilde{s}, \tilde{a} \rightarrow s_{t+1}$. Lemma \ref{lemma: existence of theta for overlapping case.} proves that we can assign $\theta$ such that CS \eqref{proofeq:counterfactual stability} is satisfied for all other transitions.
    
    \item $0 \leq \theta_{u_t} \leq 1, \forall u_t$, satisfying \eqref{proofeq:valid prob1}.
    
    \item $\sum_{u_t = 1}^{|U_t|} \theta_{u_t} = \sum_{s' \in \mathcal{S}}\sum_{\substack{u_t \in U_t\\f(s_t, a_t, u_t) = s_{t+1} \\ f(\tilde{s}, \tilde{a}, u_t) = s'}}{\theta_{u_t}} + \sum_{s' \in \mathcal{S}}\sum_{\substack{u_t \in U_t\\f(s_t, a_t, u_t) \neq s_{t+1} \\ f(\tilde{s}, \tilde{a}, u_t) = s'}}{\theta_{u_t}} = 1$, satisfying \eqref{proofeq:valid prob2}.
\end{itemize}

Therefore,
\[
\max_{\theta}\left(\sum_{u_t = 1}^{|U_t|} \mu_{\tilde{s}, \tilde{a}, u_t, s_{t+1}} \cdot \mu_{s_t, a_t, u_t, s_{t+1}} \cdot \theta_{u_t}\right) = P(s_{t+1} \mid \tilde{s}, \tilde{a})
\]
and
\[\tilde{P}_{t}^{UB}(\tilde{s}' \mid \tilde{s}, \tilde{a}) =
\dfrac{\max_{\theta}\sum_{u_t = 1}^{|U_t|} \mu_{\tilde{s}, \tilde{a}, u_t, s_{t+1}} \cdot \mu_{s_t, a_t, u_t, s_{t+1}} \cdot \theta_{u_t}}{P(s_{t+1} \mid s_t, a_t)} = \dfrac{P(s_{t+1} \mid \tilde{s}, \tilde{a})}{P(s_{t+1} \mid s_t, a_t)}
\]
\end{proof}

These three sub-cases can be simplified to:

\[\tilde{P}_{t}^{UB}(\tilde{s}' \mid \tilde{s}, \tilde{a}) = \dfrac{\min\left(P(s_{t+1} \mid s_t, a_t), P(s_{t+1} \mid \tilde{s}, \tilde{a})\right)}{P(s_{t+1} \mid s_t, a_t)}\]
\end{proof}

\paragraph{Case 2: $\tilde{s}' \neq s_{t+1}$ and $\dfrac{P(s_{t+1} \mid \tilde{s}, \tilde{a})}{P(s_{t+1} \mid s_t, a_t)}\geq\dfrac{P(\tilde{s}' \mid \tilde{s}, \tilde{a})}{P(\tilde{s}' \mid s_t, a_t)}$ and $P(\tilde{s}' \mid s_t, a_t) > 0$}
\noindent
\begin{proof}
Because $\dfrac{P(s_{t+1} \mid \tilde{s}, \tilde{a})}{P(s_{t+1} \mid s_t, a_t)}\geq\dfrac{P(\tilde{s}' \mid \tilde{s}, \tilde{a})}{P(\tilde{s}' \mid s_t, a_t)}$ and $P(\tilde{s}' \mid s_t, a_t) > 0$, to satisfy CS \eqref{proofeq:counterfactual stability} we must have $\tilde{P}_{t}(\tilde{s}' \mid \tilde{s}, \tilde{a}) = 0$, therefore $\tilde{P}_{t}^{UB}(\tilde{s}' \mid \tilde{s}, \tilde{a}) = 0$. From Case 1, we know that we can find an assignment of $\theta$ that satisfies CS \eqref{proofeq:counterfactual stability} for all transitions, when finding the upper bound of the transition $\tilde{s}, \tilde{a} \rightarrow s_{t+1}$. Therefore, we have already proven in Case 1 that there exists an assignment of $\theta$ where $\tilde{P}_{t}(\tilde{s}' \mid \tilde{s}, \tilde{a}) = 0$, that satisfies the constraints \eqref{proofeq:interventional constraint}-\eqref{proofeq:valid prob2}. Therefore, there exists an assignment of $\theta$ that satisfies $\tilde{P}_{t}^{UB}(\tilde{s}' \mid \tilde{s}, \tilde{a}) = 0$.
\end{proof}

\pagebreak
\paragraph{Case 3: $\tilde{s}' \neq s_{t+1}$, $\dfrac{P(s_{t+1} \mid \tilde{s}, \tilde{a})}{P(s_{t+1} \mid s_t, a_t)}<\dfrac{P(\tilde{s}' \mid \tilde{s}, \tilde{a})}{P(\tilde{s}' \mid s_t, a_t)}$, and $P(\tilde{s}' \mid s_t, a_t) >0$}
\noindent
\begin{proof}
Because $P(\tilde{s}' \mid s_t, a_t) >0$, $\tilde{P}_{t}^{UB}(\tilde{s}' \mid \tilde{s}, \tilde{a}) \leq P(\tilde{s}' \mid \tilde{s}, \tilde{a})$ to satisfy Mon2 \eqref{proofeq:monotonicity2}. Therefore,
\begin{equation}
\label{eq: overlapping constraint 1}
\sum_{u_t = 1}^{|U_t|} \mu_{\tilde{s}, \tilde{a}, u_t, \tilde{s}'} \cdot \mu_{s_t, a_t, u_t, s_{t+1}} \cdot \theta_{u_t} \leq P(\tilde{s}' \mid \tilde{s}, \tilde{a}) \cdot P(s_{t+1} \mid s_t, a_t) 
\end{equation}

so that \[\tilde{P}_{t}^{UB}(\tilde{s}' \mid \tilde{s}, \tilde{a}) = \dfrac{\sum_{u_t = 1}^{|U_t|} \mu_{\tilde{s}, \tilde{a}, u_t, \tilde{s}'} \cdot \mu_{s_t, a_t, u_t, s_{t+1}} \cdot \theta_{u_t}}{P(s_{t+1} \mid s_t, a_t)} \leq \dfrac{P(\tilde{s}' \mid \tilde{s}, \tilde{a}) \cdot P(s_{t+1} \mid s_t, a_t)}{P(s_{t+1} \mid s_t, a_t)} = P(\tilde{s}' \mid \tilde{s}, \tilde{a})\]

Clearly, this is less than the maximum possible counterfactual probability proven in Lemma \ref{lemma:absolute max cf prob}: \[\sum_{u_t = 1}^{|U_t|} \mu_{\tilde{s}, \tilde{a}, u_t, \tilde{s}'} \cdot \mu_{s_t, a_t, u_t, s_{t+1}} \cdot \theta_{u_t} \leq P(\tilde{s}' \mid \tilde{s}, \tilde{a}) \cdot P(s_{t+1} \mid s_t, a_t) \leq \min\left(P(s_{t+1} \mid s_t, a_t), P(\tilde{s}' \mid \tilde{s}, \tilde{a})\right)\]

Also, $\tilde{P}_t(s_{t+1} \mid \tilde{s}, \tilde{a}) \geq P(s_{t+1} \mid \tilde{s}, \tilde{a})$ to satisfy Mon1 \eqref{proofeq:monotonicity1}. Therefore,

\begin{equation}
\label{eq: overlapping constraint 2}
\sum_{u_t = 1}^{|U_t|} \mu_{\tilde{s}, \tilde{a}, u_t, s_{t+1}} \cdot \mu_{s_t, a_t, u_t, s_{t+1}} \cdot \theta_{u_t} \geq P(s_{t+1} \mid \tilde{s}, \tilde{a}) \cdot P(s_{t+1} \mid s_t, a_t)    
\end{equation}

Since $\forall s,s' \in \mathcal{S}, \forall a \in \mathcal{A}, \forall u_t \in U_t, \mu_{s, a, u_t, s'} = 1 \iff f(s, a, u_t) = s'$ by definition of $\mu$, Eq. \eqref{eq: overlapping constraint 1} and Eq. \eqref{eq: overlapping constraint 2} are equivalent to:

\begin{equation}
\label{eq: overlapping constraint 3}
\sum_{\substack{u_t \in U_t \\f(s_t, a_t, u_t) = s_{t+1} \\ f(\tilde{s}, \tilde{a}, u_t) = \tilde{s}'}} \theta_{u_t} \leq P(\tilde{s}' \mid \tilde{s}, \tilde{a}) \cdot P(s_{t+1} \mid s_t, a_t)     
\end{equation}
and

\begin{equation}
\label{eq: overlapping constraint 4}
\sum_{\substack{u_t \in U_t \\f(s_t, a_t, u_t) = s_{t+1} \\ f(\tilde{s}, \tilde{a}, u_t) = s_{t+1}}} \theta_{u_t}\geq P(s_{t+1} \mid \tilde{s}, \tilde{a}) \cdot P(s_{t+1} \mid s_t, a_t)     
\end{equation}

respectively. As these notations are equivalent, we will use this second notation for brevity and clarity. Consider the following disjoint cases:
\begin{itemize}
    \item $P(s_{t+1} \mid s_t, a_t) - (P(s_{t+1} \mid s_t, a_t) \cdot P(s_{t+1} \mid \tilde{s}, \tilde{a})) < P(\tilde{s}' \mid \tilde{s}, \tilde{a}) \cdot P(s_{t+1} \mid s_t, a_t)$
    \item $P(s_{t+1} \mid s_t, a_t) - (P(s_{t+1} \mid s_t, a_t) \cdot P(s_{t+1} \mid \tilde{s}, \tilde{a})) \geq P(\tilde{s}' \mid \tilde{s}, \tilde{a}) \cdot P(s_{t+1} \mid s_t, a_t)$
\end{itemize}

\pagebreak
\paragraph{Case 3(a): $P(s_{t+1} \mid s_t, a_t) - (P(s_{t+1} \mid s_t, a_t) \cdot P(s_{t+1} \mid \tilde{s}, \tilde{a})) < P(\tilde{s}' \mid \tilde{s}, \tilde{a}) \cdot P(s_{t+1} \mid s_t, a_t)$}
\noindent
\begin{proof}
We can assign $\theta$ as follows:

\[
\begin{cases}
    \sum_{\substack{u_t \in U_t \\f(s_t, a_t, u_t) = s_{t+1} \\ f(\tilde{s}, \tilde{a}, u_t) = s_{t+1}}}{\theta_{u_t}} = P(s_{t+1} \mid s_t, a_t) \cdot P(s_{t+1} \mid \tilde{s}, \tilde{a})\\
    
    \sum_{\substack{u_t \in U_t \\f(s_t, a_t, u_t) = s_{t+1} \\ f(\tilde{s}, \tilde{a}, u_t) = \tilde{s}'}}{\theta_{u_t}} = P(s_{t+1} \mid s_t, a_t) - (P(s_{t+1} \mid s_t, a_t) \cdot P(s_{t+1} \mid \tilde{s}, \tilde{a})) \\
    
    \sum_{s' \in \mathcal{S}\setminus\{\tilde{s}', s_{t+1}\}}\sum_{\substack{u_t \in U_t \\f(s_t, a_t, u_t) = s_{t+1} \\ f(\tilde{s}, \tilde{a}, u_t) = s'}}{\theta_{u_t}} = 0\\
    
    \sum_{\substack{u_t \in U_t \\f(s_t, a_t, u_t) \neq s_{t+1} \\ f(\tilde{s}, \tilde{a}, u_t) = s_{t+1}}}{\theta_{u_t}} = P(s_{t+1} \mid \tilde{s}, \tilde{a}) - \sum_{\substack{u_t \in U_t \\f(s_t, a_t, u_t) = s_{t+1} \\ f(\tilde{s}, \tilde{a}, u_t) = s_{t+1}}}{\theta_{u_t}} \\
    
    \sum_{\substack{u_t \in U_t \\f(s_t, a_t, u_t) \neq s_{t+1} \\ f(\tilde{s}, \tilde{a}, u_t) = \tilde{s}'}}{\theta_{u_t}} = P(\tilde{s}' \mid \tilde{s}, \tilde{a}) - \sum_{\substack{u_t \in U_t \\f(s_t, a_t, u_t) = s_{t+1} \\ f(\tilde{s}, \tilde{a}, u_t) = \tilde{s}'}}{\theta_{u_t}} \\
    
    \sum_{s' \in \mathcal{S}\setminus\{\tilde{s}', s_{t+1}\}}\sum_{\substack{u_t \in U_t \\f(s_t, a_t, u_t) \neq s_{t+1} \\ f(\tilde{s}, \tilde{a}, u_t) = s'}}{\theta_{u_t}} = \sum_{s' \in \mathcal{S}\setminus\{\tilde{s}', s_{t+1}\}}P(\tilde{s}' \mid \tilde{s}, \tilde{a})\\
\end{cases}
\]

This assignment of $\theta$ satisfies the constraints of the linear optimisation problem, as follows:

\begin{itemize}        
    \item $\sum_{\substack{u_t \in U_t \\f(s_t, a_t, u_t) = s_{t+1}}} \theta_{u_t} = \sum_{\substack{u_t \in U_t \\f(s_t, a_t, u_t) = s_{t+1} \\ f(\tilde{s}, \tilde{a}, u_t) = \tilde{s}'}} \theta_{u_t} + \sum_{\substack{u_t \in U_t \\f(s_t, a_t, u_t) = s_{t+1} \\ f(\tilde{s}, \tilde{a}, u_t) = s_{t+1}}} \theta_{u_t} + \sum_{s' \in \mathcal{S}\setminus\{\tilde{s}', s_{t+1}\}}\sum_{\substack{u_t \in U_t \\f(s_t, a_t, u_t) = s_{t+1} \\ f(\tilde{s}, \tilde{a}, u_t) = s'}}{\theta_{u_t}} \\ = P(s_{t+1} \mid s_t, a_t)$, satisfying \eqref{proofeq:interventional constraint}.

    \item $\sum_{\substack{u_t \in U_t\\f(s_t, a_t, u_t) \neq s_{t+1}}}{\theta_{u_t}} = \sum_{\substack{u_t \in U_t \\f(s_t, a_t, u_t) \neq s_{t+1} \\ f(\tilde{s}, \tilde{a}, u_t) = \tilde{s}'}} \theta_{u_t} + \sum_{\substack{u_t \in U_t \\f(s_t, a_t, u_t) \neq s_{t+1} \\ f(\tilde{s}, \tilde{a}, u_t) = s_{t+1}}} \theta_{u_t} + \sum_{s' \in \mathcal{S}\setminus\{\tilde{s}', s_{t+1}\}}\sum_{\substack{u_t \in U_t \\f(s_t, a_t, u_t) \neq s_{t+1} \\ f(\tilde{s}, \tilde{a}, u_t) = s'}}{\theta_{u_t}} \\=1 -
    P(s_{t+1} \mid s_t, a_t)$, satisfying \eqref{proofeq:interventional constraint}.
    
    \item $\sum_{\substack{u_t \in U_t \\f(\tilde{s}, \tilde{a}, u_t) = \tilde{s}'}} \theta_{u_t} = \sum_{\substack{u_t \in U_t \\f(s_t, a_t, u_t) = s_{t+1} \\ f(\tilde{s}, \tilde{a}, u_t) = \tilde{s}'}} \theta_{u_t} + \sum_{\substack{u_t \in U_t \\f(s_t, a_t, u_t) \neq s_{t+1} \\ f(\tilde{s}, \tilde{a}, u_t) = \tilde{s}'}} \theta_{u_t} = P(\tilde{s}' \mid \tilde{s}, \tilde{a})$, \\satisfying \eqref{proofeq:interventional constraint}.

    \item $\sum_{\substack{u_t \in U_t \\f(\tilde{s}, \tilde{a}, u_t) = s_{t+1}}} \theta_{u_t} = \sum_{\substack{u_t \in U_t \\f(s_t, a_t, u_t) = s_{t+1} \\ f(\tilde{s}, \tilde{a}, u_t) = s_{t+1}}} \theta_{u_t} + \sum_{\substack{u_t \in U_t \\f(s_t, a_t, u_t) \neq s_{t+1} \\ f(\tilde{s}, \tilde{a}, u_t) = s_{t+1}}} \theta_{u_t} = P(s_{t+1} \mid \tilde{s}, \tilde{a})$, \\satisfying \eqref{proofeq:interventional constraint}.
    
    \item $\sum_{s' \in \mathcal{S}\setminus\{\tilde{s}', s_{t+1}\}}\sum_{\substack{u_t \in U_t \\f(\tilde{s}, \tilde{a}, u_t) = s'}} \theta_{u_t} = \sum_{s' \in \mathcal{S}\setminus\{\tilde{s}', s_{t+1}\}}\sum_{\substack{u_t \in U_t \\f(s_t, a_t, u_t) = s_{t+1} \\ f(\tilde{s}, \tilde{a}, u_t) = s'}} \theta_{u_t} \\+ \sum_{s' \in \mathcal{S}\setminus\{\tilde{s}', s_{t+1}\}}\sum_{\substack{u_t \in U_t \\f(s_t, a_t, u_t) \neq s_{t+1} \\ f(\tilde{s}, \tilde{a}, u_t) = s'}} \theta_{u_t} = 1 - P(\tilde{s}' \mid \tilde{s}, \tilde{a}) = \sum_{s' \in \mathcal{S}\setminus\{\tilde{s}'\}}P(s' \mid \tilde{s}, \tilde{a})$, \\therefore it is possible to assign $\theta$ such that \\$\forall s' \in \mathcal{S}\setminus\{\tilde{s}', s_{t+1}\}, \sum_{\substack{u_t \in U_t \\ f(\tilde{s}, \tilde{a}, u_t) = s'}}{\theta_{u_t}} = P(s' \mid \tilde{s}, \tilde{a})$,
    satisfying \eqref{proofeq:interventional constraint}.

    \item \eqref{eq: overlapping constraint 4} holds, therefore Mon1 \eqref{proofeq:monotonicity1} is satisfied.
    
    \item \eqref{eq: overlapping constraint 3} holds, therefore $\tilde{P}_t(\tilde{s}' \mid \tilde{s}, \tilde{a}) \leq P(\tilde{s}' \mid \tilde{s}, \tilde{a})$. Also, because \\$\sum_{s' \in \mathcal{S}\setminus\{\tilde{s}', s_{t+1}\}}\sum_{\substack{u_t \in U_t \\f(s_t, a_t, u_t) = s_{t+1} \\ f(\tilde{s}, \tilde{a}, u_t) = s'}}{\theta_{u_t}} = 0$, the counterfactual probability of all other \\transitions $\tilde{s}, \tilde{a} \rightarrow s'$ where $s' \in \mathcal{S}\setminus\{\tilde{s}', s_{t+1}\}$ will be $\tilde{P}_t(s' \mid \tilde{s}, \tilde{a}) = 0 \leq \tilde{P}_t(s' \mid \tilde{s}, \tilde{a})$. Therefore, Mon2 \eqref{proofeq:monotonicity2} is satisfied.

    \item CS \eqref{proofeq:counterfactual stability} doesn't apply to the transition $\tilde{s}, \tilde{a} \rightarrow s_{t+1}$, and is vacuously satisfied for the transition $\tilde{s}, \tilde{a} \rightarrow \tilde{s}'$  because \\$\dfrac{P(s_{t+1} \mid \tilde{s}, \tilde{a})}{P(s_{t+1} \mid s_t, a_t)}<\dfrac{P(\tilde{s}' \mid \tilde{s}, \tilde{a})}{P(\tilde{s}' \mid s_t, a_t)}$ (from conditions of Case 3). Also, because\\ $\sum_{s' \in \mathcal{S}\setminus\{\tilde{s}', s_{t+1}\}}\sum_{\substack{u_t \in U_t \\f(s_t, a_t, u_t) = s_{t+1} \\ f(\tilde{s}, \tilde{a}, u_t) = s'}}{\theta_{u_t}} = 0$, the counterfactual probability of all other transitions $\tilde{s}, \tilde{a} \rightarrow s'$ where $s' \in \mathcal{S}\setminus\{\tilde{s}', s_{t+1}\}$ will be $\tilde{P}_t(s' \mid \tilde{s}, \tilde{a}) = 0$, which satisfies CS \eqref{proofeq:counterfactual stability} for these transitions.

    \item $0 \leq \theta_{u_t} \leq 1, \forall u_t$, satisfying \eqref{proofeq:valid prob1}.
    
    \item $\sum_{u_t = 1}^{|U_t|} \theta_{u_t} = \sum_{\substack{u_t \in U_t \\f(s_t, a_t, u_t) = s_{t+1} \\ f(\tilde{s}, \tilde{a}, u_t) = \tilde{s}'}} \theta_{u_t} + \sum_{\substack{u_t \in U_t \\f(s_t, a_t, u_t) = s_{t+1} \\ f(\tilde{s}, \tilde{a}, u_t) = s_{t+1}}} \theta_{u_t} +  \sum_{s' \in \mathcal{S}\setminus\{\tilde{s}', s_{t+1}\}}\sum_{\substack{u_t \in U_t \\f(s_t, a_t, u_t) = s_{t+1} \\ f(\tilde{s}, \tilde{a}, u_t) = s'}}{\theta_{u_t}} \\+ \sum_{\substack{u_t \in U_t \\f(s_t, a_t, u_t) \neq s_{t+1} \\ f(\tilde{s}, \tilde{a}, u_t) = \tilde{s}'}} \theta_{u_t} + 
    \sum_{\substack{u_t \in U_t \\f(s_t, a_t, u_t) \neq s_{t+1} \\ f(\tilde{s}, \tilde{a}, u_t) = s_{t+1}}} \theta_{u_t} + \sum_{s' \in \mathcal{S}\setminus\{\tilde{s}', s_{t+1}\}}\sum_{\substack{u_t \in U_t \\f(s_t, a_t, u_t) \neq s_{t+1} \\ f(\tilde{s}, \tilde{a}, u_t) = s'}}{\theta_{u_t}} = 1$, satisfying \eqref{proofeq:valid prob2}.
\end{itemize}

All the constraints are satisfied, so this is a valid assignment of $\theta$ for this state-action pair.
\end{proof}
\paragraph{Case 3(b): $P(s_{t+1} \mid s_t, a_t) - (P(s_{t+1} \mid s_t, a_t) \cdot P(s_{t+1} \mid \tilde{s}, \tilde{a})) \geq P(\tilde{s}' \mid \tilde{s}, \tilde{a}) \cdot P(s_{t+1} \mid s_t, a_t)$}
\noindent
\begin{proof}
If $P(s_{t+1} \mid s_t, a_t) - (P(s_{t+1} \mid s_t, a_t) \cdot P(s_{t+1} \mid \tilde{s}, \tilde{a})) \geq P(\tilde{s}' \mid \tilde{s}, \tilde{a}) \cdot P(s_{t+1} \mid s_t, a_t)$, this implies $P(s_{t+1} \mid s_t, a_t) \geq P(\tilde{s}' \mid \tilde{s}, \tilde{a}) \cdot P(s_{t+1} \mid s_t, a_t)$. Therefore, we can assign $\theta$ as follows:

 \[
    \begin{cases}
        P(s_{t+1} \mid s_t, a_t) \cdot P(s_{t+1} \mid \tilde{s}, \tilde{a}) \leq \sum_{\substack{u_t \in U_t \\f(s_t, a_t, u_t) = s_{t+1} \\ f(\tilde{s}, \tilde{a}, u_t) = s_{t+1}}}{\theta_{u_t}} \leq P(s_{t+1} \mid \tilde{s}, \tilde{a}) \\
        
        \sum_{\substack{u_t \in U_t \\f(s_t, a_t, u_t) = s_{t+1} \\ f(\tilde{s}, \tilde{a}, u_t) = \tilde{s}'}}{\theta_{u_t}} = P(\tilde{s}' \mid \tilde{s}, \tilde{a}) \cdot P(s_{t+1} \mid s_t, a_t)\\
        
        0 \leq \sum_{s' \in \mathcal{S}\setminus\{\tilde{s}', s_{t+1}\}}\sum_{\substack{u_t \in U_t \\f(s_t, a_t, u_t) = s_{t+1} \\ f(\tilde{s}, \tilde{a}, u_t) = s'}}{\theta_{u_t}} \leq \sum_{s' \in \mathcal{S}\setminus\{\tilde{s}', s_{t+1}\}}P(s' \mid \tilde{s}, \tilde{a}) \cdot P(s_{t+1} \mid s_t, a_t)\\
        
        \sum_{\substack{u_t \in U_t \\f(s_t, a_t, u_t) \neq s_{t+1} \\ f(\tilde{s}, \tilde{a}, u_t) = s_{t+1}}}{\theta_{u_t}} = P(s_{t+1} \mid \tilde{s}, \tilde{a}) - \sum_{\substack{u_t \in U_t \\f(s_t, a_t, u_t) = s_{t+1} \\ f(\tilde{s}, \tilde{a}, u_t) = s_{t+1}}}{\theta_{u_t}} \\
        
        \sum_{\substack{u_t \in U_t \\f(s_t, a_t, u_t) \neq s_{t+1} \\ f(\tilde{s}, \tilde{a}, u_t) = \tilde{s}'}}{\theta_{u_t}} = P(\tilde{s}' \mid \tilde{s}, \tilde{a}) - \sum_{\substack{u_t \in U_t \\f(s_t, a_t, u_t) = s_{t+1} \\ f(\tilde{s}, \tilde{a}, u_t) = \tilde{s}'}}{\theta_{u_t}} \\
        
        \sum_{s' \in \mathcal{S}\setminus\{\tilde{s}', s_{t+1}\}}\sum_{\substack{u_t \in U_t \\f(s_t, a_t, u_t) \neq s_{t+1} \\ f(\tilde{s}, \tilde{a}, u_t) = s'}}{\theta_{u_t}} = \sum_{s' \in \mathcal{S}\setminus\{\tilde{s}', s_{t+1}\}}P(\tilde{s}' \mid \tilde{s}, \tilde{a}) - \sum_{s' \in \mathcal{S}\setminus\{\tilde{s}', s_{t+1}\}}\sum_{\substack{u_t \in U_t \\f(s_t, a_t, u_t) = s_{t+1} \\ f(\tilde{s}, \tilde{a}, u_t) = s'}}{\theta_{u_t}}\\
    \end{cases}
\]

s.t. $\sum_{\substack{u_t \in U_t \\f(s_t, a_t, u_t) = s_{t+1} \\ f(\tilde{s}, \tilde{a}, u_t) = \tilde{s}'}}{\theta_{u_t}} + \sum_{\substack{u_t \in U_t \\f(s_t, a_t, u_t) = s_{t+1} \\ f(\tilde{s}, \tilde{a}, u_t) = s_{t+1}}}{\theta_{u_t}} + \sum_{s' \in \mathcal{S}\setminus\{\tilde{s}', s_{t+1}\}}\sum_{\substack{u_t \in U_t \\f(s_t, a_t, u_t) = s_{t+1} \\ f(\tilde{s}, \tilde{a}, u_t) = s'}}{\theta_{u_t}} \\= P(s_{t+1} \mid s_t, a_t)$.\\\\

We do not know exactly how much probability will be assigned to $\sum_{\substack{u_t \in U_t \\f(s_t, a_t, u_t) = s_{t+1} \\ f(\tilde{s}, \tilde{a}, u_t) = s_{t+1}}}{\theta_{u_t}}$ and $\sum_{s' \in \mathcal{S} \setminus \{\tilde{s}', s_{t+1}\}}\sum_{\substack{u_t \in U_t \\f(s_t, a_t, u_t) = s_{t+1} \\ f(\tilde{s}, \tilde{a}, u_t)= s'}}{\theta_{u_t}}$. However, if $\theta$ satisfies the above assignment then $\theta$ will always satisfy the constraints of the optimisation problem, as follows:

\begin{itemize}        
    \item $\sum_{\substack{u_t \in U_t \\f(s_t, a_t, u_t) = s_{t+1}}} \theta_{u_t} = \sum_{\substack{u_t \in U_t \\f(s_t, a_t, u_t) = s_{t+1} \\ f(\tilde{s}, \tilde{a}, u_t) = \tilde{s}'}} \theta_{u_t} + \sum_{\substack{u_t \in U_t \\f(s_t, a_t, u_t) = s_{t+1} \\ f(\tilde{s}, \tilde{a}, u_t) = s_{t+1}}} \theta_{u_t} + \sum_{s' \in \mathcal{S}\setminus\{\tilde{s}', s_{t+1}\}}\sum_{\substack{u_t \in U_t \\f(s_t, a_t, u_t) = s_{t+1} \\ f(\tilde{s}, \tilde{a}, u_t) = s'}}{\theta_{u_t}} \\ = P(s_{t+1} \mid s_t, a_t)$, satisfying \eqref{proofeq:interventional constraint}.

    \item $\sum_{\substack{u_t \in U_t\\f(s_t, a_t, u_t) \neq s_{t+1}}}{\theta_{u_t}} = \sum_{\substack{u_t \in U_t \\f(s_t, a_t, u_t) \neq s_{t+1} \\ f(\tilde{s}, \tilde{a}, u_t) = \tilde{s}'}} \theta_{u_t} + \sum_{\substack{u_t \in U_t \\f(s_t, a_t, u_t) \neq s_{t+1} \\ f(\tilde{s}, \tilde{a}, u_t) = s_{t+1}}} \theta_{u_t} + \sum_{s' \in \mathcal{S}\setminus\{\tilde{s}', s_{t+1}\}}\sum_{\substack{u_t \in U_t \\f(s_t, a_t, u_t) \neq s_{t+1} \\ f(\tilde{s}, \tilde{a}, u_t) = s'}}{\theta_{u_t}} \\=1 -
    P(s_{t+1} \mid s_t, a_t)$, satisfying \eqref{proofeq:interventional constraint}
    
    \item $\sum_{\substack{u_t \in U_t \\f(\tilde{s}, \tilde{a}, u_t) = \tilde{s}'}} \theta_{u_t} = \sum_{\substack{u_t \in U_t \\f(s_t, a_t, u_t) = s_{t+1} \\ f(\tilde{s}, \tilde{a}, u_t) = \tilde{s}'}} \theta_{u_t} + \sum_{\substack{u_t \in U_t \\f(s_t, a_t, u_t) \neq s_{t+1} \\ f(\tilde{s}, \tilde{a}, u_t) = \tilde{s}'}} \theta_{u_t} = P(\tilde{s}' \mid \tilde{s}, \tilde{a})$, \\satisfying \eqref{proofeq:interventional constraint}.

    \item $\sum_{\substack{u_t \in U_t \\f(\tilde{s}, \tilde{a}, u_t) = s_{t+1}}} \theta_{u_t} = \sum_{\substack{u_t \in U_t \\f(s_t, a_t, u_t) = s_{t+1} \\ f(\tilde{s}, \tilde{a}, u_t) = s_{t+1}}} \theta_{u_t} + \sum_{\substack{u_t \in U_t \\f(s_t, a_t, u_t) \neq s_{t+1} \\ f(\tilde{s}, \tilde{a}, u_t) = s_{t+1}}} \theta_{u_t} = P(s_{t+1} \mid \tilde{s}, \tilde{a})$, \\satisfying \eqref{proofeq:interventional constraint}.

    \item $\sum_{s' \in \mathcal{S}\setminus\{\tilde{s}', s_{t+1}\}}\sum_{\substack{u_t \in U_t \\f(\tilde{s}, \tilde{a}, u_t) = s'}} \theta_{u_t} = \sum_{s' \in \mathcal{S}\setminus\{\tilde{s}', s_{t+1}\}}\sum_{\substack{u_t \in U_t \\f(s_t, a_t, u_t) = s_{t+1} \\ f(\tilde{s}, \tilde{a}, u_t) = s'}} \theta_{u_t} + \sum_{s' \in \mathcal{S}\setminus\{\tilde{s}', s_{t+1}\}}\sum_{\substack{u_t \in U_t \\f(s_t, a_t, u_t) \neq s_{t+1} \\ f(\tilde{s}, \tilde{a}, u_t) = s'}} \theta_{u_t} \\= 1 - P(\tilde{s}' \mid \tilde{s}, \tilde{a}) = \sum_{s' \in \mathcal{S}\setminus\{\tilde{s}'\}}P(s' \mid \tilde{s}, \tilde{a})$, therefore it is possible to assign $\theta$ such that $\forall s' \in \mathcal{S}\setminus\{\tilde{s}', s_{t+1}\}, \sum_{\substack{u_t \in U_t \\ f(\tilde{s}, \tilde{a}, u_t) = s'}}{\theta_{u_t}} = P(s' \mid \tilde{s}, \tilde{a})$, satisfying \eqref{proofeq:interventional constraint}.

    \item \eqref{eq: overlapping constraint 4} holds, therefore Mon1 \eqref{proofeq:monotonicity1} is satisfied.
    
    \item \eqref{eq: overlapping constraint 3} holds, so Mon2 \eqref{proofeq:monotonicity2} holds for the transition $\tilde{s}, \tilde{a} \rightarrow \tilde{s}'$.

    \item CS \eqref{proofeq:counterfactual stability} doesn't apply to the transition  $\tilde{s}, \tilde{a} \rightarrow s_{t+1}$, and is vacuously satisfied for the transition $\tilde{s}, \tilde{a} \rightarrow \tilde{s}'$ because $\dfrac{P(s_{t+1} \mid \tilde{s}, \tilde{a})}{P(s_{t+1} \mid s_t, a_t)}<\dfrac{P(\tilde{s}' \mid \tilde{s}, \tilde{a})}{P(\tilde{s}' \mid s_t, a_t)}$ (from conditions of Case 3).

    \item To ensure that Mon2 \eqref{proofeq:monotonicity2} and CS \eqref{proofeq:counterfactual stability} hold for all other transitions, we must consider two possible cases:

    \begin{itemize}
        \item If $P(s_{t+1} \mid \tilde{s}, \tilde{a}) \geq P(s_{t+1} \mid s_t, a_t)$, then we can assign $\theta$ such that:
        
        \[
            \sum_{s' \in \mathcal{S}\setminus\{\tilde{s}', s_{t+1}\}}\sum_{\substack{u_t \in U_t \\f(s_t, a_t, u_t) = s_{t+1} \\ f(\tilde{s}, \tilde{a}, u_t) = s'}}{\theta_{u_t}} = 0
        \]

        and 

        \[
            \sum_{\substack{u_t \in U_t \\f(s_t, a_t, u_t) = s_{t+1} \\ f(\tilde{s}, \tilde{a}, u_t) = s_{t+1}}}{\theta_{u_t}} = P(s_{t+1} \mid s_t, a_t) - \sum_{\substack{u_t \in U_t \\f(s_t, a_t, u_t) = s_{t+1} \\ f(\tilde{s}, \tilde{a}, u_t) = \tilde{s}'}}{\theta_{u_t}}
        \]

        which must be possible in our given assignment of $\theta$ because from the condition of Case 3(b) we have:

        \begin{align*}
            P(s_{t+1} \mid s_t, a_t) - (P(s_{t+1} \mid s_t, a_t) \cdot P(s_{t+1} \mid \tilde{s}, \tilde{a})) \geq P(\tilde{s}' \mid \tilde{s}, \tilde{a}) \cdot P(s_{t+1} \mid s_t, a_t) \\ \implies P(s_{t+1} \mid s_t, a_t) - ( P(\tilde{s}' \mid \tilde{s}, \tilde{a}) \cdot P(s_{t+1} \mid s_t, a_t))\geq P(s_{t+1} \mid s_t, a_t) \cdot P(s_{t+1} \mid \tilde{s}, \tilde{a})
        \end{align*}
        Because $\sum_{s' \in \mathcal{S}\setminus\{\tilde{s}', s_{t+1}\}}\sum_{\substack{u_t \in U_t \\f(s_t, a_t, u_t) = s_{t+1} \\ f(\tilde{s}, \tilde{a}, u_t) = s'}}{\theta_{u_t}} = 0$, $\forall s' \in \mathcal{S} \setminus \{\tilde{s}', s_{t+1}\}, \sum_{\substack{u_t \in U_t \\f(s_t, a_t, u_t) = s_{t+1} \\ f(\tilde{s}, \tilde{a}, u_t) = s'}}{\theta_{u_t}} \\= 0$, and so $\forall s' \in \mathcal{S} \setminus \{\tilde{s}', s_{t+1}\}, \tilde{P}_t(s' \mid \tilde{s}, \tilde{a}) = 0$. This satisfies the Mon2 \eqref{proofeq:monotonicity2} (since $0 \leq \tilde{P}_t(s' \mid \tilde{s}, \tilde{a})$) and CS \eqref{proofeq:counterfactual stability} constraints for all other transitions $\tilde{s}, \tilde{a} \rightarrow s'$.

        \item Otherwise, we must have $P(s_{t+1} \mid \tilde{s}, \tilde{a}) < P(s_{t+1} \mid s_t, a_t)$. We need to prove that there exists an assignment of $\theta$ where the Mon2 \eqref{proofeq:monotonicity2} and CS \eqref{proofeq:counterfactual stability} constraints are satisfied for all transitions $\tilde{s}, \tilde{a} \rightarrow s', \forall s' \in \mathcal{S}\setminus \{\tilde{s}', s_{t+1}\}$, which equates to proving:

        \[
        \max_{\theta}\left(\sum_{s' \in \mathcal{S}\setminus \{s_{t+1}\}}\sum_{\substack{u_t \in U_t\\f(s_t, a_t, u_t) = s_{t+1} \\ f(\tilde{s}, \tilde{a}, u_t) = s'}}{\theta_{u_t}}\right) \geq \sum_{s' \in \mathcal{S}\setminus\{\tilde{s}', s_{t+1}\}}P(s' \mid \tilde{s}, \tilde{a}) \cdot P(s_{t+1} \mid s_t, a_t)
        \]

        since 
        \[
        \sum_{s' \in \mathcal{S}\setminus \{s_{t+1}\}}\sum_{\substack{u_t \in U_t\\f(s_t, a_t, u_t) = s_{t+1} \\ f(\tilde{s}, \tilde{a}, u_t) = s'}}{\theta_{u_t}} \leq 
        \sum_{s' \in \mathcal{S}\setminus\{\tilde{s}', s_{t+1}\}}P(s' \mid \tilde{s}, \tilde{a}) \cdot P(s_{t+1} \mid s_t, a_t)
        \]
        
        in our given assignment of $\theta$. From Lemma \ref{lemma: existence of theta for overlapping case.}, we have:

        \begin{equation}
        \label{eq: implied max prob1}
            \max_{\theta}\left(\sum_{s' \in \mathcal{S}\setminus \{s_{t+1}\}}\sum_{\substack{u_t \in U_t\\f(s_t, a_t, u_t) = s_{t+1} \\ f(\tilde{s}, \tilde{a}, u_t) = s'}}{\theta_{u_t}}\right) \geq P(s_{t+1} \mid s_t, a_t) - P(s_{t+1} \mid \tilde{s}, \tilde{a})
        \end{equation}

        under the Mon2 \eqref{proofeq:monotonicity2} and CS \eqref{proofeq:counterfactual stability} constraints. Since we have assigned the maximum possible probability to $\sum_{\substack{u_t \in U_t \\f(s_t, a_t, u_t) = s_{t+1} \\ f(\tilde{s}, \tilde{a}, u_t) = \tilde{s}'}}{\theta_{u_t}} = P(\tilde{s}' \mid \tilde{s}, \tilde{a}) \cdot P(s_{t+1} \mid s_t, a_t)$ under the Mon2 \eqref{proofeq:monotonicity2} and CS \eqref{proofeq:counterfactual stability} constraints in our given assignment of $\theta$, \eqref{eq: implied max prob1} implies:

        \[\max \sum_{s' \in \mathcal{S}\setminus \{\tilde{s}', s_{t+1}\}}\sum_{\substack{u_t \in U_t\\f(s_t, a_t, u_t) = s_{t+1} \\ f(\tilde{s}, \tilde{a}, u_t) = s'}}{\theta_{u_t}} \geq P(s_{t+1} \mid s_t, a_t) - P(s_{t+1} \mid \tilde{s}, \tilde{a}) -  P(\tilde{s}' \mid \tilde{s}, \tilde{a}) \cdot P(s_{t+1} \mid s_t, a_t)\]

        Therefore, we know that we can assign $\theta$ such that $\sum_{\substack{u_t \in U_t \\f(s_t, a_t, u_t) = s_{t+1} \\ f(\tilde{s}, \tilde{a}, u_t) = \tilde{s}'}}{\theta_{u_t}} + \sum_{\substack{u_t \in U_t \\f(s_t, a_t, u_t) = s_{t+1} \\ f(\tilde{s}, \tilde{a}, u_t) = s_{t+1}}}{\theta_{u_t}} + \sum_{s' \in \mathcal{S}\setminus\{\tilde{s}', s_{t+1}\}}\sum_{\substack{u_t \in U_t \\f(s_t, a_t, u_t) = s_{t+1} \\ f(\tilde{s}, \tilde{a}, u_t) = s'}}{\theta_{u_t}} = P(s_{t+1} \mid s_t, a_t)$, and all transitions satisfy the Mon2 \eqref{proofeq:monotonicity2} and CS \eqref{proofeq:counterfactual stability} constraints.

    \end{itemize}
    
    \item $0 \leq \theta_{u_t} \leq 1, \forall u_t$, satisfying \eqref{proofeq:valid prob1}.
    
    \item $\sum_{u_t = 1}^{|U_t|} \theta_{u_t} = \sum_{\substack{u_t \in U_t \\f(s_t, a_t, u_t) = s_{t+1} \\ f(\tilde{s}, \tilde{a}, u_t) = \tilde{s}'}} \theta_{u_t} + \sum_{\substack{u_t \in U_t \\f(s_t, a_t, u_t) = s_{t+1} \\ f(\tilde{s}, \tilde{a}, u_t) = s_{t+1}}} \theta_{u_t} +  \sum_{s' \in \mathcal{S}\setminus\{\tilde{s}', s_{t+1}\}}\sum_{\substack{u_t \in U_t \\f(s_t, a_t, u_t) = s_{t+1} \\ f(\tilde{s}, \tilde{a}, u_t) = s'}}{\theta_{u_t}} \\+ \sum_{\substack{u_t \in U_t \\f(s_t, a_t, u_t) \neq s_{t+1} \\ f(\tilde{s}, \tilde{a}, u_t) = \tilde{s}'}} \theta_{u_t} + \sum_{\substack{u_t \in U_t \\f(s_t, a_t, u_t) \neq s_{t+1} \\ f(\tilde{s}, \tilde{a}, u_t) = s_{t+1}}} \theta_{u_t} + \sum_{s' \in \mathcal{S}\setminus\{\tilde{s}', s_{t+1}\}}\sum_{\substack{u_t \in U_t \\f(s_t, a_t, u_t) \neq s_{t+1} \\ f(\tilde{s}, \tilde{a}, u_t) = s'}}{\theta_{u_t}} = 1$, satisfying \eqref{proofeq:valid prob2}.
\end{itemize}

All the constraints are satisfied, so this is a valid assignment of $\theta$ for this state-action pair.
\end{proof}

These two cases can be simplified to:

\begin{align*}
\tilde{P}_{t}^{UB}(\tilde{s}' \mid \tilde{s}, \tilde{a}) &= \dfrac{\sum_{\substack{u_t \in U_t \\f(s_t, a_t, u_t) = s_{t+1} \\ f(\tilde{s}, \tilde{a}, u_t) = \tilde{s}'}}{\theta_{u_t}}}{P(s_{t+1} \mid s_t, a_t)}\\ &=
 \dfrac{\min(P(s_{t+1} \mid s_t, a_t) - (P(s_{t+1} \mid s_t, a_t) \cdot P(s_{t+1} \mid \tilde{s}, \tilde{a})), P(\tilde{s}' \mid \tilde{s}, \tilde{a}) \cdot P(s_{t+1} \mid s_t, a_t))}{P(s_{t+1} \mid s_t, a_t)}\\
 &= \min(1 - P(s_{t+1} \mid \tilde{s}, \tilde{a}), P(\tilde{s}' \mid \tilde{s}, \tilde{a}))
\end{align*}
\end{proof}

\pagebreak
\paragraph{Case 4: $\tilde{s}' \neq s_{t+1}$ and $P(\tilde{s}' \mid s_t, a_t) =0$}

\begin{proof}
    Because $P(\tilde{s}' \mid s_t, a_t) =0$, then the monotonicity and counterfactual stability constraints are vacuously satisfied for $\tilde{s}, \tilde{a} \rightarrow \tilde{s}'$. Therefore, as proven in Lemma \ref{lemma:absolute max cf prob}:

    \begin{equation}
    \label{eq:overlapping constraint 5}
        \sum_{u_t = 1}^{|U_t|} \mu_{\tilde{s}, \tilde{a}, u_t, \tilde{s}'} \cdot \mu_{s_t, a_t, u_t, s_{t+1}} \cdot \theta_{u_t} \leq \min\left(P(\tilde{s}' \mid \tilde{s}, \tilde{a}), P(s_{t+1} \mid s_t, a_t)\right)
    \end{equation}
        Also, $\tilde{P}_t(s_{t+1} \mid \tilde{s}, \tilde{a}) \geq P(s_{t+1} \mid \tilde{s}, \tilde{a})$ because of the monotonicity constraint, therefore:

        \begin{equation}
        \label{eq:overlapping constraint 6}
            \sum_{u_t = 1}^{|U_t|} \mu_{\tilde{s}, \tilde{a}, u_t, s_{t+1}} \cdot \mu_{s_t, a_t, u_t, s_{t+1}} \cdot \theta_{u_t} \geq P(s_{t+1} \mid \tilde{s}, \tilde{a}) \cdot P(s_{t+1} \mid s_t, a_t)
        \end{equation}

        Since $\forall s,s' \in \mathcal{S}, \forall a \in \mathcal{A}, \forall u_t \in U_t, \mu_{s, a, u_t, s'} = 1 \iff f(s, a, u_t) = s'$ by definition of $\mu$, Eq. \eqref{eq:overlapping constraint 5} and Eq. \eqref{eq:overlapping constraint 6} are equivalent to:

\begin{equation}
\label{eq: overlapping constraint 7}
\sum_{\substack{u_t \in U_t \\f(s_t, a_t, u_t) = s_{t+1} \\ f(\tilde{s}, \tilde{a}, u_t) = \tilde{s}'}} \theta_{u_t} \leq \min\left(P(\tilde{s}' \mid \tilde{s}, \tilde{a}), P(s_{t+1} \mid s_t, a_t)\right)     
\end{equation}
and

\begin{equation}
\label{eq: overlapping constraint 8}
\sum_{\substack{u_t \in U_t \\f(s_t, a_t, u_t) = s_{t+1} \\ f(\tilde{s}, \tilde{a}, u_t) = s_{t+1}}} \theta_{u_t}\geq P(s_{t+1} \mid \tilde{s}, \tilde{a}) \cdot P(s_{t+1} \mid s_t, a_t)     
\end{equation}

respectively. As these notations are equivalent, we will use this second notation for brevity and clarity. Consider the following disjoint cases:

\begin{itemize}
    \item $P(s_{t+1} \mid s_t, a_t) - (P(s_{t+1} \mid s_t, a_t) \cdot P(s_{t+1} \mid \tilde{s}, \tilde{a})) < P(\tilde{s}' \mid \tilde{s}, \tilde{a})$
    \item $P(s_{t+1} \mid s_t, a_t) - (P(s_{t+1} \mid s_t, a_t) \cdot P(s_{t+1} \mid \tilde{s}, \tilde{a})) \geq P(\tilde{s}' \mid \tilde{s}, \tilde{a})$
\end{itemize}

\pagebreak
\paragraph{Case 4(a): $P(s_{t+1} \mid s_t, a_t) - (P(s_{t+1} \mid s_t, a_t) \cdot P(s_{t+1} \mid \tilde{s}, \tilde{a})) < P(\tilde{s}' \mid \tilde{s}, \tilde{a})$}
\noindent
\begin{proof}
We can assign $\theta$ as follows:

\[
\begin{cases}
    \sum_{\substack{u_t \in U_t \\f(s_t, a_t, u_t) = s_{t+1} \\ f(\tilde{s}, \tilde{a}, u_t) = s_{t+1}}}{\theta_{u_t}} = P(s_{t+1} \mid s_t, a_t) \cdot P(s_{t+1} \mid \tilde{s}, \tilde{a})\\
    
    \sum_{\substack{u_t \in U_t \\f(s_t, a_t, u_t) = s_{t+1} \\ f(\tilde{s}, \tilde{a}, u_t) = \tilde{s}'}}{\theta_{u_t}} = P(s_{t+1} \mid s_t, a_t) - (P(s_{t+1} \mid s_t, a_t) \cdot P(s_{t+1} \mid \tilde{s}, \tilde{a})) \\
    
    \sum_{s' \in \mathcal{S}\setminus\{\tilde{s}', s_{t+1}\}}\sum_{\substack{u_t \in U_t \\f(s_t, a_t, u_t) = s_{t+1} \\ f(\tilde{s}, \tilde{a}, u_t) = s'}}{\theta_{u_t}} = 0\\
    
    \sum_{\substack{u_t \in U_t \\f(s_t, a_t, u_t) \neq s_{t+1} \\ f(\tilde{s}, \tilde{a}, u_t) = s_{t+1}}}{\theta_{u_t}} = P(s_{t+1} \mid \tilde{s}, \tilde{a}) - \sum_{\substack{u_t \in U_t \\f(s_t, a_t, u_t) = s_{t+1} \\ f(\tilde{s}, \tilde{a}, u_t) = s_{t+1}}}{\theta_{u_t}} \\
    
    \sum_{\substack{u_t \in U_t \\f(s_t, a_t, u_t) \neq s_{t+1} \\ f(\tilde{s}, \tilde{a}, u_t) = \tilde{s}'}}{\theta_{u_t}} = P(\tilde{s}' \mid \tilde{s}, \tilde{a}) - \sum_{\substack{u_t \in U_t \\f(s_t, a_t, u_t) = s_{t+1} \\ f(\tilde{s}, \tilde{a}, u_t) = \tilde{s}'}}{\theta_{u_t}} \\
    
    \sum_{s' \in \mathcal{S}\setminus\{\tilde{s}', s_{t+1}\}}\sum_{\substack{u_t \in U_t \\f(s_t, a_t, u_t) \neq s_{t+1} \\ f(\tilde{s}, \tilde{a}, u_t) = s'}}{\theta_{u_t}} = \sum_{s' \in \mathcal{S}\setminus\{\tilde{s}', s_{t+1}\}}P(\tilde{s}' \mid \tilde{s}, \tilde{a})\\
\end{cases}
\]

    This assignment of $\theta$ satisfies the constraints of the linear optimisation problem, as follows:

    \begin{itemize}        
    \item $\sum_{\substack{u_t \in U_t \\f(s_t, a_t, u_t) = s_{t+1}}} \theta_{u_t} = \sum_{\substack{u_t \in U_t \\f(s_t, a_t, u_t) = s_{t+1} \\ f(\tilde{s}, \tilde{a}, u_t) = \tilde{s}'}} \theta_{u_t} + \sum_{\substack{u_t \in U_t \\f(s_t, a_t, u_t) = s_{t+1} \\ f(\tilde{s}, \tilde{a}, u_t) = s_{t+1}}} \theta_{u_t} + \sum_{s' \in \mathcal{S}\setminus\{\tilde{s}', s_{t+1}\}}\sum_{\substack{u_t \in U_t \\f(s_t, a_t, u_t) = s_{t+1} \\ f(\tilde{s}, \tilde{a}, u_t) = s'}}{\theta_{u_t}} \\ = P(s_{t+1} \mid s_t, a_t)$, satisfying \eqref{proofeq:interventional constraint}.

    \item $\sum_{\substack{u_t \in U_t\\f(s_t, a_t, u_t) \neq s_{t+1}}}{\theta_{u_t}} = \sum_{\substack{u_t \in U_t \\f(s_t, a_t, u_t) \neq s_{t+1} \\ f(\tilde{s}, \tilde{a}, u_t) = \tilde{s}'}} \theta_{u_t} + \sum_{\substack{u_t \in U_t \\f(s_t, a_t, u_t) \neq s_{t+1} \\ f(\tilde{s}, \tilde{a}, u_t) = s_{t+1}}} \theta_{u_t} + \sum_{s' \in \mathcal{S}\setminus\{\tilde{s}', s_{t+1}\}}\sum_{\substack{u_t \in U_t \\f(s_t, a_t, u_t) \neq s_{t+1} \\ f(\tilde{s}, \tilde{a}, u_t) = s'}}{\theta_{u_t}} \\=1 -
    P(s_{t+1} \mid s_t, a_t)$, satisfying \eqref{proofeq:interventional constraint}.
    
    \item $\sum_{\substack{u_t \in U_t \\f(\tilde{s}, \tilde{a}, u_t) = \tilde{s}'}} \theta_{u_t} = \sum_{\substack{u_t \in U_t \\f(s_t, a_t, u_t) = s_{t+1} \\ f(\tilde{s}, \tilde{a}, u_t) = \tilde{s}'}} \theta_{u_t} + \sum_{\substack{u_t \in U_t \\f(s_t, a_t, u_t) \neq s_{t+1} \\ f(\tilde{s}, \tilde{a}, u_t) = \tilde{s}'}} \theta_{u_t} = P(\tilde{s}' \mid \tilde{s}, \tilde{a})$, \\satisfying \eqref{proofeq:interventional constraint}.

    \item $\sum_{\substack{u_t \in U_t \\f(\tilde{s}, \tilde{a}, u_t) = s_{t+1}}} \theta_{u_t} = \sum_{\substack{u_t \in U_t \\f(s_t, a_t, u_t) = s_{t+1} \\ f(\tilde{s}, \tilde{a}, u_t) = s_{t+1}}} \theta_{u_t} + \sum_{\substack{u_t \in U_t \\f(s_t, a_t, u_t) \neq s_{t+1} \\ f(\tilde{s}, \tilde{a}, u_t) = s_{t+1}}} \theta_{u_t} = P(s_{t+1} \mid \tilde{s}, \tilde{a})$, \\satisfying \eqref{proofeq:interventional constraint}.
    
    \item $\sum_{s' \in \mathcal{S}\setminus\{\tilde{s}', s_{t+1}\}}\sum_{\substack{u_t \in U_t \\f(\tilde{s}, \tilde{a}, u_t) = s'}} \theta_{u_t} = \sum_{s' \in \mathcal{S}\setminus\{\tilde{s}', s_{t+1}\}}\sum_{\substack{u_t \in U_t \\f(s_t, a_t, u_t) = s_{t+1} \\ f(\tilde{s}, \tilde{a}, u_t) = s'}} \theta_{u_t} + \sum_{s' \in \mathcal{S}\setminus\{\tilde{s}', s_{t+1}\}}\sum_{\substack{u_t \in U_t \\f(s_t, a_t, u_t) \neq s_{t+1} \\ f(\tilde{s}, \tilde{a}, u_t) = s'}} \theta_{u_t} \\= 1 - P(\tilde{s}' \mid \tilde{s}, \tilde{a}) = \sum_{s' \in \mathcal{S}\setminus\{\tilde{s}'\}}P(s' \mid \tilde{s}, \tilde{a})$, therefore it is possible to assign $\theta$ such that $\forall s' \in \mathcal{S}\setminus\{\tilde{s}', s_{t+1}\}, \sum_{\substack{u_t \in U_t \\ f(\tilde{s}, \tilde{a}, u_t) = s'}}{\theta_{u_t}} = P(s' \mid \tilde{s}, \tilde{a})$, satisfying \eqref{proofeq:interventional constraint}.

    \item \eqref{eq: overlapping constraint 8} holds, therefore Mon1 \eqref{proofeq:monotonicity1} is satisfied.
    
    \item Because $P(\tilde{s}' \mid s_t, a_t) =0$, Mon2 \eqref{proofeq:monotonicity2} holds for the transition $\tilde{s}, \tilde{a} \rightarrow \tilde{s}'$. Also, because $\sum_{s' \in \mathcal{S}\setminus\{\tilde{s}', s_{t+1}\}}\sum_{\substack{u_t \in U_t \\f(s_t, a_t, u_t) = s_{t+1} \\ f(\tilde{s}, \tilde{a}, u_t) = s'}}{\theta_{u_t}} = 0$, the counterfactual probability of all other transitions $\tilde{s}, \tilde{a} \rightarrow s'$ where $s' \in \mathcal{S}\setminus\{\tilde{s}', s_{t+1}\}$ will be $\tilde{P}_t(s' \mid \tilde{s}, \tilde{a}) = 0$. Therefore, Mon2 \eqref{proofeq:monotonicity2} is satisfied.

    \item CS \eqref{proofeq:counterfactual stability} doesn't apply to the transition $\tilde{s}, \tilde{a} \rightarrow s_{t+1}$, and is vacuously satisfied for the transition $\tilde{s}, \tilde{a} \rightarrow \tilde{s}'$  because $P(\tilde{s}' \mid s_t, a_t) =0$. Also, because \\$\sum_{s' \in \mathcal{S}\setminus\{\tilde{s}', s_{t+1}\}}\sum_{\substack{u_t \in U_t \\f(s_t, a_t, u_t) = s_{t+1} \\ f(\tilde{s}, \tilde{a}, u_t) = s'}}{\theta_{u_t}} = 0$, the counterfactual probability of all other transitions $\tilde{s}, \tilde{a} \rightarrow s'$ where $s' \in \mathcal{S}\setminus\{\tilde{s}', s_{t+1}\}$ will be $\tilde{P}_t(s' \mid \tilde{s}, \tilde{a}) = 0$, which satisfies CS \eqref{proofeq:counterfactual stability} for all other transitions.

    \item $0 \leq \theta_{u_t} \leq 1, \forall u_t$, satisfying \eqref{proofeq:valid prob1}.
    
    \item $\sum_{u_t = 1}^{|U_t|} \theta_{u_t} = \sum_{\substack{u_t \in U_t \\f(s_t, a_t, u_t) = s_{t+1} \\ f(\tilde{s}, \tilde{a}, u_t) = \tilde{s}'}} \theta_{u_t} + \sum_{\substack{u_t \in U_t \\f(s_t, a_t, u_t) = s_{t+1} \\ f(\tilde{s}, \tilde{a}, u_t) = s_{t+1}}} \theta_{u_t} +  \sum_{s' \in \mathcal{S}\setminus\{\tilde{s}', s_{t+1}\}}\sum_{\substack{u_t \in U_t \\f(s_t, a_t, u_t) = s_{t+1} \\ f(\tilde{s}, \tilde{a}, u_t) = s'}}{\theta_{u_t}} \\+ \sum_{\substack{u_t \in U_t \\f(s_t, a_t, u_t) \neq s_{t+1} \\ f(\tilde{s}, \tilde{a}, u_t) = \tilde{s}'}} \theta_{u_t} + 
    \sum_{\substack{u_t \in U_t \\f(s_t, a_t, u_t) \neq s_{t+1} \\ f(\tilde{s}, \tilde{a}, u_t) = s_{t+1}}} \theta_{u_t} + \sum_{s' \in \mathcal{S}\setminus\{\tilde{s}', s_{t+1}\}}\sum_{\substack{u_t \in U_t \\f(s_t, a_t, u_t) \neq s_{t+1} \\ f(\tilde{s}, \tilde{a}, u_t) = s'}}{\theta_{u_t}} = 1$, satisfying \eqref{proofeq:valid prob2}.
\end{itemize}

All the constraints are satisfied, so this is a valid assignment of $\theta$ for this state-action pair.
\end{proof}

\paragraph{Case 4(b): $P(s_{t+1} \mid s_t, a_t) - (P(s_{t+1} \mid s_t, a_t) \cdot P(s_{t+1} \mid \tilde{s}, \tilde{a})) \geq P(\tilde{s}' \mid \tilde{s}, \tilde{a})$}
\noindent
\begin{proof}
We can assign $\theta$ as follows:

 \[
    \begin{cases}
        P(s_{t+1} \mid s_t, a_t) \cdot P(s_{t+1} \mid \tilde{s}, \tilde{a}) \leq \sum_{\substack{u_t \in U_t \\f(s_t, a_t, u_t) = s_{t+1} \\ f(\tilde{s}, \tilde{a}, u_t) = s_{t+1}}}{\theta_{u_t}} \leq P(s_{t+1} \mid \tilde{s}, \tilde{a}) \\
        \sum_{\substack{u_t \in U_t \\f(s_t, a_t, u_t) = s_{t+1} \\ f(\tilde{s}, \tilde{a}, u_t) = \tilde{s}'}}{\theta_{u_t}} = P(\tilde{s}' \mid \tilde{s}, \tilde{a})\\
        0 \leq \sum_{s' \in \mathcal{S}\setminus\{\tilde{s}', s_{t+1}\}}\sum_{\substack{u_t \in U_t \\f(s_t, a_t, u_t) = s_{t+1} \\ f(\tilde{s}, \tilde{a}, u_t) = s'}}{\theta_{u_t}} \leq \sum_{s' \in \mathcal{S}\setminus\{\tilde{s}', s_{t+1}\}}P(s' \mid \tilde{s}, \tilde{a}) \cdot P(s_{t+1} \mid s_t, a_t)\\
        \sum_{\substack{u_t \in U_t \\f(s_t, a_t, u_t) \neq s_{t+1} \\ f(\tilde{s}, \tilde{a}, u_t) = s_{t+1}}}{\theta_{u_t}} = P(s_{t+1} \mid \tilde{s}, \tilde{a}) - \sum_{\substack{u_t \in U_t \\f(s_t, a_t, u_t) = s_{t+1} \\ f(\tilde{s}, \tilde{a}, u_t) = s_{t+1}}}{\theta_{u_t}} \\
        \sum_{\substack{u_t \in U_t \\f(s_t, a_t, u_t) \neq s_{t+1} \\ f(\tilde{s}, \tilde{a}, u_t) = \tilde{s}'}}{\theta_{u_t}} = 0\\
        \sum_{s' \in \mathcal{S}\setminus\{\tilde{s}', s_{t+1}\}}\sum_{\substack{u_t \in U_t \\f(s_t, a_t, u_t) \neq s_{t+1} \\ f(\tilde{s}, \tilde{a}, u_t) = s'}}{\theta_{u_t}} = \sum_{s' \in \mathcal{S}\setminus\{\tilde{s}', s_{t+1}\}}P(\tilde{s}' \mid \tilde{s}, \tilde{a}) - \sum_{s' \in \mathcal{S}\setminus\{\tilde{s}', s_{t+1}\}}\sum_{\substack{u_t \in U_t \\f(s_t, a_t, u_t) = s_{t+1} \\ f(\tilde{s}, \tilde{a}, u_t) = s'}}{\theta_{u_t}}\\
    \end{cases}
\]

    s.t. $\sum_{\substack{u_t \in U_t \\f(s_t, a_t, u_t) = s_{t+1} \\ f(\tilde{s}, \tilde{a}, u_t) = \tilde{s}'}}{\theta_{u_t}} + \sum_{\substack{u_t \in U_t \\f(s_t, a_t, u_t) = s_{t+1} \\ f(\tilde{s}, \tilde{a}, u_t) = s_{t+1}}}{\theta_{u_t}} + \sum_{s' \in \mathcal{S}\setminus\{\tilde{s}', s_{t+1}\}}\sum_{\substack{u_t \in U_t \\f(s_t, a_t, u_t) = s_{t+1} \\ f(\tilde{s}, \tilde{a}, u_t) = s'}}{\theta_{u_t}} \\= P(s_{t+1} \mid s_t, a_t)$.\\\\
    
    We do not know exactly how much probability will be assigned to $\sum_{\substack{u_t \in U_t \\f(s_t, a_t, u_t) = s_{t+1} \\ f(\tilde{s}, \tilde{a}, u_t) = s_{t+1}}}{\theta_{u_t}}$ and $\sum_{s' \in \mathcal{S} \setminus \{\tilde{s}', s_{t+1}\}}\sum_{\substack{u_t \in U_t \\f(s_t, a_t, u_t) = s_{t+1} \\ f(\tilde{s}, \tilde{a}, u_t)= s'}}{\theta_{u_t}}$. However, if $\theta$ satisfies the above assignment then $\theta$ will always satisfy the constraints of the optimisation problem, as follows:

    \begin{itemize}        
    \item $\sum_{\substack{u_t \in U_t \\f(s_t, a_t, u_t) = s_{t+1}}} \theta_{u_t} = \sum_{\substack{u_t \in U_t \\f(s_t, a_t, u_t) = s_{t+1} \\ f(\tilde{s}, \tilde{a}, u_t) = \tilde{s}'}} \theta_{u_t} + \sum_{\substack{u_t \in U_t \\f(s_t, a_t, u_t) = s_{t+1} \\ f(\tilde{s}, \tilde{a}, u_t) = s_{t+1}}} \theta_{u_t} + \sum_{s' \in \mathcal{S}\setminus\{\tilde{s}', s_{t+1}\}}\sum_{\substack{u_t \in U_t \\f(s_t, a_t, u_t) = s_{t+1} \\ f(\tilde{s}, \tilde{a}, u_t) = s'}}{\theta_{u_t}} \\ = P(s_{t+1} \mid s_t, a_t)$, satisfying \eqref{proofeq:interventional constraint}.

    \item $\sum_{\substack{u_t \in U_t\\f(s_t, a_t, u_t) \neq s_{t+1}}}{\theta_{u_t}} = \sum_{\substack{u_t \in U_t \\f(s_t, a_t, u_t) \neq s_{t+1} \\ f(\tilde{s}, \tilde{a}, u_t) = \tilde{s}'}} \theta_{u_t} + \sum_{\substack{u_t \in U_t \\f(s_t, a_t, u_t) \neq s_{t+1} \\ f(\tilde{s}, \tilde{a}, u_t) = s_{t+1}}} \theta_{u_t} + \sum_{s' \in \mathcal{S}\setminus\{\tilde{s}', s_{t+1}\}}\sum_{\substack{u_t \in U_t \\f(s_t, a_t, u_t) \neq s_{t+1} \\ f(\tilde{s}, \tilde{a}, u_t) = s'}}{\theta_{u_t}} \\=1 -
    P(s_{t+1} \mid s_t, a_t)$, satisfying \eqref{proofeq:interventional constraint}.
    
    \item $\sum_{\substack{u_t \in U_t \\f(\tilde{s}, \tilde{a}, u_t) = \tilde{s}'}} \theta_{u_t} = \sum_{\substack{u_t \in U_t \\f(s_t, a_t, u_t) = s_{t+1} \\ f(\tilde{s}, \tilde{a}, u_t) = \tilde{s}'}} \theta_{u_t} + \sum_{\substack{u_t \in U_t \\f(s_t, a_t, u_t) \neq s_{t+1} \\ f(\tilde{s}, \tilde{a}, u_t) = \tilde{s}'}} \theta_{u_t} = P(\tilde{s}' \mid \tilde{s}, \tilde{a})$, \\satisfying \eqref{proofeq:interventional constraint}.

    \item $\sum_{\substack{u_t \in U_t \\f(\tilde{s}, \tilde{a}, u_t) = s_{t+1}}} \theta_{u_t} = \sum_{\substack{u_t \in U_t \\f(s_t, a_t, u_t) = s_{t+1} \\ f(\tilde{s}, \tilde{a}, u_t) = s_{t+1}}} \theta_{u_t} + \sum_{\substack{u_t \in U_t \\f(s_t, a_t, u_t) \neq s_{t+1} \\ f(\tilde{s}, \tilde{a}, u_t) = s_{t+1}}} \theta_{u_t} = P(s_{t+1} \mid \tilde{s}, \tilde{a})$, \\satisfying \eqref{proofeq:interventional constraint}.
    
    \item $\sum_{s' \in \mathcal{S}\setminus\{\tilde{s}', s_{t+1}\}}\sum_{\substack{u_t \in U_t \\f(\tilde{s}, \tilde{a}, u_t) = s'}} \theta_{u_t} = \sum_{s' \in \mathcal{S}\setminus\{\tilde{s}', s_{t+1}\}}\sum_{\substack{u_t \in U_t \\f(s_t, a_t, u_t) = s_{t+1} \\ f(\tilde{s}, \tilde{a}, u_t) = s'}} \theta_{u_t} + \sum_{s' \in \mathcal{S}\setminus\{\tilde{s}', s_{t+1}\}}\sum_{\substack{u_t \in U_t \\f(s_t, a_t, u_t) \neq s_{t+1} \\ f(\tilde{s}, \tilde{a}, u_t) = s'}} \theta_{u_t} \\= 1 - P(\tilde{s}' \mid \tilde{s}, \tilde{a}) = \sum_{s' \in \mathcal{S}\setminus\{\tilde{s}'\}}P(s' \mid \tilde{s}, \tilde{a})$, therefore it is possible to assign $\theta$ such that $\forall s' \in \mathcal{S}\setminus\{\tilde{s}', s_{t+1}\}, \sum_{\substack{u_t \in U_t \\ f(\tilde{s}, \tilde{a}, u_t) = s'}}{\theta_{u_t}} = P(s' \mid \tilde{s}, \tilde{a})$, satisfying \eqref{proofeq:interventional constraint}.
    
    \item \eqref{eq: overlapping constraint 8} holds, therefore Mon1 \eqref{proofeq:monotonicity1} is satisfied.
    
    \item Because $P(\tilde{s}' \mid s_t, a_t) = 0$, Mon2 \eqref{proofeq:monotonicity2} holds for $\tilde{s}, \tilde{a} \rightarrow \tilde{s}'$.

    \item CS \eqref{proofeq:counterfactual stability} doesn't apply to the transition $\tilde{s}, \tilde{a} \rightarrow s_{t+1}$, and is vacuously satisfied for the transition $\tilde{s}, \tilde{a} \rightarrow \tilde{s}'$ because $P(\tilde{s}' \mid s_t, a_t) =0$.

    \item To ensure that Mon2 \eqref{proofeq:monotonicity2} and CS \eqref{proofeq:counterfactual stability} hold for all other transitions, we must consider two possible cases:

    \begin{itemize}
        \item If $P(s_{t+1} \mid \tilde{s}, \tilde{a}) \geq P(s_{t+1} \mid s_t, a_t)$, then we can assign $\theta$ such that:
        
        \[
            \sum_{s' \in \mathcal{S}\setminus\{\tilde{s}', s_{t+1}\}}\sum_{\substack{u_t \in U_t \\f(s_t, a_t, u_t) = s_{t+1} \\ f(\tilde{s}, \tilde{a}, u_t) = s'}}{\theta_{u_t}} = 0
        \]

        and 

        \[
            \sum_{\substack{u_t \in U_t \\f(s_t, a_t, u_t) = s_{t+1} \\ f(\tilde{s}, \tilde{a}, u_t) = s_{t+1}}}{\theta_{u_t}} = P(s_{t+1} \mid s_t, a_t) - \sum_{\substack{u_t \in U_t \\f(s_t, a_t, u_t) = s_{t+1} \\ f(\tilde{s}, \tilde{a}, u_t) = \tilde{s}'}}{\theta_{u_t}}
        \]

        which must be possible in our given assignment of $\theta$ because, from the condition of Case 4(b), we have:

        \begin{align*}
            P(s_{t+1} \mid s_t, a_t) - (P(s_{t+1} \mid s_t, a_t) \cdot P(s_{t+1} \mid \tilde{s}, \tilde{a})) \geq P(\tilde{s}' \mid \tilde{s}, \tilde{a}) \\ \implies P(s_{t+1} \mid s_t, a_t) - P(\tilde{s}' \mid \tilde{s}, \tilde{a}) \geq P(s_{t+1} \mid s_t, a_t) \cdot P(s_{t+1} \mid \tilde{s}, \tilde{a})
        \end{align*}
        Because $\sum_{s' \in \mathcal{S}\setminus\{\tilde{s}', s_{t+1}\}}\sum_{\substack{u_t \in U_t \\f(s_t, a_t, u_t) = s_{t+1} \\ f(\tilde{s}, \tilde{a}, u_t) = s'}}{\theta_{u_t}} = 0$, $\forall s' \in \mathcal{S} \setminus \{\tilde{s}', s_{t+1}\}, \sum_{\substack{u_t \in U_t \\f(s_t, a_t, u_t) = s_{t+1} \\ f(\tilde{s}, \tilde{a}, u_t) = s'}}{\theta_{u_t}} = 0$, and so $\forall s' \in \mathcal{S} \setminus \{\tilde{s}', s_{t+1}\}, \tilde{P}_t(s' \mid \tilde{s}, \tilde{a}) = 0$. This satisfies the Mon2 \eqref{proofeq:monotonicity2} (since $0 \leq \tilde{P}_t(s' \mid \tilde{s}, \tilde{a})$) and CS \eqref{proofeq:counterfactual stability} constraints for all other transitions $\tilde{s}, \tilde{a} \rightarrow s'$.

        \item Otherwise, we must have $P(s_{t+1} \mid \tilde{s}, \tilde{a}) < P(s_{t+1} \mid s_t, a_t)$. We need to prove that there exists an assignment of $\theta$ where the Mon2 \eqref{proofeq:monotonicity2} and CS \eqref{proofeq:counterfactual stability} constraints are satisfied for all transitions $\tilde{s}, \tilde{a} \rightarrow s', \forall s' \in \mathcal{S}\setminus \{\tilde{s}', s_{t+1}\}$, which equates to proving:

        \[
        \max_{\theta} \left(\sum_{s' \in \mathcal{S}\setminus \{s_{t+1}\}}\sum_{\substack{u_t \in U_t\\f(s_t, a_t, u_t) = s_{t+1} \\ f(\tilde{s}, \tilde{a}, u_t) = s'}}{\theta_{u_t}}\right) \geq \sum_{s' \in \mathcal{S}\setminus\{\tilde{s}', s_{t+1}\}}P(s' \mid \tilde{s}, \tilde{a}) \cdot P(s_{t+1} \mid s_t, a_t)
        \]

        since 
        \[
        \sum_{s' \in \mathcal{S}\setminus \{s_{t+1}\}}\sum_{\substack{u_t \in U_t\\f(s_t, a_t, u_t) = s_{t+1} \\ f(\tilde{s}, \tilde{a}, u_t) = s'}}{\theta_{u_t}} \leq 
        \sum_{s' \in \mathcal{S}\setminus\{\tilde{s}', s_{t+1}\}}P(s' \mid \tilde{s}, \tilde{a}) \cdot P(s_{t+1} \mid s_t, a_t)
        \]
        
        in our given assignment of $\theta$. From Lemma \ref{lemma: existence of theta for overlapping case.}, we have:

        \begin{equation}
        \label{eq: implied max prob}
            \max_{\theta}\left(\sum_{s' \in \mathcal{S}\setminus \{s_{t+1}\}}\sum_{\substack{u_t \in U_t\\f(s_t, a_t, u_t) = s_{t+1} \\ f(\tilde{s}, \tilde{a}, u_t) = s'}}{\theta_{u_t}}\right) \geq P(s_{t+1} \mid s_t, a_t) - P(s_{t+1} \mid \tilde{s}, \tilde{a})
        \end{equation}

        under the Mon2 \eqref{proofeq:monotonicity2} and CS \eqref{proofeq:counterfactual stability} constraints. Since we have assigned the maximum possible probability to $\sum_{\substack{u_t \in U_t \\f(s_t, a_t, u_t) = s_{t+1} \\ f(\tilde{s}, \tilde{a}, u_t) = \tilde{s}'}}{\theta_{u_t}} = P(\tilde{s}' \mid \tilde{s}, \tilde{a})$ under the Mon2 \eqref{proofeq:monotonicity2} and CS \eqref{proofeq:counterfactual stability} constraints in our given assignment of $\theta$, \eqref{eq: implied max prob} implies:

        \[\max_{\theta}\left(\sum_{s' \in \mathcal{S}\setminus \{\tilde{s}', s_{t+1}\}}\sum_{\substack{u_t \in U_t\\f(s_t, a_t, u_t) = s_{t+1} \\ f(\tilde{s}, \tilde{a}, u_t) = s'}}{\theta_{u_t}}\right) \geq P(s_{t+1} \mid s_t, a_t) - P(s_{t+1} \mid \tilde{s}, \tilde{a}) -  P(\tilde{s}' \mid \tilde{s}, \tilde{a})\]

        Therefore, we know that we can assign $\theta$ such that $\sum_{\substack{u_t \in U_t \\f(s_t, a_t, u_t) = s_{t+1} \\ f(\tilde{s}, \tilde{a}, u_t) = \tilde{s}'}}{\theta_{u_t}} + \sum_{\substack{u_t \in U_t \\f(s_t, a_t, u_t) = s_{t+1} \\ f(\tilde{s}, \tilde{a}, u_t) = s_{t+1}}}{\theta_{u_t}} \\+ \sum_{s' \in \mathcal{S}\setminus\{\tilde{s}', s_{t+1}\}}\sum_{\substack{u_t \in U_t \\f(s_t, a_t, u_t) = s_{t+1} \\ f(\tilde{s}, \tilde{a}, u_t) = s'}}{\theta_{u_t}} = P(s_{t+1} \mid s_t, a_t)$, and all transitions satisfy the Mon2 \eqref{proofeq:monotonicity2} and CS \eqref{proofeq:counterfactual stability} constraints.

        \end{itemize}
    
    \item $0 \leq \theta_{u_t} \leq 1, \forall u_t$, satisfying \eqref{proofeq:valid prob1}.
    
    \item $\sum_{u_t = 1}^{|U_t|} \theta_{u_t} = \sum_{\substack{u_t \in U_t \\f(s_t, a_t, u_t) = s_{t+1} \\ f(\tilde{s}, \tilde{a}, u_t) = \tilde{s}'}} \theta_{u_t} + \sum_{\substack{u_t \in U_t \\f(s_t, a_t, u_t) = s_{t+1} \\ f(\tilde{s}, \tilde{a}, u_t) = s_{t+1}}} \theta_{u_t} +  \sum_{s' \in \mathcal{S}\setminus\{\tilde{s}', s_{t+1}\}}\sum_{\substack{u_t \in U_t \\f(s_t, a_t, u_t) = s_{t+1} \\ f(\tilde{s}, \tilde{a}, u_t) = s'}}{\theta_{u_t}} \\+ \sum_{\substack{u_t \in U_t \\f(s_t, a_t, u_t) \neq s_{t+1} \\ f(\tilde{s}, \tilde{a}, u_t) = \tilde{s}'}} \theta_{u_t} + \sum_{\substack{u_t \in U_t \\f(s_t, a_t, u_t) \neq s_{t+1} \\ f(\tilde{s}, \tilde{a}, u_t) = s_{t+1}}} \theta_{u_t} + \sum_{s' \in \mathcal{S}\setminus\{\tilde{s}', s_{t+1}\}}\sum_{\substack{u_t \in U_t \\f(s_t, a_t, u_t) \neq s_{t+1} \\ f(\tilde{s}, \tilde{a}, u_t) = s'}}{\theta_{u_t}} = 1$, satisfying \eqref{proofeq:valid prob2}.
\end{itemize}

All the constraints are satisfied, so this is a valid assignment of $\theta$ for this state-action pair.
\end{proof}    

Therefore, these two cases can be simplified to:

\begin{align*}
\tilde{P}_{t}^{UB}(\tilde{s}' \mid \tilde{s}, \tilde{a}) &= \dfrac{\sum_{\substack{u_t \in U_t \\f(s_t, a_t, u_t) = s_{t+1} \\ f(\tilde{s}, \tilde{a}, u_t) = \tilde{s}'}}{\theta_{u_t}}}{P(s_{t+1} \mid s_t, a_t)}\\ &=
 \dfrac{\min\left(P(s_{t+1} \mid s_t, a_t) - (P(s_{t+1} \mid s_t, a_t) \cdot P(s_{t+1} \mid \tilde{s}, \tilde{a})), P(\tilde{s}' \mid \tilde{s}, \tilde{a})\right)}{P(s_{t+1} \mid s_t, a_t)}\\
 &= \min\left(1 - P(s_{t+1} \mid \tilde{s}, \tilde{a}), \dfrac{P(\tilde{s}' \mid \tilde{s}, \tilde{a})}{P(s_{t+1} \mid s_t, a_t)}\right)
\end{align*}

\end{proof}

We have proven all the cases for Theorem \ref{proof theorem:ub overlapping}, therefore Theorem \ref{proof theorem:ub overlapping} holds.

\end{proof}

\pagebreak
\begin{theorem}
\label{proof theorem:lb overlapping}
For outgoing transitions from state-action pairs $(\tilde{s}, \tilde{a})$ which have overlapping support with the observed $(s_t, a_t)$, but $(\tilde{s}, \tilde{a}) \neq (s_t, a_t)$, the linear program will produce a lower bound of:

\[\tilde{P}_{t}^{LB}(\tilde{s}' \mid \tilde{s}, \tilde{a}) = 
\begin{cases}
\max(P(\tilde{s}' \mid \tilde{s}, \tilde{a}), 1 - \sum_{s' \in \mathcal{S}\setminus\{s_{t+1}\}}{\tilde{P}_{t}^{UB}(s' \mid \tilde{s}, \tilde{a})}) & \text{if $\tilde{s}' = s_{t+1}$}\\
0 & \text{if $P(\tilde{s}' \mid s_t, a_t) > 0$} \\ & \text{and $\dfrac{P(s_{t+1} \mid \tilde{s}, \tilde{a})}{P(s_{t+1} \mid s_t, a_t)}\geq\dfrac{P(\tilde{s}' \mid \tilde{s}, \tilde{a})}{P(\tilde{s}' \mid s_t, a_t)}$}\\
\max(0, 1 - \sum_{s' \in \mathcal{S}\setminus\{\tilde{s}'\}}{\tilde{P}_{t}^{UB}(s' \mid \tilde{s}, \tilde{a})}) & \text{otherwise}\\
\end{cases}
\]
\end{theorem}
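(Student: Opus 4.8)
The plan is to mirror the case split of Theorem~\ref{proof theorem:ub overlapping} and to exploit the fact that, for any fixed counterfactual state-action pair, the counterfactual probabilities form a distribution. First I would establish the normalisation identity: for every valid $\theta$,
\[
\sum_{\tilde{s}' \in \mathcal{S}} \tilde{P}_t(\tilde{s}' \mid \tilde{s}, \tilde{a}) = \frac{\sum_{\substack{u_t \in U_t \\ f(s_t, a_t, u_t) = s_{t+1}}} \theta_{u_t}}{P(s_{t+1} \mid s_t, a_t)} = 1,
\]
which holds because summing the numerator of \eqref{eq: counterfactual probability} over all $\tilde{s}'$ collapses the condition $f(\tilde{s}, \tilde{a}, u_t) = \tilde{s}'$ and leaves only $f(s_t, a_t, u_t) = s_{t+1}$, whose total mass is pinned to $P(s_{t+1} \mid s_t, a_t)$ by \eqref{proofeq:interventional constraint}.

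Second, I would derive the $\geq$ direction of each bound directly from this identity. Since $\tilde{P}_t(\tilde{s}' \mid \tilde{s}, \tilde{a}) = 1 - \sum_{s' \neq \tilde{s}'} \tilde{P}_t(s' \mid \tilde{s}, \tilde{a})$ and each term is bounded above by the upper bound already proven in Theorem~\ref{proof theorem:ub overlapping}, combined with nonnegativity of probabilities, I obtain $\tilde{P}_t(\tilde{s}' \mid \tilde{s}, \tilde{a}) \geq \max\bigl(0,\, 1 - \sum_{s' \neq \tilde{s}'} \tilde{P}_t^{UB}(s' \mid \tilde{s}, \tilde{a})\bigr)$, which covers the ``otherwise'' branch. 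For $\tilde{s}' = s_{t+1}$ I would additionally invoke Mon1~\eqref{proofeq:monotonicity1} to get the floor $P(s_{t+1} \mid \tilde{s}, \tilde{a})$, yielding the outer maximum with $P(\tilde{s}' \mid \tilde{s}, \tilde{a})$. For the CS branch, where $P(\tilde{s}' \mid s_t, a_t) > 0$ and the CS inequality holds, the constraint~\eqref{proofeq:counterfactual stability} forces $\tilde{P}_t(\tilde{s}' \mid \tilde{s}, \tilde{a}) = 0$ exactly, so the lower bound is trivially $0$ and matches Case~2 of Theorem~\ref{proof theorem:ub overlapping}.

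Third -- and this is the main work -- I would prove attainment by exhibiting, in each case, an explicit $\theta$ whose value equals the claimed lower bound while satisfying \eqref{proofeq:interventional constraint}--\eqref{proofeq:valid prob2}. The idea is to minimise the mass assigned to mechanisms with $f(s_t, a_t, u_t) = s_{t+1}$ and $f(\tilde{s}, \tilde{a}, u_t) = \tilde{s}'$ by instead \emph{saturating} the upper bounds of every other next state $s' \neq \tilde{s}'$ on the observed branch. Concretely, I would set $\sum_{u_t: f(s_t,a_t,u_t)=s_{t+1},\, f(\tilde{s},\tilde{a},u_t)=s'} \theta_{u_t} = \tilde{P}_t^{UB}(s' \mid \tilde{s}, \tilde{a}) \cdot P(s_{t+1} \mid s_t, a_t)$ for each $s' \neq \tilde{s}'$, assign the residual observed-branch mass to $\tilde{s}'$, and fix the remaining $\theta$ on the $f(s_t,a_t,u_t)\neq s_{t+1}$ branch via the $(\tilde{s},\tilde{a})$-marginals. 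The residual on $\tilde{s}'$ then equals $\max\bigl(0,\, (1 - \sum_{s'\neq\tilde{s}'}\tilde{P}_t^{UB}(s'\mid\tilde{s},\tilde{a}))\cdot P(s_{t+1}\mid s_t,a_t)\bigr)$, which divides out to the stated bound.

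The hard part will be verifying that this simultaneous saturation is \emph{feasible}: that the joint masses we wish to place on the observed branch do not exceed its row total $P(s_{t+1}\mid s_t,a_t)$ and remain compatible with the $(\tilde{s},\tilde{a})$-marginals and Mon2/CS at once. I expect to discharge this exactly as in the overlapping upper-bound cases, by appealing to Lemma~\ref{lemma: existence of theta for overlapping case.}, which guarantees that the total mass placeable on the observed branch across the non-$s_{t+1}$ outcomes is at least $P(s_{t+1}\mid s_t,a_t) - P(s_{t+1}\mid\tilde{s},\tilde{a})$ under the Mon2 and CS constraints. The two sub-cases -- whether the residual on $\tilde{s}'$ is forced positive or can be driven to $0$ -- correspond precisely to whether $\sum_{s'\neq\tilde{s}'}\tilde{P}_t^{UB}(s'\mid\tilde{s},\tilde{a})$ lies below or above $1$, producing the outer $\max$ in each branch. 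I would then check the constructed $\theta$ against every constraint line-by-line, as in Theorem~\ref{proof theorem:ub overlapping}, completing the attainment argument and hence the equality.
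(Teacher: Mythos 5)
Your proposal is correct and takes essentially the same route as the paper's proof: the same three-way case split, the same use of Mon1~\eqref{proofeq:monotonicity1} and CS~\eqref{proofeq:counterfactual stability} to force the floors, the same sum-to-one argument against the upper bounds of Theorem~\ref{proof theorem:ub overlapping} for the lower-bound direction, and the same attainment construction that saturates the other states' upper bounds on the observed branch, with feasibility discharged via Lemma~\ref{lemma: existence of theta for overlapping case.} and the sub-case split on whether the residual mass on $\tilde{s}'$ is forced positive (yielding the outer $\max$), exactly as in the paper's Cases 1(a)/1(b) and 3(a)/3(b). The only presentational difference is that you state the normalisation identity explicitly up front, whereas the paper leaves it implicit within each sub-case's constraint verification.
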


\begin{proof}
Take an arbitrary transition $\tilde{s}, \tilde{a} \rightarrow \tilde{s}'$ where $(\tilde{s}, \tilde{a})$ has overlapping support with the observed state-action pair $(s_t, a_t)$. Consider the following disjoint cases:

\begin{itemize}
    \item $\tilde{s}' = s_{t+1}$
    \item $\tilde{s}' \neq s_{t+1}$, $\dfrac{P(s_{t+1} \mid \tilde{s}, \tilde{a})}{P(s_{t+1} \mid s_t, a_t)}\geq\dfrac{P(\tilde{s}' \mid \tilde{s}, \tilde{a})}{P(\tilde{s}' \mid s_t, a_t)}$ and $P(\tilde{s}' \mid s_t, a_t) > 0$
    \item $\tilde{s}' \neq s_{t+1}$, and $\dfrac{P(s_{t+1} \mid \tilde{s}, \tilde{a})}{P(s_{t+1} \mid s_t, a_t)}<\dfrac{P(\tilde{s}' \mid \tilde{s}, \tilde{a})}{P(\tilde{s}' \mid s_t, a_t)}$ or $P(\tilde{s}' \mid s_t, a_t) = 0$
\end{itemize}

These cases are disjoint and cover all possible situations.

\paragraph{Case 1: $\tilde{s}' = s_{t+1}$}
\noindent
\begin{proof}
    To satisfy Mon1 \eqref{proofeq:monotonicity1}, $\tilde{P}_t^{LB}(s_{t+1} \mid \tilde{s}, \tilde{a}) \geq P(s_{t+1} \mid \tilde{s}, \tilde{a})$. To do this, we must assign $\theta$ such that:

    \begin{equation}
        \label{eq:overlapping constraint 9}
        \sum_{\substack{u_t \in U_t\\f(s_t, a_t, u_t) = s_{t+1}\\f(\tilde{s}, \tilde{a}, u_t) = s_{t+1}}}\theta_{u_t}\geq P(s_{t+1} \mid s_t, a_t) \cdot P(s_{t+1} \mid \tilde{s}, \tilde{a})
    \end{equation}
    
    Consider the following disjoint cases:

    \begin{enumerate}
        \item $P(s_{t+1} \mid \tilde{s}, \tilde{a}) \geq 1 - \sum_{s' \in \mathcal{S}\setminus\{s_{t+1}\}}{\tilde{P}_{t}^{UB}(s' \mid \tilde{s}, \tilde{a})}$
        \item $P(s_{t+1} \mid \tilde{s}, \tilde{a}) < 1 - \sum_{s' \in \mathcal{S}\setminus\{s_{t+1}\}}{\tilde{P}_{t}^{UB}(s' \mid \tilde{s}, \tilde{a})}$
    \end{enumerate}

    \paragraph{Case 1(a): $P(s_{t+1} \mid \tilde{s}, \tilde{a}) \geq 1 - \sum_{s' \in \mathcal{S}\setminus\{s_{t+1}\}}{\tilde{P}_{t}^{UB}(s' \mid \tilde{s}, \tilde{a})}$}
\noindent
    \begin{proof}

    In this case, $P(s_{t+1} \mid \tilde{s}, \tilde{a}) \geq 1 - \sum_{s' \in \mathcal{S}\setminus\{s_{t+1}\}}{\tilde{P}_{t}^{UB}(s' \mid \tilde{s}, \tilde{a})}$, therefore:

    \begin{equation}
        \label{eq: lbcase1a}
        P(s_{t+1} \mid s_t, a_t) \cdot P(s_{t+1} \mid \tilde{s}, \tilde{a}) \geq P(s_{t+1} \mid s_t, a_t) - P(s_{t+1} \mid s_t, a_t) \cdot \sum_{s' \in \mathcal{S}\setminus\{s_{t+1}\}}{\tilde{P}_{t}^{UB}(s' \mid \tilde{s}, \tilde{a})}
    \end{equation}
    
    Therefore, we can assign $\theta$ as follows:
    
    \[
    \begin{cases}
        \sum_{\substack{u_t \in U_t \\f(s_t, a_t, u_t) = s_{t+1} \\ f(\tilde{s}, \tilde{a}, u_t) = s_{t+1}}}{\theta_{u_t}} = P(s_{t+1} \mid s_t, a_t) \cdot P(s_{t+1} \mid \tilde{s}, \tilde{a})\\
        \sum_{s' \in \mathcal{S}\setminus\{s_{t+1}\}}\sum_{\substack{u_t \in U_t \\f(s_t, a_t, u_t) = s_{t+1} \\ f(\tilde{s}, \tilde{a}, u_t) = s'}}{\theta_{u_t}} = P(s_{t+1} \mid s_t, a_t) - \sum_{\substack{u_t \in U_t \\f(s_t, a_t, u_t) = s_{t+1} \\ f(\tilde{s}, \tilde{a}, u_t) = s_{t+1}}}{\theta_{u_t}}\\
        \sum_{\substack{u_t \in U_t \\f(s_t, a_t, u_t) \neq s_{t+1} \\ f(\tilde{s}, \tilde{a}, u_t) = s_{t+1}}}{\theta_{u_t}} = P(s_{t+1} \mid \tilde{s}, \tilde{a}) - \sum_{\substack{u_t \in U_t \\f(s_t, a_t, u_t) = s_{t+1} \\ f(\tilde{s}, \tilde{a}, u_t) = s_{t+1}}}{\theta_{u_t}} \\
        \sum_{s' \in \mathcal{S}\setminus\{ s_{t+1}\}}\sum_{\substack{u_t \in U_t \\f(s_t, a_t, u_t) \neq s_{t+1} \\ f(\tilde{s}, \tilde{a}, u_t) = s'}}{\theta_{u_t}} = \sum_{s' \in \mathcal{S}\setminus\{ s_{t+1}\}}P(\tilde{s}' \mid \tilde{s}, \tilde{a}) - \sum_{s' \in \mathcal{S}\setminus\{s_{t+1}\}}\sum_{\substack{u_t \in U_t \\f(s_t, a_t, u_t) = s_{t+1} \\ f(\tilde{s}, \tilde{a}, u_t) = s'}}{\theta_{u_t}}\\
    \end{cases}
    \]
   
    This assignment of $\theta$ satisfies the constraints of the linear optimisation problem, as follows:

    \begin{itemize}        
    \item $\sum_{\substack{u_t \in U_t \\f(s_t, a_t, u_t) = s_{t+1}}} \theta_{u_t} = \sum_{\substack{u_t \in U_t \\f(s_t, a_t, u_t) = s_{t+1} \\ f(\tilde{s}, \tilde{a}, u_t) = s_{t+1}}} \theta_{u_t} + \sum_{s' \in \mathcal{S}\setminus\{ s_{t+1}\}}\sum_{\substack{u_t \in U_t \\f(s_t, a_t, u_t) = s_{t+1} \\ f(\tilde{s}, \tilde{a}, u_t) = s'}}{\theta_{u_t}} \\ = P(s_{t+1} \mid s_t, a_t)$, satisfying \eqref{proofeq:interventional constraint}.

    \item $\sum_{\substack{u_t \in U_t\\f(s_t, a_t, u_t) \neq s_{t+1}}}{\theta_{u_t}} = \sum_{\substack{u_t \in U_t \\f(s_t, a_t, u_t) \neq s_{t+1} \\ f(\tilde{s}, \tilde{a}, u_t) = s_{t+1}}} \theta_{u_t} + \sum_{s' \in \mathcal{S}\setminus\{ s_{t+1}\}}\sum_{\substack{u_t \in U_t \\f(s_t, a_t, u_t) \neq s_{t+1} \\ f(\tilde{s}, \tilde{a}, u_t) = s'}}{\theta_{u_t}} \\=1 -
    P(s_{t+1} \mid s_t, a_t)$, satisfying \eqref{proofeq:interventional constraint}.

    \item $\sum_{\substack{u_t \in U_t \\f(\tilde{s}, \tilde{a}, u_t) = s_{t+1}}} \theta_{u_t} = \sum_{\substack{u_t \in U_t \\f(s_t, a_t, u_t) = s_{t+1} \\ f(\tilde{s}, \tilde{a}, u_t) = s_{t+1}}} \theta_{u_t} + \sum_{\substack{u_t \in U_t \\f(s_t, a_t, u_t) \neq s_{t+1} \\ f(\tilde{s}, \tilde{a}, u_t) = s_{t+1}}} \theta_{u_t} = P(s_{t+1} \mid \tilde{s}, \tilde{a})$, \\satisfying \eqref{proofeq:interventional constraint}.
    
    \item $\sum_{s' \in \mathcal{S}\setminus\{s_{t+1}\}}\sum_{\substack{u_t \in U_t \\f(\tilde{s}, \tilde{a}, u_t) = s'}} \theta_{u_t} = \sum_{s' \in \mathcal{S}\setminus\{ s_{t+1}\}}\sum_{\substack{u_t \in U_t \\f(s_t, a_t, u_t) = s_{t+1} \\ f(\tilde{s}, \tilde{a}, u_t) = s'}} \theta_{u_t} + \sum_{s' \in \mathcal{S}\setminus\{s_{t+1}\}}\sum_{\substack{u_t \in U_t \\f(s_t, a_t, u_t) \neq s_{t+1} \\ f(\tilde{s}, \tilde{a}, u_t) = s'}} \theta_{u_t} \\= 1 - P(\tilde{s}' \mid \tilde{s}, \tilde{a}) = \sum_{s' \in \mathcal{S}\setminus\{\tilde{s}'\}}P(s' \mid \tilde{s}, \tilde{a})$, therefore it is possible to assign $\theta$ such that $\forall s' \in \mathcal{S}\setminus\{s_{t+1}\}, \sum_{\substack{u_t \in U_t \\ f(\tilde{s}, \tilde{a}, u_t) = s'}}{\theta_{u_t}} = P(s' \mid \tilde{s}, \tilde{a})$, satisfying \eqref{proofeq:interventional constraint}.
    
    \item Eq. \eqref{eq:overlapping constraint 9} holds, so Mon1 \eqref{proofeq:monotonicity1} is satisfied. 
    
    \item CS \eqref{proofeq:counterfactual stability} doesn't apply to the transition $\tilde{s}, \tilde{a} \rightarrow s_{t+1}$.
    
    \item From \eqref{eq: lbcase1a}, we get:

    \[
    P(s_{t+1} \mid s_t, a_t) - P(s_{t+1} \mid s_t, a_t) \cdot P(\tilde{s}' \mid \tilde{s}, \tilde{a}) \leq \sum_{s' \in \mathcal{S}\setminus\{s_{t+1}\}}{\tilde{P}_{t}^{UB}(s' \mid \tilde{s}, \tilde{a})} \cdot P(s_{t+1} \mid s_t, a_t)
    \]

    This means that $\theta$ can be assigned such that $\forall s' \in \mathcal{S}\setminus\{s_{t+1}\}$, the counterfactual probability of each transition $\tilde{s}, \tilde{a} \rightarrow s'$ will be $\tilde{P}_t(s' \mid \tilde{s}, \tilde{a}) \leq \tilde{P}_{t}^{UB}(s' \mid \tilde{s}, \tilde{a})$. We have already proven that each $\tilde{P}_{t}^{UB}(s' \mid \tilde{s}, \tilde{a})$ satisfies the \eqref{proofeq:monotonicity2} and CS \eqref{proofeq:counterfactual stability} constraints, and because $\tilde{P}_t(s' \mid \tilde{s}, \tilde{a}) \leq \tilde{P}_{t}^{UB}(s' \mid \tilde{s}, \tilde{a})$, $\tilde{P}_t(s'\mid \tilde{s}, \tilde{a})$ must satisfy CS as well.

    \item $0 \leq \theta_{u_t} \leq 1, \forall u_t$, satisfying \eqref{proofeq:valid prob1}.
    
    \item $\sum_{u_t = 1}^{|U_t|} \theta_{u_t} = \sum_{\substack{u_t \in U_t \\f(s_t, a_t, u_t) = s_{t+1} \\ f(\tilde{s}, \tilde{a}, u_t) = s_{t+1}}} \theta_{u_t} +  \sum_{s' \in \mathcal{S}\setminus\{ s_{t+1}\}}\sum_{\substack{u_t \in U_t \\f(s_t, a_t, u_t) = s_{t+1} \\ f(\tilde{s}, \tilde{a}, u_t) = s'}}{\theta_{u_t}} + \sum_{\substack{u_t \in U_t \\f(s_t, a_t, u_t) \neq s_{t+1} \\ f(\tilde{s}, \tilde{a}, u_t) = s_{t+1}}} \theta_{u_t} \\+ \sum_{s' \in \mathcal{S}\setminus\{ s_{t+1}\}}\sum_{\substack{u_t \in U_t \\f(s_t, a_t, u_t) \neq s_{t+1} \\ f(\tilde{s}, \tilde{a}, u_t) = s'}}{\theta_{u_t}} = 1$, satisfying \eqref{proofeq:valid prob2}.
\end{itemize}

All the constraints are satisfied, so this is a valid assignment of $\theta$ for this state-action pair.
\end{proof}

\paragraph{Case 1(b): $P(s_{t+1} \mid \tilde{s}, \tilde{a}) < 1 - \sum_{s' \in \mathcal{S}\setminus\{s_{t+1}\}}{\tilde{P}_{t}^{UB}(s' \mid \tilde{s}, \tilde{a})}$}
\noindent
\begin{proof}
In this case $P(s_{t+1} \mid \tilde{s}, \tilde{a}) < 1 - \sum_{s' \in \mathcal{S}\setminus\{s_{t+1}\}}{\tilde{P}_{t}^{UB}(s' \mid \tilde{s}, \tilde{a})}$, therefore:

\begin{equation}
\label{eq: lbcase1b}
    P(s_{t+1} \mid s_t, a_t) \cdot P(s_{t+1} \mid \tilde{s}, \tilde{a}) < P(s_{t+1} \mid s_t, a_t) \cdot (1 - \sum_{s' \in \mathcal{S}\setminus\{s_{t+1}\}}{\tilde{P}_{t}^{UB}(s' \mid \tilde{s}, \tilde{a})})
\end{equation}

Therefore, we can assign $\theta$ as follows:

\[
\begin{cases}
    \sum_{\substack{u_t \in U_t \\f(s_t, a_t, u_t) = s_{t+1} \\ f(\tilde{s}, \tilde{a}, u_t) = s_{t+1}}}{\theta_{u_t}} = P(s_{t+1} \mid s_t, a_t) \cdot (1 - \sum_{s' \in \mathcal{S}\setminus\{s_{t+1}\}}{\tilde{P}_{t}^{UB}(s' \mid \tilde{s}, \tilde{a})})\\
    
    \sum_{s' \in \mathcal{S}\setminus\{s_{t+1}\}}\sum_{\substack{u_t \in U_t \\f(s_t, a_t, u_t) = s_{t+1} \\ f(\tilde{s}, \tilde{a}, u_t) = s'}}{\theta_{u_t}} = P(s_{t+1} \mid s_t, a_t) \cdot (\sum_{s' \in \mathcal{S}\setminus\{s_{t+1}\}}{\tilde{P}_{t}^{UB}(s' \mid \tilde{s}, \tilde{a})})\\
    
    \sum_{\substack{u_t \in U_t \\f(s_t, a_t, u_t) \neq s_{t+1} \\ f(\tilde{s}, \tilde{a}, u_t) = s_{t+1}}}{\theta_{u_t}} = P(s_{t+1} \mid \tilde{s}, \tilde{a}) - \sum_{\substack{u_t \in U_t \\f(s_t, a_t, u_t) = s_{t+1} \\ f(\tilde{s}, \tilde{a}, u_t) = s_{t+1}}}{\theta_{u_t}} \\
    
    \sum_{s' \in \mathcal{S}\setminus\{ s_{t+1}\}}\sum_{\substack{u_t \in U_t \\f(s_t, a_t, u_t) \neq s_{t+1} \\ f(\tilde{s}, \tilde{a}, u_t) = s'}}{\theta_{u_t}} = \sum_{s' \in \mathcal{S}\setminus\{s_{t+1}\}}{P(s' \mid \tilde{s}, \tilde{a})} - \sum_{s' \in \mathcal{S}\setminus\{s_{t+1}\}}\sum_{\substack{u_t \in U_t \\f(s_t, a_t, u_t) = s_{t+1} \\ f(\tilde{s}, \tilde{a}, u_t) = s'}}{\theta_{u_t}}\\
\end{cases}
\]

  This assignment of $\theta$ satisfies the constraints of the linear optimisation problem, as follows:

    \begin{itemize}        
    \item $\sum_{\substack{u_t \in U_t \\f(s_t, a_t, u_t) = s_{t+1}}} \theta_{u_t} = \sum_{\substack{u_t \in U_t \\f(s_t, a_t, u_t) = s_{t+1} \\ f(\tilde{s}, \tilde{a}, u_t) = s_{t+1}}} \theta_{u_t} + \sum_{s' \in \mathcal{S}\setminus\{ s_{t+1}\}}\sum_{\substack{u_t \in U_t \\f(s_t, a_t, u_t) = s_{t+1} \\ f(\tilde{s}, \tilde{a}, u_t) = s'}}{\theta_{u_t}} \\ = P(s_{t+1} \mid s_t, a_t)$, satisfying \eqref{proofeq:interventional constraint}.

    \item $\sum_{\substack{u_t \in U_t\\f(s_t, a_t, u_t) \neq s_{t+1}}}{\theta_{u_t}} = \sum_{\substack{u_t \in U_t \\f(s_t, a_t, u_t) \neq s_{t+1} \\ f(\tilde{s}, \tilde{a}, u_t) = s_{t+1}}} \theta_{u_t} + \sum_{s' \in \mathcal{S}\setminus\{ s_{t+1}\}}\sum_{\substack{u_t \in U_t \\f(s_t, a_t, u_t) \neq s_{t+1} \\ f(\tilde{s}, \tilde{a}, u_t) = s'}}{\theta_{u_t}} \\=1 -
    P(s_{t+1} \mid s_t, a_t)$, satisfying \eqref{proofeq:interventional constraint}.

    \item $\sum_{\substack{u_t \in U_t \\f(\tilde{s}, \tilde{a}, u_t) = s_{t+1}}} \theta_{u_t} = \sum_{\substack{u_t \in U_t \\f(s_t, a_t, u_t) = s_{t+1} \\ f(\tilde{s}, \tilde{a}, u_t) = s_{t+1}}} \theta_{u_t} + \sum_{\substack{u_t \in U_t \\f(s_t, a_t, u_t) \neq s_{t+1} \\ f(\tilde{s}, \tilde{a}, u_t) = s_{t+1}}} \theta_{u_t} = P(s_{t+1} \mid \tilde{s}, \tilde{a})$, \\satisfying \eqref{proofeq:interventional constraint}.
    
    \item $\sum_{s' \in \mathcal{S}\setminus\{s_{t+1}\}}\sum_{\substack{u_t \in U_t \\f(\tilde{s}, \tilde{a}, u_t) = s'}} \theta_{u_t} = \sum_{s' \in \mathcal{S}\setminus\{ s_{t+1}\}}\sum_{\substack{u_t \in U_t \\f(s_t, a_t, u_t) = s_{t+1} \\ f(\tilde{s}, \tilde{a}, u_t) = s'}} \theta_{u_t} + \sum_{s' \in \mathcal{S}\setminus\{s_{t+1}\}}\sum_{\substack{u_t \in U_t \\f(s_t, a_t, u_t) \neq s_{t+1} \\ f(\tilde{s}, \tilde{a}, u_t) = s'}} \theta_{u_t} \\= 1 - P(\tilde{s}' \mid \tilde{s}, \tilde{a}) = \sum_{s' \in \mathcal{S}\setminus\{\tilde{s}'\}}P(s' \mid \tilde{s}, \tilde{a})$, therefore it is possible to assign $\theta$ such that $\forall s' \in \mathcal{S}\setminus\{s_{t+1}\}, \sum_{\substack{u_t \in U_t \\ f(\tilde{s}, \tilde{a}, u_t) = s'}}{\theta_{u_t}} = P(s' \mid \tilde{s}, \tilde{a})$, satisfying \eqref{proofeq:interventional constraint}.
    
    \item From \eqref{eq: lbcase1b}, we get:
    
    \[\sum_{\substack{u_t \in U_t \\f(s_t, a_t, u_t) = s_{t+1} \\ f(\tilde{s}, \tilde{a}, u_t) = s_{t+1}}}{\theta_{u_t}} = P(s_{t+1} \mid s_t, a_t) \cdot (1 - \sum_{s' \in \mathcal{S}\setminus\{s_{t+1}\}}{\tilde{P}_{t}^{UB}(s' \mid \tilde{s}, \tilde{a})})> P(s_{t+1} \mid s_t, a_t) \cdot P(s_{t+1} \mid \tilde{s}, \tilde{a})\]
    
    Therefore, \eqref{eq:overlapping constraint 9} holds, so Mon1 \eqref{proofeq:monotonicity1} is satisfied. 

    \item CS \eqref{proofeq:counterfactual stability} doesn't apply to the transition $\tilde{s}, \tilde{a} \rightarrow s_{t+1}$.
    
    \item Because $\sum_{s' \in \mathcal{S}\setminus\{s_{t+1}\}}\sum_{\substack{u_t \in U_t \\f(s_t, a_t, u_t) = s_{t+1} \\ f(\tilde{s}, \tilde{a}, u_t) = s'}}{\theta_{u_t}} = P(s_{t+1} \mid s_t, a_t) \cdot (\sum_{s' \in \mathcal{S}\setminus\{s_{t+1}\}}{\tilde{P}_{t}^{UB}(s' \mid \tilde{s}, \tilde{a})})$, this means that $\theta$ can be assigned such that $\forall s' \in \mathcal{S}\setminus\{s_{t+1}\}$, its counterfactual probability $\tilde{P}_t(s' \mid \tilde{s}, \tilde{a}) = \tilde{P}_{t}^{UB}(s' \mid \tilde{s}, \tilde{a})$, which we have already proven satisfies Mon2 \eqref{proofeq:monotonicity2} and CS \eqref{proofeq:counterfactual stability}.

    \item $0 \leq \theta_{u_t} \leq 1, \forall u_t$, satisfying \eqref{proofeq:valid prob1}.
    
    \item $\sum_{u_t = 1}^{|U_t|} \theta_{u_t} = \sum_{\substack{u_t \in U_t \\f(s_t, a_t, u_t) = s_{t+1} \\ f(\tilde{s}, \tilde{a}, u_t) = s_{t+1}}} \theta_{u_t} +  \sum_{s' \in \mathcal{S}\setminus\{ s_{t+1}\}}\sum_{\substack{u_t \in U_t \\f(s_t, a_t, u_t) = s_{t+1} \\ f(\tilde{s}, \tilde{a}, u_t) = s'}}{\theta_{u_t}} + \sum_{\substack{u_t \in U_t \\f(s_t, a_t, u_t) \neq s_{t+1} \\ f(\tilde{s}, \tilde{a}, u_t) = s_{t+1}}} \theta_{u_t} \\+ \sum_{s' \in \mathcal{S}\setminus\{ s_{t+1}\}}\sum_{\substack{u_t \in U_t \\f(s_t, a_t, u_t) \neq s_{t+1} \\ f(\tilde{s}, \tilde{a}, u_t) = s'}}{\theta_{u_t}} = 1$, satisfying \eqref{proofeq:valid prob2}.
\end{itemize}

All the constraints are satisfied, so this is a valid assignment of $\theta$ for this state-action pair.
\end{proof}

Therefore, these two cases can be simplified as:

\begin{align*}
\tilde{P}_{t}^{LB}(\tilde{s}' \mid \tilde{s}, \tilde{a}) &= \dfrac{\sum_{\substack{u_t \in U_t \\f(s_t, a_t, u_t) = s_{t+1} \\ f(\tilde{s}, \tilde{a}, u_t) = \tilde{s}'}}{\theta_{u_t}}}{P(s_{t+1} \mid s_t, a_t)}\\ &=
\dfrac{\max\left(P(s_{t+1} \mid s_t, a_t) \cdot P(s_{t+1} \mid \tilde{s}, \tilde{a}), P(s_{t+1} \mid s_t, a_t) \cdot (1 - \sum_{s' \in \mathcal{S}\setminus\{s_{t+1}\}}{\tilde{P}_{t}^{UB}(s' \mid \tilde{s}, \tilde{a})})\right)}{P(s_{t+1} \mid s_t, a_t)}\\ &= \max\left(P(\tilde{s}' \mid \tilde{s}, \tilde{a}), 1 - \sum_{s' \in \mathcal{S}\setminus\{s_{t+1}\}}{\tilde{P}_{t}^{UB}(s' \mid \tilde{s}, \tilde{a})}\right)
\end{align*}
\end{proof}

\paragraph{Case 2: $\tilde{s}' \neq s_{t+1}$, $\dfrac{P(s_{t+1} \mid \tilde{s}, \tilde{a})}{P(s_{t+1} \mid s_t, a_t)}\geq\dfrac{P(\tilde{s}' \mid \tilde{s}, \tilde{a})}{P(\tilde{s}' \mid s_t, a_t)}$ and $P(\tilde{s}' \mid s_t, a_t) > 0$}
\noindent
\begin{proof}
Because $\dfrac{P(s_{t+1} \mid \tilde{s}, \tilde{a})}{P(s_{t+1} \mid s_t, a_t)}\geq\dfrac{P(\tilde{s}' \mid \tilde{s}, \tilde{a})}{P(\tilde{s}' \mid s_t, a_t)}$ and $P(\tilde{s}' \mid s_t, a_t) > 0$, to satisfy CS \eqref{proofeq:counterfactual stability} $\tilde{P}_{t}^{LB}(\tilde{s}' \mid \tilde{s}, \tilde{a}) = 0$. We have already proven in Case 2 of the proof for Theorem \ref{proof theorem:ub overlapping} that there exists an assignment of $\theta$ that satisfies all the constraints, where $\tilde{P}_{t}(\tilde{s}' \mid \tilde{s}, \tilde{a}) = 0$. Therefore, the same assignment of $\theta$ will satisfy the constraints of the optimisation problem, and result in $\tilde{P}_{t}^{LB}(\tilde{s}' \mid \tilde{s}, \tilde{a}) = 0$.
\end{proof}

\paragraph{Case 3: $\tilde{s}' \neq s_{t+1}$, and $\dfrac{P(s_{t+1} \mid \tilde{s}, \tilde{a})}{P(s_{t+1} \mid s_t, a_t)}<\dfrac{P(\tilde{s}' \mid \tilde{s}, \tilde{a})}{P(\tilde{s}' \mid s_t, a_t)}$ or $P(\tilde{s}' \mid s_t, a_t) = 0$}
\noindent
\begin{proof}
Consider the following disjoint cases:
\begin{itemize}
    \item $1 - \sum_{s' \in \mathcal{S}\setminus\{\tilde{s}'\}}{\tilde{P}_{t}^{UB}(s' \mid \tilde{s}, \tilde{a})} \geq 0$
    \item $1 - \sum_{s' \in \mathcal{S}\setminus\{\tilde{s}'\}}{\tilde{P}_{t}^{UB}(s' \mid \tilde{s}, \tilde{a})} < 0$
\end{itemize}

\paragraph{Case 3(a): $1 - \sum_{s' \in \mathcal{S}\setminus\{\tilde{s}'\}}{\tilde{P}_{t}^{UB}(\tilde{s}' \mid \tilde{s}, \tilde{a})} \geq 0$}
\noindent
\begin{proof}
We can assign $\theta$ as follows:

\[
\begin{cases}
    \sum_{\substack{u_t \in U_t \\f(s_t, a_t, u_t) = s_{t+1} \\ f(\tilde{s}, \tilde{a}, u_t) = \tilde{s}'}}{\theta_{u_t}} = P(s_{t+1} \mid s_t, a_t) \cdot (1 - \sum_{s' \in \mathcal{S}\setminus\{\tilde{s}'\}}{\tilde{P}_{t}^{UB}(s' \mid \tilde{s}, \tilde{a})})\\
    
    \forall{s' \in \mathcal{S}\setminus\{\tilde{s}'\}}, \sum_{\substack{u_t \in U_t \\f(s_t, a_t, u_t) = s_{t+1} \\ f(\tilde{s}, \tilde{a}, u_t) = s'}}{\theta_{u_t}} = P(s_{t+1} \mid s_t, a_t) \cdot {\tilde{P}_{t}^{UB}(s' \mid \tilde{s}, \tilde{a})}\\
    
    \sum_{\substack{u_t \in U_t \\f(s_t, a_t, u_t) \neq s_{t+1} \\ f(\tilde{s}, \tilde{a}, u_t) = \tilde{s}'}}{\theta_{u_t}} = P(\tilde{s}' \mid \tilde{s}, \tilde{a}) - \sum_{\substack{u_t \in U_t \\f(s_t, a_t, u_t) = s_{t+1} \\ f(\tilde{s}, \tilde{a}, u_t) = \tilde{s}'}}{\theta_{u_t}} \\
    
    \forall{s' \in \mathcal{S}\setminus\{ \tilde{s}'\}}, \sum_{\substack{u_t \in U_t \\f(s_t, a_t, u_t) \neq s_{t+1} \\ f(\tilde{s}, \tilde{a}, u_t) = s'}}{\theta_{u_t}} = P(s' \mid \tilde{s}, \tilde{a})- \sum_{\substack{u_t \in U_t \\f(s_t, a_t, u_t) = s_{t+1} \\ f(\tilde{s}, \tilde{a}, u_t) = s'}}{\theta_{u_t}}\\
\end{cases}
\]

 This assignment of $\theta$ satisfies the constraints of the linear optimisation problem, as follows:

    \begin{itemize}        
    \item $\sum_{\substack{u_t \in U_t \\f(s_t, a_t, u_t) = s_{t+1}}} \theta_{u_t} = \sum_{\substack{u_t \in U_t \\f(s_t, a_t, u_t) = s_{t+1} \\ f(\tilde{s}, \tilde{a}, u_t) = \tilde{s}'}} \theta_{u_t} + \sum_{s' \in \mathcal{S}\setminus\{ \tilde{s}'\}}\sum_{\substack{u_t \in U_t \\f(s_t, a_t, u_t) = s_{t+1} \\ f(\tilde{s}, \tilde{a}, u_t) = s'}}{\theta_{u_t}} \\ = P(s_{t+1} \mid s_t, a_t)$, satisfying \eqref{proofeq:interventional constraint}.

    \item $\sum_{\substack{u_t \in U_t\\f(s_t, a_t, u_t) \neq s_{t+1}}}{\theta_{u_t}} = \sum_{\substack{u_t \in U_t \\f(s_t, a_t, u_t) \neq s_{t+1} \\ f(\tilde{s}, \tilde{a}, u_t) = \tilde{s}'}} \theta_{u_t} + \sum_{s' \in \mathcal{S}\setminus\{ \tilde{s}'\}}\sum_{\substack{u_t \in U_t \\f(s_t, a_t, u_t) \neq s_{t+1} \\ f(\tilde{s}, \tilde{a}, u_t) = s'}}{\theta_{u_t}} \\=1 -
    P(s_{t+1} \mid s_t, a_t)$, satisfying \eqref{proofeq:interventional constraint}.

    \item $\sum_{\substack{u_t \in U_t \\f(\tilde{s}, \tilde{a}, u_t) = \tilde{s}'}} \theta_{u_t} = \sum_{\substack{u_t \in U_t \\f(s_t, a_t, u_t) = s_{t+1} \\ f(\tilde{s}, \tilde{a}, u_t) = \tilde{s}'}} \theta_{u_t} + \sum_{\substack{u_t \in U_t \\f(s_t, a_t, u_t) \neq s_{t+1} \\ f(\tilde{s}, \tilde{a}, u_t) = \tilde{s}'}} \theta_{u_t} = P(\tilde{s}' \mid \tilde{s}, \tilde{a})$, \\satisfying \eqref{proofeq:interventional constraint}.
    
    \item $\sum_{s' \in \mathcal{S}\setminus\{\tilde{s}'\}}\sum_{\substack{u_t \in U_t \\f(\tilde{s}, \tilde{a}, u_t) = s'}} \theta_{u_t} = \sum_{s' \in \mathcal{S}\setminus\{ \tilde{s}'\}}\sum_{\substack{u_t \in U_t \\f(s_t, a_t, u_t) = s_{t+1} \\ f(\tilde{s}, \tilde{a}, u_t) = s'}} \theta_{u_t} + \sum_{s' \in \mathcal{S}\setminus\{\tilde{s}'\}}\sum_{\substack{u_t \in U_t \\f(s_t, a_t, u_t) \neq s_{t+1} \\ f(\tilde{s}, \tilde{a}, u_t) = s'}} \theta_{u_t} \\= 1 - P(\tilde{s}' \mid \tilde{s}, \tilde{a}) = \sum_{s' \in \mathcal{S}\setminus\{\tilde{s}'\}}P(s' \mid \tilde{s}, \tilde{a})$, therefore it is possible to assign $\theta$ such that $\forall s' \in \mathcal{S}\setminus\{\tilde{s}'\}, \sum_{\substack{u_t \in U_t \\ f(\tilde{s}, \tilde{a}, u_t) = s'}}{\theta_{u_t}} = P(s' \mid \tilde{s}, \tilde{a})$, satisfying \eqref{proofeq:interventional constraint}.

    \item $\forall s' \in \mathcal{S} \setminus \{\tilde{s}'\}, \tilde{P}_t(s' \mid \tilde{s}, \tilde{a}) = \dfrac{\sum_{\substack{u_t \in U_t \\f(s_t, a_t, u_t) = s_{t+1} \\ f(\tilde{s}, \tilde{a}, u_t) = s'}}{\theta_{u_t}}}{P(s_{t+1} \mid s_t, a_t)} = \dfrac{P(s_{t+1} \mid s_t, a_t) \cdot \tilde{P}_t^{UB}(s' \mid \tilde{s}, \tilde{a})}{P(s_{t+1} \mid s_t, a_t)} = \tilde{P}_t^{UB}(s' \mid \tilde{s}, \tilde{a})$, which we have already shown in the proof for Theorem \ref{proof theorem:ub overlapping} satisfies Mon1 \eqref{proofeq:monotonicity1}, Mon2 \eqref{proofeq:monotonicity2} and CS \eqref{proofeq:counterfactual stability}.

    \item Mon2 \eqref{proofeq:monotonicity2} is also satisfied for the transition $\tilde{s}, \tilde{a} \rightarrow \tilde{s}'$, as follows:

    \begin{itemize}
        \item If $P(\tilde{s}' \mid s_t, a_t) = 0$, then this assignment of $\theta$ vacuously satisfies Mon2 \eqref{proofeq:monotonicity2}.
        
        \item Otherwise, we have already proven that there exists an assignment of $\theta$ that satisfies all the constraints (as proven in Theorem \ref{proof theorem:ub overlapping}). If $1 - \sum_{s' \in \mathcal{S}\setminus\{\tilde{s}'\}}{\tilde{P}_{t}^{UB}(\tilde{s}' \mid \tilde{s}, \tilde{a})} > 0 $, then the minimum possible counterfactual transition probability of $\tilde{s}, \tilde{a} \rightarrow \tilde{s}'$ is $\tilde{P}_t(\tilde{s}' \mid \tilde{s}, \tilde{a}) = 1 - \sum_{s' \in \mathcal{S}\setminus\{\tilde{s}'}{\tilde{P}_{t}^{UB}(\tilde{s}' \mid \tilde{s}, \tilde{a})}$ (because the upper bounds must sum to $1$). Therefore, $1 - \sum_{s' \in \mathcal{S}\setminus\{\tilde{s}'\}}{\tilde{P}_{t}^{UB}(\tilde{s}' \mid \tilde{s}, \tilde{a})} \leq P(\tilde{s}' \mid \tilde{s}, \tilde{a})$, or there would be no valid assignment of $\theta$ that satisfies Mon2 \eqref{proofeq:monotonicity2}.
    \end{itemize}

    \item CS \eqref{proofeq:counterfactual stability} will be vacuously satisfied for the transition $\tilde{s}, \tilde{a} \rightarrow \tilde{s}'$, because $\dfrac{P(s_{t+1} \mid \tilde{s}, \tilde{a})}{P(s_{t+1} \mid s_t, a_t)}<\dfrac{P(\tilde{s}' \mid \tilde{s}, \tilde{a})}{P(\tilde{s}' \mid s_t, a_t)}$ or $P(\tilde{s}' \mid s_t, a_t) = 0$.

    \item $0 \leq \theta_{u_t} \leq 1, \forall u_t$, satisfying \eqref{proofeq:valid prob1}.
    
    \item $\sum_{u_t = 1}^{|U_t|} \theta_{u_t} = \sum_{\substack{u_t \in U_t \\f(s_t, a_t, u_t) = s_{t+1} \\ f(\tilde{s}, \tilde{a}, u_t) = s_{t+1}}} \theta_{u_t} +  \sum_{s' \in \mathcal{S}\setminus\{ s_{t+1}\}}\sum_{\substack{u_t \in U_t \\f(s_t, a_t, u_t) = s_{t+1} \\ f(\tilde{s}, \tilde{a}, u_t) = s'}}{\theta_{u_t}} + \sum_{\substack{u_t \in U_t \\f(s_t, a_t, u_t) \neq s_{t+1} \\ f(\tilde{s}, \tilde{a}, u_t) = s_{t+1}}} \theta_{u_t} \\+ \sum_{s' \in \mathcal{S}\setminus\{ s_{t+1}\}}\sum_{\substack{u_t \in U_t \\f(s_t, a_t, u_t) \neq s_{t+1} \\ f(\tilde{s}, \tilde{a}, u_t) = s'}}{\theta_{u_t}} = 1$, satisfying \eqref{proofeq:valid prob2}.
\end{itemize}
All the constraints are satisfied, so this is a valid assignment of $\theta$ for this state-action pair.
\end{proof}

\pagebreak
\paragraph{Case 3(b): $1 - \sum_{s' \in \mathcal{S}\setminus\{\tilde{s}'\}}{\tilde{P}_{t}^{UB}(\tilde{s}' \mid \tilde{s}, \tilde{a})} < 0$}
\noindent
\begin{proof}
Because $1 - \sum_{s' \in \mathcal{S}\setminus\{\tilde{s}'\}}{\tilde{P}_{t}^{UB}(\tilde{s}' \mid \tilde{s}, \tilde{a})} < 0$, we have:

\[
    P(s_{t+1} \mid s_t, a_t) \cdot \sum_{s' \in \mathcal{S}\setminus\{\tilde{s}'\}}{\tilde{P}_{t}^{UB}(s' \mid \tilde{s}, \tilde{a})} > P(s_{t+1} \mid s_t, a_t)
\]

Therefore, we can assign $\theta$ as follows:

\[
\begin{cases}
    \sum_{\substack{u_t \in U_t \\f(s_t, a_t, u_t) = s_{t+1} \\ f(\tilde{s}, \tilde{a}, u_t) = \tilde{s}'}}{\theta_{u_t}} = 0\\
    
    \sum_{s' \in \mathcal{S}\setminus\{\tilde{s}'\}}\sum_{\substack{u_t \in U_t \\f(s_t, a_t, u_t) = s_{t+1} \\ f(\tilde{s}, \tilde{a}, u_t) = s'}}{\theta_{u_t}} = P(s_{t+1} \mid s_t, a_t)\\
    
    \sum_{\substack{u_t \in U_t \\f(s_t, a_t, u_t) \neq s_{t+1} \\ f(\tilde{s}, \tilde{a}, u_t) = \tilde{s}'}}{\theta_{u_t}} = P(\tilde{s}' \mid \tilde{s}, \tilde{a})\\
    
    \sum_{s' \in \mathcal{S}\setminus\{ \tilde{s}'\}}\sum_{\substack{u_t \in U_t \\f(s_t, a_t, u_t) \neq s_{t+1} \\ f(\tilde{s}, \tilde{a}, u_t) = s'}}{\theta_{u_t}} = \sum_{s' \in \mathcal{S}\setminus\{\tilde{s}'\}}{P(s' \mid \tilde{s}, \tilde{a})} - \sum_{s' \in \mathcal{S}\setminus\{\tilde{s}'\}}\sum_{\substack{u_t \in U_t \\f(s_t, a_t, u_t) = s_{t+1} \\ f(\tilde{s}, \tilde{a}, u_t) = s'}}{\theta_{u_t}}\\
\end{cases}
\]

This assignment of $\theta$ satisfies the constraints of the linear optimisation problem, as follows:

    \begin{itemize}        
    \item $\sum_{\substack{u_t \in U_t \\f(s_t, a_t, u_t) = s_{t+1}}} \theta_{u_t} = \sum_{\substack{u_t \in U_t \\f(s_t, a_t, u_t) = s_{t+1} \\ f(\tilde{s}, \tilde{a}, u_t) = \tilde{s}'}} \theta_{u_t} + \sum_{s' \in \mathcal{S}\setminus\{ \tilde{s}'\}}\sum_{\substack{u_t \in U_t \\f(s_t, a_t, u_t) = s_{t+1} \\ f(\tilde{s}, \tilde{a}, u_t) = s'}}{\theta_{u_t}} \\ = P(s_{t+1} \mid s_t, a_t)$, satisfying \eqref{proofeq:interventional constraint}.

    \item $\sum_{\substack{u_t \in U_t\\f(s_t, a_t, u_t) \neq s_{t+1}}}{\theta_{u_t}} = \sum_{\substack{u_t \in U_t \\f(s_t, a_t, u_t) \neq s_{t+1} \\ f(\tilde{s}, \tilde{a}, u_t) = \tilde{s}'}} \theta_{u_t} + \sum_{s' \in \mathcal{S}\setminus\{ \tilde{s}'\}}\sum_{\substack{u_t \in U_t \\f(s_t, a_t, u_t) \neq s_{t+1} \\ f(\tilde{s}, \tilde{a}, u_t) = s'}}{\theta_{u_t}} \\=1 -
    P(s_{t+1} \mid s_t, a_t)$, satisfying \eqref{proofeq:interventional constraint}.

    \item $\sum_{\substack{u_t \in U_t \\f(\tilde{s}, \tilde{a}, u_t) = \tilde{s}'}} \theta_{u_t} = \sum_{\substack{u_t \in U_t \\f(s_t, a_t, u_t) = s_{t+1} \\ f(\tilde{s}, \tilde{a}, u_t) = \tilde{s}'}} \theta_{u_t} + \sum_{\substack{u_t \in U_t \\f(s_t, a_t, u_t) \neq s_{t+1} \\ f(\tilde{s}, \tilde{a}, u_t) = \tilde{s}'}} \theta_{u_t} = P(\tilde{s}' \mid \tilde{s}, \tilde{a})$ , \\satisfying \eqref{proofeq:interventional constraint}.
    
    \item $\sum_{s' \in \mathcal{S}\setminus\{\tilde{s}'\}}\sum_{\substack{u_t \in U_t \\f(\tilde{s}, \tilde{a}, u_t) = s'}} \theta_{u_t} = \sum_{s' \in \mathcal{S}\setminus\{ \tilde{s}'\}}\sum_{\substack{u_t \in U_t \\f(s_t, a_t, u_t) = s_{t+1} \\ f(\tilde{s}, \tilde{a}, u_t) = s'}} \theta_{u_t} + \sum_{s' \in \mathcal{S}\setminus\{\tilde{s}'\}}\sum_{\substack{u_t \in U_t \\f(s_t, a_t, u_t) \neq s_{t+1} \\ f(\tilde{s}, \tilde{a}, u_t) = s'}} \theta_{u_t} \\= 1 - P(\tilde{s}' \mid \tilde{s}, \tilde{a}) = \sum_{s' \in \mathcal{S}\setminus\{\tilde{s}'\}}P(s' \mid \tilde{s}, \tilde{a})$, therefore it is possible to assign $\theta$ such that $\forall s' \in \mathcal{S}\setminus\{\tilde{s}'\}, \sum_{\substack{u_t \in U_t \\ f(\tilde{s}, \tilde{a}, u_t) = s'}}{\theta_{u_t}} = P(s' \mid \tilde{s}, \tilde{a})$, satisfying \eqref{proofeq:interventional constraint}.

    \item From this assignment of $\theta$, $\tilde{P}_t(\tilde{s}' \mid \tilde{s}, \tilde{a}) = \dfrac{\sum_{\substack{u_t \in U_t \\f(s_t, a_t, u_t) = s_{t+1} \\ f(\tilde{s}, \tilde{a}, u_t) = \tilde{s}'}}{\theta_{u_t}}}{P(s_{t+1}\mid s_t, a_t)} = \dfrac{0}{P(s_{t+1} \mid s_t, a_t)} = 0$, which immediately satisfies the Mon2 \eqref{proofeq:monotonicity2} and CS \eqref{proofeq:counterfactual stability} constraints for the transition $\tilde{s}, \tilde{a} \rightarrow \tilde{s}'$.
    
    \item Because $1 - \sum_{s' \in \mathcal{S}\setminus\{\tilde{s}'\}}{\tilde{P}_{t}^{UB}(\tilde{s}' \mid \tilde{s}, \tilde{a})} < 0$:
    \[
    P(s_{t+1} \mid s_t, a_t) < \sum_{s' \in \mathcal{S}\setminus\{\tilde{s}'\}}{\tilde{P}_{t}^{UB}(\tilde{s}' \mid \tilde{s}, \tilde{a}) \cdot P(s_{t+1} \mid s_t, a_t)}
    \]

    This guarantees that we can assign $\theta$ such that the Mon1 \eqref{proofeq:monotonicity1}, Mon2 \eqref{proofeq:monotonicity2} and CS \eqref{proofeq:counterfactual stability} constraints are satisfied for all transitions $\tilde{s}', \tilde{a}' \rightarrow s', \forall s' \in \mathcal{S}\setminus\{\tilde{s}'\}$. In particular, we should assign $\theta$ such that $\tilde{P}_t(s_{t+1} \mid \tilde{s}, \tilde{a}) = \tilde{P}_{t}^{UB}(s_{t+1} \mid \tilde{s}, \tilde{a})$ (Theorem \ref{proof theorem:ub overlapping} proves this $\theta$ exists), as we have proven that Mon1 \eqref{proofeq:monotonicity1} and CS \eqref{proofeq:counterfactual stability} constraints are satisfied for $\tilde{P}_{t}^{UB}(s_{t+1} \mid \tilde{s}, \tilde{a})$. Then, we can assign $\theta$ such that $\forall s' \in \mathcal{S}\setminus\{\tilde{s}', s_{t+1}\}, \tilde{P}_t(\tilde{s}' \mid \tilde{s}, \tilde{a}) \leq \tilde{P}_{t}^{UB}(\tilde{s}' \mid \tilde{s}, \tilde{a})$, and because each $\tilde{P}_{t}^{UB}(\tilde{s}' \mid \tilde{s}, \tilde{a})$ has been proven to satisfy the Mon2 \eqref{proofeq:monotonicity2} and CS \eqref{proofeq:counterfactual stability} constraints, each $\tilde{P}_t(\tilde{s}' \mid \tilde{s}, \tilde{a})$ must as well. 

    \item $0 \leq \theta_{u_t} \leq 1, \forall u_t$, satisfying \eqref{proofeq:valid prob1}.
    
    \item $\sum_{u_t = 1}^{|U_t|} \theta_{u_t} = \sum_{\substack{u_t \in U_t \\f(s_t, a_t, u_t) = s_{t+1} \\ f(\tilde{s}, \tilde{a}, u_t) = s_{t+1}}} \theta_{u_t} +  \sum_{s' \in \mathcal{S}\setminus\{ s_{t+1}\}}\sum_{\substack{u_t \in U_t \\f(s_t, a_t, u_t) = s_{t+1} \\ f(\tilde{s}, \tilde{a}, u_t) = s'}}{\theta_{u_t}} + \sum_{\substack{u_t \in U_t \\f(s_t, a_t, u_t) \neq s_{t+1} \\ f(\tilde{s}, \tilde{a}, u_t) = s_{t+1}}} \theta_{u_t} \\+ \sum_{s' \in \mathcal{S}\setminus\{ s_{t+1}\}}\sum_{\substack{u_t \in U_t \\f(s_t, a_t, u_t) \neq s_{t+1} \\ f(\tilde{s}, \tilde{a}, u_t) = s'}}{\theta_{u_t}} = 1$, satisfying \eqref{proofeq:valid prob2}.
\end{itemize}
All the constraints are satisfied, so this is a valid assignment of $\theta$ for this state-action pair.

\end{proof}
Therefore, these two cases can be simplified as:

\begin{align*}
\tilde{P}_{t}^{LB}(\tilde{s}' \mid \tilde{s}, \tilde{a}) &= \dfrac{\sum_{\substack{u_t \in U_t \\f(s_t, a_t, u_t) = s_{t+1} \\ f(\tilde{s}, \tilde{a}, u_t) = \tilde{s}'}}{\theta_{u_t}}}{P(s_{t+1} \mid s_t, a_t)}\\ &=
\dfrac{\max\left(0, P(s_{t+1} \mid s_t, a_t) \cdot (1 - \sum_{s' \in \mathcal{S}\setminus\{\tilde{s}'\}}{\tilde{P}_{t}^{UB}(s' \mid \tilde{s}, \tilde{a})})\right)}{P(s_{t+1} \mid s_t, a_t)}\\ &= \max\left(0, 1 - \sum_{s' \in \mathcal{S}\setminus\{\tilde{s}'\}}{\tilde{P}_{t}^{UB}(s' \mid \tilde{s}, \tilde{a})}\right)
\end{align*}
\end{proof}

We have proven all the cases for Theorem \ref{proof theorem:lb overlapping}, therefore Theorem \ref{proof theorem:lb overlapping} holds.

\end{proof}

\pagebreak
\subsection{Any CFMDP Entailed by an Interval CFMDP is a Valid CFMDP}
\label{sec: sampled interval cfmdp proof}
In our experiments, we evaluate the policies by sampling example CFMDPs from the interval CFMDPs. But, even if $\theta$s exist that would produce each sampled CF transition probability separately, this does not necessarily guarantee that there exists a single $\theta$ that produces all of the CF transition probabilities in the CFMDP. We can prove that, given any CFMDP $\tilde{\mathcal{M}}_t$, there exists a $\theta$ that produces all the counterfactual probabilities in $\tilde{\mathcal{M}}_t$, as follows.

\begin{theorem}
    Given any arbitrary interval counterfactual MDP, any counterfactual MDP $\tilde{\mathcal{M}}_t$ entailed by this interval counterfactual MDP will be a valid counterfactual MDP.
\end{theorem}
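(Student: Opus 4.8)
The plan is to reduce the global statement to a per-state-action-pair \emph{achievability} problem and then invoke the inductive combination argument of Appendix~\ref{sec: probability bounds induction}. First I would observe that counterfactual inference is carried out independently at each timestep, each with its own exogenous variable $U_t$ and its own observed transition $s_t,a_t\to s_{t+1}$; hence it suffices to fix one observed transition and show that, whenever a family of target distributions $\{q_{(\tilde s,\tilde a)}\}_{(\tilde s,\tilde a)}$ satisfies $q_{(\tilde s,\tilde a)}(\tilde s')\in[\tilde P_t^{\it LB}(\tilde s'\mid \tilde s,\tilde a),\tilde P_t^{\it UB}(\tilde s'\mid \tilde s,\tilde a)]$ for all $\tilde s'$ and $\sum_{\tilde s'} q_{(\tilde s,\tilde a)}(\tilde s')=1$, there is a single $\theta$ producing exactly these counterfactual probabilities. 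The induction of Appendix~\ref{sec: probability bounds induction} already establishes that such a $\theta$ can be assembled from per-pair assignments, provided each pair's target is realisable jointly with the observed pair; so the real content is this per-pair realisability, now for an \emph{arbitrary} admissible target rather than only the extremal ones used in Theorems~\ref{theorem: observed state}--\ref{theorem:ub overlapping}.

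Next I would prove per-pair realisability by a transportation (coupling) construction. Writing $p:=P(s_{t+1}\mid s_t,a_t)$, the counterfactual probability of $\tilde s,\tilde a\to\tilde s'$ equals $m(s_{t+1},\tilde s')/p$, where $m$ is the joint mass $\theta$ places on ``observed outcome $=s_{t+1}$ and counterfactual outcome $=\tilde s'$''. Setting $m(s_{t+1},\tilde s')=q_{(\tilde s,\tilde a)}(\tilde s')\,p$ forces the desired counterfactual, and the remaining mass can always be completed into a joint law with marginals $P(\cdot\mid s_t,a_t)$ and $P(\cdot\mid\tilde s,\tilde a)$ exactly when $q_{(\tilde s,\tilde a)}(\tilde s')\,p\le P(\tilde s'\mid \tilde s,\tilde a)$ for every $\tilde s'$ (feasibility of a nonnegative matrix with prescribed, matching row and column sums). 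The key step is to verify this inequality from the upper bounds: in the disjoint case $\tilde P_t^{\it UB}(\tilde s'\mid\tilde s,\tilde a)=\min(1,P(\tilde s'\mid\tilde s,\tilde a)/p)$ gives it directly, and in the overlapping case each branch of $\tilde P_t^{\it UB}$ is likewise bounded above by $P(\tilde s'\mid\tilde s,\tilde a)/p$ (using $p\le 1$ where needed, and $q=0$ on the CS branch). I would also record that membership in the bounds forces Mon1, Mon2 and CS on $q$ itself---e.g.\ $q(s_{t+1})\ge 1-\sum_{\tilde s'\ne s_{t+1}}\tilde P_t^{\it UB}(\tilde s'\mid\tilde s,\tilde a)\ge P(s_{t+1}\mid\tilde s,\tilde a)$ recovers Mon1---so the constructed $\theta$ satisfies \eqref{proofeq:interventional constraint} and \eqref{proofeq:monotonicity1}--\eqref{proofeq:valid prob2}.

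Finally I would run the induction of Appendix~\ref{sec: probability bounds induction} verbatim, but driving each split by the target $q_{(\tilde s,\tilde a)}$ instead of an extremal objective: when the $(k{+}1)$-th pair is introduced, the split of $\theta^{k}$ must realise the prescribed joint between that pair and the observed pair while leaving every previously fixed marginal, hence every previously fixed counterfactual, untouched. This amounts to solving, separately on the exogenous values that map the observed pair to $s_{t+1}$ and on those that do not, two transportation subproblems whose row and column sums match; both are feasible, so $\theta^{k+1}$ exists. Iterating over all $|\mathcal S|\times|\mathcal A|$ pairs yields one $\theta$ reproducing the whole entailed CFMDP, establishing validity. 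I expect the main obstacle to be exactly the coupling step: showing that coordinatewise membership in $[\tilde P_t^{\it LB},\tilde P_t^{\it UB}]$ together with the simplex constraint is \emph{jointly} realisable---i.e.\ that the achievable set is precisely the box intersected with the simplex and not a strict subset---which hinges on the inequality $q(\tilde s')\,p\le P(\tilde s'\mid\tilde s,\tilde a)$ holding uniformly and on the feasibility of the split-level transportation problems.
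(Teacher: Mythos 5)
Your proposal is correct and takes essentially the same approach as the paper's proof: the decomposition into per-state-action-pair realisability followed by the splitting induction of Appendix~\ref{sec: probability bounds induction} is identical, and your coupling construction --- forcing $m(s_{t+1},\tilde{s}')=q_{(\tilde{s},\tilde{a})}(\tilde{s}')\,p$ and completing the joint law with marginals $P(\cdot\mid s_t,a_t)$ and $P(\cdot\mid\tilde{s},\tilde{a})$ subject to $q_{(\tilde{s},\tilde{a})}(\tilde{s}')\,p\le P(\tilde{s}'\mid\tilde{s},\tilde{a})$ --- is precisely the explicit mass assignment the paper makes in Appendix~\ref{sec: theta for sampled distribution}. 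The only differences are cosmetic: the paper's additional convexity-of-the-feasible-set remark is rendered unnecessary by your direct feasibility check, and your Mon1 chain $q(s_{t+1})\ge 1-\sum_{\tilde{s}'\ne s_{t+1}}\tilde{P}_t^{\it UB}(\tilde{s}'\mid\tilde{s},\tilde{a})\ge P(s_{t+1}\mid\tilde{s},\tilde{a})$ should instead appeal directly to the lower-bound branch $\tilde{P}_t^{\it LB}(s_{t+1}\mid\tilde{s},\tilde{a})=\max\left(P(s_{t+1}\mid\tilde{s},\tilde{a}),\,1-\sum_{s'\ne s_{t+1}}\tilde{P}_t^{\it UB}(s'\mid\tilde{s},\tilde{a})\right)$, since the middle inequality in your chain need not hold.
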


\begin{proof}
Let us assume that we have an interval CFMDP, and an arbitrary CFMDP $\tilde{\mathcal{M}_t}$ entailed by this interval CFMDP for each time-step $t$ from this interval CFMDP. First, we need to prove that, for every transition $s, a \rightarrow s'$ in $\tilde{\mathcal{M}_t}$, there exists a $\theta$ that produces its counterfactual probability $\tilde{P}_t(s' \mid s, a)$. Next, we need to show that, for all state-action pairs $(s, a)$, there exists a $\theta$ that satisfies the constraints of the optimisation problem \eqref{proofeq:interventional constraint}-\eqref{proofeq:valid prob2} and produces the entailed CF probability distribution $\tilde{P}_t(\cdot \mid s, a)$. Finally, similarly to the proof in Section \ref{sec: probability bounds proof}, we can prove by induction that these assignments of $\theta$ for each state-action pair can be combined to form a single $\theta$ that produces $\tilde{\mathcal{M}_t}$.

\subsubsection{Existence of $\theta$ for Each $\tilde{P}_t(s' \mid s, a)$}
The counterfactual transition probability bounds are produced by a convex optimisation problem (or using the analytical bounds, which we have proven produce the same exact bounds as the optimisation problem). Because the optimisation problem is convex, the feasible set for each of the counterfactual transition probabilities is also convex (i.e., there are no gaps in the probability bounds). This means there is guaranteed to be at least one $\theta$ that can produce each of the entailed counterfactual transition probabilities in $\tilde{\mathcal{M}_t}$.

\subsubsection{Existence of $\theta$ for Each $\tilde{P}_t(\cdot \mid s, a)$}
\label{sec: theta for sampled distribution}
Next, we need to show that, for all state-action pairs $(s, a)$, there exists a $\theta$ that satisfies the constraints of the optimisation problem \eqref{proofeq:interventional constraint}-\eqref{proofeq:valid prob2} and can produce the entailed CF probability distribution $\tilde{P}_t(\cdot \mid s, a)$. Take any state-action pair $(s, a)$ in $\tilde{\mathcal{M}_t}$ arbitrarily. Each CF transition probability $\tilde{P}_t(\cdot \mid s, a)$ arises from the probability over the mechanisms that produce each transition $s, a \rightarrow s'$ and observed transition $s_t, a_t \rightarrow s_{t+1}$ vs. the total probability assigned to $P(s_{t+1} \mid s_t, a_t)$. Therefore, we need to show that the total probability assigned across all mechanisms containing each transition and the observed transition sums to $P(s_{t+1} \mid s_t, a_t)$. Because $\tilde{P}_t(\cdot \mid s, a)$ is a valid probability distribution, we know

\[
\sum_{s' \in \mathcal{S}}\tilde{P}_t(s' \mid s, a) = 1
\]

This means

\[
\sum_{s' \in \mathcal{S}}\dfrac{\sum_{u_t = 1}^{|U_t|} \mu_{s, a, u_t, s'} \cdot \mu_{s_{t}, a_{t}, u_t, s_{t+1}} \cdot \theta_{u_t}}{P(s_{t+1} \mid s_t, a_t)} = 1
\]

and 

\[
\sum_{s' \in \mathcal{S}}\sum_{u_t = 1}^{|U_t|} \mu_{s, a, u_t, s'} \cdot \mu_{s_{t}, a_{t}, u_t, s_{t+1}} \cdot \theta_{u_t} = {P(s_{t+1} \mid s_t, a_t)}
\]

Because $\mu_{s, a, u_t, s'} = 1$ for exactly one next state, $s'$, and $0$ for the remaining states, we must have

\[
\sum_{u_t = 1}^{|U_t|} \mu_{s_{t}, a_{t}, u_t, s_{t+1}} \cdot \theta_{u_t} = {P(s_{t+1} \mid s_t, a_t)}
\]

so the total probability assigned across all mechanisms containing each transition and the observed transition sums to $P(s_{t+1} \mid s_t, a_t)$.

Next, we need to show that there exists an assignment of $\theta$ which satisfies the interventional probability constraints \eqref{proofeq:interventional constraint} of the optimisation problem. We know that the interventional probabilities of all transitions from $(s, a)$ must sum to $1$:

\[
\sum_{s' \in \mathcal{S}}P(s' \mid s, a) = 1
\]

Because $0 \leq \theta_{u_t} \leq 1, \forall u_t$ \eqref{proofeq:valid prob1} and $\sum_{u_t=1}^{|U_t|}\theta_{u_t} = 1$ \eqref{proofeq:valid prob2} it must be possible to assign $\theta$ such that 

\[
\sum_{u_t = 1}^{|U_t|} \mu_{s, a, u_t, s'} \cdot \theta_{u_t} = P(s' \mid s, a)
\]

and

\[
\forall s' \in \mathcal{S}, \sum_{u_t = 1}^{|U_t|} \mu_{s, a, u_t, s'} \cdot \mu_{s_{t}, a_{t}, u_t, s_{t+1}} \cdot \theta_{u_t} = \tilde{P}_t(s' \mid s, a) \cdot P(s_{t+1} \mid s_t, a_t)
\]

and 

\[
\forall s' \in \mathcal{S}, \sum_{\tilde{s}' \neq s_{t+1}}\sum_{u_t = 1}^{|U_t|} \mu_{s, a, u_t, s'} \cdot \mu_{s_{t}, a_{t}, u_t, \tilde{s}'} \cdot \theta_{u_t} = P(s' \mid s, a) - \tilde{P}_t(s' \mid s, a) \cdot P(s_{t+1} \mid s_t, a_t)
\]

Therefore, this assignment satisfies
\[
\sum_{u_t = 1}^{|U_t|} \mu_{s, a, u_t, s'} \cdot \theta_{u_t} = P(s' \mid s, a)
\]

\[
\sum_{u_t = 1}^{|U_t|} \mu_{s_{t}, a_{t}, u_t, s_{t+1}} \cdot \theta_{u_t} = {P(s_{t+1} \mid s_t, a_t)}
\]

and 

\[
\sum_{s' \in \mathcal{S}\setminus\{s_{t+1}\}}\sum_{u_t = 1}^{|U_t|} \mu_{s_{t}, a_{t}, u_t, s'} \cdot \theta_{u_t} = 1 - P(s_{t+1} \mid s_t, a_t)
\]

which satisfies the interventional probability constraints \eqref{proofeq:interventional constraint}. Finally, because each counterfactual transition probability was sampled from its bounds, we know that the entailed counterfactual transition probabilities satisfy the Mon1 \eqref{proofeq:monotonicity1}, Mon \eqref{proofeq:monotonicity2} and CS \eqref{proofeq:counterfactual stability} constraints as well.

\subsubsection{Existence of $\theta$ for $\tilde{\mathcal{M}_t}$}

We have shown that there exists a $\theta$ that meets the constraints of the optimisation problem for each state-action pair separately in the MDP. Now, 
we need to prove that these assignments can be combined to form a valid $\theta$ for all state-action pairs. Let us prove inductively that adding a new state-action pair from the MDP to the causal model (and changing $\theta$ to meet the constraints for this new state-action pair) does not affect whether the constraints of the other state-action pairs are satisfied. If this is the case, we can consider how $\theta$ can be assigned to satisfy the constraints for each state-action pair separately, as we know these can be combined to form a valid assignment of $\theta$ over all of the state-action pairs. 

\paragraph{Base Case 1} Assume $(s_t, a_t)$ (the observed state-action pair) is the only state-action pair currently considered in the causal model (this always exists, as we always have an observed transition). We require a structural equation mechanism for each possible next state (of which there are $|\mathcal{S}|$). This leads to an assignment of $\theta^1$ as follows, which is the only possible assignment that satisfies the constraints \eqref{proofeq:interventional constraint}-\eqref{proofeq:valid prob2}:

\begin{table}[h]
\centering
\begin{tabular}{c|c|c|c}
$\theta^1_{u_1}$                                  & $\theta^1_{u_2}$                                 & ... & $\theta^1_{u_{|\mathcal{S}|}}$\\
\hline
$P(s_1 \mid s_t, a_t)$ & $P(s_2 \mid s_t, a_t)$ & ... & $P(s_{|\mathcal{S}|} \mid s_t, a_t)$
\end{tabular}
\end{table}

$\theta^1 = [P(s_1 \mid s_t, a_t), P(s_2 \mid s_t, a_t), ..., P(s_{|\mathcal{S}|} \mid s_t, a_t)]$. Given $\theta^1$, the counterfactual transition probability for each transition $s_t, a_t \rightarrow s'$ can be calculated with:

\[
\tilde{P}_t(s' \mid s_t, a_t) = \dfrac{\sum_{u_t = 1}^{|U_t|} \mu_{s_t, a_t, u_t, s'} \cdot \mu_{s_{t}, a_{t}, u_t, s_{t+1}} \cdot \theta_{u_t}}{P(s_{t+1} \mid s_t, a_t)}, \forall s'
\]

Therefore, given $\theta^1$, $\tilde{P}_t(s_{t+1} \mid s_t, a_t) = 1$, and $\forall s' \in \mathcal{S}\setminus\{s_{t+1}\}$, $\tilde{P}_t(s' \mid s_t, a_t) = 0$. This is the only possible counterfactual probability distribution for $(s_t, a_t)$, so this is guaranteed to be the distribution entailed from the interval CFMDP. These counterfactual probabilities also satisfy the monotonicity and counterfactual stability constraints (6-8) of the optimisation problem, as follows:

\begin{itemize}
    \item Because $\dfrac{P(s_{t+1} \mid s_t, a_t)}{P(s_{t+1} \mid s_t, a_t)} = 1$ and $\forall s' \in \mathcal{S}, \dfrac{P(s' \mid s_t, a_t)}{P(s' \mid s_t, a_t)} = 1$,

    \[
    \forall s' \in \mathcal{S}, \dfrac{P(s_{t+1} \mid s_t, a_t)}{P(s_{t+1} \mid s_t, a_t)} = \dfrac{P(s' \mid s_t, a_t)}{P(s' \mid s_t, a_t)}
    \]

    so CS \eqref{proofeq:counterfactual stability} is always vacuously true for all transitions from $(s_t, a_t)$.\\

    \item Because $\tilde{P}_{t}(s_{t+1} \mid s_t, a_t) = 1 \geq P(s_{t+1} \mid s_t, a_t)$, this satisfies Mon1 \eqref{proofeq:monotonicity1}.
    
    \item Because $\forall s' \in \mathcal{S}\setminus \{s_{t+1}\}, \tilde{P}_{t}(s' \mid s_t, a_t) = 0 \leq P(s' \mid s_t, a_t)$, this satisfies Mon2 \eqref{proofeq:monotonicity2}.
\end{itemize} 

\paragraph{Base Case 2} Let us assume that we only have the observed state-action pair in the causal model, and assume we have a valid $\theta^1$ with $|\mathcal{S}|$ structural equation mechanisms. 

Now, take another state-action pair $(s, a)$ arbitrarily from the MDP to add to the causal model. We now require $|\mathcal{S}|^2$ structural equation mechanisms, one for each possible combination of transitions from the two state-action pairs. We can view this as separating each of the existing structural equation mechanisms from Base Case 1 into $|\mathcal{S}|$ new structural equation mechanisms, one for every possible transition from the new state-action pair, $(s, a)$:

\begin{figure}[!ht]
\centering
\resizebox{0.5\textwidth}{!}{%
\begin{circuitikz}
\tikzstyle{every node}=[font=\LARGE]
\node [font=\LARGE] at (3,16.25) {$u_1$};
\node [font=\LARGE] at (6.0,16.25) {$...$};
\node [font=\LARGE] at (8.75,16.25) {$u_{|S|}$};
\draw [short] (3,15.75) -- (1.25,13.5);
\draw [short] (3,15.75) -- (5,13.5);
\draw [short] (3,15.75) -- (3,13.5);
\node [font=\LARGE] at (1,13) {$u_{1,1}$};
\node [font=\LARGE] at (5,13) {$u_{1,|S|}$};
\node [font=\LARGE] at (3,13) {$u_{1,2}$};
\node [font=\LARGE] at (4.0,13) {$...$};
\node [font=\LARGE] at (9.7,13) {$...$};

\draw [short] (8.75,15.75) -- (7,13.5);
\draw [short] (8.75,15.75) -- (10.75,13.5);
\draw [short] (8.75,15.75) -- (8.75,13.5);
\node [font=\LARGE] at (6.75,13) {$u_{|S|,1}$};
\node [font=\LARGE] at (10.75,13) {$u_{|S|,|S|}$};
\node [font=\LARGE] at (8.75,13) {$u_{|S|,2}$};
\end{circuitikz}
}%
\end{figure}

where, for $1 \leq i \leq |\mathcal{S}|, 1 \leq n \leq |\mathcal{S}|$, $u_{i, n}$ leads to the same next state for $(s_t, a_t)$ as $u_i$, and produces the transition $s, a \rightarrow s_n$.\\

In the same way, we can split each $\theta^1_{u_1}, ..., \theta^1_{u_{|\mathcal{S}|}}$ across these new structural equation mechanisms to obtain $\theta^2$:

\begin{figure}[!ht]
\centering
\resizebox{0.5\textwidth}{!}{%
\begin{circuitikz}
\tikzstyle{every node}=[font=\LARGE]
\node [font=\LARGE] at (3,16.25) {$\theta^1_{u_1}$};
\node [font=\LARGE] at (6.0,16.25) {$...$};
\node [font=\LARGE] at (8.75,16.25) {$\theta^1_{u_{|S|}}$};
\draw [short] (3,15.75) -- (1.25,13.5);
\draw [short] (3,15.75) -- (5,13.5);
\draw [short] (3,15.75) -- (3,13.5);
\node [font=\LARGE] at (1,13) {$\theta^2_{u_{1,1}}$};
\node [font=\LARGE] at (5,13) {$\theta^2_{u_{1,|S|}}$};
\node [font=\LARGE] at (3,13) {$\theta^2_{u_{1,2}}$};
\node [font=\LARGE] at (3.9,13) {$...$};
\node [font=\LARGE] at (9.6,13) {$...$};
\draw [short] (8.75,15.75) -- (7,13.5);
\draw [short] (8.75,15.75) -- (10.75,13.5);
\draw [short] (8.75,15.75) -- (8.75,13.5);
\node [font=\LARGE] at (6.75,13) {$\theta^2_{u_{|S|,1}}$};
\node [font=\LARGE] at (10.75,13) {$\theta^2_{u_{|S|,|S|}}$};
\node [font=\LARGE] at (8.75,13) {$\theta^2_{u_{|S|,2}}$};
\end{circuitikz}
}%
\end{figure}
By splitting $\theta^1$ in this way, we guarantee that the total probability across each set of mechanisms (where each set produces a different transition from $(s_t, a_t)$) remains the same, i.e., \[\forall i \in \{1, .., |\mathcal{S}|\}, \sum_{n=1}^{|\mathcal{S}|}\theta^2_{u_{i,n}} = \theta_{u_i}^1\]

This means that the counterfactual probabilities of all transitions from $(s_t, a_t)$ will be exactly the same when using $\theta^2$ as with $\theta^1$. Because we assume all the constraints were satisfied when using $\theta^1$, this guarantees that all the constraints of the optimisation problem will continue to hold for $(s_t, a_t)$ with $\theta^2$.\\

Now, we only need to ensure we assign $\theta^2$ such that all the constraints hold for the transitions from $(s, a)$. Firstly, we need $\sum_{i=1}^{|\mathcal{S}|}{\theta^2_{u_{i, n}}} = P(s_n \mid s, a)$ for every possible next state $s_n \in \mathcal{S}$, to satisfy the interventional constraint \eqref{proofeq:interventional constraint}. For each transition $s, a \rightarrow s_n$, there is a $\theta^2_{u_{i, n}}$ for every $u_i$, and for every $\theta^2_{u_{i, n}}$, $0 \leq \theta^2_{u_{i, n}} \leq \theta^1_{u_n}$. Therefore, $0 \leq \sum_{i=1}^{|\mathcal{S}|}{\theta^2_{u_{i, n}}} \leq 1, \forall s_n \in \mathcal{S}$. We also know that:

\begin{align*}
\sum_{n=1}^{|\mathcal{S}|}\sum_{i=1}^{|\mathcal{S}|}\theta^2_{u_i,n} &= \sum_{i=1}^{|\mathcal{S}|}\theta^1_{u_i} \text{ (because we split the probabilities of each $\theta^1_{u_i}$)}\\
&= \sum_{i=1}^{|\mathcal{S}|}P(s_i|s_t, a_t) \\&= 1
\end{align*}

and 

\[
\sum_{n=1}^{|\mathcal{S}|}\sum_{i=1}^{|\mathcal{S}|}{\theta^2_{u_{i, n}}} = 
\sum_{i=1}^{|\mathcal{S}|}P(s_i|s,a) = 1
\]

Therefore, no matter the assignment of $\theta^1$, we can satisfy all the constraints \eqref{proofeq:interventional constraint}-\eqref{proofeq:valid prob2} for all transitions from $(s, a)$. In Section \ref{sec: theta for sampled distribution}, we prove that, for any $(s, a)$, we can assign $\theta^2$ such that it produces the entailed counterfactual probability distribution.

\paragraph{Inductive Case} Let us assume that we now have $k$ state-action pair values from the MDP $\mathcal{M}$ in our causal model, and we have found a valid assignment $\theta^k$ that satisfies all the constraints for $k$ state-action pairs from the MDP $\mathcal{M}$, including $(s_t, a_t)$. Because there are $k$ state-action pairs, there are $|\mathcal{S}|^{k}$ possible unique structural equation mechanisms for these state-action pairs \footnote{Note we have changed the indexing of each $\theta^k_{i, n}$ to $\theta^k_{j}$, where $j = (i-1)\cdot|\mathcal{S}| + n$}, with probabilities as follows:

\begin{table}[h]
\centering
\begin{tabular}{lllllll}
$\theta^k_{u_1}$ & ... & $\theta^k_{u_{|\mathcal{S}|}}$ & ... & $\theta^k_{u_{|\mathcal{S}|^2}}$ & ... & $\theta^k_{u_{|\mathcal{S}|^{k}}}$ \\
\end{tabular}
\end{table}

Now, we wish to add the $k+1^{th}$ state-action pair from the MDP to the causal model, resulting in $|\mathcal{S}|^{k+1}$ possible structural equation mechanisms. Let the $k+1^{th}$ state-action pair be $(s, a)$ arbitrarily. We can view this as separating the existing structural equation mechanisms each into $|\mathcal{S}|$ new structural equation mechanisms, one for every possible transition from the $k+1^{th}$ state-action pair:

\begin{figure}[!ht]
\centering
\resizebox{1\textwidth}{!}{%
\begin{circuitikz}
\tikzstyle{every node}=[font=\LARGE]
\node [font=\LARGE] at (3.0,16.25) {$u_1$};
\node [font=\LARGE] at (6.0,16.25) {$...$};
\node [font=\LARGE] at (8.75,16.25) {$u_{|S|}$};
\node [font=\LARGE] at (12,16.25) {$...$};
\node [font=\LARGE] at (14.75,16.25) {$u_{|S|^2}$};
\node [font=\LARGE] at (20.75,16.25) {$u_{|S|^k}$};
\node [font=\LARGE] at (18,16.25) {$...$};
\draw [short] (3,15.75) -- (1.25,13.5);
\draw [short] (3,15.75) -- (5,13.5);
\draw [short] (3,15.75) -- (3,13.5);
\node [font=\LARGE] at (1,13) {$u_{1,1}$};
\node [font=\LARGE] at (5,13) {$u_{1,|S|}$};
\node [font=\LARGE] at (3,13) {$u_{1,2}$};
\node [font=\LARGE] at (3.9,13) {$...$};
\draw [short] (8.75,15.75) -- (7,13.5);
\draw [short] (8.75,15.75) -- (10.75,13.5);
\draw [short] (8.75,15.75) -- (8.75,13.5);
\node [font=\LARGE] at (6.75,13) {$u_{|S|,1}$};
\node [font=\LARGE] at (10.75,13) {$u_{|S|,|S|}$};
\node [font=\LARGE] at (8.75,13) {$u_{|S|,2}$};
\node [font=\LARGE] at (9.7,13) {$...$};
\draw [short] (14.75,15.75) -- (13,13.5);
\draw [short] (14.75,15.75) -- (16.75,13.5);
\draw [short] (14.75,15.75) -- (14.75,13.5);
\node [font=\LARGE] at (12.75,13) {$u_{|S|^2,1}$};
\node [font=\LARGE] at (17,13) {$u_{|S|^2,|S|}$};
\node [font=\LARGE] at (14.75,13) {$u_{|S|^2,2}$};
\node [font=\LARGE] at (15.8,13) {$...$};
\draw [short] (20.75,15.75) -- (19,13.5);
\draw [short] (20.75,15.75) -- (22.75,13.5);
\draw [short] (20.75,15.75) -- (20.75,13.5);
\node [font=\LARGE] at (18.75,13) {$u_{|S|^{k},1}$};
\node [font=\LARGE] at (23,13) {$u_{|S|^{k},|S|}$};
\node [font=\LARGE] at (20.75,13) {$u_{|S|^{k}, 2}$};
\node [font=\LARGE] at (21.8,13) {$...$};
\end{circuitikz}
}%
\end{figure}

where each $u_{i, n}$ produces the same next states for all of the first $k$ state-action pairs as $u_i$, and produces the transition $s, a \rightarrow s_n$.

In the same way, we can split each $\theta^k_{u_1}, ..., \theta^k_{u_{|\mathcal{S}|}}, ...$ across these new structural equation mechanisms to find $\theta^{k+1}$.

\pagebreak
\begin{figure}[!ht]
\centering
\resizebox{1\textwidth}{!}{%
\begin{circuitikz}
\tikzstyle{every node}=[font=\LARGE]
\node [font=\LARGE] at (3,16.25) {$\theta^k_{u_1}$};
\node [font=\LARGE] at (6.0,16.25) {$...$};
\node [font=\LARGE] at (8.75,16.25) {$\theta^k_{u_{|S|}}$};
\node [font=\LARGE] at (11.8,16.25) {$...$};
\node [font=\LARGE] at (14.75,16.25) {$\theta^k_{u_{|S|^2}}$};
\node [font=\LARGE] at (20.75,16.25) {$\theta^k_{u_{|S|^k}}$};
\node [font=\LARGE] at (17.8,16.25) {$...$};
\draw [short] (3,15.75) -- (1.25,13.5);
\draw [short] (3,15.75) -- (5,13.5);
\draw [short] (3,15.75) -- (3,13.5);
\node [font=\LARGE] at (1,13) {$\theta^{k+1}_{u_{1,1}}$};
\node [font=\LARGE] at (5,13) {$\theta^{k+1}_{u_{1,|S|}}$};
\node [font=\LARGE] at (3,13) {$\theta^{k+1}_{u_{1,2}}$};
\node [font=\LARGE] at (3.9,13) {$...$};
\draw [short] (8.75,15.75) -- (7,13.5);
\draw [short] (8.75,15.75) -- (10.75,13.5);
\draw [short] (8.75,15.75) -- (8.75,13.5);
\node [font=\LARGE] at (6.75,13) {$\theta^{k+1}_{u_{|S|,1}}$};
\node [font=\LARGE] at (10.9,13) {$\theta^{k+1}_{u_{|S|,|S|}}$};
\node [font=\LARGE] at (8.75,13) {$\theta^{k+1}_{u_{|S|,2}}$};
\node [font=\LARGE] at (9.7,13) {$...$};
\draw [short] (14.75,15.75) -- (13,13.5);
\draw [short] (14.75,15.75) -- (16.75,13.5);
\draw [short] (14.75,15.75) -- (14.75,13.5);
\node [font=\LARGE] at (12.75,13) {$\theta^{k+1}_{u_{|S|^2,1}}$};
\node [font=\LARGE] at (17,13) {$\theta^{k+1}_{u_{|S|^2,|S|}}$};
\node [font=\LARGE] at (14.75,13) {$\theta^{k+1}_{u_{|S|^2,2}}$};
\node [font=\LARGE] at (15.75,13) {$...$};
\draw [short] (20.75,15.75) -- (19,13.5);
\draw [short] (20.75,15.75) -- (22.75,13.5);
\draw [short] (20.75,15.75) -- (20.75,13.5);
\node [font=\LARGE] at (18.75,13) {$\theta^{k+1}_{u_{|S|^{k},1}}$};
\node [font=\LARGE] at (23,13) {$\theta^{k+1}_{u_{|S|^{k},|S|}}$};
\node [font=\LARGE] at (20.75,13) {$\theta^{k+1}_{u_{|S|^{k}, 2}}$};
\node [font=\LARGE] at (21.75,13) {$...$};
\end{circuitikz}
}%
\end{figure}

By splitting $\theta^k$ in this way, we have $\forall i \in \{1, .., |\mathcal{S}|\}, \sum_{n=1}^{|\mathcal{S}|}\theta^{k+1}_{u_{i,n}} = \theta_{u_i}^k$. Each $\theta^{k+1}_{u_{i, n}}$ is only different to $\theta^k_{u_i}$ in the transition from $(s, a)$. Therefore, this guarantees that for every state-action pair $(s', a') \neq (s, a)$ in the first $k$ pairs added to the causal model, the total probability across each set of mechanisms (where each set produces a different transition from $(s', a')$) remains the same in $\theta^{k+1}$ as in $\theta^k$. As a result, the counterfactual probabilities of all transitions from each $(s', a')$ will be exactly the same when using $\theta^{k+1}$ as with $\theta^k$. Because we assume all the constraints were satisfied when using $\theta^k$, this guarantees that all the constraints of the optimisation problem will continue to hold for $(s', a')$ with $\theta^{k+1}$.\\

Now, we only need to make sure that we can assign $\theta^{k+1}$ such that the constraints also hold for all the transitions from $(s, a)$. We need $\sum_{i=1}^{|\mathcal{S}|^k}{\theta^{k+1}_{u_{i, n}}} = P(s_n \mid s, a)$ for every possible next state $s_n \in \mathcal{S}$, to satisfy the interventional probability constraints \eqref{proofeq:interventional constraint}. For each transition $s, a \rightarrow s_n$, there is a $\theta^{k+1}_{u_{i, n}}$ for every $u_i$, and for every $\theta^{k+1}_{u_{i, n}}$, $0 \leq \theta^{k+1}_{u_{i, n}} \leq \theta^k_{u_n}$, therefore $0 \leq \sum_{i=1}^{|\mathcal{S}|^k}{\theta^{k+1}_{u_{i, n}}} \leq 1, \forall s_n \in \mathcal{S}$. We also know that:

\begin{align*}
\sum_{n=1}^{|\mathcal{S}|}\sum_{i=1}^{|\mathcal{S}|^k}\theta^{k+1}_{u_i,n} &= \sum_{i=1}^{|\mathcal{S}|^k}\theta^k_{u_i} \text{ (because we split the probabilities of each $\theta^k_{u_i}$)}\\&= 1
\end{align*}

and 

\[
\sum_{n=1}^{|\mathcal{S}|}\sum_{i=1}^{|\mathcal{S}|^k}{\theta^2_{u_{i, n}}} = 
\sum_{n=1}^{|\mathcal{S}|}P(s_n|s,a) = 1
\]

Therefore, no matter the assignment of $\theta^k$, we can satisfy the interventional probability constraints \eqref{proofeq:interventional constraint} for all transitions from $(s, a)$. In Section \ref{sec: theta for sampled distribution}, we prove that, for any $(s, a)$, we can assign $\theta^{k+1}$ such that it produces the entailed counterfactual probability distribution (which we prove always satisfies the constraints of the optimisation problem). Therefore, we know we can assign $\theta^{k+1}$ such that it produces the counterfactual probability distributions for all $k+1$ state-action pairs in the causal model.

\paragraph{Conclusion} By induction, we have proven that as long as we can find a $\theta$ that produces the entailed counterfactual probability distribution for each state-action pair separately, we can combine these into a single valid $\theta$ that produces the entailed CFMDP $\tilde{\mathcal{M}_t}$.
\end{proof}

\pagebreak
\subsection{Lemmas}
\begin{lemma}\label{lemma: constraints do not apply if disjoint}
If $(s_t, a_t) \neq (\tilde{s}, \tilde{a})$ and $(s_t, a_t)$ and $(\tilde{s}, \tilde{a})$ have disjoint support, the monotonicity constraints \eqref{proofeq:monotonicity1} and \eqref{proofeq:monotonicity2} and counterfactual stability \eqref{proofeq:counterfactual stability} will be satisfied for all possible counterfactual transition probabilities for transitions from $(\tilde{s}, \tilde{a})$.
\end{lemma}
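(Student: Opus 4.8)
The plan is to show that each of the three constraints is triggered only under conditions that disjoint support renders impossible, so that each holds vacuously for every transition out of $(\tilde{s}, \tilde{a})$. The starting observation is that the observed next state satisfies $P(s_{t+1} \mid s_t, a_t) > 0$ by the definition of a path, and that disjointness of the supports means precisely that for every state $s'$, at most one of $P(s' \mid s_t, a_t)$ and $P(s' \mid \tilde{s}, \tilde{a})$ is nonzero. Applying this to $s' = s_{t+1}$ immediately yields the key fact $P(s_{t+1} \mid \tilde{s}, \tilde{a}) = 0$, which I would use throughout.

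First I would dispatch Mon1 \eqref{proofeq:monotonicity1}: its precondition is $P(s_{t+1} \mid \tilde{s}, \tilde{a}) > 0$, which directly contradicts the key fact above, so Mon1 is vacuously satisfied. Next, for Mon2 \eqref{proofeq:monotonicity2} applied to any transition $\tilde{s}, \tilde{a} \rightarrow \tilde{s}'$ with $\tilde{s}' \neq s_{t+1}$, the precondition requires $P(\tilde{s}' \mid \tilde{s}, \tilde{a}) > 0$ and $P(\tilde{s}' \mid s_t, a_t) > 0$ simultaneously; but disjoint support forbids both from being positive for the same $\tilde{s}'$, so the precondition never holds and Mon2 is vacuously satisfied.

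The slightly more delicate case is CS \eqref{proofeq:counterfactual stability}, since its precondition is a ratio comparison rather than a single positivity condition. Here I would argue as follows: the CS precondition requires $P(\tilde{s}' \mid s_t, a_t) > 0$, which by disjoint support forces $P(\tilde{s}' \mid \tilde{s}, \tilde{a}) = 0$, so the right-hand ratio $\frac{P(\tilde{s}' \mid \tilde{s}, \tilde{a})}{P(\tilde{s}' \mid s_t, a_t)}$ equals $0$. Simultaneously, the key fact $P(s_{t+1} \mid \tilde{s}, \tilde{a}) = 0$ makes the left-hand ratio $\frac{P(s_{t+1} \mid \tilde{s}, \tilde{a})}{P(s_{t+1} \mid s_t, a_t)}$ equal to $0$ as well. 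Thus the strict inequality reduces to $0 > 0$, which is false, so the CS precondition is never met and CS holds vacuously. Since all three constraints are vacuously satisfied for every counterfactual transition probability $\tilde{P}_t(\tilde{s}' \mid \tilde{s}, \tilde{a})$ out of $(\tilde{s}, \tilde{a})$, the lemma follows.

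I do not anticipate a genuine obstacle; the entire argument rests on the single consequence of disjoint support that $s_{t+1}$ has zero probability under $(\tilde{s}, \tilde{a})$ and that no state can be shared between the two supports. The only point needing a little care is the CS case, where one must verify that both ratios in the comparison collapse to $0$ rather than becoming undefined — which is guaranteed because the CS precondition itself supplies $P(\tilde{s}' \mid s_t, a_t) > 0$ and the observed transition supplies $P(s_{t+1} \mid s_t, a_t) > 0$, so both denominators are strictly positive.
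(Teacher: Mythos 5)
Your proof is correct and takes essentially the same route as the paper's: all three constraints are shown to hold vacuously because disjoint support (together with $P(s_{t+1}\mid s_t,a_t)>0$ from the observation) defeats each constraint's precondition. If anything, your handling of CS is slightly more explicit than the paper's, which only notes that $P(\tilde{s}'\mid s_t,a_t)=0$ whenever $P(\tilde{s}'\mid\tilde{s},\tilde{a})>0$; your observation that for $\tilde{s}'$ in the support of $(s_t,a_t)$ both ratios collapse to $0$, so the strict inequality $0>0$ fails, explicitly covers the remaining states that the paper's wording glosses over.
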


\begin{proof}
Because $(s_t, a_t)$ and $(\tilde{s}, \tilde{a})$ have disjoint support, $P(s_{t+1} \mid \tilde{s}, \tilde{a}) = 0$ and, for all $\tilde{s}'$ where $P(\tilde{s}' \mid \tilde{s}, \tilde{a}) > 0$, $P(\tilde{s}' \mid s_t, a_t) = 0$.
For all $\tilde{s}'$ where $P(\tilde{s}' \mid \tilde{s}, \tilde{a}) > 0$, $P(\tilde{s}' \mid s_t, a_t) = 0$, so CS \eqref{proofeq:counterfactual stability} and Mon2 \eqref{proofeq:monotonicity2} will be vacuously true for all transitions from $(\tilde{s}, \tilde{a})$. Also, because $P(s_{t+1} \mid \tilde{s}, \tilde{a}) = 0$, Mon1 \eqref{proofeq:monotonicity1} is vacuously true.
\end{proof}

\begin{lemma}
\label{lemma:absolute max cf prob}
For any observed transition $s_t, a_t \rightarrow s_{t+1}$ and counterfactual transition $\tilde{s}, \tilde{a} \rightarrow \tilde{s}'$:
\[
    \sum_{u_t = 1}^{|U_t|} \mu_{\tilde{s}, \tilde{a}, u_t, \tilde{s}'} \cdot \mu_{s_t, a_t, u_t, s_{t+1}} \cdot \theta_{u_t} \leq \min(P(s_{t+1} \mid s_t, a_t), P(\tilde{s}' \mid \tilde{s}, \tilde{a}))
\]
\end{lemma}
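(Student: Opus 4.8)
The plan is to upper-bound the joint indicator sum by each of the two \emph{marginal} sums separately, each of which is pinned to an interventional probability by the constraint \eqref{proofeq:interventional constraint}, and then observe that a quantity bounded above by two numbers is bounded above by their minimum. No optimisation or case analysis is needed; the argument rests entirely on the nonnegativity of $\theta$ and the fact that each $\mu$ is an indicator.

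First I would record two elementary facts from the definitions: by construction $\mu_{s,a,u_t,s'} \in \{0,1\}$ for every choice of arguments, and $\theta_{u_t} \geq 0$ for all $u_t$ by \eqref{proofeq:valid prob1}. Combining these, for every $u_t$ the summand on the left-hand side satisfies the pointwise inequality
\[
\mu_{\tilde{s},\tilde{a},u_t,\tilde{s}'} \cdot \mu_{s_t,a_t,u_t,s_{t+1}} \cdot \theta_{u_t} \;\leq\; \mu_{s_t,a_t,u_t,s_{t+1}} \cdot \theta_{u_t},
\]
since the dropped factor $\mu_{\tilde{s},\tilde{a},u_t,\tilde{s}'}$ lies in $\{0,1\}$ and the remaining product is nonnegative. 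Summing over all $u_t$ and applying the interventional constraint \eqref{proofeq:interventional constraint} to the observed transition $s_t,a_t \rightarrow s_{t+1}$ yields
\[
\sum_{u_t=1}^{|U_t|} \mu_{\tilde{s},\tilde{a},u_t,\tilde{s}'} \cdot \mu_{s_t,a_t,u_t,s_{t+1}} \cdot \theta_{u_t} \;\leq\; \sum_{u_t=1}^{|U_t|} \mu_{s_t,a_t,u_t,s_{t+1}} \cdot \theta_{u_t} \;=\; P(s_{t+1} \mid s_t, a_t).
\]

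Next I would repeat the same step symmetrically, this time dropping the factor $\mu_{s_t,a_t,u_t,s_{t+1}}$ instead, and applying \eqref{proofeq:interventional constraint} to the counterfactual transition $\tilde{s},\tilde{a} \rightarrow \tilde{s}'$, which gives the bound $P(\tilde{s}' \mid \tilde{s}, \tilde{a})$ for the same left-hand side. Since the joint sum is simultaneously at most $P(s_{t+1} \mid s_t, a_t)$ and at most $P(\tilde{s}' \mid \tilde{s}, \tilde{a})$, it is at most their minimum, which is exactly the claimed inequality. There is no genuine obstacle here: the only point requiring a moment of care is the nonnegativity of $\theta_{u_t}$, which is what makes ``deleting'' an indicator factor a valid upper bound rather than merely an equality on the support; this is guaranteed by \eqref{proofeq:valid prob1}.
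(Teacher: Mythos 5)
Your proposal is correct and follows essentially the same route as the paper's proof: both bound the joint sum by each marginal sum (dropping one indicator factor at a time, justified by $\mu \in \{0,1\}$ and $\theta_{u_t} \geq 0$), apply the interventional constraint \eqref{proofeq:interventional constraint} to each marginal, and conclude with the minimum of the two bounds. Your write-up is, if anything, slightly more explicit than the paper's in isolating nonnegativity of $\theta_{u_t}$ as the step that licenses deleting an indicator factor.
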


\begin{proof}
From our interventional probability constraints \eqref{proofeq:interventional constraint}, we have:

\begin{equation}
    \label{eq: lemma 4.2 constraint 1}
   \sum_{u_t = 1}^{|U_t|} \mu_{s_t, a_t, u_t, s_{t+1}} \cdot \theta_{u_t} = P(s_{t+1} \mid s_t, a_t) 
\end{equation}

and

\begin{equation}
    \label{eq: lemma 4.2 constraint 2}
   \sum_{u_t = 1}^{|U_t|} \mu_{\tilde{s}, \tilde{a}, u_t, \tilde{s}'}\cdot \theta_{u_t} = P(\tilde{s}' \mid \tilde{s}, \tilde{a}) 
\end{equation}

$\mu_{\tilde{s}, \tilde{a}, u, \tilde{s}'} \in \{0, 1\}$ (from its definition). $\sum_{u_t = 1}^{|U_t|} \mu_{\tilde{s}, \tilde{a}, u_t, \tilde{s}'} \cdot \mu_{s_t, a_t, u_t, s_{t+1}} \cdot \theta_{u_t}$ is maximal when the probability assigned to all $u_t$ where $\mu_{\tilde{s}, \tilde{a}, u_t, \tilde{s}'} = 1$ and $\mu_{s_t, a_t, u_t, s_{t+1}}=1$ is maximal.

\[
\sum_{u_t = 1}^{|U_t|} \mu_{\tilde{s}, \tilde{a}, u_t, \tilde{s}'} \cdot (\mu_{s_t, a_t, u_t, s_{t+1}} \cdot \theta_{u_t}) \leq \sum_{u_t = 1}^{|U_t|} \mu_{s_t, a_t, u_t, s_{t+1}} \cdot \theta_{u_t} = P(s_{t+1} \mid s_t, a_t)
\]

\[
\sum_{u_t = 1}^{|U_t|} (\mu_{\tilde{s}, \tilde{a}, u_t, \tilde{s}'} \cdot \theta_{u_t}) \cdot \mu_{s_t, a_t, u_t, s_{t+1}} \leq \sum_{u_t = 1}^{|U_t|} \mu_{\tilde{s}, \tilde{a}, u_t, \tilde{s}'}\cdot \theta_{u_t} = P(\tilde{s}' \mid \tilde{s}, \tilde{a})
\]

 Because of Eq. \eqref{eq: lemma 4.2 constraint 1} and Eq. \eqref{eq: lemma 4.2 constraint 2}, the probability assigned to all $u_t$ where $\mu_{\tilde{s}, \tilde{a}, u_t, \tilde{s}'} = 1$ and $\mu_{s_t, a_t, u_t, s_{t+1}}=1$ cannot be greater than $P(s_{t+1} \mid s_t, a_t)$ or $P(\tilde{s}' \mid \tilde{s}, \tilde{a})$. Therefore,

\[
\sum_{u_t = 1}^{|U_t|} \mu_{\tilde{s}, \tilde{a}, u_t, \tilde{s}'} \cdot \mu_{s_t, a_t, u_t, s_{t+1}} \cdot \theta_{u_t} \leq \min(P(s_{t+1} \mid s_t, a_t), P(\tilde{s}' \mid \tilde{s}, \tilde{a}))
\]

This also means that the upper bound of any counterfactual transition probability \\$\tilde{P}_t^{UB}(\tilde{s}' \mid \tilde{s}, \tilde{a}) \leq \dfrac{\min(P(\tilde{s}' \mid \tilde{s}, \tilde{a}), P(s_{t+1} \mid s_t, a_t))}{P(s_{t+1} \mid s_t, a_t)}$.
\end{proof}

\begin{lemma}
    \label{lemma: CF probs of observed state-action pair}
    For any observed transition $s_t, a_t \rightarrow s_{t+1}$, $\tilde{P}_{t}(s_{t+1} \mid s_t, a_t) = 1$ and, $\forall \tilde{s}' \in \mathcal{S}$ where $\tilde{s}' \in \mathcal{S}\setminus \{s_{t+1}\}$, $\tilde{P}_{t}(\tilde{s}' \mid s_t, a_t) = 0$.
\end{lemma}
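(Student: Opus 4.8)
The plan is to evaluate the counterfactual probability formula \eqref{eq: counterfactual probability} directly in the special case where the counterfactual state-action pair coincides with the observed one, i.e. $(\tilde{s},\tilde{a}) = (s_t,a_t)$. The whole argument rests on two elementary facts about the indicator variable $\mu$: first, that it is $\{0,1\}$-valued and therefore idempotent, so $\mu_{s_t,a_t,u_t,s_{t+1}}^2 = \mu_{s_t,a_t,u_t,s_{t+1}}$; and second, that each exogenous value $u_t$ indexes a \emph{deterministic} structural equation, so $f(s_t,a_t,u_t)$ equals exactly one next state. The latter means $\mu_{s_t,a_t,u_t,\tilde{s}'}\cdot\mu_{s_t,a_t,u_t,s_{t+1}} = 0$ for every $u_t$ whenever $\tilde{s}' \neq s_{t+1}$. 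Crucially, the claim will hold for \emph{every} feasible $\theta$, so no optimisation over $\theta$ is needed.

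First I would treat the outcome $\tilde{s}' = s_{t+1}$. Substituting $(\tilde{s},\tilde{a}) = (s_t,a_t)$ and $\tilde{s}' = s_{t+1}$ into \eqref{eq: counterfactual probability} produces the numerator $\sum_{u_t} \mu_{s_t,a_t,u_t,s_{t+1}}\cdot\mu_{s_t,a_t,u_t,s_{t+1}}\cdot\theta_{u_t}$. By idempotence this collapses to $\sum_{u_t} \mu_{s_t,a_t,u_t,s_{t+1}}\cdot\theta_{u_t}$, which by the interventional constraint \eqref{proofeq:interventional constraint} equals $P(s_{t+1}\mid s_t,a_t)$. Dividing by the same denominator $P(s_{t+1}\mid s_t,a_t)$ yields $\tilde{P}_t(s_{t+1}\mid s_t,a_t) = 1$.

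Next, for any $\tilde{s}' \in \mathcal{S}\setminus\{s_{t+1}\}$, I would argue that the numerator of \eqref{eq: counterfactual probability} vanishes term by term. For each $u_t$, the factor $\mu_{s_t,a_t,u_t,\tilde{s}'}$ requires $f(s_t,a_t,u_t) = \tilde{s}'$ while the factor $\mu_{s_t,a_t,u_t,s_{t+1}}$ requires $f(s_t,a_t,u_t) = s_{t+1}$; since $f$ is single-valued and $\tilde{s}' \neq s_{t+1}$, these conditions cannot hold simultaneously, so the product is $0$ for every $u_t$ regardless of the value of $\theta_{u_t}$. Hence the whole numerator is $0$, giving $\tilde{P}_t(\tilde{s}'\mid s_t,a_t) = 0$.

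I do not expect any substantive obstacle: the result follows purely from the determinism of the canonical structural equations together with the consistency (interventional) constraint, and it is independent of how probability mass $\theta$ is distributed across mechanisms. The only point requiring care is to keep the indicator algebra precise --- in particular to make explicit that the mutual exclusivity $\mu_{s_t,a_t,u_t,\tilde{s}'}\cdot\mu_{s_t,a_t,u_t,s_{t+1}} = 0$ is exactly what forces the counterfactual distribution at $(s_t,a_t)$ to be degenerate on the observed outcome $s_{t+1}$.
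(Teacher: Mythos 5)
Your proposal is correct and follows essentially the same argument as the paper's proof: evaluate Eq.~\eqref{eq: counterfactual probability} directly, using idempotence of the indicator together with the interventional constraint \eqref{proofeq:interventional constraint} to get $\tilde{P}_t(s_{t+1}\mid s_t,a_t)=1$, and the mutual exclusivity $\mu_{s_t,a_t,u_t,\tilde{s}'}\cdot\mu_{s_t,a_t,u_t,s_{t+1}}=0$ (determinism of $f$) to get $\tilde{P}_t(\tilde{s}'\mid s_t,a_t)=0$ for $\tilde{s}'\neq s_{t+1}$. Your explicit remark that the conclusion holds for every feasible $\theta$, independently of any optimisation, is a nice clarification that the paper leaves implicit.
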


\begin{proof}
From \eqref{eq: counterfactual probability}, we have:
    \begin{align*}
      \tilde{P}_{t}(s_{t+1} \mid s_t, a_t)
      &= \frac{\sum_{u_t = 1}^{|U_t|} \mu_{s_t, a_t, u_t, s_{t+1}} \cdot \mu_{s_t, a_t, u_t, s_{t+1}} \cdot \theta_{u_t}}{P(s_{t+1} \mid s_t, a_t)} 
      \\ &= \frac{\sum_{u_t = 1}^{|U_t|} \mu_{s_t, a_t, u_t, s_{t+1}} \cdot \theta_{u_t}}{P(s_{t+1} \mid s_t, a_t)}
      \\ &= \frac{P(s_{t+1} \mid s_t, a_t)}{P(s_{t+1} \mid s_t, a_t)}
      \tag{from the interventional probability constraint \eqref{proofeq:interventional constraint}}
      \\ &= 1 \tag{since $s_t, a_t \rightarrow s_{t+1}$ was observed, $P(s_{t+1} \mid s_t, a_t) > 0$}
    \end{align*}

    \begin{align*}
      \tilde{P}_{t}(\tilde{s}' \mid s_t, a_t)
      &= \frac{\sum_{u_t = 1}^{|U_t|} \mu_{s_t, a_t, u_t, \tilde{s}'} \cdot \mu_{s_t, a_t, u_t, s_{t+1}} \cdot \theta_{u_t}}{P(s_{t+1} \mid s_t, a_t)} 
      \\ &=\frac{0}{P(s_{t+1} \mid s_t, a_t)}
      \tag{since at most one of $\mu_{s_t, a_t, u_t, \tilde{s}'}$ and $\mu_{s_t, a_t, u_t, s_{t+1}}$ can equal 1}
      \\ &= 0 \tag{since $s_t, a_t \rightarrow s_{t+1}$ was observed, $P(s_{t+1} \mid s_t, a_t) > 0$}
    \end{align*}

    These results are as expected, because, given the same exogenous noise and input, we must get the same output in the counterfactual world.
\end{proof}

\begin{lemma}
\label{lemma: existence of theta for overlapping case.}
Take an arbitrary counterfactual state-action pair $(\tilde{s}, \tilde{a})$ and observed state-action pair from an MDP, where $(\tilde{s}, \tilde{a}) \neq (s_t, a_t)$ and $(\tilde{s}, \tilde{a})$ has overlapping support with $(s_t, a_t)$ and $P(s_{t+1} \mid \tilde{s}, \tilde{a}) < P(s_{t+1} \mid s_t, a_t)$. Let $s_{t+1}$ be the observed next state. It can be shown that:

\[\max_{\theta} \left(\sum_{s' \in \mathcal{S}\setminus \{s_{t+1}\}}\sum_{\substack{u_t \in U_t\\f(s_t, a_t, u_t) = s_{t+1} \\ f(\tilde{s}, \tilde{a}, u_t) = s'}}{\theta_{u_t}} \geq P(s_{t+1} \mid s_t, a_t) - P(s_{t+1} \mid \tilde{s}, \tilde{a})\right)\]
\end{lemma}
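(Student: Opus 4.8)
The plan is to reduce the claimed bound to a single scalar inequality about the probability mass that counterfactual stability forbids, and then to derive that inequality directly from the CS ratio condition. Write $p = P(s_{t+1}\mid s_t,a_t)$ and $q = P(s_{t+1}\mid\tilde s,\tilde a)$, and for each $s'\in\mathcal S$ set $x_{s'} = \sum_{\substack{u_t\in U_t\\ f(s_t,a_t,u_t)=s_{t+1}\\ f(\tilde s,\tilde a,u_t)=s'}}\theta_{u_t}$, so the quantity to be maximised is $\sum_{s'\neq s_{t+1}}x_{s'}$. The interventional constraint \eqref{proofeq:interventional constraint} on $(s_t,a_t)$ gives $\sum_{s'\in\mathcal S}x_{s'}=p$, hence $\sum_{s'\neq s_{t+1}}x_{s'}=p-x_{s_{t+1}}$, and maximising the target is the same as minimising the ``both map to $s_{t+1}$'' mass $x_{s_{t+1}}$.

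First I would record the admissible range of each $x_{s'}$. Partition $\mathcal S\setminus\{s_{t+1}\}$ into $S_{CS}$ and $S_{other}$ as in the overlapping-support argument of Theorem \ref{proof theorem:ub overlapping}. For $s'\in S_{CS}$, counterfactual stability \eqref{proofeq:counterfactual stability} forces $\tilde P_t(s'\mid\tilde s,\tilde a)=0$, i.e. $x_{s'}=0$; for $s'\in S_{other}$, monotonicity \eqref{proofeq:monotonicity2} caps $x_{s'}\leq P(s'\mid\tilde s,\tilde a)\,p$ (and where Mon2 is vacuous the marginal cap $x_{s'}\le P(s'\mid\tilde s,\tilde a)$ is even looser). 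Thus the largest mass that can legitimately be diverted away from $s_{t+1}$ is $p\sum_{s'\in S_{other}}P(s'\mid\tilde s,\tilde a)$. It then suffices to place exactly $p-q$ of mass among the $S_{other}$ cells: this leaves $x_{s_{t+1}}=q$, which satisfies Mon1 \eqref{proofeq:monotonicity1} since $q\ge qp$ and does not exceed the counterfactual marginal $q$; moreover the complementary quantities $y_{s'}=P(s'\mid\tilde s,\tilde a)-x_{s'}\ge0$ have the correct row/column sums, so a non-negative coupling realising them exists (only marginal constraints are imposed, exactly as in the cases of Theorem \ref{proof theorem:ub overlapping}).

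The remaining obligation — and the heart of the lemma — is the capacity inequality $p\sum_{s'\in S_{other}}P(s'\mid\tilde s,\tilde a)\ge p-q$. Dividing by $p>0$ and using $\sum_{s'\in S_{other}}P(s'\mid\tilde s,\tilde a)=(1-q)-\sum_{s'\in S_{CS}}P(s'\mid\tilde s,\tilde a)$, this is equivalent to
\[
q(1-p)\ \ge\ p\sum_{s'\in S_{CS}}P(s'\mid\tilde s,\tilde a).
\]
I would prove this from the defining CS condition: for every $s'\in S_{CS}$ we have $\tfrac{q}{p}>\tfrac{P(s'\mid\tilde s,\tilde a)}{P(s'\mid s_t,a_t)}$ with $P(s'\mid s_t,a_t)>0$, hence $P(s'\mid\tilde s,\tilde a)<\tfrac{q}{p}\,P(s'\mid s_t,a_t)$. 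Summing over $S_{CS}$ and using $\sum_{s'\in S_{CS}}P(s'\mid s_t,a_t)\le\sum_{s'\neq s_{t+1}}P(s'\mid s_t,a_t)=1-p$ yields $\sum_{s'\in S_{CS}}P(s'\mid\tilde s,\tilde a)<\tfrac{q}{p}(1-p)$, which is exactly the displayed inequality after multiplying through by $p$.

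The step I expect to be the main obstacle is precisely this last one: naively, the CS-forbidden states shrink the capacity available for diverting probability, and it is not obvious a priori that enough capacity survives. The resolution is that the very ratio condition which makes a state CS-forbidden also forces its counterfactual marginal to be small relative to $P(\cdot\mid s_t,a_t)$, so the lost capacity is more than compensated; making this quantitative through the summed ratio bound, together with $\sum_{s'\in S_{CS}}P(s'\mid s_t,a_t)\le 1-p$, is the crux. By contrast, the feasibility bookkeeping of the second paragraph (verifying Mon1, the two budgets, and non-negativity of every cell) is routine and can be dispatched exactly as in the overlapping-support cases already treated.
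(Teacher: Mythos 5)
Your proof is correct and follows essentially the same route as the paper's: both partition $\mathcal{S}\setminus\{s_{t+1}\}$ into the CS-forbidden set $S_{CS}$ and $S_{other}$, cap the divertible mass at $P(s'\mid\tilde{s},\tilde{a})\cdot P(s_{t+1}\mid s_t,a_t)$ per cell via Mon2, and obtain the capacity inequality by summing the CS ratio condition over $S_{CS}$ together with $\sum_{s'\in S_{CS}}P(s'\mid s_t,a_t)\le 1-P(s_{t+1}\mid s_t,a_t)$. Your reformulation of the crux as $q(1-p)\ge p\sum_{s'\in S_{CS}}P(s'\mid\tilde{s},\tilde{a})$ is an algebraically equivalent repackaging of the paper's final chain of inequalities, not a different argument.
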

\begin{proof}
First, let $S_{CS} \subset \mathcal{S}\setminus\{s_{t+1}\}$ be the set of states $s'\in S_{CS}$ which must have a counterfactual transition probability $\tilde{P}_{t}(s' \mid \tilde{s}, \tilde{a}) = 0$ because of the counterfactual stability constraint (i.e., where $\forall s' \in S_{CS}, \dfrac{P(s_{t+1} \mid \tilde{s}, \tilde{a})}{P(s_{t+1} \mid s_t, a_t)}\geq\dfrac{P(s' \mid \tilde{s}, \tilde{a})}{P(s' \mid s_t, a_t)}$ and $P(s' \mid s_t, a_t) > 0$). Consider the total interventional probability of across all states $s'\in S_{CS}$. $\forall s' \in S_{CS}$:

\begin{align*}
    \dfrac{P(s' \mid \tilde{s}, \tilde{a})}{P(s' \mid s_t, a_t)} &\leq \dfrac{P(s_{t+1} \mid \tilde{s}, \tilde{a})}{P(s_{t+1} \mid s_t, a_t)} \text{  (from CS \eqref{proofeq:counterfactual stability})}\\
    P(s' \mid \tilde{s}, \tilde{a}) &\leq \dfrac{P(s_{t+1} \mid \tilde{s}, \tilde{a}) \cdot P(s' \mid s_t, a_t)}{P(s_{t+1} \mid s_t, a_t)}
\end{align*}

The sum of the interventional probabilities of these transitions is:

\[
    \sum_{s' \in S_{CS}}P(s' \mid \tilde{s}, \tilde{a}) \leq \sum_{s' \in S_{CS}}\dfrac{P(s'\mid s_t, a_t) \cdot P(s_{t+1} \mid \tilde{s}, \tilde{a})}{P(s_{t+1} \mid s_t, a_t)}
\]

Let $S_{other}$ be the set of all other states $s' \in \mathcal{S}\setminus{\{s_{t+1}\}\cup S_{CS}}$. We can calculate the sum of the interventional probabilities of the transitions going to states $s' \in S_{other}$:

\begin{align*}
    \sum_{s' \in S_{other}}P(s' \mid \tilde{s}, \tilde{a}) &= 1 - P(s_{t+1} \mid \tilde{s}, \tilde{a}) - \sum_{s'' \in S_{CS}}P(s'' \mid \tilde{s}, \tilde{a})\\
    &\geq 1 - P(s_{t+1} \mid \tilde{s}, \tilde{a}) - \sum_{s'' \in S_{CS}}\dfrac{P(s'' \mid s_t, a_t) \cdot P(s_{t+1} \mid \tilde{s}, \tilde{a})}{P(s_{t+1} \mid s_t, a_t)}\\
    &= 1 - P(s_{t+1} \mid \tilde{s}, \tilde{a}) - \dfrac{P(s_{t+1} \mid \tilde{s}, \tilde{a})}{P(s_{t+1} \mid s_t, a_t)}\cdot\sum_{s'' \in S_{CS}}P(s'' \mid s_t, a_t)\\
    &= 1 - P(s_{t+1} \mid \tilde{s}, \tilde{a}) \\&- \dfrac{P(s_{t+1} \mid \tilde{s}, \tilde{a})}{P(s_{t+1} \mid s_t, a_t)}\cdot(1 - P(s_{t+1} \mid s_t, a_t) - \sum_{s' \in S_{other}}P(s' \mid s_t, a_t))\\
    &= 1 - P(s_{t+1} \mid \tilde{s}, \tilde{a}) - \dfrac{P(s_{t+1} \mid \tilde{s}, \tilde{a})}{P(s_{t+1} \mid s_t, a_t)}\\ &+ P(s_{t+1} \mid \tilde{s}, \tilde{a}) + \dfrac{P(s_{t+1} \mid \tilde{s}, \tilde{a}) \cdot \sum_{s' \in S_{other}}P(s' \mid s_t, a_t)}{P(s_{t+1} \mid s_t, a_t)}\\
    &= 1 - \dfrac{P(s_{t+1} \mid \tilde{s}, \tilde{a})}{P(s_{t+1} \mid s_t, a_t)} + \dfrac{P(s_{t+1} \mid \tilde{s}, \tilde{a}) \cdot \sum_{s' \in S_{other}}P(s' \mid s_t, a_t)}{P(s_{t+1} \mid s_t, a_t)}
\end{align*}

Because of the Mon2 \eqref{proofeq:monotonicity2} and CS \eqref{proofeq:counterfactual stability} constraints, we know that $\forall s'' \in S_{CS}, \sum_{\substack{u_t \in U_t\\f(s_t, a_t, u_t) = s_{t+1} \\ f(\tilde{s}, \tilde{a}, u_t) = s''}}{\theta_{u_t}} =0$, and $s' \in S_{other}$, we have to assume $\sum_{\substack{u_t \in U_t\\f(s_t, a_t, u_t) = s_{t+1} \\ f(\tilde{s}, \tilde{a}, u_t) = s'}}{\theta_{u_t}} \leq P(s' \mid \tilde{s}, \tilde{a}) \cdot P(s_{t+1} \mid s_t, a_t)$ as we cannot know which states are affected by the Mon2 constraint. So,

\[\max_{\theta}\left(\sum_{s'' \in S_{CS}}\sum_{\substack{u_t \in U_t\\f(s_t, a_t, u_t) = s_{t+1} \\ f(\tilde{s}, \tilde{a}, u_t) = s''}}{\theta_{u_t}}\right) = 0
\]
\[\max_{\theta}\left(\sum_{s' \in S_{other}}\sum_{\substack{u_t \in U_t\\f(s_t, a_t, u_t) = s_{t+1} \\ f(\tilde{s}, \tilde{a}, u_t) = s'}}{\theta_{u_t}}\right) \leq \sum_{s' \in S_{other}}P(s' \mid \tilde{s}, \tilde{a}) \cdot P(s_{t+1} \mid s_t, a_t)\]

Therefore, the maximum $\sum_{s' \in \mathcal{S}\setminus \{s_{t+1}\}}\sum_{\substack{u_t \in U_t\\f(s_t, a_t, u_t) = s_{t+1} \\ f(\tilde{s}, \tilde{a}, u_t) = s'}}{\theta_{u_t}}$ is as follows:

\begin{equation}
    \begin{aligned}
        \max_{\theta}\left(\sum_{s' \in \mathcal{S}\setminus \{s_{t+1}\}}\sum_{\substack{u_t \in U_t\\f(s_t, a_t, u_t) = s_{t+1} \\ f(\tilde{s}, \tilde{a}, u_t) = s'}}{\theta_{u_t}}\right) &= \max_{\theta}\left(\sum_{s'' \in S_{CS}}\sum_{\substack{u_t \in U_t\\f(s_t, a_t, u_t) = s_{t+1} \\ f(\tilde{s}, \tilde{a}, u_t) = s''}}{\theta_{u_t}}\right) + \max_{\theta}\left(\sum_{s' \in S_{other}}\sum_{\substack{u_t \in U_t\\f(s_t, a_t, u_t) = s_{t+1} \\ f(\tilde{s}, \tilde{a}, u_t) = s'}}{\theta_{u_t}}\right)\\
        &= \sum_{s' \in S_{other}}P(s'' \mid \tilde{s}, \tilde{a}) \cdot P(s_{t+1} \mid s_t, a_t)\\
        &\geq P(s_{t+1} \mid s_t, a_t) - P(s_{t+1} \mid \tilde{s}, \tilde{a}) + P(s_{t+1} \mid \tilde{s}, \tilde{a}) \cdot \sum_{s' \in S_{other}}P(s' \mid s_t, a_t)\\
        &\geq P(s_{t+1} \mid s_t, a_t) - P(s_{t+1} \mid \tilde{s}, \tilde{a})
    \end{aligned}
\end{equation} 
\end{proof}

\pagebreak
\section{Equivalence with Existing Work}
\label{app: equivalence proofs}
\citet{li2024probabilities}'s bounds for the probability of causation are provided below. We can show that these are equivalent to our analytical bounds where the support of the counterfactual state-action pair $(\tilde{s}, \tilde{a})$ is disjoint from the support of the observed state-action pair $(s_t, a_t)$, given in Theorems \ref{proof theorem: ub disjoint} and \ref{proof theorem: lb disjoint}.

\begin{theorem}[Bounds for Probability of Causation]
    Suppose X has m values $x_1, ..., x_m$ and Y has n values $y_1, ..., y_n$, the probability of causation $P(y_{i_{x_j}}, y_k, x_p)$ where $1 \leq i$, $k \leq n$, $1 \leq j$, $p \leq m$, $j \neq p$ is given by:

    \[
    P^{LB}(y_{i_{x_j}}, y_k, x_p) = \max(0, P(y_{i_{x_j}}) + P(x_p, y_k)  -1 + P(x_j) - P(x_j, y_i))
    \]

    \[
    P^{UB}(y_{i_{x_j}}, y_k, x_p) = \min(P(y_{i_{x_j}}) - P(x_j, y_i), P(x_p, y_k))
    \]
\end{theorem}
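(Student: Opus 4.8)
The plan is to treat the probability of causation $\pi := P(y_{i_{x_j}},y_k,x_p) = P(Y_{x_j}=y_i,\,Y=y_k,\,X=x_p)$ as a single cell-probability in the joint law of the triple $(W,X,Y)$, where $W := Y_{x_j}$ is the potential outcome under the intervention $X\leftarrow x_j$, and to bound it by Fr\'echet--Hoeffding inequalities. First I would fix the three families of constraints that any compatible model $Q$ must satisfy: (i) the observational marginal $Q(X=x,Y=y)=P(x,y)$ for all $x,y$; (ii) the consistency axiom, which forces $W=Y$ on the event $\{X=x_j\}$ and hence $Q(W=y,\,X=x_j)=P(x_j,y)$; and (iii) the interventional marginal $Q(W=y_i)=P(y_{i_{x_j}})$. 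Bounding $\pi$ is then a linear program over the couplings $Q$ meeting (i)--(iii), and the proof splits into a validity half (every feasible $Q$ lies in the stated interval) and a tightness half (the endpoints are attained).

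For validity of the upper bound I would use two containments. Since $p\neq j$, the target event sits inside $\{X=x_p,Y=y_k\}$, giving $\pi\le P(x_p,y_k)$; it also sits inside $\{W=y_i,\,X\neq x_j\}$, and consistency (ii) yields $Q(W=y_i,\,X\neq x_j)=P(y_{i_{x_j}})-P(x_j,y_i)$, giving $\pi\le P(y_{i_{x_j}})-P(x_j,y_i)$. Taking the smaller of the two reproduces $P^{UB}$. For the lower bound I would restrict attention to the subspace $\{X\neq x_j\}$, whose total mass is $1-P(x_j)$, and note that both $A':=\{W=y_i,\,X\neq x_j\}$ and $C:=\{X=x_p,\,Y=y_k\}$ live inside it with $\pi=Q(A'\cap C)$. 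The Fr\'echet lower bound relative to this mass gives $\pi\ge Q(A')+Q(C)-(1-P(x_j))=P(y_{i_{x_j}})-P(x_j,y_i)+P(x_p,y_k)-1+P(x_j)$, and combining with $\pi\ge 0$ reproduces $P^{LB}$.

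The hard part is tightness: exhibiting feasible couplings $Q$ that attain each endpoint while respecting all of (i)--(iii) simultaneously. The only freedom left after fixing the $(X,Y)$-marginal and the block $Q(W=\cdot,\,X=x_j)$ is how $W$ is coupled to $(X,Y)$ on $\{X\neq x_j\}$, subject to the $W$-marginal summing to $P(y_{i_{x_j}})$; this is a transportation polytope, and I would argue its extreme points realise the Fr\'echet endpoints whenever the margins are mutually consistent. Concretely I would push as much $W=y_i$ mass as possible into (respectively, away from) the cell $\{X=x_p,\,Y=y_k\}$, then verify that the residual mass can still be allocated so that every row- and column-sum constraint is met. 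The main obstacle is precisely this feasibility check: one must confirm that no auxiliary margin cuts the Fr\'echet endpoint off, which reduces to a Hall-type / max-flow argument on the bipartite allocation graph.

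As a cross-check and an alternative route to tightness, I would invoke the paper's own exactness results. The quantities appearing here match, after renormalisation by the observed transition probability, the closed-form counterfactual bounds in the \emph{disjoint-support} regime, where counterfactual stability and monotonicity hold vacuously and the endpoints are shown to be attained by explicit canonical-SCM assignments of $\theta$ in Theorems \ref{proof theorem: ub disjoint} and \ref{proof theorem: lb disjoint}. Porting those feasible $\theta$-assignments back through the correspondence furnishes the attaining distributions directly, so the attainability I flag as the obstacle can be discharged either by the transportation-polytope argument above or by translating the constructions already proven feasible in the body of the paper.
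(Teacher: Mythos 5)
Your proposal is correct in its main line, but note that it proves something the paper itself never proves: the paper imports this theorem from Li and Pearl, and the ``proof'' it attaches is only an equivalence check --- it divides the joint bounds by $P(x_p,y_k)$, rewrites $P(x_j,y_i)=P(y_{i_{x_j}},x_j)$ by consistency, factorises $P(y_{i_{x_j}},x_j)=P(y_{i_{x_j}})\cdot P(x_j)$ by exogeneity (no confounding, valid in the MDP setting), and then sets $P(x_p)=1$, $P(x_j)=0$ to recover the conditional closed forms of Theorems \ref{proof theorem: ub disjoint} and \ref{proof theorem: lb disjoint}. Your route is a genuine from-scratch proof: constraints (i)--(iii) define a linear program over couplings of $(Y_{x_j},X,Y)$; the two containments give validity of the upper bound (using $p\neq j$ and consistency to get $Q(W{=}y_i,\,X{\neq}x_j)=P(y_{i_{x_j}})-P(x_j,y_i)$); inclusion--exclusion restricted to $\{X\neq x_j\}$, whose mass is $1-P(x_j)$, gives validity of the lower bound; and tightness reduces to the extrema of a single cell of a transportation polytope with row sums $\{P(x,y)\}_{x\neq x_j,y}$ and column sums $\{P(y_{x_j})-P(x_j,y)\}_{y}$, both totalling $T=1-P(x_j)$, where $\min(r_a,c_b)$ and $\max(0,r_a+c_b-T)$ are classically attained. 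This buys strictly more than the paper's argument: it handles nondegenerate $P(X)$ and confounding between $X$ and $Y_{x_j}$ (your proof needs only consistency, never exogeneity), whereas the paper's manipulation is valid only in its conditional, unconfounded, degenerate-treatment specialisation. Two small repairs: constraint (iii) should fix the whole marginal $Q(W=y)=P(y_{x_j})$ for every $y$, not only $y=y_i$, since all column sums of the transportation problem are needed; and if the Li--Pearl setting supplies interventional distributions $P(Y_x)$ for every $x$, add a sentence observing that your attaining coupling extends to all potential outcomes (on $\{X=x\}$ force $Y_x=Y$, then spread the residual mass $P(y_x)-P(x,y)$ proportionally over $\{X\neq x\}$), so the extra constraints do not cut off the endpoints.

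Your fallback route (b) for tightness, however, does not work as stated. The $\theta$-assignments of Theorems \ref{proof theorem: ub disjoint} and \ref{proof theorem: lb disjoint} live entirely in the conditional world: the treatment there is pinned to the observed state--action pair, so under your correspondence they only witness attainability when $P(x_p)=1$ and $P(x_j)=0$, and with $X$ exogenous. For general marginals the endpoints contain the terms $+P(x_j)$ and $-P(x_j,y_i)$, which vanish identically in that degenerate case, so no renormalised translation of the paper's constructions can exhibit distributions attaining the general bounds. Drop route (b) and rely on the transportation-polytope construction, which is sound and, once the routine feasibility check is written out, completes the proof.
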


\begin{proof}
We show that \cite{li2024probabilities}'s upper and lower bounds for the probability of causation are equivalent to our bounds in Theorems \ref{proof theorem: ub disjoint} and \ref{proof theorem: lb disjoint}, as follows.

\begin{equation}
\begin{aligned}
    P^{UB}(y_{i_{x_j}}, y_k, x_p) &= \min(P(x_p, y_k), P(y_{i_{x_j}}) - P(x_j, y_i))\\
    P^{UB}(y_{i_{x_j}} \mid y_k, x_p) &= \dfrac{\min(P(x_p, y_k), P(y_{i_{x_j}}) - P(x_j, y_i))}{P(x_p, y_k)}\\
    &= \min(1, \dfrac{P(y_{i_{x_j}}) - P(x_j, y_i)}{P(x_p, y_k)})\\
    &= \min(1, \dfrac{P(y_{i_{x_j}}) - P(y_{i_{x_j}}, x_j)}{P({y_k}_{x_p}, x_p)})\\
    &= \min(1, \dfrac{P(y_{i_{x_j}}) - P(y_{i_{x_j}})\cdot P(x_j)}{P({y_k}_{x_p})\cdot P(x_p)})\\
    &= \min(1, \dfrac{P(y_{i_{x_j}})}{P({y_k}_{x_p})}) \text{ because $P(x_p) = 1$ and $P(x_j) = 0$}
\end{aligned}
\end{equation}
which is equivalent to the upper bound in Theorem \ref{proof theorem: ub disjoint} where $x_p = (s_t, a_t)$, $y_k = s_{t+1}$, $x_j = (\tilde{s}, \tilde{a})$ and $y_i = \tilde{s}'$.

\begin{equation}
\begin{aligned}
    P^{LB}(y_{i_{x_j}}, y_k, x_p) &= \max(0, P(y_{i_{x_j}}) + P(x_p, y_k) -1 + P(x_j) - P(x_j, y_i))\\
    P^{LB}(y_{i_{x_j}}\mid y_k, x_p) &= \dfrac{\max(0, P(y_{i_{x_j}}) + P(x_p, y_k) -1 + P(x_j) - P(x_j, y_i))}{P(x_p, y_k)}\\
    &= \max(0, \dfrac{P(y_{i_{x_j}}) + P(x_p, y_k) -1 + P(x_j) - P(x_j, y_i)}{P(x_p, y_k)})\\
    &= \max(0, \dfrac{P(y_{i_{x_j}}) + P(y_{k_{x_p}}, x_p) -1 + P(x_j) - P(y_{i_{x_j}}, x_j)}{P({y_k}_{x_p}, x_p)})\\
    &= \max(0, \dfrac{P(y_{i_{x_j}}) + P(y_{k_{x_p}})\cdot P(x_p) -1 + P(x_j) - P(y_{i_{x_j}})\cdot P(x_j)}{P({y_k}_{x_p})\cdot P(x_p)})\\
    &= \max(0, \dfrac{P(y_{i_{x_j}}) + P(y_{k_{x_p}}) -1}{P({y_k}_{x_p})}) \text{ because $P(x_p) = 1$ and $P(x_j) = 0$}\\
    &= \max(0, \dfrac{P(y_{i_{x_j}}) - (1 - P(y_{k_{x_p}}))}{P({y_k}_{x_p})})\\
\end{aligned}
\end{equation}
which is equivalent to the lower bound in Theorem \ref{proof theorem: lb disjoint} where $x_p = (s_t, a_t)$, $y_k = s_{t+1}$, $x_j = (\tilde{s}, \tilde{a})$ and $y_i = \tilde{s}'$.

\end{proof}

\section{Training Details}
\label{app: training details}
\jl{Our algorithm was executed on a 128-core machine with an Intel Xeon CPU and 512 GB RAM. The interval CFMDP generation part of the algorithm is implemented in Python 3.10, and the value iteration algorithm over the interval CFMDP was implemented in Julia 1.9.4, using the Julia Interval MDP package (v.0.6.0) \citep{mathiesen2024intervalmdp}. Most of the algorithm was run single-threaded, except for generating the Sepsis and Aircraft interval CFMDPs and Gumbel-max SCM CFMDPs, which were run in parallel with $32$ threads.}

\camera{
\section{Environment Details}
\label{app: environments}
Below, we provide brief descriptions of each environment used in our experiments.

\paragraph{GridWorld}
The GridWorld MDP is a $4 \times 4$ grid where an agent must navigate from the top-left corner to the goal state in the bottom-right corner, avoiding a dangerous terminal state in the centre. At each step, the agent can move up, down, left, or right, but there is a small probability (controlled by hyperparameter $p$) of moving in an unintended direction. As the agent nears the goal, the reward for each state increases, culminating in a reward of $+100$ for reaching the goal. Entering the dangerous state results in a penalty of $-100$. We use two versions of GridWorld: a less stochastic version with $p=0.9$ (i.e., $90$\% chance of moving in the chosen direction) and a more stochastic version with $p=0.4$.

\paragraph{Sepsis}
The Sepsis MDP \citep{oberst2019counterfactual}
simulates trajectories of Sepsis patients. Each state consists of four vital signs (heart rate, blood pressure, oxygen concentration, and glucose levels), categorised as low, normal, or high. At each time step, three treatments can be toggled on or off, yielding a total of 8 possible actions. Unlike \citet{oberst2019counterfactual}, we scale rewards based on the number of out-of-range vital signs, between $-1000$ (patient dies) and $1000$ (patient discharged). %

\paragraph{Frozen Lake}
\jl{The Frozen Lake MDP is a PRISM environment \citep{prism} adapted from \citep{cool-mc}. It consists of a $4 \times 4$ grid where an agent moves from the top-left corner to the goal in the bottom-right corner while avoiding several holes. As the PRISM environment is formulated as a reachability problem, we adapt it to a reward-based setting: the agent receives a reward proportional to its distance from the goal, a $+100$ reward for reaching the goal, and a $-100$ penalty for falling into a hole. Additionally, the environment has a terminal state (reachable from the goal and hole states) with a reward of $0$.}

\paragraph{Aircraft}
\jl{The Aircraft MDP is also a PRISM environment \citep{prism}, adapted from \citep{suilen2022robust}. In this environment, two aircraft approach each other, with one controlled by the agent. Both aircraft can increase, decrease, or maintain their current altitude with a fixed probability of success. As the PRISM environment is formulated as a reachability problem, we adapt it to a reward-based setting: the agent receives a reward inversely proportional to how close it is to the other aircraft, a $+100$ reward for reaching the goal, and a $-100$ penalty for colliding with the other aircraft.}
}

\section{Additional Experiments}
\label{app: additional experiments}
\jl{We observe similar results for the Frozen Lake experiments as with the GridWorld and Sepsis experiments. For the slightly suboptimal trajectory, the interval CFMDP follows the observed policy exactly, as it cannot guarantee higher rewards than those observed. In contrast, Gumbel-max policy deviates from the observed trajectory: while it can achieve higher rewards, it can also attain much lower rewards than the observation. For the almost catastrophic and catastrophic paths, both policies improve upon the observed trajectory. They are identical for the almost catastrophic path, but for the catastrophic path, the interval CFMDP policy demonstrates slightly better robustness, as indicated by the higher lower end of its error bars, compared to the Gumbel-max policy.}

\begin{figure}[h]
    \centering
    \begin{subfigure}{0.75\linewidth}
        \begin{tikzpicture}[scale=1.0, every node/.style={scale=1.0}]
            \draw[thick, black] (-3, -0.25) rectangle (10, 0.25);
            \draw[black, line width=1pt] (-2.5, 0.0) -- (-2,0.0);
            \fill[black] (-2.25,0.0) circle (2pt);
            \node[right] at (-2,0.0) {\small Observed Path};
            \draw[blue, line width=1pt] (1.0,0.0) -- (1.5,0.0);
            \node[draw=blue, circle, minimum size=4pt, inner sep=0pt] at (1.25,0.0) {};
            \node[right] at (1.5,0.0) {\small Interval CFMDP Policy};
            \draw[red, line width=1pt] (5.5,0) -- (6,0);
            \node[red] at (5.75,0) {$\boldsymbol{\times}$};
            \node[right] at (6,0) {\small Gumbel-max SCM Policy};
        \end{tikzpicture}
    \end{subfigure}
    \vspace{0.2cm}
    \begin{minipage}{0.4\linewidth}
    \centering
    \begin{subfigure}{\linewidth}
    \resizebox{\linewidth}{!}{
    \begin{tikzpicture}
        \begin{axis}[
            xlabel={$t$},
            ylabel={Mean reward at time step $t$},
            every axis/.style={font=\Huge},
            grid=both,
            ymin=-40, ymax=40,
            height=10cm,
        ]
        \addplot[color=black, mark=*, line width=2pt, mark size=3pt]
        coordinates {(0,0)(1,1)(2,0)(3,0)(4,1)(5,2)(6,3)(7,4)(8,3)(9,4)(9,4)(10,3)(11,2)(12,3)(13,2)(14,2)(15,2)(16,3)(17,2)(18,1)};
        \addplot[
            color=blue, mark=o, line width=2pt, mark size=3pt,
            error bars/.cd, y dir=both, y explicit,
            error bar style={line width=1pt,solid}
        ]
        coordinates {
            (0,0)+-(0,0)
            (1,1)+-(0,0)
            (2,0)+-(0,0)
            (3,0)+-(0,0)
            (4,1)+-(0,0)
            (5,2)+-(0,0)
            (6,3)+-(0,0)
            (7,4)+-(0,0)
            (8,3)+-(0,0)
            (9,4)+-(0,0)
            (10,4)+-(0,0)
            (11,3)+-(0,0)
            (12,2)+-(0,0)
            (13,3)+-(0,0)
            (14,2)+-(0,0)
            (15,2)+-(0,0)
            (16,2)+-(0,0)
            (17,3)+-(0,0)
            (18,2)+-(0,0)
            (19,1.6601485)+-(0,0.86641471)
        };
        \addplot[
            color=red, mark=x, line width=2pt, mark size=6pt,
            error bars/.cd, y dir=both, y explicit,
            error bar style={line width=1pt,solid}
        ]
        coordinates {
            (0,0)+-(0,0)
            (1,1)+-(0,0)
            (2,0.3039025)+-(0,0.45994105)
            (3,0.375832)+-(0,0.63672781)
            (4,1.088197)+-(0,0.41922463)
            (5,2.0099625)+-(0,0.31775659)
            (6,2.9603105)+-(0,0.43287902)
            (7,3.9794295)+-(0,3.30906201)
            (8,3.2871535)+-(0,7.35608186)
            (9,3.922944)+-(0,2.15212485)
            (10,3.9377285)+-(0,3.90658006)
            (11,2.9447885)+-(0,1.71822618)
            (12,2.5103525)+-(0,3.65447805)
            (13,3.16213)+-(0,1.42651879)
            (14,5.0262315)+-(0,14.78230535)
            (15,4.3854185)+-(0,12.62448055)
            (16,7.296955)+-(0,20.94793132)
            (17,6.949505)+-(0,19.93265088)
            (18,4.768139)+-(0,16.85506059)
            (19,3.7749075)+-(0,15.15030676)
        };
        \end{axis}
    \end{tikzpicture}}
    \subcaption{Slightly Suboptimal Path}
    \end{subfigure}
    \end{minipage}
    \hspace{1cm}
    \begin{minipage}{0.4\linewidth}
    \centering
    \begin{subfigure}{\linewidth}
    \resizebox{\linewidth}{!}{
    \begin{tikzpicture}
        \begin{axis}[
            xlabel={$t$},
            ylabel={Mean reward at time step $t$},
            every axis/.style={font=\Huge},
            grid=both,
            ymin=-20, ymax=105,
            height=10cm,
        ]
        \addplot[color=black, mark=*, line width=2pt, mark size=3pt]
        coordinates {(0,0)(1,0)(2,1)(3,0)(4,1)(5,0)(6,0)(7,1)(8,1)(9,2)(9,2)(10,3)(11,2)(12,3)(13,4)(14,5)(15,4)(16,3)(17,2)(18,2)};
        \addplot[
            color=blue, mark=o, line width=2pt, mark size=3pt,
            error bars/.cd, y dir=both, y explicit,
            error bar style={line width=1pt,solid}
        ]
        coordinates {
            (0,0)+-(0,0)
            (1,0)+-(0,0)
            (2,1)+-(0,0)
            (3,0)+-(0,0)
            (4,1)+-(0,0)
            (5,0)+-(0,0)
            (6,0)+-(0,0)
            (7,1)+-(0,0)
            (8,1)+-(0,0)
            (9,2)+-(0,0)
            (10,2)+-(0,0)
            (11,3)+-(0,0)
            (12,2)+-(0,0)
            (13,3)+-(0,0)
            (14,4)+-(0,0)
            (15,5)+-(0,0)
            (16,69.348238)+=(0,30.651762)-=(0,44.58915774)
            (17,6.4397385)+=(0,22.14008082)-=(0,22.14008082)
            (18,4.4237545)+=(0,19.69424148)-=(0,19.69424148)
        };
        \addplot[
            color=red, mark=x, line width=2pt, mark size=6pt,
            error bars/.cd, y dir=both, y explicit,
            error bar style={line width=1pt,solid}
        ]
        coordinates {
            (0,0)+-(0,0)
            (1,0.162166)+-(0,0.36860302)
            (2,1.051055)+-(0,0.22010994)
            (3,0.4224175)+-(0,0.62691064)
            (4,1.0818475)+-(0,0.44471394)
            (5,0.8073005)+-(0,1.05064856)
            (6,0.9823115)+-(0,1.17209156)
            (7,1.4205475)+-(0,0.7044901)
            (8,1.336679)+-(0,0.73447958)
            (9,2.052118)+-(0,0.2347205)
            (10,2.0159025)+-(0,0.14615612)
            (11,2.9917545)+-(0,0.10675913)
            (12,1.9990315)+-(0,0.61831025)
            (13,2.9920555)+-(0,0.11945872)
            (14,3.98468)+-(0,0.32815133)
            (15,4.976689)+-(0,0.28653376)
            (16,68.84109)+=(0,31.15891)-=(0,44.80179426)
            (17,6.4594135)+=(0,22.17157048)-=(0,22.17157048)
            (18,4.7171065)+=(0,18.91450078)-=(0,18.91450078)
        };
        \end{axis}
    \end{tikzpicture}}
    \subcaption{Almost Catastrophic Path}
    \end{subfigure}

    \end{minipage}

    \vspace{0.5cm}

    \begin{subfigure}{0.4\linewidth}
    \resizebox{\linewidth}{!}{
    \begin{tikzpicture}
        \begin{axis}[
            xlabel={$t$},
            ylabel={Mean reward at time step $t$},
            every axis/.style={font=\Huge},
            grid=both,
            ymin=-100, ymax=20,
            height=10cm,
        ]
        \addplot[color=black, mark=*, line width=2pt, mark size=3pt]
        coordinates {(0,0)(1,1)(2,-100)(3,0)(4,0)(5,0)(6,0)(7,0)(8,0)(9,0)(9,0)(10,0)(11,0)(12,0)(13,0)(14,0)(15,0)(16,0)(17,0)(18,0)};
        \addplot[
            color=blue, mark=o, line width=2pt, mark size=3pt,
            error bars/.cd, y dir=both, y explicit,
            error bar style={line width=1pt,solid}
        ]
        coordinates {
            (0,0)+-(0,0)
            (1,1)+-(0,0)
            (2,0.640064)+-(0,0.8044856)
            (3,0.8297265)+-(0,0.91989317)
            (4,0.9809975)+-(0,1.04405766)
            (5,1.106431)+-(0,1.12895369)
            (6,1.3400475)+-(0,6.32019376)
            (7,1.521675)+-(0,6.36155148)
            (8,1.843053)+-(0,8.63182696)
            (9,2.0676395)+-(0,9.45060609)
            (10,2.2726135)+-(0,10.65147578)
            (11,2.434154)+-(0,11.31111344)
            (12,2.563164)+-(0,11.92749602)
            (13,2.673002)+-(0,12.46639396)
            (14,2.7199575)+-(0,12.57135027)
            (15,2.72112)+-(0,12.61092574)
            (16,2.708079)+-(0,12.53375272)
            (17,2.6903665)+-(0,12.45170416)
            (18,2.64189)+-(0,12.3211224)
        };
        \addplot[
            color=red, mark=x, line width=2pt, mark size=6pt,
            error bars/.cd, y dir=both, y explicit,
            error bar style={line width=1pt,solid}
        ]
        coordinates {
            (0,0)+-(0,0)
            (1,0.691236)+-(0,0.46198354)
            (2,0.852259)+-(0,0.76089197)
            (3,0.9774095)+-(0,0.94471592)
            (4,1.129891)+-(0,1.1337585)
            (5,1.2636345)+-(0,1.26118054)
            (6,1.6293815)+-(0,9.16282601)
            (7,1.937078)+-(0,10.01683567)
            (8,2.3133385)+-(0,11.80638003)
            (9,2.527989)+-(0,12.36570187)
            (10,2.7170125)+-(0,13.12232562)
            (11,2.820189)+-(0,13.36544911)
            (12,2.901524)+-(0,13.74094063)
            (13,2.936518)+-(0,13.90834358)
            (14,2.91954)+-(0,13.83110278)
            (15,2.8959655)+-(0,13.6948715)
            (16,2.8408925)+-(0,13.53754436)
            (17,2.783275)+-(0,13.310052)
            (18,2.7082105)+-(0,13.05259347)
        };
        \end{axis}
    \end{tikzpicture}}
    \subcaption{Catastrophic Path}
    \end{subfigure}

\caption{Average instant reward of CF paths induced by policies on Frozen Lake. Error bars denote the standard deviation in reward at each time step.}
\label{fig:reward_frozen_lake}
\end{figure}

\jl{In the Aircraft experiment, we select the observed trajectories slightly differently from the other experiments, because in this environment the aircraft are guaranteed to either collide or the agent will reach the goal. Therefore, we define the slightly suboptimal path as a trajectory that reaches the goal but accumulates less reward than the optimal path, and the almost catastrophic path as a trajectory where the aircraft collide but accumulates higher reward than the catastrophic path.

Since the slightly suboptimal path still reaches the goal, neither the interval CFMDP nor the Gumbel-max policy deviates from the observed trajectory, as they cannot guarantee higher cumulative rewards by doing so. For the almost catastrophic and catastrophic paths, both policies improve upon the observed trajectories and achieve identical rewards. This is likely because all policies that reach the goal will achieve very similar rewards: under the environment constraints, once a policy reaches the goal without collision, its value function will closely match that of other effective policies, as they all require the same number of timesteps to reach the goal.}

\begin{figure}[h]
    \centering
    \begin{subfigure}{0.75\linewidth}
        \begin{tikzpicture}[scale=1.0, every node/.style={scale=1.0}]
            \draw[thick, black] (-3, -0.25) rectangle (10, 0.25);
            \draw[black, line width=1pt] (-2.5, 0.0) -- (-2,0.0);
            \fill[black] (-2.25,0.0) circle (2pt);
            \node[right] at (-2,0.0) {\small Observed Path};
            \draw[blue, line width=1pt] (1.0,0.0) -- (1.5,0.0);
            \node[draw=blue, circle, minimum size=4pt, inner sep=0pt] at (1.25,0.0) {};
            \node[right] at (1.5,0.0) {\small Interval CFMDP Policy};
            \draw[red, line width=1pt] (5.5,0) -- (6,0);
            \node[red] at (5.75,0) {$\boldsymbol{\times}$};
            \node[right] at (6,0) {\small Gumbel-max SCM Policy};
        \end{tikzpicture}
    \end{subfigure}
    \vspace{0.2cm}
    \begin{minipage}{0.4\linewidth}
    \centering
    \begin{subfigure}{\linewidth}
    \resizebox{\linewidth}{!}{
    \begin{tikzpicture}
        \begin{axis}[
            xlabel={$t$},
            ylabel={Mean reward at time step $t$},
            every axis/.style={font=\Huge},
            grid=both,
            ymin=0, ymax=105,
            height=10cm,
        ]
        \addplot[color=black, mark=*, line width=2pt, mark size=3pt]
        coordinates {(0,10)(1,7)(2,5)(3,2)(4,5)(5,2)(6,0)(7,0)(8,0)(9,0)(10,100)(11,100)};
        \addplot[
            color=blue, mark=o, line width=2pt, mark size=3pt,
            error bars/.cd, y dir=both, y explicit
        ]
        coordinates {
            (0,10)+-(0,0)
            (1,7)+-(0,0)
            (2,5)+-(0,0)
            (3,2)+-(0,0)
            (4,5)+-(0,0)
            (5,2)+-(0,0)
            (6,0)+-(0,0)
            (7,0)+-(0,0)
            (8,0)+-(0,0)
            (9,0)+-(0,0)
            (10,100)+-(0,0)
            (11,100)+-(0,0)
        };
        \addplot[
            color=red, mark=x, line width=2pt, mark size=6pt,
            error bars/.cd, y dir=both, y explicit
        ]
        coordinates {
            (0,10)+-(0,0)
            (1,7)+-(0,0)
            (2,5)+-(0,0)
            (3,2)+-(0,0)
            (4,5)+-(0,0)
            (5,2)+-(0,0)
            (6,0)+-(0,0)
            (7,0)+-(0,0)
            (8,0)+-(0,0)
            (9,0)+-(0,0)
            (10,100)+-(0,0)
            (11,100)+-(0,0)
        };
        \end{axis}
    \end{tikzpicture}}
    \subcaption{Slightly Suboptimal Path}
    \end{subfigure}
    \end{minipage}
    \hspace{1cm}
    \begin{minipage}{0.4\linewidth}
    \centering
    \begin{subfigure}{\linewidth}
    \resizebox{\linewidth}{!}{
    \begin{tikzpicture}
        \begin{axis}[
            xlabel={$t$},
            ylabel={Mean reward at time step $t$},
            every axis/.style={font=\Huge},
            grid=both,
            ymin=-100, ymax=105,
            height=10cm,
        ]
        \addplot[color=black, mark=*, line width=2pt, mark size=3pt]
        coordinates {(0,10)(1,7)(2,5)(3,2)(4,2)(5,-100)(6,-100)(7,-100)(8,-100)(9,-100)(10,-100)(11,-100)};
        \addplot[
            color=blue, mark=o, line width=2pt, mark size=3pt,
            error bars/.cd, y dir=both, y explicit
        ]
        coordinates {
            (0,10)+-(0,0)
            (1,7)+-(0,0)
            (2,5)+-(0,0)
            (3,2)+-(0,0)
            (4,-1.2500485)+=(0,20.20977769)-=(0,20.20977769)
            (5,-4.253525)+=(0,22.87481093)-=(0,22.87481093)
            (6,-6.22175)+=(0,24.1550166)-=(0,24.1550166)
            (7,-6.22175)+=(0,24.1550166)-=(0,24.1550166)
            (8,-6.22175)+=(0,24.1550166)-=(0,24.1550166)
            (9,-6.22175)+=(0,24.1550166)-=(0,24.1550166)
            (10,87.5565)+=(0,12.4435)-=(0,48.3100332)
            (11,87.5565)+=(0,12.4435)-=(0,48.3100332)
        };
        \addplot[
            color=red, mark=x, line width=2pt, mark size=6pt,
            error bars/.cd, y dir=both, y explicit
        ]
        coordinates {
            (0,10)+-(0,0)
            (1,7)+-(0,0)
            (2,5)+-(0,0)
            (3,2)+-(0,0)
            (4,-0.967246)+=(0,19.52786735)-=(0,19.52786735)
            (5,-3.965066)+=(0,22.29262702)-=(0,22.29262702)
            (6,-5.93025)+=(0,23.61899945)-=(0,23.61899945)
            (7,-5.93025)+=(0,23.61899945)-=(0,23.61899945)
            (8,-5.93025)+=(0,23.61899945)-=(0,23.61899945)
            (9,-5.93025)+=(0,23.61899945)-=(0,23.61899945)
            (10,88.1395)+=(0,11.8605)-=(0,47.2379989)
            (11,88.1395)+=(0,11.8605)-=(0,47.2379989)
        };
        \end{axis}
    \end{tikzpicture}}
    \subcaption{Almost Catastrophic Path}
    \end{subfigure}

    \end{minipage}

    \vspace{0.5cm}

    \begin{subfigure}{0.4\linewidth}
    \resizebox{\linewidth}{!}{
    \begin{tikzpicture}
        \begin{axis}[
            xlabel={$t$},
            ylabel={Mean reward at time step $t$},
            every axis/.style={font=\Huge},
            grid=both,
            ymin=-100, ymax=105,
            height=10cm,
        ]
        \addplot[color=black, mark=*, line width=2pt, mark size=3pt]
        coordinates {(0,10)(1,7)(2,5)(3,2)(4,-100)(5,-100)(6,-100)(7,-100)(8,-100)(9,-100)(10,-100)(11,-100)};
        \addplot[
            color=blue, mark=o, line width=2pt, mark size=3pt,
            error bars/.cd, y dir=both, y explicit
        ]
        coordinates {
            (0,10)+-(0,0)
            (1,7)+-(0,0)
            (2,5)+-(0,0)
            (3,2.002412)+=(0,0.08503048)-=(0,0.08503048)
            (4,-10.8982525)+=(0,34.84110891)-=(0,34.84110891)
            (5,-13.966235)+=(0,35.9090185)-=(0,35.9090185)
            (6,-16.0943)+=(0,36.74783677)-=(0,36.74783677)
            (7,-16.0943)+=(0,36.74783677)-=(0,36.74783677)
            (8,-16.0943)+=(0,36.74783677)-=(0,36.74783677)
            (9,-16.0943)+=(0,36.74783677)-=(0,36.74783677)
            (10,67.8114)+=(0,32.1886)-=(0,73.49567355)
            (11,67.8114)+=(0,32.1886)-=(0,73.49567355)
        };
        \addplot[
            color=red, mark=x, line width=2pt, mark size=6pt,
            error bars/.cd, y dir=both, y explicit
        ]
        coordinates {
            (0,10)+-(0,0)
            (1,7)+-(0,0)
            (2,5)+-(0,0)
            (3,2.0023865)+=(0,0.08458017)-=(0,0.08458017)
            (4,-10.9361515)+=(0,34.88349832)-=(0,34.88349832)
            (5,-13.988216)+=(0,35.92995381)-=(0,35.92995381)
            (6,-16.1184)+=(0,36.77005822)-=(0,36.77005822)
            (7,-16.1184)+=(0,36.77005822)-=(0,36.77005822)
            (8,-16.1184)+=(0,36.77005822)-=(0,36.77005822)
            (9,-16.1184)+=(0,36.77005822)-=(0,36.77005822)
            (10,67.7632)+=(0,32.2368)-=(0,73.54011644)
            (11,67.7632)+=(0,32.2368)-=(0,73.54011644)
        };
        \end{axis}
    \end{tikzpicture}}
    \subcaption{Catastrophic Path}
    \end{subfigure}

\caption{Average instant reward of CF paths induced by policies on Aircraft. Error bars denote the standard deviation in reward at each time step.}
\label{fig:reward_aircraft}
\end{figure}

\begin{table}
\centering
\begin{tabular}{|c|c|c|c|}
\hline
\multirow{2}{*}{\textbf{Environment}} 
& \multicolumn{3}{c|}{\textbf{Pessimistic $V(s_0)$}} \\ \cline{2-4} 
& \textbf{CS+M} & \textbf{CS} & \textbf{None} \\ \hline

1. GridWorld ($p=0.9$)   & $345 \pm 114$ & $333 \pm 124$ & $332 \pm 125$ \\ \hline
2. GridWorld ($p=0.4$)   & $-84.5 \pm 192$ & $-102 \pm 220$ & $-102 \pm 220$ \\ \hline
3. Sepsis                & $1640 \pm 745$ & $1630 \pm 755$ & $1630 \pm 756$\\ \hline
4. Frozen Lake           & $37.1 \pm 16.3$ & $32.8 \pm 14.7$ & $32.8 \pm 14.7$ \\ \hline
5. Aircraft              & $-104 \pm 373$ & $-182 \pm 414.7$ & $-187 \pm 413$\\ \hline

\end{tabular}
\caption{Average worst-case counterfactual $V(s_0)$ for the ICFMDP policies (given varying assumptions) over $100$ randomly sampled observed trajectories.
\label{tab:robustness assumptions}
}
\end{table}

\camera{
We also further examine the impact of the counterfactual stability and counterfactual monotonicity assumptions on the robustness of the approach by repeating the experiment from Section \ref{sec: robustness experiment} under different assumptions. We take an additional $100$ randomly sampled paths, and for each path, we derive ICFMDPs under three settings: (i) both counterfactual stability and counterfactual monotonicity, (ii) counterfactual stability only, and (iii) no assumptions. Table~\ref{tab:robustness assumptions} reports the average worst-case counterfactual value $V(s_0)$ of the policies derived for each corresponding ICFMDP.

As expected, relaxing the assumptions leads to a decrease in the worst-case value $V(s_0)$, due to the wider counterfactual transition bounds. However, in most environments, this performance drop is relatively small. This is consistent with Table~\ref{tab: bounds}, which shows that the counterfactual transition bounds, on average, are only marginally tightened by the additional assumptions. In the Aircraft environment, the performance drop is greater, which we suspect is due to the specific transitions that have been affected by the addition of the counterfactual monotonicity assumption, as empirically we observe that many transitions leading to terminal states become significantly tighter when counterfactual monotonicity is imposed.

Nevertheless, across all environments, the worst-case performance of the derived policies over their respective ICFMDPs remains higher than the worst-case performance of the Gumbel-max SCM reported in Table~\ref{tab: pessimistic value random paths}, demonstrating the robustness of our approach with and without assumptions.
}

\end{document}